\newcommand{\dee}{\mathop{\mathrm{d}\!}}
\newcommand{\dt}{\,\dee t}
\def\ddefloop#1{\ifx\ddefloop#1\else\ddef{#1}\expandafter\ddefloop\fi}
\def\ddef#1{\expandafter\def\csname bb#1\endcsname{\ensuremath{\mathbb{#1}}}}
\def\ddef#1{\expandafter\def\csname c#1\endcsname{\ensuremath{\mathcal{#1}}}}
\def\ddef#1{\expandafter\def\csname v#1\endcsname{\ensuremath{\boldsymbol{#1}}}}
\def\ddef#1{\expandafter\def\csname v#1\endcsname{\ensuremath{\boldsymbol{\csname #1\endcsname}}}}
\def\balign#1\ealign{\begin{align}#1\end{align}}
\def\baligns#1\ealigns{\begin{align*}#1\end{align*}}
\def\balignat#1\ealign{\begin{alignat}#1\end{alignat}}
\def\balignats#1\ealigns{\begin{alignat*}#1\end{alignat*}}
\def\bitemize#1\eitemize{\begin{itemize}#1\end{itemize}}
\def\benumerate#1\eenumerate{\begin{enumerate}#1\end{enumerate}}
\newenvironment{talign*}
 {\csname align*\endcsname}
 {\endalign}
\newenvironment{talign}
 {\csname align\endcsname}
 {\endalign}
\def\balignst#1\ealignst{\begin{talign*}#1\end{talign*}}
\def\balignt#1\ealignt{\begin{talign}#1\end{talign}}
\let\originalleft\left
\let\originalright\right
\renewcommand{\left}{\mathopen{}\mathclose\bgroup\originalleft}
\renewcommand{\right}{\aftergroup\egroup\originalright}
\def\tinycitep*#1{{\tiny\citep*{#1}}}
\def\tinycitealt*#1{{\tiny\citealt*{#1}}}
\def\tinycite*#1{{\tiny\cite*{#1}}}
\def\smallcitep*#1{{\scriptsize\citep*{#1}}}
\def\smallcitealt*#1{{\scriptsize\citealt*{#1}}}
\def\smallcite*#1{{\scriptsize\cite*{#1}}}
\def\mbb#1{\mathbb{#1}}
\def\mrm#1{\mathrm{#1}}
\newcommand{\norm}[1]{\left\lVert#1\right\rVert}
\newcommand{\snorm}[1]{\lVert#1\rVert}
\theoremstyle{plain}  
\newtheorem*{remark}{\textbf{Remark}}
\def\R{\mathbb{R}}
\def\N{\mathbb{N}}
\def\<{\left\langle} % Angle brackets
\def\>{\right\rangle}
\def\norm#1{\left\|{#1}\right\|} % A norm with 1 argument
\def\E{\mbb{E}} % Expectation symbol
\def\bigO#1{\mathcal{O}\left(#1\right)} % big-oh notation
\def\bigOdp#1{\mathcal{O}_{d,\mathbb{P}}\left(#1\right)} % big-oh in probability notation
\def\bigThetdp#1{\Theta_{d,\mathbb{P}}\left(#1\right)} % big-theta in probability notation
\def\littleodp#1{o_{d,\mathbb{P}}\left(#1\right)} % little-oh in probability notation
\def\P{\mbb{P}} % Probability symbol
\def\Parg#1{\P\left({#1}\right)}
\newcommand{\tr}{\operatorname{tr}}
\newcommand{\Tr}{\operatorname{Tr}}
\def\Trarg#1{\Tr\left({#1}\right)} % Trace with argument
\def\trarg#1{\tr\left({#1}\right)} % trace with argument
\def\Var{\mrm{Var}} % Variance symbol
\newcommand{\Exp}[1]{\operatorname{exp}\left({#1}\right)} % Exponential 
\DeclareSymbolFont{rsfs}{U}{rsfs}{m}{n}
\DeclareSymbolFontAlphabet{\mathscrsfs}{rsfs}
\providecommand{\argmin}{\mathop\mathrm{arg min}}
\providecommand{\diag}{\mathop\mathrm{diag}}
\newtheorem{theo}{Theorem}
\newtheorem{lemm}[theo]{Lemma}
\newtheorem{coro}[theo]{Corollary}
\newtheorem{fact}[theo]{Fact}
\newtheorem{prop}[theo]{Proposition}
\newtheorem{conj}[theo]{Conjecture}
\renewenvironment{proof}{\noindent\textbf{Proof.}\hspace*{.3em}}{\qed\\}
\newenvironment{proof-sketch}{\noindent\textbf{Proof Sketch}
  \hspace*{1em}}{\qed\bigskip\\}
\newenvironment{proof-idea}{\noindent\textbf{Proof Idea}
  \hspace*{1em}}{\qed\bigskip\\}
\newenvironment{proof-of-lemma}[1][{}]{\noindent\textbf{Proof of Lemma {#1}}
  \hspace*{1em}}{\qed\\}
\newenvironment{proof-of-theorem}[1][{}]{\noindent\textbf{Proof of Theorem {#1}}
  \hspace*{1em}}{\qed\\}
\newenvironment{proof-attempt}{\noindent\textbf{Proof Attempt}
  \hspace*{1em}}{\qed\bigskip\\}
\newtheorem{assumption}{Assumption}
\newenvironment{proofof}[1][{}]{\par\noindent{\bf Proof of {#1}. }  }{\qed\bigskip}   
\newcommand{\abs}[1]{\left|#1\right|}
\newcommand{\iid}{\stackrel{\mathrm{\tiny{i.i.d.}}}{\sim}}
\newcommand{\MP}{\text{MP}}
\newcommand{\nnl}{\mathrm{NL}}
\newcommand{\eps}{\varepsilon}
\newcommand{\barPhi}{\bar{\vPhi}}
\newcommand{\hSigmaPhi}{\widehat{\vSigma}_\Phi}
\newcommand{\bSigmaPhi}{\overline{\vSigma}_\Phi}
\newcommand{\tSigmaPhi}{\tilde{\vSigma}_{\Phi}} 
\newcommand{\hSigmaPhii}{\widehat{\vSigma}_{\Phi_0}}
\newcommand{\bSigmaPhii}{\overline{\vSigma}_{\Phi_0}}
\newcommand{\SigmaPhi}{\vSigma_\Phi}
\newcommand{\tlambda}{\tilde{\lambda}}
\renewcommand{\contentsname}{Table of Contents}
\newcommand*\samethanks[1][\value{footnote}]{\footnotemark[#1]}
\newcommand{\citep}[1]{\cite{#1}}
\newcommand{\citet}[1]{\cite{#1}}
\renewcommand{\paragraph}{%
  \@startsection{paragraph}{4}%
  {\z@}{1.5ex \@plus 1ex \@minus .2ex}{-1em}%
  {\normalfont\normalsize\bfseries}%
}
\title{\vspace{-2.5mm}
High-dimensional Asymptotics of Feature Learning: \\How One Gradient Step Improves the Representation}
\author{

Jimmy Ba\thanks{University of Toronto and Vector Institute for Artificial Intelligence.  \texttt{\{jba,erdogdu,dennywu\}@cs.toronto.edu}.} ,\,
Murat A.~Erdogdu\samethanks[1] ,\, 
Taiji Suzuki\thanks{University of Tokyo and RIKEN Center for Advanced Intelligence Project. \texttt{taiji@mist.i.u-tokyo.ac.jp}.}  ,\, 
Zhichao Wang\thanks{University of California, San Diego.  \texttt{zhw036@ucsd.edu}.} ,\,
Denny Wu\samethanks[1] ,\,
Greg Yang\thanks{Microsoft Research AI. \texttt{gregyang@microsoft.com}. \vspace{-2mm}} 
}
\begin{document}
\etocdepthtag.toc{mtchapter}
\etocsettagdepth{mtchapter}{subsection}
\etocsettagdepth{mtappendix}{none}

\maketitle 

\vspace{-3mm}  

\begin{abstract}
We study the first gradient descent step on the first-layer parameters $\boldsymbol{W}$ in a two-layer neural network: $f(\boldsymbol{x}) = \frac{1}{\sqrt{N}}\boldsymbol{a}^\top\sigma(\boldsymbol{W}^\top\boldsymbol{x})$, where $\boldsymbol{W}\in\mathbb{R}^{d\times N}, \boldsymbol{a}\in\mathbb{R}^{N}$ are randomly initialized, and the training objective is the empirical MSE loss: $\frac{1}{n}\sum_{i=1}^n (f(\boldsymbol{x}_i)-y_i)^2$. In the proportional asymptotic limit where $n,d,N\to\infty$ at the same rate, and an idealized student-teacher setting, we show that the first gradient update contains a rank-1 ``spike'', which results in an alignment between the first-layer weights and the linear component of the teacher model $f^*$. To characterize the impact of this alignment, we compute the prediction risk of ridge regression on the conjugate kernel after one gradient step on $\boldsymbol{W}$ with learning rate $\eta$, when $f^*$ is a single-index model. We consider two scalings of the first step learning rate $\eta$. For small $\eta$, we establish a Gaussian equivalence property for the trained feature map, and prove that the learned kernel improves upon the initial random features model, but cannot defeat the best linear model on the input. Whereas for sufficiently large $\eta$, we prove that for certain $f^*$, the same ridge estimator on trained features can go beyond this ``linear regime'' and outperform a wide range of random features and rotationally invariant kernels. Our results demonstrate that even one gradient step can lead to a considerable advantage over random features, and highlight the role of learning rate scaling in the initial phase of training.  

\end{abstract}

\section{Introduction}
\label{sec:intro}

We consider the training of a fully-connected two-layer neural network (NN) with $N$ neurons, 
\vspace{-.5mm}
\begin{align} 
f_{\text{NN}}(\vx)  
= \frac{1}{\sqrt{N}} \sum_{i=1}^N a_i\sigma(\langle \vx,\vw_i\rangle)
= \frac{1}{\sqrt{N}}\va^\top\sigma(\vW^\top\vx), 
\label{eq:two-layer-nn}
\end{align} 
where $\vx\in\R^d, \vW\in\R^{d\times N}, \va\in\R^N$, $\sigma$ is the nonlinear activation function applied entry-wise, and the training objective is to minimize the (potentially $\ell_2$-regularized) empirical risk.  
Our analysis will be made in the \textit{proportional asymptotic limit}, i.e., the number of training data $n$, the input dimensionality $d$, and the number of features (neurons) $N$ jointly tend to infinity. Intuitively, this regime reflects the setting where the network width and data size are comparable, which is consistent with practical choices of model scaling.

When the first layer $\vW$ is fixed and only the second layer $\va$ is optimized, we arrive at a kernel model, where the kernel defined by features $\vx\to\sigma(\vW^\top\vx)$ (often called the \textit{hidden representation}) is referred to as the \textit{conjugate kernel} (CK) \citep{neal1995bayesian}. When $\vW$ is randomly initialized, this model is an example of the \textit{random features} (RF) model \citep{rahimi2008random}. 
The training and test performance of RF regression has been extensively studied in the proportional limit
\citep{louart2018random,mei2019generalization}. These precise characterizations reveal interesting phenomena also present in practical deep learning, such as the non-monotonic risk curve \citep{belkin2018reconciling}.     
  
However, RF models do not fully explain the empirical success of neural networks: one crucial advantage of deep learning is the \textit{ability to learn useful features}~\citep{girshick2014rich,devlin2018bert} that ``adapt'' to the learning problem~\citep{suzuki2018adaptivity}. In fact, recent works have shown that such adaptivity enables NNs optimized by gradient descent to outperform a wide range of linear/kernel estimators \citep{allen2019can,ghorbani2019limitations}. 
While many explanations of this separation between NNs and kernel models have been proposed, our starting point is the empirical finding that ``non-kernel'' behavior often occurs in the \textit{early phase} of NN optimization, especially under large learning rate 
\citep{jastrzebski2020break,fort2020deep}.  
The goal of this work is to answer the following question: 
\vspace{-0.5mm}
\begin{center}
{\it Can we precisely capture the presence of feature learning in the early phase of gradient descent training, \\ and demonstrate its improvement over the initial (fixed) kernel in the proportional limit?}  
\end{center} 
\vspace{-2.mm}

\subsection{Contributions}
\label{subsec:contribution}
   
Motivated by the above observations, we investigate a simplified scenario of the ``early phase'' of learning: how \textit{the first gradient step} on the first-layer parameters $\vW$ impacts the representation of the two-layer NN \eqref{eq:two-layer-nn}. Specifically, we consider the regression setting with the squared (MSE) loss, and a student-teacher model in the proportional asymptotic limit; we characterize the prediction risk of the kernel ridge regression estimator on top of the first-layer CK feature $\vx\to\sigma(\vW^\top\vx)$, before and after the gradient descent step\footnote{Some of our results also apply to multiple gradient steps on the first layer $\vW$, which we specify in the sequel. \vspace{-2mm}} on the empirical risk (starting from Gaussian initialization).  
Our findings can be summarized as follows.  
\begin{itemize}[leftmargin=*,topsep=0.5mm, itemsep=0.1mm]
    \item In Section~\ref{sec:spike}, we show that the first gradient step on $\vW$ is approximately rank-1; hence under appropriate learning rate, the updated weight matrix exhibits a 
    information (spike) plus noise (bulk) structure.  
\end{itemize}

\begin{itemize}[resume, before= \vspace*{-1.8mm}, leftmargin=*, topsep=1mm, itemsep=0.1mm]
    \item As a result, the isolated singular vector of the weight matrix aligns with the linear component of target function (teacher) $f^*$, and the top eigenvector of the CK matrix aligns with the training labels $\vy$.   
\end{itemize}

\begin{wrapfigure}{R}{0.37\textwidth}  
\vspace{-4.6mm} 
\centering 
\includegraphics[width=0.36\textwidth]{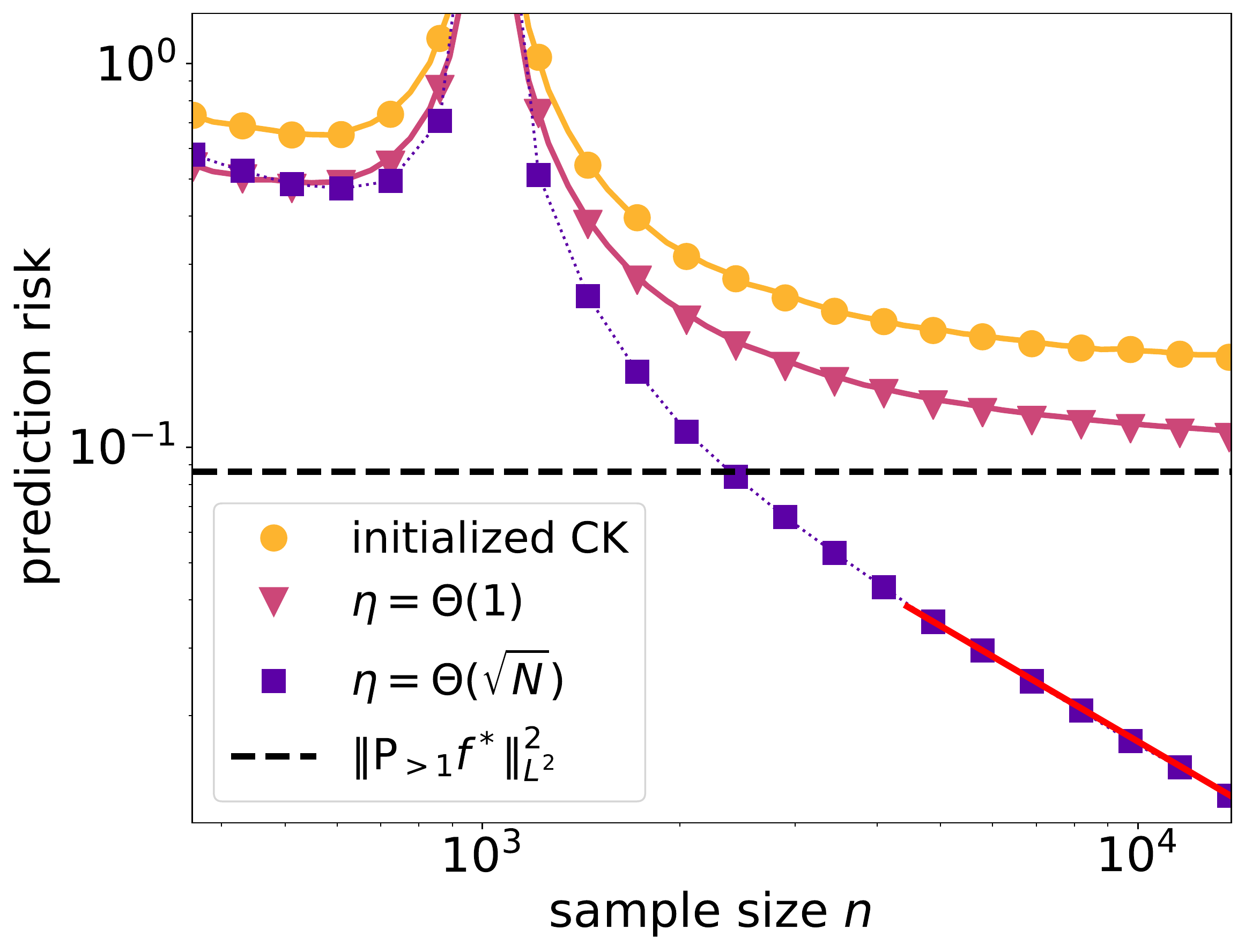} 
\vspace{-2.5mm} 
\caption{\small Prediction risk of ridge regression on trained features (erf) after one gradient step. Dots represent empirical simulations and solid lines are predicted asymptotic values; red line indicates $\Theta(\nicefrac{d}{n})$ rate. }     
\label{fig:intro}     
\vspace{-4mm}    
\end{wrapfigure}

Next in Section~\ref{sec:prediction-risk} we study how the aforementioned alignment improves the kernel. We consider a more specialized setting where the teacher $f^*$ is a single-index model, in which case the prediction risk of a large class of RF/kernel ridge regression estimators is lower-bounded by the $L^2$-norm of the ``nonlinear'' component the teacher $\norm{\textsf{P}_{>1} f^*}_{L^2}^2$, i.e., they can only learn \textit{linear} functions on the input.   
After taking one gradient step on $\vW$, we compute the CK ridge estimator using separate training data, and compare its prediction risk against this linear lower bound. Our analysis will be made under two choices of learning rate scalings (see Figure~\ref{fig:intro}): 
\begin{itemize}[leftmargin=*,topsep=0.5mm, itemsep=0.1mm]
    \item \underline{\textbf{Small lr: $\eta=\Theta(1)$}}. In Section~\ref{subsec:small-lr}, we extend the \textit{Gaussian Equivalence Theorem} (GET) in \citet{hu2020universality} to the updated feature map trained via multiple gradient descent steps on $\vW$ with learning rate $\eta=\Theta(1)$; this allows us to precisely characterize the prediction risk using random matrix theoretical tools. We prove that after one gradient step, the ridge regression estimator on the learned CK features already exhibits nontrivial improvement over the initial RF ridge model, but it remains in the ``linear regime'' and cannot outperform the best linear estimator on the input.  
\end{itemize} 

\begin{itemize}[resume, before= \vspace*{-1.6mm}, leftmargin=*, topsep=1mm, itemsep=0.1mm]
    \item \underline{\textbf{Large lr: $\eta=\Theta(\sqrt{N})$}}. In Section~\ref{subsec:large-lr}, we analyze a larger learning rate that coincides with the \textit{maximal update parameterization} in \citet{yang2020feature}. For certain target functions $f^*$, we prove that kernel ridge regression after one feature learning step can achieve lower risk than the lower bound $\norm{\textsf{P}_{>1} f^*}_{L^2}^2$, and thus outperform a wide range of kernel ridge estimators (including the neural tangent kernel of \eqref{eq:two-layer-nn}).   
\end{itemize} 

\subsection{Related Works}
\label{subsec:related}

\paragraph{Asymptotics of Kernel Regression.} 
A plethora of recent works provided precise performance analysis of RF and kernel models in the proportional limit \citep{mei2019generalization,gerace2020generalisation,dhifallah2020precise,liao2020random,adlam2020neural}. These results typically build upon analyses of the spectrum of kernel matrices, a key ingredient in which is the ``linearization'' of nonlinear random matrices via Taylor expansion \citep{el2010spectrum} or orthogonal 
polynomials \citep{cheng2013spectrum,pennington2017nonlinear}. 

Consequently, a large class of kernel models are essentially linear in the proportional limit \citep{liang2018just,bartlett2021deep}. In the case of RF models, similar property is captured by the Gaussian equivalence theorem \citep{goldt2020modeling,hu2020universality,goldt2021gaussian}, which roughly states that RF estimators achieve the same prediction risk as a (noisy) linear model. For input on unit sphere, \cite{ghorbani2019linearized,mei2021generalization} showed that sample size $n=\Omega(d^2)$ is required to go beyond this ``linear'' regime. 
As we will see in certain settings, such limitation can also be overcome (in the $n\asymp d$ scaling) by training the feature map for one gradient step with sufficiently large learning rate.    

\paragraph{Advantage of NNs over Fixed Kernels.} 
It is well-known that under certain initialization, the learning dynamics of overparameterized NNs can be described by the neural tangent kernel (NTK) \citep{jacot2018neural}. However, the NTK description essentially ``freezes'' the model around its initialization \citep{chizat2018note}, and thus 
does not explain the presence of \textit{feature learning}  in NNs \citep{yang2020feature}.  
     
In fact, various works have shown that deep learning is more powerful than kernel methods in terms of approximation and estimation ability \citep{bach2017breaking,suzuki2018adaptivity,imaizumi2019deep,schmidt2020nonparametric,ghorbani2020neural}. Moreover, in some specialized settings, NNs optimized with gradient-based methods can outperform the NTK (or more generally any kernel estimators) in terms of generalization error \citep{allen2019can,wei2019regularization,ghorbani2019limitations,li2020learning,daniely2020learning,suzuki2020benefit,allen2020backward,refinetti2021classifying,karp2021local,abbe2021staircase} (see \cite[Table 2]{malach2021quantifying} for survey). 
These results often require careful analysis of the landscape (e.g., properties of global optimum) or optimization dynamics; in contrast, our goal is to precisely characterize the first gradient step and demonstrate a similar separation.           

\paragraph{Early Phase of NN Optimization.}
Recent empirical studies suggest that properties of the final trained model is strongly influenced by the early stage of optimization \citep{golatkar2019time,leclerc2020two,pesme2021implicit}, and the NTK evolves most rapidly in the first few epochs \citep{fort2020deep}. 
Large learning rate in the initial steps can impact the conditioning of loss surface \citep{jastrzebski2020break,cohen2021gradient} and potentially improve the generalization performance \citep{li2019towards,lewkowycz2020large}. 
Under structural assumptions on the data, it has been proved that one gradient step with sufficiently large learning rate can drastically decrease the training loss \citep{chatterji2020does}, extract task-relevant features \citep{daniely2020learning,frei2022random}, or escape the trivial stationary point at initialization \citep{hajjar2021training}. While these works also highlight the benefit of one feature learning step\footnote{We however note that the ``early phase'' is not always sufficient: for certain teacher model $f^*$, (stochastic) gradient descent may exhibit a long initial ``search'' stage before nontrivial alignment can be achieved, see \cite{arous2021online,veiga2022phase}.  \vspace{-2.5mm} }, to our knowledge this advantage has not been \textit{precisely} characterized in the proportional regime (where the performance of RF models has been extensively studied).

\section{Problem Setup and Basic Assumptions}
\label{sec:setup}

\paragraph{Notations.}
Throughout this paper, $\|\cdot\|$ denotes the $\ell_2$ norm for vectors and the $\ell_2\to\ell_2$ operator norm for matrices, and $\|\cdot\|_F$ is the Frobenius norm. For matrix $\vM\in\R^{n\times n}$,  $\tr(\vM)=\frac{1}{n}\Tr(\vM)$ is the normalized trace. $\cO_d(\cdot)$ and $o_d(\cdot)$ stand for the standard big-O and little-o notations, where the subscript highlights the asymptotic variable; we write $\tilde{\cO}(\cdot)$ when the (poly-)logarithmic factors are ignored. $\cO_{d,\P}(\cdot)$ (resp.~$o_{d,\P}(\cdot)$) represents big-O (resp.~little-o) in probability as $d\to\infty$. $\Omega(\cdot),\Theta(\cdot)$ are defined analogously. $\Gamma$ is the standard Gaussian distribution in $\R^d$. Given $f:\R^d\to\R$, we denote its $L^p$-norm w.r.t.~$\Gamma$ as $\norm{f}_{L^p(\R^d,\Gamma)}$, which we abbreviate as $\norm{f}_{L^p}$ when the context it clear. $\mu^{\MP}_{\gamma}$ is the Marchenko–Pastur distribution with ratio $\gamma$.    

\subsection{Training Procedure}  

\paragraph{Gradient Descent on the 1st Layer.} 
Given training examples $\{(\vx_i,y_i)\}_{i=1}^n$, we learn the two-layer NN \eqref{eq:two-layer-nn} by minimizing the empirical risk: $\cL(f) = {\frac{1}{n}}\sum_{i=1}^n \ell(f(\vx_i),y_i)$, where $\ell$ is the squared loss $\ell(x,y) = \frac{1}{2}(x-y)^2$. 
As previously remarked, fixing the first layer $\vW$ at random initialization and learning the second layer $\va$ yields RF model, which is a convex problem with a closed-form solution. 
In contrast, we are interested in \textit{learning the feature map (representation)}; hence we first fix $\va$ (at initialization) and perform gradient descent on $\vW$.  
We write the initialized first-layer weights as $\vW_0$, and the weights after $t$ gradient steps as $\vW_t$. The gradient update with learning rate $\eta>0$ is given as: $\vW_{t+1} = \vW_t + \eta \sqrt{N}\cdot\vG_t$, where 
\begin{align}\label{eq:gradient-step-MSE}
\vG_t
:= 
    \frac{1}{n} \vX^\top \left[\left(\frac{1}{\sqrt{N}}\left(\vy - \frac{1}{\sqrt{N}}\sigma(\vX\vW_t)\va\right)\va^\top\right) \odot \sigma'(\vX\vW_t)\right], 
\end{align}  
for $t\in\N$, in which $\odot$ is the Hadamard product, $\sigma'$ is the derivative of $\sigma$ (acting entry-wise), and we denoted the input feature matrix $\vX\in\R^{n\times d}$, and the corresponding label vector $\vy\in\R^n$. We remark that the $\sqrt{N}$-scaling in front of the learning rate $\eta$ is due to the $\frac{1}{\sqrt{N}}$-prefactor in our definition of two-layer NN \eqref{eq:two-layer-nn}.

\paragraph{Ridge Regression for the 2nd Layer.}
After obtaining the updated weights $\vW_1$, we evaluate the quality of the new CK features by computing the prediction risk of the \textit{kernel ridge regression} estimator on top of the first-layer representation. 
Note that if ridge regression is performed on the same data $\vX$, then after one feature learning step, $\vW_1$ is no longer independent of $\vX$, which significantly complicates the analysis. To circumvent this difficulty, we estimate the regression coefficients $\hat{\va}$ using \textit{a new set of training data} $\{\tilde{\vx}_i,\tilde{y}_i\}_{i=1}^n$, which for simplicity we assume to have the same size as the original dataset. This can be interpreted as the representation is ``pretrained'' on separate data before the ridge estimator is learned. 
 
Denote the feature matrix on the fresh training set $\{\tilde{\vX},\tilde{\vy}\}$ as $\vPhi := \frac{1}{\sqrt{N}}\sigma(\tilde{\vX}\vW_1)\in\R^{n\times N}$, the CK ridge regression estimator is given by
$\hat{f}(\vx) =  \frac{1}{\sqrt{N}}\hat{\va}^\top\sigma\left(\vW^\top_1\vx\right)$, where 
$
\hat{\va} = \text{argmin}_{\va}  \,\left\{\frac{1}{n}\norm{\tilde{\vy} - \vPhi\va}^2 + \frac{\lambda}{N} \norm{\va}^2\right\}. 
$

\vspace{-0.6mm}
\subsection{Main Assumptions} 
 
Given a target function (ground truth) $f^*$ and a learned model $\hat{f}$, we evaluate the model performance using the prediction risk: $\cR(\hat{f}) = \E_{\vx}(\hat{f}(\vx) - f^*(\vx))^2 = \snorm{\hat{f} - f^*}_{L^2}^2$, where the expectation is taken over the test data from the same training distribution. 
Our analysis will be made under the following assumptions.  
  
\begin{wrapfigure}{R}{0.35\textwidth}  
\vspace{-4.2mm} 
\centering  
\includegraphics[width=0.34\textwidth]{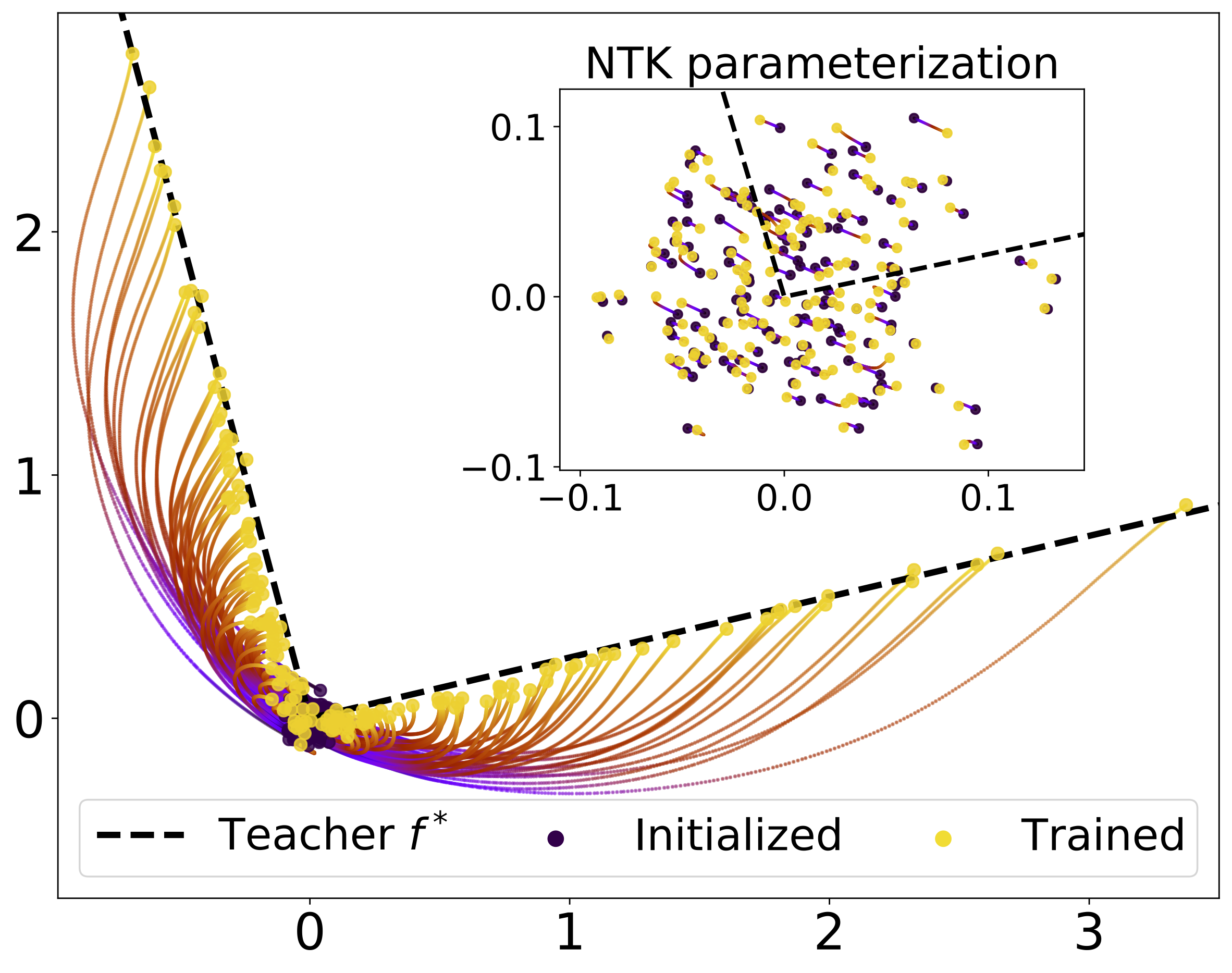}  
\vspace{-2.mm} 
\caption{\small 2D visualization of optimization trajectory under mean-field (main) and NTK (subfigure) parameterizations. $f^*$ consists of two ReLU neurons and the student is a two-layer ReLU neural network. Darker color indicates earlier in training, and vice versa. We set $d=512$, $\psi_1=\psi_2=10$; the models are optimized until both training losses are below $10^{-3}$. }
\label{fig:feature-learning}       
\vspace{-4mm}   
\end{wrapfigure}    

\begin{assumption}
\label{assump:1}~
\begin{enumerate}[leftmargin=*,topsep=0.5mm, itemsep=0.mm]
    \item \textbf{Proportional Limit.} $n,d,N\to\infty$, $n/d\to\psi_1$, $N/d\to\psi_2$, where $\psi_1, \psi_2\in(0,\infty)$. 
        \item \textbf{Student-teacher Setup.} Labels are generated as $y_i = f^*(\vx_i) + \eps_i$, where $\vx_i\iid\cN(0,\vI)$, $\eps_i$ is i.i.d.~sub-Gaussian noise with mean $0$ and variance $\sigma_\eps^2$, and the teacher $f^*$ is $\lambda_\sigma$-Lipschitz with $\norm{f^*}_{L^2} = \Theta_d(1)$.   
    \item \textbf{Normalized Activation.} The nonlinear activation $\sigma$ has $\lambda_\sigma$-bounded first three derivatives almost surely. In addition, the activation function satisfies $\E[\sigma(z)]=0$, 
    $\E[z\sigma(z)]\neq 0$, for $z\sim\cN(0,1)$. 
    \item \textbf{Gaussian Initialization.} $\sqrt{d}\cdot[\vW_0]_{ij}\iid\cN(0,1), ~
    \sqrt{N}\cdot[\va]_j\iid \cN(0,1),$  for all $i\in[d],j\in [N]$. 
\end{enumerate} 
\end{assumption}

\begin{remark}
Following \citet{hu2020universality}, we assume smooth and centered activation to simplify the computation; Section~\ref{sec:prediction-risk} provides empirical evidence that our results hold beyond this condition (see also \cite{loureiro2021learning}). 
We expect that the Gaussian input assumption may be replaced by weaker orthogonality conditions as in \citet{fan2020spectra}.   
\end{remark}

Under Assumption~\ref{assump:1}, increasing the sample size corresponds to enlarging $\psi_1$, and increasing the network width corresponds to enlarging $\psi_2$. The proportional scaling of $n,d,N$ 
(also referred to as the ``linear-width'' regime) 
implies that the model width is not significantly larger than the training set size, 
in contrast to the polynomial overparameterization often required in NTK analyses \citep{du2018gradient}, which may be less realistic for practical settings.   

Importantly, the initialization of our two-layer NN \eqref{eq:two-layer-nn} resembles the \textit{mean-field} parameterization~\citep{mei2018mean,chizat2018global}: the second layer is divided by an additional $\sqrt{N}$-factor compared to the kernel (NTK) scaling --- this ensures that $f_{\mathrm{NN}}(\vx)=\littleodp{1}$ at initialization and enables feature learning (see \cite[Corollary 3.10]{yang2020feature}). 
As an illustrative example, in Figure \ref{fig:feature-learning} we plot the gradient descent trajectory of the first-layer parameters $\vW$ in two coordinates. Observe that under the mean-field parameterization (main figure), the neurons travel away from the initialization and align with the target function (black dashed lines), whereas in the NTK parameterization (subfigure, which omits the $\frac{1}{\sqrt{N}}$-prefactor), the parameters remain close to their initialization and hence do not learn useful features.  
 
\vspace{-0.4mm}
\subsection{Lower Bound for Kernel Ridge Regression}
\label{subsec:kernel-lower-bound}

To illustrate the benefit of \textit{feature learning}, we compare the prediction risk of ridge regression on the trained CK (after one gradient step) against the ridge estimator on the initial RF kernels. Specifically, given training data $\{\vx_i,y_i\}_{i=1}^n$, we consider the following class of kernel models for comparison.  
  
\begin{itemize}[leftmargin=*,topsep=0.4mm, itemsep=0.1mm]
    \item \textbf{Random Features Model.} We introduce two RF kernels associated with the two-layer NN \eqref{eq:two-layer-nn} at initialization: the conjugate kernel (CK) defined by features $\vphi_{\text{CK}}(\vx)=\frac{1}{\sqrt{N}}\sigma(\vW_0^\top\vx)\in\R^N$, and the neural tangent kernel (NTK) \citep{jacot2018neural} defined by features $\vphi_{\text{NTK}}(\vx) = \frac{1}{\sqrt{Nd}}\text{Vec}\big(\sigma'(\vW_0^\top\vx)\vx^\top\big)\in\R^{Nd}$. 
    Given feature map $\text{RF}\in\{\text{CK,NTK}\}$, the RF ridge regression estimator can be written as
    \begin{align}
    \label{eq:RF-estimator}
        \hat{f}_{\text{RF}}(\vx) = \langle\vphi_{\text{RF}}(\vx),\hat{\va}\rangle, ~~
        \hat{\va} = \argmin_{\va\in\R^N} \,\Big\{\frac{1}{n}\sum_{i=1}^n (y_i - \langle\vphi_{\text{RF}}(\vx_i),\va\rangle)^2 + \frac{\lambda}{N} \norm{\va}^2\Big\}. 
    \end{align}
    \item \textbf{Rotationally Invariant Kernel Model.} Consider the inner-product kernel: $k(\vx,\vy) = g\left(\frac{\langle\vx,\vy\rangle}{d}\right)$, and Euclidean distance kernel: $k(\vx,\vy) = g\left(\frac{\norm{\vx-\vy}^2}{d}\right)$, where $g$ satisfies certain smoothness conditions as in \citet{el2010spectrum}. 
    Denote the associated RKHS as $\cH$, and $[\vK]_{ij} = k(\vx_i,\vx_j)$. The kernel ridge estimator is given by   
    \begin{align}
     \label{eq:kernel-estimator}
        \hat{f}_{\text{ker}} = \argmin_{f\in\cH}\Big\{\frac{1}{n}\sum_{i=1}^n (y_i - f(\vx_i))^2 + \lambda\norm{f}_{\cH}^2\Big\} ~\Rightarrow~
        \hat{f}_{\text{ker}}(\vx) =  k(\vx,\vX)^\top\left(\vK + \lambda\vI\right)^{-1}\vy. 
    \end{align}  
\end{itemize}

We denote the prediction risk of the above kernel estimators as $\cR_{\text{CK}}(\lambda), \cR_{\text{NTK}}(\lambda), \cR_{\text{ker}}(\lambda)$, respectively.   
The following lower bound is
a simple combination of known results from \cite{el2010spectrum,hu2020universality,montanari2020interpolation,bartlett2021deep}. 
\begin{prop}[Informal]
\label{prop:ridge-lower-bound}
Under Assumptions \ref{assump:1} and \ref{assump:2}, 
\begin{align}
\label{eq:ridge-lower-bound}
    \inf_{\lambda>0} \min \left\{\cR_{\mathrm{CK}}(\lambda),  \cR_{\mathrm{NTK}}(\lambda), \cR_{\mathrm{ker}}(\lambda)\right\} \ge \norm{\textsf{P}_{>1}f^*}_{L^2}^2 + o_{d,\P}(1),   
\end{align}
where $\textsf{P}_{>1}$ denotes the projector orthogonal to constant and linear functions in $L^2(\R^d,\Gamma)$.
\end{prop}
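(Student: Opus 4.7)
The plan is to handle the three kernels separately, reduce each one to an asymptotically equivalent linear model in the proportional regime, and then observe that a linear ridge estimator on the input can at best learn the linear component of $f^*$, leaving $\|\textsf{P}_{>1}f^*\|_{L^2}^2$ as an unavoidable residual. Since the claim is an ``$\inf_{\lambda>0}$'' statement, it suffices to show the lower bound holds uniformly in $\lambda$, which we will achieve by showing each risk is bounded below by a $\lambda$-dependent quantity whose infimum is still at least $\|\textsf{P}_{>1}f^*\|_{L^2}^2 + o_{d,\P}(1)$.

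\textbf{Step 1 (CK and NTK via Gaussian equivalence).} For the conjugate kernel, I would invoke the Gaussian Equivalence Theorem of \citet{hu2020universality}: under Assumption \ref{assump:1}, the CK random features model at initialization $\vphi_{\mathrm{CK}}(\vx)=\tfrac{1}{\sqrt{N}}\sigma(\vW_0^\top\vx)$ is asymptotically equivalent (in training and test risk) to the noisy linear feature map $\mu_0 \boldsymbol{1} + \mu_1 \vW_0^\top\vx + \mu_\star\vz$ where $\mu_k=\E[\mathrm{He}_k(z)\sigma(z)]$ and $\vz$ is independent isotropic Gaussian noise. Since Gaussian noise features carry no information about $f^*$, the prediction risk of ridge regression on this equivalent model can only improve upon a linear estimator on $\vx$ by the amount that the nonlinear signal in $f^*$ correlates with $\mu_1\vW_0^\top\vx$, which is zero up to $o_{d,\P}(1)$. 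Combined with the Bayes-like lower bound of \citet{montanari2020interpolation} (or a direct bias-variance argument), this yields $\inf_\lambda \cR_{\mathrm{CK}}(\lambda)\ge \|\textsf{P}_{>1}f^*\|_{L^2}^2+o_{d,\P}(1)$. The NTK case is analogous: $\vphi_{\mathrm{NTK}}(\vx)$ splits into a $\sigma'$-dependent piece times $\vx$, and a Hermite expansion plus the analysis of \citet{adlam2020neural,bartlett2021deep} again produces a linear-plus-noise equivalent model with the same type of lower bound.

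\textbf{Step 2 (rotationally invariant kernels via El Karoui linearization).} For the inner-product and Euclidean-distance kernels with smooth $g$, I would apply the El Karoui expansion: in the proportional regime,
\begin{align}
\vK = \alpha_0 \boldsymbol{1}\boldsymbol{1}^\top + \alpha_1 \vX\vX^\top/d + \alpha_2\vI + \vE,
\end{align}
with $\|\vE\|\to 0$ in operator norm, where $\alpha_0,\alpha_1,\alpha_2$ depend only on $g$ and the first two derivatives of $g$ at the appropriate points. The corresponding kernel ridge predictor is therefore asymptotically equivalent to a linear ridge predictor on $\vx$ with an effective regularization $\lambda_{\mathrm{eff}}(\lambda)$ (plus a constant offset), so by the same reasoning as in Step 1,
\begin{align}
\inf_\lambda \cR_{\mathrm{ker}}(\lambda)\ge \inf_{\tilde\lambda\ge 0} \cR_{\mathrm{lin}}(\tilde\lambda) + o_{d,\P}(1) \ge \|\textsf{P}_{>1}f^*\|_{L^2}^2+o_{d,\P}(1),
\end{align}
where $\cR_{\mathrm{lin}}(\tilde\lambda)$ is the risk of linear ridge regression on the raw input.

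\textbf{Step 3 (the linear lower bound).} The final ingredient is: any predictor that is a (possibly random) affine function of $\vx$ satisfies $\cR(\hat f)\ge\|\textsf{P}_{>1}f^*\|_{L^2}^2$ by the orthogonal decomposition $f^*=\textsf{P}_{\le 1}f^*+\textsf{P}_{>1}f^*$ in $L^2(\R^d,\Gamma)$ and the Pythagorean identity, since $\textsf{P}_{>1}f^*$ is $L^2$-orthogonal to every affine function. Applying this to the Gaussian-equivalent linear model from Steps 1--2 (whose ``extra'' features are independent noise that cannot create additional correlation with $\textsf{P}_{>1}f^*$) delivers the claimed lower bound.

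\textbf{Main obstacle.} The only nontrivial bookkeeping is the uniformity in $\lambda$: GET-type equivalences are usually stated at fixed $\lambda$, whereas we need the $\inf_\lambda$. I would handle this by showing that the risk curves $\lambda\mapsto\cR_\bullet(\lambda)$ are uniformly Lipschitz / monotone on $[\lambda_-,\infty)$ for some $\lambda_->0$ (using resolvent identities and operator-norm bounds on $\vK$) and that $\lambda\to 0$ and $\lambda\to\infty$ can be handled directly. Given that all three cases ultimately reduce to a linear ridge problem with an effective regularization depending continuously on $\lambda$, passing the infimum through the equivalence is then routine. The deeper technical effort --- the GET itself and the El Karoui expansion --- is entirely imported from the cited works, so this proposition genuinely is, as the authors note, an assembly of known results.
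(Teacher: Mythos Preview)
Your proposal is correct and follows essentially the same route as the paper: for the CK you both invoke the Gaussian equivalence of \citet{hu2020universality}, for the rotationally invariant kernels you both rely on the El Karoui-type linearization (the paper phrases this via \cite[Theorem 4.13]{bartlett2021deep}), and for the NTK you both linearize the kernel to $b_0^2\vX\vX^\top/d + b_1^2\vI$ using the results of \citet{adlam2020neural,montanari2020interpolation}, after which the Pythagorean lower bound for affine predictors gives $\|\textsf{P}_{>1}f^*\|_{L^2}^2$. Your explicit discussion of uniformity in $\lambda$ is actually more careful than the paper's informal treatment, but otherwise the arguments coincide.
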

 
This proposition implies that in the proportional limit,  ridge regression on the RF or rotationally invariant kernels defined above does not outperform the best linear estimator on the input data --- it cannot achieve negligible prediction risk unless the target function is linear (i.e., $\norm{\textsf{P}_{>1}f^*}_{L^2} = 0$). 
In Section~\ref{sec:prediction-risk}, we compare the prediction risk of the ridge estimator on trained features against this lower bound.

\allowdisplaybreaks

\section{How Does One Gradient Step Change the Weights?} 
\label{sec:spike}

In this section, we study the properties of the updated weight matrix $\vW_1$ in the two-layer NN \eqref{eq:two-layer-nn}. We first show that the first gradient step on $\vW$ can be approximated by a rank-1 matrix, which contains information of the training labels $\vy$. Based on this property, we provide a signal (spike) plus noise (bulk) decomposition of $\vW_1$, and prove that the isolated singular vector is aligned to the linear component of the teacher $f^*$.  

\subsection{Almost Rank-1 Property of the Gradient Matrix}
\label{subsec:rank-1}

We utilize the orthogonal decomposition of the activation function $\sigma$ (note that $\sigma$ is normalized by Assumption~\ref{assump:1} so that $\E[\sigma(z)]=0$). Define the coefficients
\begin{align}
    \mu_1 = \E[z\sigma(z)], 
\quad 
\mu_2 = \sqrt{\E[\sigma(z)^2] - \mu_1^{2}}, \quad \mathrm{where~} z\sim\cN(0,1). \label{def:mu_1mu_2}
\end{align}
This implies that $\sigma(z) = \mu_1 z + \sigma_{\perp}(z)$, where $\E[\sigma_{\perp}(z)] = \E[z\sigma_{\perp}(z)] = 0$, and $\E[\sigma_\perp(z)^2] = \mu_2^{2}$. 
When $\mu_1\neq 0$ (again due to Assumption~\ref{assump:1}), we have the following characterization of the first gradient step $\vG_0$ in \eqref{eq:gradient-step-MSE}.

\begin{prop}\label{thm:W1-W0}
Define $\vG_0 = \frac{1}{\eta \sqrt{N}}(\vW_1 - \vW_0)$ and a rank-1 matrix $\vA := \frac{\mu_1}{n\sqrt{N}} \vX^\top\vy\va^\top$. 
Under Assumption \ref{assump:1}, there exist some constants $c, C>0$ such that for all large $n,N,d$, with probability at least $1-ne^{-c\log^2n}$, 
\begin{align*}
   \norm{\vG_0-\vA} \le \frac{C\log^2n}{\sqrt{n}}\cdot\norm{\vG_0}. 
\end{align*}
\end{prop}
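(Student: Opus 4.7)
The strategy is to decompose $\vG_0$ so that the rank-one matrix $\vA$ appears explicitly, and then to bound the two resulting error terms in operator norm. First I would use the decomposition in \eqref{def:mu_1mu_2}: since $\E[\sigma(z)]=0$, Stein's identity gives $\E[\sigma'(z)]=\E[z\sigma(z)]=\mu_1$, so I can write $\sigma'(\vX\vW_0) = \mu_1\vJ + \tilde{\vS}$, where $\vJ\in\R^{n\times N}$ is the all-ones matrix and $\tilde{\vS}$ has entrywise mean zero. Substituting this into \eqref{eq:gradient-step-MSE} at $t=0$ and using the identity $(\vu\vv^\top)\odot\tilde{\vS}=\diag(\vu)\tilde{\vS}\diag(\vv)$, with $\vr:=\vy-\tfrac{1}{\sqrt{N}}\sigma(\vX\vW_0)\va$, gives
\baligns
\vG_0 \;=\; \underbrace{\tfrac{\mu_1}{n\sqrt{N}}\vX^\top\vy\va^\top}_{\vA} \;+\; \underbrace{\tfrac{\mu_1}{n\sqrt{N}}\vX^\top(\vr-\vy)\va^\top}_{\vE_1} \;+\; \underbrace{\tfrac{1}{n\sqrt{N}}\vX^\top\diag(\vr)\,\tilde{\vS}\,\diag(\va)}_{\vE_2}.
\ealigns

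The first error $\vE_1$ is controlled by the mean-field scaling: the extra $1/\sqrt{N}$ prefactor in the network output gives $\|\vr-\vy\|\le\tfrac{1}{\sqrt{N}}\|\sigma(\vX\vW_0)\|\,\|\va\|=\cO_\P(1)$, where I use the nonlinear random matrix bound $\|\sigma(\vX\vW_0)\|=\cO_\P(\sqrt{n})$ \citep{louart2018random} together with $\|\va\|=\cO_\P(1)$. Combined with $\|\vX\|=\cO_\P(\sqrt{n})$ this yields $\|\vE_1\|=\cO_\P(1/\sqrt{nN})$. For $\vE_2$ the diagonal form gives
\[
    \|\vE_2\|\;\le\;\tfrac{1}{n\sqrt{N}}\,\|\vX\|\,\|\vr\|_\infty\,\|\tilde{\vS}\|\,\|\va\|_\infty,
\]
and sub-Gaussian maximal inequalities (using Lipschitzness of $f^*$ and sub-Gaussianity of $\eps_i$) give $\|\vr\|_\infty=\cO_\P(\log n)$ and $\|\va\|_\infty=\cO_\P(\sqrt{\log N/N})$, while the centered nonlinear matrix satisfies $\|\tilde{\vS}\|=\|\sigma'_\perp(\vX\vW_0)\|=\cO_\P(\sqrt{n})$ by the nonlinear random matrix theory of \citep{pennington2017nonlinear,fan2020spectra}, valid because $\sigma'_\perp$ is Lipschitz and bounded under Assumption~\ref{assump:1}. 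Together these bounds produce $\|\vE_2\|=\tilde{\cO}_\P(1/N)$.

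To convert the absolute bound $\|\vG_0-\vA\|\le\|\vE_1\|+\|\vE_2\|=\tilde{\cO}_\P(1/n)$ into the claimed relative bound, I would lower-bound $\|\vG_0\|$. Since $\vA$ is rank one, $\|\vA\|=\tfrac{\mu_1}{n\sqrt{N}}\|\vX^\top\vy\|\cdot\|\va\|$; a CLT/concentration argument for $\vX^\top\vy=\sum_i y_i\vx_i$ gives $\|\vX^\top\vy\|=\Theta_\P(n)$ under Assumption~\ref{assump:1} (either from the law of large numbers when $\E[\vx y]\neq 0$, or from the isotropic-covariance CLT with noise when $\E[\vx y]=0$), hence $\|\vA\|=\Theta_\P(1/\sqrt{N})$. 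The triangle inequality together with the error bounds then forces $\|\vG_0\|=\Theta_\P(1/\sqrt{N})$, so dividing yields the advertised $\cO(\log^2 n/\sqrt{n})$ relative bound once we use $n\asymp N\asymp d$. The failure probability $1-ne^{-c\log^2n}$ tracks the sub-Gaussian tail used for $\|\vr\|_\infty$ via a union bound over $n$ coordinates. The hardest step will be the operator-norm control of $\tilde{\vS}$: because $\vX\vW_0$ is a product of two Gaussian matrices rather than a single Gaussian, its entries are dependent and a naive $\varepsilon$-net argument does not yield the sharp $\sqrt{n}$ scaling; instead one must invoke the moment/Stieltjes-transform machinery developed in the nonlinear-random-matrix literature.
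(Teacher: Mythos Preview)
Your proposal is correct and follows essentially the same approach as the paper. The paper's decomposition $\vG_0=\vA+\vB-\vC$ (splitting by whether the residual term involves $\vy$ or the network output) is algebraically equivalent to your $\vA+\vE_1+\vE_2$ (splitting by the constant versus centered part of $\sigma'$); both then bound the error terms via the Hadamard/diagonal factorization, sub-Gaussian maximal inequalities for $\|\vy\|_\infty,\|\va\|_\infty$, and the operator-norm bound $\|\sigma'_\perp(\vX\vW_0)\|=\cO_\P(\sqrt{n})$ from \cite{fan2020spectra}, and both finish by lower-bounding $\|\vA\|=\Theta_\P(1/\sqrt{N})$ and applying the triangle inequality.
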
 
Proposition~\ref{thm:W1-W0} suggests that the first-step gradient can be approximated in operator norm by a rank-1 matrix $\vA$; thus, when the learning rate is reasonably large, we expect a ``spike'' to appear in the updated weight matrix $\vW_1$. 
Intuitively, since this rank-1 direction relates to the label vector $\vy$, the resulting $\vW_1$ may be ``aligned'' to the target function $f^*$. 
This intuition is confirmed in the next subsection. 

\paragraph{Scaling of Learning Rate $\eta$.}  Before we analyze the alignment property, it is important to specify an appropriate learning rate $\eta$ such that change in the first-layer weights after one gradient descent step is neither insignificant nor unreasonably large. 
From Assumption~\ref{assump:1} we know that for proportional $n,d,N$, the initial weight matrix satisfies $\norm{\vW_0}=\Theta_{d,\P}(1), \norm{\vW_0}_F=\Theta_{d,\P}(\sqrt{d})$, and due to Proposition~\ref{thm:W1-W0}, the first gradient step satisfies $\sqrt{N}\norm{\vG_0}=\Theta_{d,\P}(1), \sqrt{N}\norm{\vG_0}_F=\Theta_{d,\P}(1)$.   
 
In other words, if we write $\eta=\Theta(N^{\alpha})$, then $\alpha\ge 0$ is required so that the change in the weight matrix is non-negligible (one may verify that for $\eta=o_d(1)$, the test performance of kernel ridge regression remains unchanged after one GD step). On the other hand, when $\alpha>1/2$, the gradient ``overwhelms'' the initialized parameters $\vW_0$, and the preactivation feature $\langle\vx,\vw_i\rangle$ in the NN \eqref{eq:two-layer-nn} becomes unbounded as $N\to\infty$.  
This motivates us to consider the following two regimes of learning rate scaling. 
\begin{align}
    \text{\underline{Small lr}: } &\eta=\Theta(1) ~\Rightarrow~ \norm{\vW_1-\vW_0}\asymp \norm{\vW_0}
    \label{eq:lrsmall} 
    \\
    \text{\underline{Large lr}: } &\eta=\Theta(\sqrt{N}) ~\Rightarrow~ \norm{\vW_1-\vW_0}_F\asymp \norm{\vW_0}_F
    \label{eq:lrlarge}
\end{align}
The following subsection and Section~\ref{subsec:small-lr} consider the setting where $\eta=\Theta(1)$, which is parallel to common practice in NN optimization\footnote{Heuristically speaking, the updated NN under $\eta=\Theta(1)$ remains close to the ``kernel regime'', in the sense that each neuron does not travel far away from the initialization, i.e., as $N\to\infty$,  $\big|[\vW_1-\vW_0]_{ij}\big| \ll \big|[\vW_0]_{ij}\big|$ for all $i,j$ with high probability.  }.   
Whereas in Section~\ref{subsec:large-lr} we analyze the larger step size $\eta=\Theta(\sqrt{N})$, which resembles the learning rate scaling in the maximal update parameterization in \citet{yang2020feature}; in particular, using Lemma~\ref{lemm:gradient-norm} in Appendix~\ref{app:gradient-norm} one can easily verify that given data point $\vx\sim\cN(0,\vI)$, the change in each coordinate of the feature vector is roughly of the same order as its initialized magnitude, that is, for $i\in [N]$, $\big|\sigma(\vW_1^\top\vx)-\sigma(\vW_0^\top\vx)\big|_i \asymp \big|\sigma(\vW_0^\top\vx)\big|_i = \tilde{\Theta}(1)$ with probability 1 as $N\to\infty$.

\subsection{Alignment with the Target Function}
\label{subsec:alignment}
Under Assumption~\ref{assump:1}, we may utilize the following orthogonal decomposition of the target function $f^*$, 
\begin{align}
    f^*(\vx) = \mu_0^* + \mu_1^*\langle\vx,\vbeta_*\rangle + \textsf{P}_{>1} f^*(\vx), ~~
    \mu_1^*\vbeta_* = \E[\vx f^*(\vx)], 
    \label{eq:orthogonal-decomposition}
\end{align} 
where $\textsf{P}_{>1}$ is the projector orthogonal to constant and linear functions in $L^2(\R^d,\Gamma)$, which implies that $\E[\textsf{P}_{>1} f^*(\vx)]=0, \E[\vx\textsf{P}_{>1} f^*(\vx)]=\mathbf{0}$ (e.g., see \cite[Section 4.3]{bartlett2021deep}). As $d\to\infty$, quantities defined in \eqref{eq:orthogonal-decomposition} satisfy
$\norm{\vbeta_*}=1, ~
\norm{\textsf{P}_{>1}f^*}_{L^2} \to \mu_2^*$, where $\mu_0^*, \mu_1^*, \mu_2^*$ are bounded constants.  
Intuitively, $\mu_0^*,\mu_1^*$, and $\mu_2^*$ can be interpreted as the ``magnitude'' of the constant, linear, and nonlinear components of $f^*$, respectively.

\paragraph{A Spiked Model for $\vW_1$.} 
When $\eta=\Theta(1)$ in \eqref{eq:lrsmall}, we show a BBP phase transition (named after Baik, Ben Arous, Péché \citep{baik2005phase}) for the leading singular value of $\vW_1$, and quantify the alignment between the corresponding singular vector $\vu_1$ and the linear component of target function $\vbeta_*$.
It is worth noting that in our analysis, the signal  $\vbeta_*$ is ``hidden'' in the rank-one perturbation $\vA$ defined in Proposition \ref{thm:W1-W0}; thus our setting is different from the usual low-rank signal-plus-noise models  (e.g. \cite{benaych2011eigenvalues,benaych2012singular,capitaine2018limiting}), and the alignment we aim to quantify $|\langle\vu_1,\vbeta_*\rangle|$ does not directly follow from classical results on the BBP transition.   
 
\begin{theo}\label{thm:alignment} 
Given Assumption~\ref{assump:1} and fixed $\eta=\Theta(1)$, we define $\bar{\mu} = \lim_{d\to\infty} \norm{f^*}_{L^2(\R^d,\Gamma)}$, 
and
\begin{equation}\label{def:theta_12}
    \theta_1:= \sqrt{\bar{\mu}^2\psi_1^{-1} + \mu_1^{*2}}\cdot\mu_1\eta,
    \quad
    \theta_2:=\mu_1\mu_1^{*}\eta.
\end{equation} 
Then the leading singular value $s_1(\vW_1)$ and the corresponding left singular vector $\vu_1$ satisfy 
\begin{equation}
	s_1(\vW_1)\to \sqrt{\frac{(1+\theta_1^2)(\psi_2+\theta_1^2)}{\theta_1^2}}, \quad
	|\langle\vu_1,\vbeta_*\rangle|^2\to\frac{\theta_2^2}{\theta_1^2}\left(1-\frac{\psi_2+\theta_1^2}{\theta_1^2(\theta_1^2+1)}\right),
\end{equation}
if $\theta_1>\psi_2^{1/4}$; otherwise, $s_1( \vW_1)\to1+\sqrt{\psi_2}$ and $|\langle\vu_1,\vbeta_*\rangle|\to 0$, in probability, as $n,N,d\to \infty$. 
\end{theo}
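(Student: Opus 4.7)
The plan is to use Proposition~\ref{thm:W1-W0} to reduce $\vW_1$ to a rank-one additive perturbation of the Gaussian matrix $\vW_0$ plus a vanishing error, and then invoke a BBP-type singular value / singular vector transition for spiked rectangular matrices. Concretely, write
\begin{align*}
\vW_1 \;=\; \vW_0 + \eta\sqrt{N}\,\vG_0 \;=\; \vW_0 + \vtheta + \vE,
\qquad
\vtheta \;:=\; \frac{\mu_1\,\eta}{n}\,\vX^\top\vy\,\va^\top,
\end{align*}
where Proposition~\ref{thm:W1-W0} combined with $\sqrt{N}\snorm{\vG_0} = \Theta_{d,\P}(1)$ gives $\snorm{\vE} = o_{d,\P}(1)$. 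Since $\vX,\vy,\va$ are jointly independent of $\vW_0$ under Assumption~\ref{assump:1}, the rank-one matrix $\vtheta$ is independent of the Gaussian bulk $\vW_0$, which is exactly the structure required by a classical spiked-model analysis.

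The first step is to identify $\vtheta$ with a deterministic rank-one spike $c\,\vu\vv^\top$ up to $o_{d,\P}(1)$ error. Using the orthogonal decomposition \eqref{eq:orthogonal-decomposition}, $\E[\vx_i y_i] = \mu_1^*\vbeta_*$, so concentration for linear and quadratic forms in Gaussian vectors (Hanson--Wright and a standard variance computation) yields
\begin{align*}
\tfrac{1}{n}\,\vbeta_*^\top\vX^\top\vy \;\xrightarrow{\;\P\;}\; \mu_1^*,
\qquad
\tfrac{1}{n^2}\snorm{\vX^\top\vy}^2 \;\xrightarrow{\;\P\;}\; (\mu_1^*)^2 + \psi_1^{-1}\bar\mu^2,
\qquad
\snorm{\va}^2 \;\xrightarrow{\;\P\;}\; 1,
\end{align*}
where the label noise $\sigma_\eps^2$ is absorbed into $\bar\mu^2$ per the statement's convention. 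Setting $\vu := \vX^\top\vy/\snorm{\vX^\top\vy}$ and $\vv := \va/\snorm{\va}$, it follows that $\snorm{\vtheta} \to \theta_1$ and
\begin{align*}
|\langle\vu,\vbeta_*\rangle| \;\to\; \frac{\mu_1^*}{\sqrt{(\mu_1^*)^2+\psi_1^{-1}\bar\mu^2}} \;=\; \frac{\theta_2}{\theta_1},
\end{align*}
i.e.\ the spike direction is only imperfectly aligned with the linear teacher direction $\vbeta_*$, and the two alignments decouple into a product.

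Next I would apply the BBP transition for additive rank-one perturbations of a rectangular Gaussian matrix (e.g.\ Benaych-Georges--Nadakuditi). Right/left rotational invariance of $\vW_0$ and its independence from $(\vu,\vv)$ let us condition on $(\vu,\vv)$ and rotate so that $\vu=\ve_1,\vv=\ve_1$; a Schur-complement / self-consistent equation for the resolvent of $\vW_0^\top\vW_0$, whose empirical spectrum converges to $\mu_{\psi_2}^{\MP}$, then gives that the top singular value of $\vW_0+\theta_1\vu\vv^\top$ converges to $\sqrt{(1+\theta_1^2)(\psi_2+\theta_1^2)/\theta_1^2}$ iff $\theta_1 > \psi_2^{1/4}$, and otherwise sticks to the bulk edge $1+\sqrt{\psi_2}$; above threshold the top left singular vector $\vu_1$ satisfies $|\langle\vu_1,\vu\rangle|^2 \to 1 - (\psi_2+\theta_1^2)/(\theta_1^2(\theta_1^2+1))$. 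Since the top singular vector is delocalized on the orthogonal complement of $\vu$ (the standard eigenvector-fluctuation statement for spiked models), its inner product with any unit vector in that complement is $o_{d,\P}(1)$, so
\begin{align*}
|\langle\vu_1,\vbeta_*\rangle|^2 \;=\; |\langle\vu,\vbeta_*\rangle|^2\,|\langle\vu_1,\vu\rangle|^2 + o_{d,\P}(1),
\end{align*}
and plugging in the two factors gives the claimed expression.

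The main obstacle is transferring these asymptotics from the idealized spiked model $\vW_0+\vtheta$ to $\vW_1$ itself, which differs by the residual $\vE$ with $\snorm{\vE}=o_{d,\P}(1)$. The singular-value statement follows from Weyl's inequality since the outlier is separated from the bulk edge by a gap of order $\Omega(1)$ above threshold. For the singular vector, which is the more delicate piece, I would use a Davis--Kahan / $\sin\Theta$ argument: the spectral gap persists under $o(1)$ operator-norm perturbations, so the top left singular vector of $\vW_1$ agrees with that of $\vW_0+\vtheta$ in $\ell_2$-norm up to $o_{d,\P}(1)$, which is enough to preserve its inner product with any bounded unit vector such as $\vbeta_*$. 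A secondary subtlety, already flagged above, is that the spike direction $(\vu,\vv)$ is itself random; this is resolved by the joint independence of $(\vX,\vy,\va)$ and $\vW_0$ together with the bi-orthogonal invariance of the Gaussian bulk, which reduces the problem to the classical deterministic-spike BBP set-up after conditioning on $(\vu,\vv)$.
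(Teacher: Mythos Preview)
Your approach is essentially correct above threshold and is in several respects cleaner than the paper's. Both arguments start by replacing $\vW_1$ with $\tilde{\vW}_1=\vW_0+\vtheta$ (rank-one spike independent of $\vW_0$) via Proposition~\ref{thm:W1-W0}, and both finish with Weyl/Davis--Kahan to absorb the $o_{d,\P}(1)$ residual. The difference lies in the middle step. The paper writes $\tilde{\vW}_1\tilde{\vW}_1^\top$ as a rank-two perturbation of $\vW_0\vW_0^\top$, derives a $2\times 2$ determinantal equation for the outlier, and computes $\langle\tilde{\vu}_1,\vbeta_*\rangle$ \emph{directly} through resolvent identities plus a dedicated concentration lemma for $\vu^\top\vD\vu$ and $\vbeta_*^\top\vD\vu$ with the non-Gaussian $\vu=\tfrac{\eta\mu_1}{n}\vX^\top\vy$; the paper even remarks that $|\langle\vu_1,\vbeta_*\rangle|$ ``does not directly follow from classical results on the BBP transition'' because $\vbeta_*$ is hidden inside the spike. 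You instead (i) compute $|\langle\vu,\vbeta_*\rangle|\to\theta_2/\theta_1$, (ii) cite Benaych-Georges--Nadakuditi as a black box for $|\langle\tilde{\vu}_1,\vu\rangle|^2$, and (iii) factor via delocalization: conditioning on $(\vX,\vy,\va)$ and using $\vO\vW_0\overset{d}{=}\vW_0$ for every orthogonal $\vO$ fixing $\vu$, the component of $\tilde{\vu}_1$ in $\vu^\perp$ is Haar on that hyperplane, so its overlap with $\vbeta_*^\perp$ is $O_{d,\P}(d^{-1/2})$. This factorization is a legitimate shortcut the paper does not exploit; it is more modular but leans on the Gaussianity (bi-invariance) of $\vW_0$, whereas the paper's explicit resolvent route is self-contained and its quadratic-form lemmas are reused later in the risk computations.

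There is, however, one genuine gap. Your transfer from $\tilde{\vW}_1$ to $\vW_1$ for the \emph{eigenvector} invokes Davis--Kahan and hence a spectral gap, which is only present when $\theta_1>\psi_2^{1/4}$. Below threshold $s_1(\vW_1)$ sticks to the bulk edge, there is no gap, and $\vu_1(\vW_1)$ need not be close to $\vu_1(\tilde{\vW}_1)$ in $\ell_2$. You still owe the conclusion $|\langle\vu_1(\vW_1),\vbeta_*\rangle|\to 0$; your rotational-invariance argument handles $\tilde{\vW}_1$ but not $\vW_1$, because the residual $\vE$ (through $\sigma'_\perp(\vX\vW_0)$) depends on $\vW_0$ and destroys bi-invariance. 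The paper treats this case separately: it rewrites the eigenvector identity for $\vW_1\vW_1^\top=\vW_0\vW_0^\top+(\text{rank-}2)+\vR$ with $\|\vR\|\to 0$, evaluates at the random $\hat\lambda$ near the edge, checks via Portmanteau-type limits that $\vu^\top\vQ_0(\hat\lambda)^2\vu$ and $\va^\top\vW_0^\top\vQ_0(\hat\lambda)^2\vW_0\va$ diverge (forcing $\hat v_1,\hat v_2\to 0$) while $\vbeta_*^\top\vQ_0(\hat\lambda)\vu$ stays bounded, and finally verifies that the extra term $\vbeta_*^\top\vQ_0(\hat\lambda)\vR\vu_1$ vanishes. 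Your outline needs an argument of comparable strength here; citing BGN plus Davis--Kahan does not close the sub-critical eigenvector statement.
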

\begin{remark}
While the above proposition only describes the isolated singular value/vector, due to the almost rank-1 property of $\vG_0$, one can easily verify that the limiting spectrum of first-layer weights, namely the ``bulk'', remains unchanged after the gradient update, and for any fixed $i>1$, $\abs{s_i(\vW_1) - s_i(\vW_0)} = o_{d,\P}(1)$. 
\end{remark} 

\begin{wrapfigure}{R}{0.365\textwidth}  
\vspace{-0.5mm}
\centering  
\includegraphics[width=0.355\textwidth]{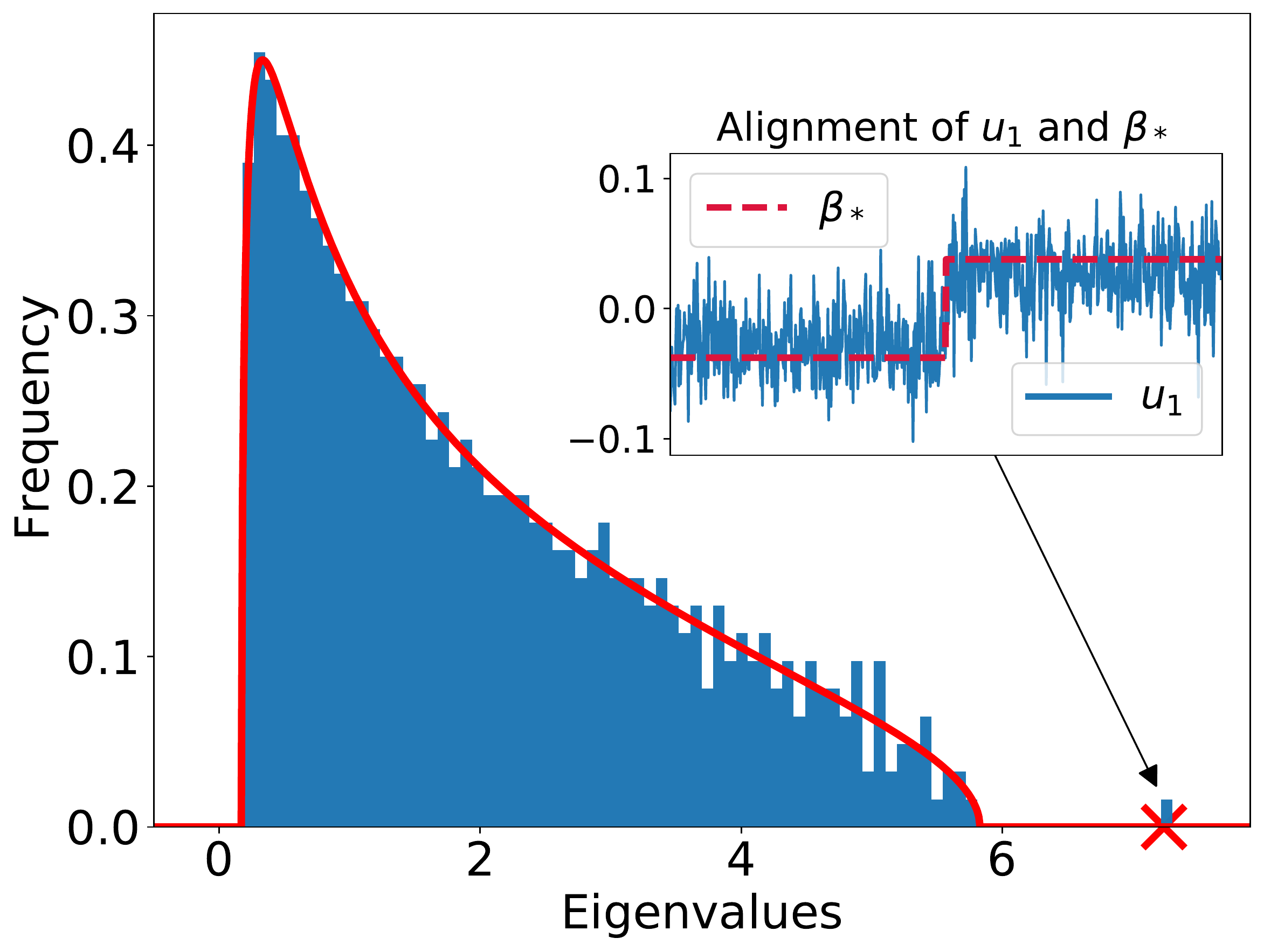} 
\vspace{-2.5mm} 
\caption{\small Main: empirical singular values of $\vW_1$ (blue) vs.~analytic prediction (red). Subfigure: overlap between $\vu_1$ and the teacher vector $\vbeta_* \propto [-\mathbf{1}_{d/2};\mathbf{1}_{d/2}]\in\R^d$. We set $\sigma=\text{tanh}$, $f^*(\vx)=\text{ReLU}(\langle\vx,\vbeta_*\rangle)$, $\eta=2$, $\psi_1=4, \psi_2=2$, and $\sigma_\eps=0.2$.  } 
\label{fig:W-spike}     
\vspace{-5.5mm}   
\end{wrapfigure}  

We make the following observations. Beyond the threshold $\theta_1>\psi_2^{1/4}$, increasing the learning rate $\eta$ enlarges the leading singular value (spike) $s_1(\vW_1)$. 
As for the overlap, one can numerically verify $|\langle\vu_1,\vbeta_*\rangle|^2$ is upper-bounded by $\frac{\theta_2^4 - \psi_2}{\theta_2^2(\theta_2^2+1)}<1$ (obtained when $\psi_1 = n/d \to\infty$), from which we deduce that better alignment is achieved when we take a bigger step, or when the nonlinearity $\sigma$ and target $f^*$ have larger linear components (i.e., larger $\mu_1,\mu_1^*$).    
 
Theorem~\ref{thm:alignment} is numerically verified in Figure~\ref{fig:W-spike}. Observe that after one gradient step with $\eta=\Theta(1)$, the bulk of the spectrum of $\vW$ remains unchanged and is given by the Marchenko-Pastur law (red), but a spike may appear (prediction from Theorem~\ref{thm:alignment} is indicated by marker ``$\times$'') when $\eta$ exceeds a certain threshold; furthermore, the corresponding singular vector $\vu_1$ aligns with the linear component $\vbeta_*$ of the target function, as shown in the subfigure (see also Figure~\ref{fig:appendix}(a)).  
We investigate the impact of this alignment on the performance of kernel ridge regression in Section~\ref{sec:prediction-risk}.

\paragraph{A Spiked Model for CK?} 
While our result only characterizes the weight matrix $\vW$, it may also reveal interesting properties of the CK matrix. 
In particular,
\cite[Lemma 5]{hu2020universality} in combination with Lemma~\ref{lemm:gradient-norm} imply that for odd activation $\sigma$, the expected feature matrix (after one gradient step with $\eta=\Theta(1)$) satisfies
$$
    \norm{\vSigma_\Phi - \overline{\vSigma}_\Phi} \overset{\P}{\to} 0, \quad
    \text{where\, }
    \vSigma_\Phi = \E_{\vx}\left[\sigma(\vW_1^\top\vx)\sigma(\vx^\top\vW_1)\right], ~
    \overline{\vSigma}_\Phi = \mu_1^2\vW_1^\top\vW_1 + \mu_2^2\vI. 
$$
Consequently, Theorem~\ref{thm:alignment} implies the same BBP transition for $\vSigma_\Phi$. When the population $\vSigma_\Phi$ contains a spike, it is natural to expect the empirical CK matrix to exhibit a similar transition, which we conjecture that the Gaussian equivalence property (see Section~\ref{subsec:GET}) can precisely capture. 

\begin{conj} 
\label{conj:CK}
Assume $\sigma$ is an odd function\footnote{The odd activation $\sigma$ ensures that the initialized $\mathbf{CK}_0$ does not contain ``uniformative'' spikes -- see \citet{benigni2022largest}. \vspace{-2mm}} in addition to Assumption~\ref{assump:1}, and $\eta=\Theta(1)$. 
Given new training data/labels $\tilde{\vX},\tilde{\vy}$ (independent of $\vW_1$), define $\vPhi = \frac{1}{\sqrt{N}}\sigma(\tilde{\vX}\vW_1),
\bar{\vPhi} = \frac{1}{\sqrt{N}}\left(\mu_1\tilde{\vX}\vW_1 + \mu_2\vZ\right),$ 
where $[\vZ]_{i,j}\iid\cN(0,1)$, and denote the left leading singular vectors of $\vPhi,\bar{\vPhi}$ as $\vu_1,\bar{\vu}_1$, respectively. We conjecture
$$\abs{s_i(\vPhi) - s_i(\bar{\vPhi})} = o_{d,\P}(1), ~ \forall i\in [n]; \quad
|\langle\vu_1,\tilde{\vy}/\norm{\tilde{\vy}}\rangle|^2 =  |\langle\bar{\vu}_1,\tilde{\vy}/\norm{\tilde{\vy}}\rangle|^2 + o_{d,\P}(1).$$    
\end{conj}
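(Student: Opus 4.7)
The plan is to condition on $\vW_1$ and invoke a suitably generalized Gaussian Equivalence Theorem (GET) in the style of \cite{hu2020universality} to match the nonlinear feature matrix $\vPhi$ against the Gaussian surrogate $\bar\vPhi$. By Proposition~\ref{thm:W1-W0} together with the gradient formula for $\vG_0$, one has with high probability
\[
\vW_1 = \vW_0 + \frac{\eta\mu_1}{n}\vX^\top\vy\va^\top + \vE, \qquad \|\vE\| = o_{d,\P}(1),
\]
so $\vW_1$ is a rank-one (up to $o_{d,\P}(1)$) perturbation of $\vW_0$. Crucially, $\vW_1$ depends only on the old sample $(\vX,\vy)$ and the initialization $(\vW_0,\va)$, while $\tilde\vX$, $\tilde\vy$ and the Gaussian reference $\vZ$ are fresh and independent of $(\vX,\vy)$; this independence is what makes the conditional analysis feasible.

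Since $\sigma$ is odd, Hermite decomposition gives $\sigma(z) = \mu_1 z + \sigma_\perp(z)$ with $\E[\sigma_\perp(z)] = \E[z\sigma_\perp(z)] = 0$ and $\E[\sigma_\perp(z)^2] = \mu_2^2$, whence $\vPhi - \bar\vPhi = N^{-1/2}\bigl(\sigma_\perp(\tilde\vX\vW_1) - \mu_2 \vZ\bigr)$. I would then adapt the Lindeberg interpolation scheme of Hu--Lu to the present anisotropic setting: swap the rows of $\sigma_\perp(\tilde\vX\vW_1)$ one at a time for Gaussian rows with matching first two conditional moments (given $\vW_1$), and bound the effect of each swap on traces and bilinear forms of the resolvent $(\vPhi\vPhi^\top - z\vI)^{-1}$ using the $\lambda_\sigma$-bounded derivatives of $\sigma$ and the Ward-type cancellations afforded by $\E[\sigma_\perp(z)] = \E[z\sigma_\perp(z)] = 0$. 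The target is to conclude that linear and bilinear functionals of $(\vPhi\vPhi^\top - z\vI)^{-1}$ agree with those of $(\bar\vPhi\bar\vPhi^\top - z\vI)^{-1}$ up to $o_{d,\P}(1)$, uniformly on compact subsets of the upper half plane away from the limiting spectrum.

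For part (a), bulk universality of the singular values then follows from a Stieltjes continuity argument, and the leading singular value $s_1(\vPhi)$ is pinned down by a Schur-complement / secular-equation argument: the rank-one spike in $\vW_1$ enters both $\vPhi$ and $\bar\vPhi$ identically through the common linear piece $\mu_1 \tilde\vX\vW_1/\sqrt N$, and the resolvents of the unspiked bulks agree by the swap, so the equations determining $s_1(\vPhi)$ and $s_1(\bar\vPhi)$ coincide asymptotically; other $s_i$ are similar but simpler. For part (b), the overlap is read off as the residue
\[
|\langle \vu_1, \tilde\vy/\|\tilde\vy\|\rangle|^2 = \lim_{\varepsilon \downarrow 0}\varepsilon\,\Im\, \tilde\vy^\top\bigl(\vPhi\vPhi^\top - (s_1(\vPhi)^2 + i\varepsilon)\vI\bigr)^{-1}\tilde\vy / \|\tilde\vy\|^2,
\]
and identically for $\bar\vPhi$, so the bilinear resolvent universality from the previous step transfers the overlap from $\bar\vPhi$ to $\vPhi$. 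The interpolation holds $\tilde\vX$ (and hence $\tilde\vy$) fixed while altering only $\sigma_\perp(\tilde\vX\vW_1)$, so $\tilde\vy$ enters as a valid deterministic test vector throughout.

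The main obstacle is extending the Lindeberg swap to the spiked weight matrix. Existing forms of the GET are stated for essentially isotropic $\vW$, whereas here $\vW_1^\top\vW_1$ carries a rank-one outlier, inducing an $\Theta(1)$-anisotropy in the row covariance of $\tilde\vX\vW_1$ and potentially an outlier eigenvalue in $\vPhi\vPhi^\top$ whose location and eigenvector depend on the interaction between the spike direction and the nonlinearity $\sigma_\perp$. One has to show that the boundary terms in the telescoping resolvent sum agree near this outlier on both sides of the swap; the independence of $\va$ (which carries the spike direction of $\vW_1$) from $(\tilde\vX,\vZ,\tilde\vy)$ should make this tractable via the rotational invariance of the Gaussian law, but formalizing the argument uniformly in the spectral parameter near the outlier is where the real technical work lies.
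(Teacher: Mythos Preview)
The paper does not prove this statement: it is explicitly labeled a \emph{Conjecture}, supported only by the numerical evidence in Figure~\ref{fig:CK-spike}, and the discussion in Section~\ref{sec:conclusion} lists ``Rigorous Analysis of CK Spike'' as an open direction for future work. There is therefore no paper proof to compare your proposal against.

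Your outline is a reasonable plan of attack, and you correctly isolate the key difficulty. The paper does establish a Gaussian Equivalence result for the trained features (Theorem~\ref{thm:GET} / Proposition~\ref{prop:GET-perturbed}), but only at the level of the \emph{prediction risk} of ridge regression, by adapting the Lindeberg swap of \cite{hu2020universality} to the perturbed weight class $\cW$ in \eqref{eq:orthogonality-condition}. That argument controls smooth test functions of the regularized empirical objective; it does not give pointwise control of singular values, and in particular says nothing about the resolvent at real spectral parameters near a potential outlier. Your proposed upgrade --- pushing the swap through to bilinear functionals of $(\vPhi\vPhi^\top - z\vI)^{-1}$ uniformly near the spike, then reading off $s_1$ via a secular equation and the overlap via a residue --- is exactly the missing ingredient, and the obstacle you flag (the rank-one outlier in $\vW_1^\top\vW_1$ breaks the near-isotropy underpinning existing GET proofs) is precisely why the paper stops at a conjecture. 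So your plan is sound as a research program, but it is a program, not a proof: the uniform resolvent comparison near the edge/outlier is genuinely open and is not something the paper supplies.
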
  
 
\begin{wrapfigure}{R}{0.375\textwidth}  
\vspace{-5.mm}  
\centering  
\includegraphics[width=0.365\textwidth]{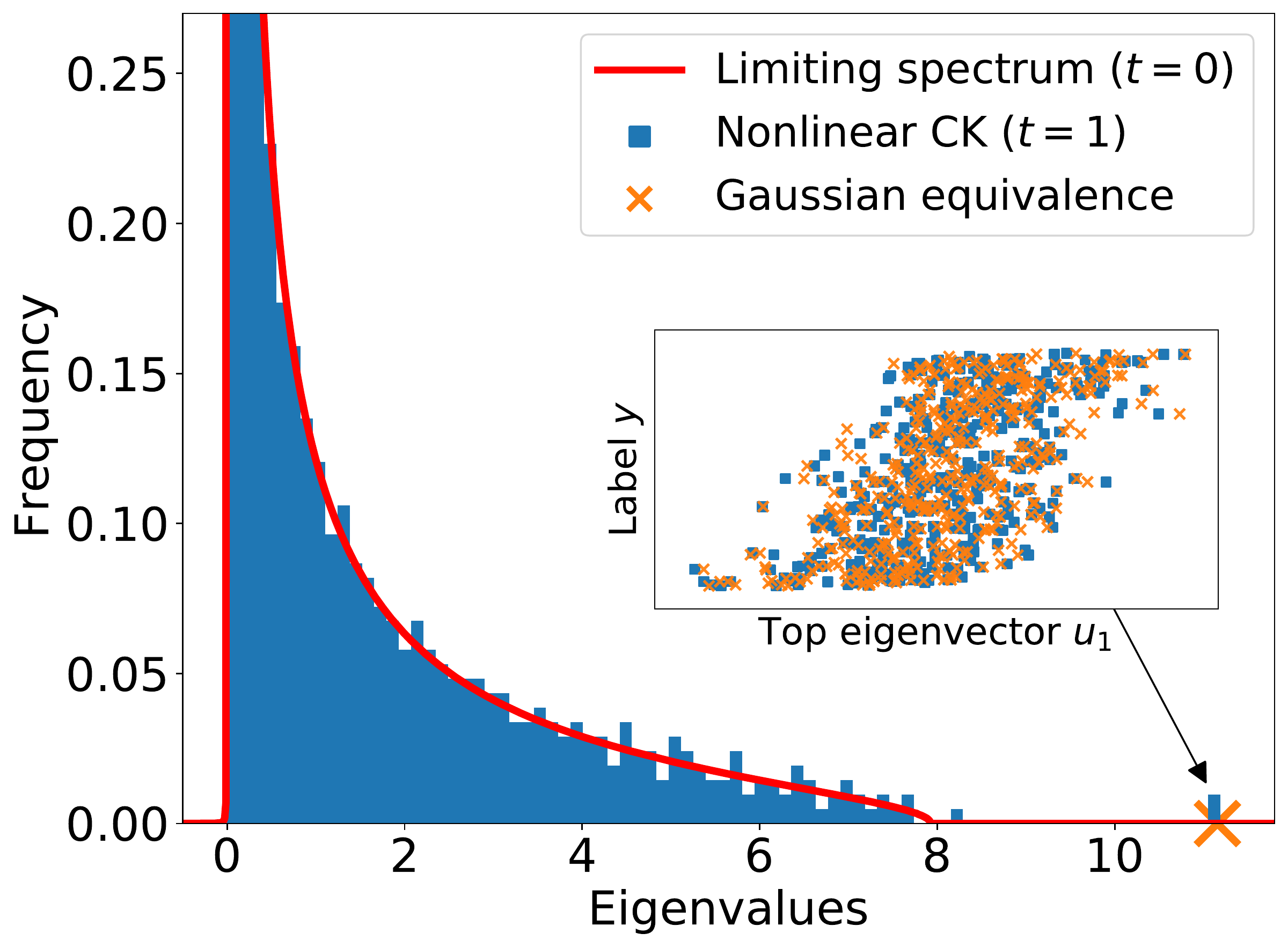}  
\vspace{-2.5mm} 
\caption{\small Main: CK spectrum after one gradient step on $\vW$. Subfigure: projection of training labels $\tilde{\vy}$ onto top PC of $\mathbf{CK}_1$.  Quantities computed from the nonlinear features $\vPhi$ are colored blue, and the conjectured Gaussian equivalent predictions are in orange.  
We set $\sigma=\text{SoftPlus}$, $f^*(\vx)=\text{tanh}(\langle\vx,\vbeta_*\rangle)$, $\eta=2$, $\psi_1=1.5, \psi_2=1.25$.   }
\label{fig:CK-spike}    
\vspace{-4mm}   
\end{wrapfigure}      
The conjecture predicts both the eigenvalues of the CK matrix and the overlap between its spike eigenvector and the training labels. 
In Figure \ref{fig:CK-spike} we plot the eigenvalue histogram of the CK matrix after one gradient step with $\eta=\Theta(1)$, which we denote as $\mathbf{CK}_1 = \vPhi\vPhi^\top$. Observe that the bulk of the spectrum remains unchanged compared to $\mathbf{CK}_0$, which can be analytically computed (red). 
On the other hand, similar to $\vW_1$, an isolated eigenvalue (spike) appears in $\mathbf{CK}_1$, the location of which can be predicted by the Gaussian equivalent model in Conjecture~\ref{conj:CK} (marker ``$\times$''). 
 
Furthermore, in our student-teacher setting, we observe that the isolated eigenvector (top principal component PC) of $\mathbf{CK}_1$ correlates with the training labels $\tilde{\vy}$ -- this is also captured by the Gaussian equivalent model, as shown in the subfigure of Figure~\ref{fig:CK-spike}. This demonstrates that the alignment phenomenon reported in
\cite[Figure 3]{fan2020spectra} already occurs \textit{after one gradient step}. We note that similar alignment between the training labels and the principal components of the (trained) NTK has also been empirically observed \citet{chen2020label,ortiz2021can}, and it is argued that such overlap may improve optimization or generalization.

\section{Do the Learned Features Improve Generalization?}
\label{sec:prediction-risk}

Thus far we have shown that after one gradient step, the first-layer weights align with the linear component of the teacher model. Intuitively, since the learned feature map $\vx\rightarrow\sigma(\vW_1^\top\vx)$ ``adapts'' to the teacher $f^*$, we may expect the ridge regression estimator on the trained CK to achieve better performance. 
In this section we confirm this intuition in a concrete example: we consider the setting where $f^*$ is a single-index model, and compare the CK prediction risk before and after one gradient descent step on $\vW$.   

\begin{assumption}[\textbf{Single-index/one-neuron Teacher}]
\label{assump:2}\,
$f^*(\vx) = \sigma^*(\langle\vx,\vbeta_*\rangle),$ where $\vbeta_*\in\R^d$ is a deterministic signal with $\norm{\vbeta_*} = 1$, and $\sigma^*$ is Lipschitz with $\mu_0^* = 0$, $\mu_1^*\neq 0$ defined in \eqref{eq:orthogonal-decomposition}.  
 
\end{assumption}
\begin{remark}
The single-index setting has been extensively studied in the proportional regime \citep{gerace2020generalisation,dhifallah2020precise,hu2020universality}, and it is an instance of the ``hidden manifold model'' \cite{goldt2020modeling}.
However, most prior works only considered training the coefficients $\va$ on top of \textit{fixed} feature map (e.g., defined by randomly initialized $\vW_0$), and such RF models cannot learn a single-index $f^*$ efficiently in high dimensions \citep{yehudai2019power}.   
\end{remark}  
 
As stated in Section~\ref{subsec:kernel-lower-bound}, the RF ridge estimator defined by the two-layer NN \eqref{eq:two-layer-nn} has $\Omega(1)$ prediction risk unless $\sigma^*$ is a linear function. 
Here our goal is to demonstrate that the trained CK model can \textit{outperform} the initial RF and potentially the kernel lower bound \eqref{eq:ridge-lower-bound}. 
We first introduce the Gaussian equivalence property which will be useful in the computation of prediction risk.  
 
\subsection{The Gaussian Equivalence Property} 
\label{subsec:GET}

The Gaussian equivalence theorem (GET) implies that the prediction risk of a nonlinear kernel model can be the same as that of a noisy linear model. Specifically, recall the prediction risk of the ridge estimator:
\begin{align}  
    \cR_{\mathrm{F}}(\lambda) =\E_{\vx}\big(\langle\vphi_{\mathrm{F}}(\vx),\hat{\va}_\lambda\rangle - f^*(\vx)\big)^2, ~
    \hat{\va}_\lambda = \text{argmin}_{\va} \Big\{\frac{1}{n}\sum_{i=1}^n (y_i - \langle\vphi_{\mathrm{F}}(\vx_i),\va\rangle)^2 + \frac{\lambda}{N}\norm{\va}^2\Big\},  
    \label{eq:ridge-risk}
\end{align}
where $\mathrm{F}\in\{\mathrm{CK},\mathrm{GE}\}$ indicates the choice of feature map, which is either the nonlinear CK feature $\vphi_{\mathrm{CK}}(\vx) = \frac{1}{\sqrt{N}}\sigma(\vW^\top\vx)$, or the Gaussian equivalent (GE) feature $\vphi_{\mathrm{GE}}(\vx) = \frac{1}{\sqrt{N}}\left(\mu_1\vW^\top\vx + \mu_2\vz\right)$ where $\vz\sim\cN(0,\vI)$ independent of $\vx$, $\vW$. In the following, we take $\vW$ to be the updated weights after one or more GD steps.

The Gaussian equivalence refers to the universality phenomenon $\cR_{\mathrm{CK}}(\lambda) \approx \cR_{\mathrm{GE}}(\lambda)$. 
For RF models~\eqref{eq:RF-estimator}, the GET has been rigorously proved in \cite{hu2020universality,montanari2022universality}. Furthermore, \citet{goldt2021gaussian,loureiro2021learning} provided empirical evidence that such equivalence holds in much more general feature maps, including the representation of certain pretrained NNs (e.g., see \cite[Figure 4]{loureiro2021learning}). Since our setting goes beyond RF model and cannot be covered by prior results, we first establish the GET for our \textit{trained} feature map under small learning rate.    
 
\begin{theo} 
\label{thm:GET}
Given Assumptions \ref{assump:1}, \ref{assump:2}, and in addition assume the activation $\sigma$ is an odd function. 
If the learning of $\vW_{\!t}$ in \eqref{eq:gradient-step-MSE} and estimation of $\hat{\va}_\lambda$ in \eqref{eq:ridge-risk} are performed on independent training data $\vX$ and $\tilde{\vX}$, respectively, then for any fixed $t\in\N$, the GET holds after the first-layer weights are optimized for $t$ gradient steps with learning rate $\eta=\Theta(1)$; that is, for trained CK feature $\vphi_{\mathrm{CK}}(\vx) = \frac{1}{\sqrt{N}}\sigma(\vW_{\!t}^\top\vx)$ and $\lambda>0$, 
\begin{align}
    \abs{\cR_{\mathrm{CK}}(\lambda) - \cR_{\mathrm{GE}}(\lambda)} = o_{d,\P}(1). 
    \label{eq:GET}
\end{align}  
\end{theo}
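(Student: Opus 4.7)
My plan is to reduce the Gaussian equivalence for the trained feature map to the universality result of~\cite{hu2020universality,montanari2022universality} for fixed-weight random feature models by exploiting two structural facts: (i) since the update is computed on $(\vX,\vy,\vW_0,\va)$ and the ridge coefficient is fit on the independent sample $(\tilde{\vX},\tilde{\vy})$, the trained weight matrix $\vW_t$ is independent of $\tilde{\vX}$; and (ii) by iterating Proposition~\ref{thm:W1-W0}, the update $\vW_t-\vW_0$ is approximately of bounded rank with bounded operator norm whenever $t=O(1)$ and $\eta=\Theta(1)$.

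Concretely, I would first establish a decomposition $\vW_t=\vW_0+\vE_t+\vR_t$ with $\mathrm{rank}(\vE_t)\le c_t=O_t(1)$, $\|\vE_t\|=O_{d,\P}(1)$, and $\|\vR_t\|=o_{d,\P}(1)$. The case $t=1$ is exactly Proposition~\ref{thm:W1-W0} with $\vE_1=\eta\sqrt{N}\vA$; the induction for $t\ge 2$ uses the boundedness of $\sigma,\sigma',\sigma''$ together with the fact that $\|\vW_s\|$ remains $O_{d,\P}(1)$ along the trajectory (by Lemma~\ref{lemm:gradient-norm} type bounds). Since the ridge estimator is $O(1/\lambda)$-Lipschitz in the feature matrix (in operator norm) on the high-probability event $\{\|\vPhi\|=O(1)\}$, and $\sigma$ is Lipschitz, replacing $\vW_t$ by $\widehat{\vW}_t:=\vW_0+\vE_t$ in the feature map changes both $\cR_{\mathrm{CK}}(\lambda)$ and $\cR_{\mathrm{GE}}(\lambda)$ by $o_{d,\P}(1)$, so it suffices to prove the GET for $\widehat{\vW}_t$. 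I would then condition on the $\sigma$-algebra $\cF$ generated by $\vX,\vy,\va$, and the projection of $\vW_0$ onto the $O(1)$-dimensional subspace needed to determine $\vE_t$ (this subspace is trivial for $t=1$, and captures the low-rank contamination of $\vW_0$ through successive gradients for $t\ge 2$). Decomposing $\vW_0=P\vW_0+P^\perp\vW_0$ along this projection, the perpendicular part remains a Gaussian matrix of effective rank $d-O(1)$, independent of $\tilde{\vX}$, while $\vM:=P\vW_0+\vE_t$ is $\cF$-measurable of rank $O(1)$, so each pre-activation becomes
$$\widehat{\vW}_t^\top\tilde{\vx}_i=(P^\perp\vW_0)^\top\tilde{\vx}_i+\vM^\top\tilde{\vx}_i,$$
i.e.\ a standard Gaussian random-features pre-activation plus a deterministic finite-rank shift.

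The core analytic step is then to adapt the Lindeberg swap of~\cite{hu2020universality} to this ``Gaussian plus bounded-rank shift'' setting. Using the Hermite expansion $\sigma(z)=\mu_1 z+\sigma_\perp(z)$ and the odd-activation hypothesis (which eliminates the quadratic Hermite coefficient, hence the leading-order non-Gaussian cross term between the shift and the bulk), the swap error reduces to moment terms that are $o_{d,\P}(1)$ in the proportional regime at the level of the Gram-matrix resolvent. The rank-$O(1)$ shift $\vM^\top\tilde{\vx}_i$ contributes only a Woodbury/Sherman--Morrison perturbation of the resolvent of size $O(1/N)$, which vanishes in the limit and, crucially, is identical on the CK and GE sides because both use the same $\vW_t$.

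The hardest part is this last step. The existing GET proofs rely on row-wise i.i.d.\ Gaussian pre-activations so that Hermite orthogonality yields exact decorrelation; after training, the pre-activation has an extra low-rank non-Gaussian shift \emph{inside} the nonlinearity, which breaks this structure. Getting around this requires a two-scale analysis that combines a conditional Lindeberg swap on the bulk Gaussian part with explicit finite-rank resolvent updates for the spike directions. I expect the delicate bookkeeping of the cross terms — Hermite polynomials of the bulk evaluated at shifted arguments — to be the main technical burden, with the odd-activation assumption playing the decisive role of killing the leading obstructive term so that the remaining cross moments are high enough order to be negligible.
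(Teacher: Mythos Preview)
Your approach differs substantially from the paper's, and the paper's route is much shorter. The paper does not use the low-rank structure of the update at all for the GET. Instead it observes that the Hu--Lu Lindeberg/CLT machinery is stated for \emph{deterministic} weight matrices satisfying a near-orthogonality condition $\|\vW\|=O(1)$ and $\max_{i\ne j}\{\langle\vw_i,\vw_j\rangle,\langle\vw_i,\vbeta_*\rangle\}=O(\mathrm{polylog}\,d/\sqrt{d})$; no Gaussianity of $\vW$ is required for the CLT step. It then shows directly (Lemmas~\ref{lemm:gradient-norm} and~\ref{lemm:gradient-norm-multi}) that for $\eta=\Theta(1)$ and any fixed $t$, the columnwise bound $\|\vW_t-\vW_0\|_{2,\infty}=O(\mathrm{polylog}\,N/\sqrt{N})$ holds w.h.p., so near-orthogonality is inherited from $\vW_0$. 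The only place in Hu--Lu that genuinely uses Gaussianity of the weights is the $\ell_\infty$ bound on the interpolated minimizers $\vg_k^*$; the paper redoes that one lemma (Lemma~\ref{lemm:sup-norm}) by writing $\vw_j=\vw_j^0+\vdelta_j$ and absorbing the $\|\vdelta_j\|=O(\mathrm{polylog}/\sqrt{d})$ error. There is no conditioning on a sub-$\sigma$-algebra of $\vW_0$, no Woodbury step, and no two-scale Hermite analysis.

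Your plan has a real gap at $t\ge 2$. You propose to condition on ``the projection of $\vW_0$ onto the $O(1)$-dimensional subspace needed to determine $\vE_t$'' so that $P^\perp\vW_0$ is conditionally Gaussian. For $t=1$ this is fine because $\vE_1=\eta\sqrt{N}\vA$ is a function of $(\vX,\vy,\va)$ only. But for $t\ge 2$ the leading rank-one part of $\vG_{t-1}$ is $\frac{\mu_1}{n\sqrt{N}}\vX^\top(\vy-f_{t-1}(\vX))\va^\top$, and $f_{t-1}(\vX)=\tfrac{1}{\sqrt{N}}\sigma(\vX\vW_{t-1})\va$ depends on \emph{all} of $\vW_0$ through the nonlinearity; there is no finite-dimensional linear projection of $\vW_0$ that determines it. So the subspace $P$ you need is itself a nonlinear functional of $\vW_0$, and $P^\perp\vW_0$ is not conditionally Gaussian. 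Separately, your Woodbury claim is about the wrong object: a rank-$O(1)$ shift of the \emph{pre-activations} does not produce a rank-$O(1)$ perturbation of the nonlinear feature matrix $\sigma(\tilde{\vX}\vW)$, so a resolvent Sherman--Morrison does not apply directly after the nonlinearity. What actually saves you --- and what the paper exploits --- is that the per-neuron shift $\|\vE_t\|_{2,\infty}$ is already $O(\mathrm{polylog}/\sqrt{N})$ (since the right factor $\va$ has $\|\va\|_\infty=O(\log N/\sqrt{N})$), so the ``shift inside the nonlinearity'' is uniformly small and no two-scale argument is needed. If you keep your outline, the cleanest fix is to drop the conditioning/low-rank machinery entirely and instead verify that $\vW_t$ lies in the deterministic near-orthogonality class with high probability, which is precisely the paper's Proposition~\ref{prop:GET-perturbed}.
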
 
This is to say, for learning rate $\eta=\Theta(1)$, the Gaussian equivalent model provides an accurate description of the prediction risk of ridge regression (on the trained CK) at any fixed time step $t$, although most of our analysis deals with $t=1$. 
The important observation is that even though the trained weights $\vW_t$ are no longer i.i.d., the Gaussian equivalence property can still hold when $\vW_t - \vW_0$ remains ``small''  (in some norm, see \eqref{eq:orthogonality-condition} for details), which entails that the neurons remain nearly orthogonal to one another. 
 
\paragraph{Implications of Gaussian Equivalence.}
Under the GET, we can equivalently compute $\cR_{\mathrm{GE}}(\lambda)$, the prediction risk of ridge regression on noisy Gaussian features $\vphi_{\mathrm{GE}}$, which can be characterized using standard tools such as the Gaussian comparison inequalities \citep{gordon1988milman,thrampoulidis2015regularized}. 
Theorem~\ref{thm:GET} is empirically validated in Figure \ref{fig:main}(c), in which we run gradient descent on $\vW$ with small learning rate for 50 steps, and compute the prediction risk of the ridge regression estimator on the CK at each step; observe that empirical values match the analytic predictions\footnote{In Figure \ref{fig:main}(c), we compute certain quantities involved in the optimization problem (e.g., spectrum of $\vW$) using finite-dimensional matrices, following \citet{loureiro2021learning}; hence our analytic curves are not entirely ``asymptotic''. \vspace{-2mm}} in the early phase of training. We however emphasize that Theorem~\ref{thm:GET} does not allow for the number of training steps $t$ to grow with the training set size $n$; in fact, in Appendix \ref{app:experiment} we empirically observe that the GET may fail if we train the first-layer weights longer.    
  
On the other hand, the GET also implies that the kernel estimator is essentially ``linear'' in high dimensions. For the squared loss, it is straightforward to verify that the Gaussian equivalent model cannot learn the nonlinear component of the target function $\textsf{P}_{>1}f^*$ as follows.
\begin{fact} 
Under the same assumptions as Theorem~\ref{thm:GET},  $\cR_{\mathrm{GE}}(\lambda) \ge \norm{\textsf{P}_{>1}f^*}_{L^2}^2$ for any $\psi_1, \psi_2$ and  $\lambda>0$. 
\label{fact:GET-lower-bound}
\end{fact}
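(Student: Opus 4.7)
\textbf{Proof proposal for Fact~\ref{fact:GET-lower-bound}.} The key observation is that the Gaussian equivalent predictor is, conditionally on training data, an affine function of $\vx$ plus an \emph{independent} noise term, so it can only fit the linear component of $f^*$ in the $L^2(\Gamma)$ sense. The plan is to (i) isolate the independent-noise component of the predictor, (ii) decompose $f^*$ via $\textsf{P}_{>1}$, and (iii) invoke $L^2(\Gamma)$-orthogonality of constant, linear, and higher-order components.

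\textbf{Step 1: Decompose the test-time prediction.} At a fresh test point $\vx\sim\cN(0,\vI)$, the GE feature uses an independent Gaussian noise $\vz\sim\cN(0,\vI)$, so the predictor reads
\begin{equation*}
\hat g(\vx,\vz) \;=\; \langle\vphi_{\mathrm{GE}}(\vx),\hat\va_\lambda\rangle \;=\; \tfrac{\mu_1}{\sqrt N}\,\vx^\top \vW\hat\va_\lambda \;+\; \tfrac{\mu_2}{\sqrt N}\,\vz^\top\hat\va_\lambda \;=:\;\langle\vx,\vh\rangle + \xi,
\end{equation*}
where $\vh := \tfrac{\mu_1}{\sqrt N}\vW\hat\va_\lambda\in\R^d$ and $\xi := \tfrac{\mu_2}{\sqrt N}\vz^\top\hat\va_\lambda$. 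Crucially, $\hat\va_\lambda$ depends only on the training data (and the training-time noises), which are independent of the test pair $(\vx,\vz)$; in particular, $\vz$ is independent of $\vx$ and of $(\hat\va_\lambda,\vh)$. Therefore, conditional on $\hat\va_\lambda$, the quantity $\xi$ is a zero-mean random variable independent of $\vx$.

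\textbf{Step 2: Bias-variance split over the fresh Gaussian $\vz$.} Conditioning on $\hat\va_\lambda$ and using $\E[\xi\mid\hat\va_\lambda]=0$ together with independence of $\xi$ from $\vx$,
\begin{equation*}
\cR_{\mathrm{GE}}(\lambda) \;=\; \E\bigl[(\langle\vx,\vh\rangle + \xi - f^*(\vx))^2\bigr] \;=\; \E\bigl[(\langle\vx,\vh\rangle - f^*(\vx))^2\bigr] \;+\; \E[\xi^2] \;\ge\; \E\bigl[(\langle\vx,\vh\rangle - f^*(\vx))^2\bigr].
\end{equation*}
Thus it suffices to lower-bound the squared $L^2(\Gamma)$-distance between $f^*$ and the (random) linear function $\vx\mapsto\langle\vx,\vh\rangle$.

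\textbf{Step 3: Orthogonal decomposition and projection bound.} Apply the decomposition \eqref{eq:orthogonal-decomposition}, which under Assumption~\ref{assump:2} reduces to $f^*(\vx) = \mu_1^*\langle\vx,\vbeta_*\rangle + \textsf{P}_{>1}f^*(\vx)$ since $\mu_0^*=0$. By definition $\textsf{P}_{>1}f^*$ is $L^2(\Gamma)$-orthogonal to every affine function, so, conditionally on $\vh$,
\begin{equation*}
\E\bigl[(\langle\vx,\vh\rangle - f^*(\vx))^2\bigr] \;=\; \E\bigl[(\langle\vx,\vh - \mu_1^*\vbeta_*\rangle)^2\bigr] \;+\; \snorm{\textsf{P}_{>1}f^*}_{L^2}^2 \;\ge\; \snorm{\textsf{P}_{>1}f^*}_{L^2}^2.
\end{equation*}
Taking expectation over $\hat\va_\lambda$ and combining with Step~2 yields the claim for every $\psi_1,\psi_2,\lambda>0$.

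\textbf{Main obstacle.} There is no serious obstacle; the only subtlety worth stating explicitly is the independence structure of the fresh Gaussian noise $\vz$ used at test time, which must be the same i.i.d.\ draw convention used when defining $\vphi_{\mathrm{GE}}$ for the training samples. This independence is what forces the contribution of $\xi$ to appear additively as a nonnegative variance term and forbids it from canceling the approximation error for the nonlinear component $\textsf{P}_{>1}f^*$. Note that the bound is independent of how $\vW$ was produced, so it applies equally to any (possibly trained) weight matrix $\vW$ for which the Gaussian equivalent model is defined.
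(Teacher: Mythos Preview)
Your proof is correct and follows essentially the same approach as the paper. The paper first rewrites $\cR_{\mathrm{GE}}$ via the two-dimensional Gaussian representation of \cite{hu2020universality} (their Equation~(57), reproduced as \eqref{eq:GET-risk}), drops the nonnegative $z_2$-term, and then applies the orthogonal decomposition $\sigma^*(z_1)=\mu_1^*z_1+\sigma_\perp^*(z_1)$; your argument is the same decomposition carried out directly in $\R^d$ without first projecting onto the $\vbeta_*$ direction.
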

Hence, when $\eta=\Theta(1)$, even though training the first-layer $\vW$ for just one step leads to non-trivial improvement over the initial RF ridge estimator (which we precisely quantify in Section~\ref{subsec:small-lr}), the learned CK cannot outperform the best linear model on the input features. 
In other words, to (possibly) learn a nonlinear $f^*$, the trained feature map needs to violate the GET. 
In the case of one gradient step on $\vW$, this amounts to using a sufficiently large step size, which we analyze in Section~\ref{subsec:large-lr}.    

\subsection{$\eta=\Theta(1)$: Improvement Over the Initial CK} 
\label{subsec:small-lr}

While the Gaussian equivalence property allows us to compute the asymptotic prediction risk after multiple gradient steps with $\eta=\Theta(1)$, the precise expressions can be opaque and not amenable to interpretation or quantitative characterization. 
Fortunately for the first gradient step, the risk calculation can be simplified by the rank-1 approximation of the gradient matrix $\vG_0$ shown in Section~\ref{subsec:rank-1}.  
Therefore, in this subsection we focus on $t=1$ and analyze how the trained features improves over the initialized RF.  
To quantify the discrepancy in the prediction risk \eqref{eq:ridge-risk}, we write $\cR_0(\lambda)$ as the prediction risk of the initialized RF ridge regression estimator (on the feature map $\vx\to\sigma(\vW_0^\top\vx)$), and $\cR_1(\lambda)$ as the prediction risk of the ridge estimator on the new feature map $\vx\to\sigma(\vW_1^\top\vx)$ after one feature learning step.

Importantly, due to the alignment between the trained features and the teacher model $f^*$ demonstrated in Section~\ref{sec:spike}, we cannot simply apply a rotation invariance argument (e.g., \cite[Lemma 9.2]{mei2019generalization}) to remove the dependency on the true parameters $\vbeta_*$ and reduce the prediction risk to trace of certain rational functions of the kernel matrix; in other words, knowing the spectrum (or the Stieltjes transform) of the CK is not sufficient.  Instead, we utilize the GET and the almost rank-1 property of $\vG_0$ in Proposition~\ref{thm:W1-W0}, which, in combination with techniques from operator-valued free probability theory \citep{mingo2017free}, enables us to obtain the asymptotic expression of the difference in the prediction risk before and after one gradient step.  
 
\begin{theo}\label{thm:R0-R1} 
Under the same assumptions as Theorem~\ref{thm:GET} and $\eta=\Theta(1)$, we have  
$$
    \cR_0(\lambda) - \cR_1(\lambda) \overset{\P}{\to} \delta(\eta,\lambda,\psi_1,\psi_2)\ge 0, 
$$
where $\delta(\eta,\lambda,\psi_1,\psi_2)$ is defined by \eqref{delta_formula} in Appendix~\ref{sec:linear_pencil}. $\delta$ is a non-negative function of $\eta,\lambda,\psi_1,\psi_2\in(0,+\infty)$ with parameters $\mu_1^*,\mu_1,\mu_2$, and it vanishes if and only if (at least) one of $\mu_1^*, \mu_1$ and $\eta$ is zero. 
\end{theo}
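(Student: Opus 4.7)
The plan is to combine the Gaussian equivalence of Theorem~\ref{thm:GET} with the almost rank-one structure from Proposition~\ref{thm:W1-W0}, which reduces the difference $\cR_0(\lambda)-\cR_1(\lambda)$ to a deterministic calculation about the resolvent of a rank-one perturbed Gaussian design matrix; this final calculation is then performed by operator-valued free probability via a linear pencil.

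First I would apply Theorem~\ref{thm:GET} at $t=0$ and $t=1$ to replace $\cR_t(\lambda)$ by the prediction risk $\cR_t^{\mathrm{GE}}(\lambda)$ of ridge regression on the noisy linear features $\vphi^{\mathrm{GE}}_t(\vx) = \frac{1}{\sqrt{N}}(\mu_1\vW_t^\top\vx + \mu_2\vz)$, up to $o_{d,\P}(1)$ error. Proposition~\ref{thm:W1-W0} then gives $\vW_1 = \vW_0 + \eta\sqrt{N}(\vA+\vE)$ with $\vA = \frac{\mu_1}{n\sqrt{N}}\vX^\top\vy\va^\top$ of rank one and $\|\vE\|_{\op}=o_{d,\P}(1)\|\vG_0\|_{\op}$, so on the fresh data $\tilde{\vX}$ the GE feature matrix satisfies
\begin{equation*}
    \vPhi_1 \;=\; \vPhi_0 \,+\, \frac{\eta\mu_1^2}{n\sqrt{N}}\,\tilde{\vX}\vX^\top\vy\,\va^\top \,+\, \vE',
\end{equation*}
with $\vE'$ negligible in operator norm. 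A short resolvent-perturbation lemma would show that dropping $\vE'$ costs only $o_{d,\P}(1)$ in the ridge risk, leaving $\vPhi_1-\vPhi_0$ as exactly rank one. Under Assumption~\ref{assump:2}, the decomposition $\vy = \mu_1^*\vX\vbeta_* + \textsf{P}_{>1}f^*(\vX) + \boldsymbol{\eps}$ splits the left factor $\tilde{\vX}\vX^\top\vy$ into a $\mu_1^*\tilde{\vX}\vbeta_*$-aligned piece plus a centered residual, which is how the teacher coefficient $\mu_1^*$ enters $\delta$.

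The heart of the argument is the asymptotic evaluation of this rank-one perturbation. The ridge prediction risk on the GE design admits a bias/variance decomposition through a finite collection of bilinear forms $\vu^\top(\vPhi_t^\top\vPhi_t+\lambda\vI)^{-p}\vw$ for $p\in\{1,2\}$ with $\vu,\vw$ drawn from $\{\va,\,\tilde{\vX}\vX^\top\vy,\,\tilde{\vX}\vbeta_*,\ldots\}$. Applying the Sherman--Morrison--Woodbury formula to $(\vPhi_1^\top\vPhi_1+\lambda\vI)^{-1}$ then expresses $\cR_1^{\mathrm{GE}}-\cR_0^{\mathrm{GE}}$ as a fixed algebraic combination of such bilinear forms in the \emph{unperturbed} resolvent. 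Each of these I would evaluate by assembling a block matrix (linear pencil) out of $\vW_0,\tilde{\vX},\vX,\vZ,\va,\vbeta_*$ whose Cauchy transform converges to the solution of a matrix Dyson equation \cite{mingo2017free,adlam2020neural}; solving this fixed point yields closed-form rational expressions in $(\eta,\lambda,\psi_1,\psi_2;\mu_1,\mu_2,\mu_1^*)$, which assembled together produce $\delta$ as written in \eqref{delta_formula}. I expect this pencil bookkeeping to be the main obstacle: the teacher direction $\vbeta_*$ must be tracked as a fixed deterministic vector through the calculation rather than averaged out by rotational invariance, which forces the pencil to carry an extra block for the rank-one projector onto $\vbeta_*$.

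Finally, the qualitative claims follow from the structure of the formula. The three vanishing cases are structural: $\eta=0$ makes $\vW_1=\vW_0$ so trivially $\delta=0$; $\mu_1=0$ makes both $\vA$ and the linear term of $\vPhi_t$ vanish, so $\vPhi_1=\vPhi_0+o(1)$; and $\mu_1^*=0$ eliminates the only component of $\tilde{\vX}\vX^\top\vy$ that survives averaging, leaving a rank-one perturbation independent of $\vbeta_*$ whose contribution cancels under the remaining rotational invariance. For non-negativity $\delta\ge 0$ I would read it directly off \eqref{delta_formula}: after the Sherman--Morrison expansion, the $\eta$-dependent correction reduces to a quadratic form in the $\vbeta_*$-projection weighted by the (positive) deterministic equivalents produced by the Dyson equation, and is therefore non-negative; an operational sanity check is that the rank-one perturbation injects into the feature map a direction whose mean is aligned with the optimal linear predictor on $\vbeta_*$, so appending it cannot harm the ridge risk.
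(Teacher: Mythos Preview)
Your overall strategy matches the paper's: GET reduction, rank-one substitution via Proposition~\ref{thm:W1-W0}, Sherman--Morrison--Woodbury on the resolvent, and linear-pencil evaluation of the resulting traces. One simplification you miss: after the Woodbury expansion, every surviving bilinear form is of the type $\vbeta_*^\top\vD\vbeta_*$, $\vu^\top\vD\vu$ or $\vbeta_*^\top\vD\vu$ with $\vD$ built only from $\vW_0,\tilde{\vX},\vZ$. Since these matrices are jointly rotationally invariant and independent of $\vbeta_*$, the paper replaces $\vbeta_*$ by a uniform random vector on $\mathbb{S}^{d-1}$ and collapses those quadratic forms to normalized traces (Lemmas~\ref{lem:Titilde} and~\ref{lemm:quadraticbeta}). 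So no extra pencil block for the $\vbeta_*$-projector is needed; the pencil only involves $\vW_0,\tilde{\vX},\vZ$, and the $\vbeta_*$-dependence is entirely absorbed into scalar prefactors $\theta_1,\theta_2$.

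The genuine gap is the non-negativity argument. The expression~\eqref{delta_formula} is \emph{not} a single quadratic form: it has the shape $\tfrac{P}{Q}-\tfrac{P+R}{Q^2}$ with $Q=\tau_1(\tau_2-\tau_3)-1\le -1$, and the two pieces have opposite signs in general. The paper treats them separately. The first summand is non-negative by direct sign inspection of the $\tau_i$ (using $m_1,m_2>0$ and $m_1',m_2'<0$), but the second reduces to showing that $\Delta:=\tau_1(\tau_4+\tau_{12}-2\tau_6)+(\tau_7-\tau_5)\tau_8\ge 0$, which after algebraic simplification becomes the statement that $z\mapsto z\,m_1(z)\,\tau_1(z)$ has non-positive derivative on $(0,\infty)$. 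This monotonicity of the Stieltjes transform is not visible from the Woodbury structure and is imported from \cite{tripuraneni2021covariate}. Your ``operational sanity check'' (appending a direction aligned with the optimal linear predictor cannot harm ridge risk) is not a proof and is false in general at fixed $\lambda$, so it cannot stand in for this calculation; similarly, the ``only if'' direction requires knowing that the building blocks $m_1,m_2,\tau_1,m_2'$ are strictly nonzero, which again comes from the explicit fixed-point characterization rather than from structure.
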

\begin{remark}
Performance of the initial RF ridge estimator $\cR_0(\lambda)$ has been characterized by many prior works (e.g., \citep{gerace2020generalisation,mei2019generalization}); hence the precise asymptotics of $\delta$ provided in Theorem~\ref{thm:R0-R1} allows us to explicitly compute the asymptotic prediction risk of the CK model after one feature learning step $\cR_1(\lambda)$.  
\end{remark} 
 
Theorem~\ref{thm:R0-R1} confirms our intuition that training the first-layer parameters improves the CK model, as shown in Figure~\ref{fig:main}(a)(b). Remarkably, this improvement ($\delta>0$) holds for any $\psi_1,\psi_2\in(0,\infty)$, that is, taking one gradient step (with learning rate $\eta=\Theta(1)$) is \textit{always} beneficial, even when the training set size $n$ is small. 
Moreover, we do not require the student and teacher models to have the same nonlinearity
--- a non-vanishing decrease in the prediction risk of CK ridge regression is present as long as $\mu_1,\mu_1^*\neq 0$. 
On the other hand, the GET (in particular Fact \ref{fact:GET-lower-bound}) also implies an upper bound on the possible improvement: $\delta\le\cR_0(\lambda)-\mu_2^{*2}$ as $n,d,N\to\infty$; this is to say, the trained CK remains in the ``linear'' regime. 

Now we consider the following special cases where the expression of $\delta$ can be further simplified. 

\paragraph{Large Sample Limit.} 
We first analyze the setting where the sample size $n$ is larger than any constant times $d$, that is, we let $n,d,N\to\infty$ proportionally, and then take the limit $\psi_1\to\infty$. 
In this regime, since a large number of training data is used to compute the gradient for the first-layer parameters, we intuitively expect the benefit of feature learning to be more pronounced, and a larger step may be more beneficial.

\begin{prop}\label{prop:large-sample-size} 
Consider the large sample regime: $\psi_1\to\infty$, $\psi_2\in (0,\infty)$. 
Under the same assumptions as Theorem~\ref{thm:GET} and $\eta=\Theta(1)$, $\lim_{\psi_1\to\infty}\delta(\eta,\lambda,\psi_1,\psi_2)$, defined in \eqref{eq:delta_case1}, is $(i)$ non-negative, $(ii)$ vanishing if and only if one of $\mu_,\mu_1^*,\eta$ is zero, and $(iii)$ increasing with respect to the learning rate $\eta$. 
\end{prop}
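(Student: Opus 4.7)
The approach is to start from the explicit expression \eqref{delta_formula} for $\delta(\eta,\lambda,\psi_1,\psi_2)$ established in Appendix~\ref{sec:linear_pencil} via operator-valued free probability, and compute its limit as $\psi_1 \to \infty$. The key observation is that this regime provides considerable simplifications: as the sample size $n$ grows much faster than $d$, the empirical first-step gradient concentrates around a purely signal-driven rank-one matrix. Specifically, combining Proposition~\ref{thm:W1-W0} with the single-index Assumption~\ref{assump:2} yields $\vG_0 \approx \frac{\mu_1}{n\sqrt{N}}\vX^\top\vy\,\va^\top$, and $\frac{1}{n}\vX^\top\vy$ converges in probability to $\mu_1^*\vbeta_*$ as $\psi_1 \to \infty$. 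Consequently, the noise contribution $\bar{\mu}^2/\psi_1$ entering $\theta_1$ in \eqref{def:theta_12} is suppressed, so $\theta_1 \to \theta_2 = \mu_1\mu_1^*\eta$. This collapse already gives the spiked-model limit for $\vW_1$: a pure bulk from $\vW_0$ plus a deterministic spike of magnitude $\eta\mu_1\mu_1^*$ along $\vbeta_*\otimes\va$.

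Next I would invoke the Gaussian equivalence theorem (Theorem~\ref{thm:GET}) to replace the trained CK features by $\vphi_{\mathrm{GE}}(\vx) = \frac{1}{\sqrt{N}}(\mu_1\vW_1^\top\vx + \mu_2\vz)$ and evaluate the ridge prediction risk on this linearized model. Because $\vW_1 - \vW_0$ concentrates on the deterministic rank-one spike $\eta\mu_1\mu_1^*\vbeta_*\va^\top$ in the $\psi_1\to\infty$ limit, the GE features admit an orthogonal decomposition into an isotropic Gaussian part (inherited from $\vW_0$ and from $\mu_2\vz$) plus a one-dimensional signal component along $\vbeta_*$. The resulting ridge regression problem reduces to the well-studied isotropic setting plus a scalar estimation problem for the signal coefficient $\mu_1^*$, whose interaction with the ridge penalty $\lambda$ produces the formula in \eqref{eq:delta_case1}.

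With this closed form in hand, the three claims follow. For (i), non-negativity is inherited from Theorem~\ref{thm:R0-R1}, which gives $\delta \ge 0$ for every finite $\psi_1$; the inequality passes to the limit. For (ii), when $\eta=0$ the update is trivial; when $\mu_1 = 0$ the rank-one approximation $\vA$ in Proposition~\ref{thm:W1-W0} vanishes, so $\vG_0$ is asymptotically negligible; when $\mu_1^* = 0$ the label vector $\vy$ carries no signal about $\vbeta_*$, and the residual noise in $\vG_0$ is suppressed as $\psi_1\to\infty$. Conversely, if all three are nonzero then $\theta_2>0$ and Theorem~\ref{thm:alignment} gives a strictly positive overlap $|\langle\vu_1,\vbeta_*\rangle|>0$, which translates through the scalar estimation step into strictly positive risk reduction. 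For (iii), note that in the large-sample limit the only $\eta$-dependence in the limiting $\delta_\infty$ enters through $\theta_2^2 = \mu_1^2\mu_1^{*2}\eta^2$; this monotonically increases the effective signal captured by the spike while the isotropic Gaussian noise entering the GE features is independent of $\eta$. Monotonicity then follows either by direct differentiation of \eqref{eq:delta_case1} or by observing that the Bayes risk of a scalar estimation problem is monotone decreasing in its SNR.

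The main obstacle will be the first step: extracting a tractable closed form for $\lim_{\psi_1\to\infty}\delta$ from the operator-valued free probability expression, which is a sum of trace functionals of rational expressions in $\vW_1$. I would handle this using a Woodbury identity to separate the rank-one spike contribution from the bulk resolvent $(\vW_0^\top\vW_0 - z\vI)^{-1}$, whose limit is governed by a Marchenko--Pastur Stieltjes transform. The delicate technical point is to justify the exchange of limits (first $n,d,N\to\infty$ proportionally, then $\psi_1\to\infty$), which should follow from the uniform-in-$\psi_1$ analyticity bounds on the linear pencil expression, together with the fact that the spike location stabilizes as $\theta_1\to\theta_2$ in this regime.
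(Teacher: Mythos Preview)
Your high-level strategy---take the $\psi_1\to\infty$ limit of the closed-form $\delta$ in \eqref{delta_formula} and then read off (i)--(iii)---matches the paper. But the execution differs, and your argument for (iii) has a real gap. The paper does not attempt any reduction to a ``scalar estimation problem''; it computes the $\psi_1\to\infty$ limit of each $\tau_i$ in Proposition~\ref{prop:tau_i} directly from the self-consistent equations \eqref{eq:fixedpoint1}--\eqref{eq:fixedpoint2}. The key step is that the free multiplicative convolution $\mu^{\mathrm{MP}}_{\psi_2/\psi_1}\boxtimes(\mu_2^2+\mu_1^2\mu^{\mathrm{MP}}_{\psi_2})$ degenerates to $\mu_2^2+\mu_1^2\mu^{\mathrm{MP}}_{\psi_2}$, giving $\tfrac{\psi_1}{\psi_2}\tau_1\to s_1:=\int(\mu_1^2x+\mu_2^2+\lambda)^{-1}\,d\mu^{\mathrm{MP}}_{\psi_2}$ and similarly a companion $s_2$; this produces the closed form \eqref{eq:delta_case1}. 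Your ``isotropic plus rank-one'' GE picture is correct as intuition, but even in the population limit the feature covariance $\mu_1^2\vW_1^\top\vW_1+\mu_2^2\vI$ retains a continuous Marchenko--Pastur bulk from $\vW_0$, so you would end up recomputing exactly these integrals rather than solving a genuinely one-dimensional problem.

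The substantive gap is your monotonicity argument. The SNR heuristic is invalid: ridge regression with \emph{fixed} penalty $\lambda$ is not the Bayes estimator, so there is no a priori guarantee that its risk is monotone in the signal strength $\theta_2^2$. Concretely, in the form $\delta_\infty=\mu_1^{*2}\big(\tfrac{AB}{A+1}+\tfrac{C}{(A+1)^2}\big)$ with $A,C\propto\theta_2^2$ and $B$ constant, the second summand is \emph{not} monotone in $\theta_2^2$ once $A>1$, so the two terms must be combined carefully. The paper establishes (iii) by direct differentiation and the sign relies on a specific algebraic identity, namely (in their reparametrization $A=-\mu_1^2\theta_2^2\alpha s_1$, $B=\beta$, $C=-\mu_1^2\theta_2^2\alpha\gamma$) the relation $\gamma-s_1\beta=\alpha(s_1-\lambda s_2)$ together with the elementary inequality $\lambda s_2\le s_1$ and the previously established signs $\alpha<0$, $\beta,\gamma>0$. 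You do list ``direct differentiation of \eqref{eq:delta_case1}'' as an alternative, which is the right route, but be aware this is where the actual content lies; it is not a triviality. Your arguments for (i) and (ii) are fine---non-negativity passes to the limit from Theorem~\ref{thm:R0-R1}, and once \eqref{eq:delta_case1} is in hand the vanishing condition is immediate since every $\eta$-dependence enters through the factor $\theta_2^2=(\mu_1\mu_1^*\eta)^2$ multiplying strictly positive quantities.
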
 
Proposition~\ref{prop:large-sample-size} predicts that the prediction risk $\cR_1(\lambda)$ further decreases as we use a larger learning rate $\eta$, which is empirically verified in Figure~\ref{fig:main}(a). We note that the large learning rate setting ($\eta=\Theta(\sqrt{N})$) in Section~\ref{subsec:large-lr} cannot be covered by the above proposition by increasing $\eta$, as here $\eta$ does not scale with $N$.
 
\paragraph{Large Width Limit.} We also address the highly overparameterized regime, i.e., $\psi_2\to\infty$. In this limit, the initialized CK model approaches the kernel ridge regression estimator, the prediction risk of which is still lower bounded by $\norm{\textsf{P}_{>1}f^*}_{L^2}^2$ due to Proposition~\ref{prop:ridge-lower-bound}. The following proposition indicates that the advantage of one-step feature learning becomes negligible in this large width setting. 
\begin{prop}\label{prop:large-width} 
Consider the large width regime: $\psi_1\in (0,\infty)$, $\psi_2\to\infty$. 
Then under the same assumptions as Theorem~\ref{thm:GET}  and $\eta=\Theta(1)$, we have $\lim_{\psi_2\to\infty}\delta(\eta,\lambda,\psi_1,\psi_2)=0$.  
\end{prop}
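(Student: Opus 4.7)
My plan is to combine the rank-one structure of the first gradient step from Proposition~\ref{thm:W1-W0} with the Gaussian equivalence property of Theorem~\ref{thm:GET}, and then exploit the fact that in the wide regime $\psi_2\to\infty$ a rank-one perturbation of the weight matrix is swamped by the $1/N$ normalization inside the CK kernel matrix.

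First I would invoke Theorem~\ref{thm:GET} at $t=0$ and $t=1$ to replace $\cR_0(\lambda)$ and $\cR_1(\lambda)$ by the ridge risks of the two Gaussian-equivalent feature maps $\vphi_{\mathrm{GE}}^{(t)}(\vx)=\tfrac{1}{\sqrt{N}}(\mu_1\vW_t^\top\vx+\mu_2\vz)$, up to $o_{d,\P}(1)$ errors. By Proposition~\ref{thm:W1-W0} and the scaling $\eta=\Theta(1)$, $\vW_1-\vW_0=\eta\sqrt{N}\vA+o_{d,\P}(1)$ where $\vA$ is rank one with $\|\vA\|=\Theta(1/\sqrt{N})$; consequently $\|\vW_1-\vW_0\|=\Theta(1)$ and $\vW_1\vW_1^\top-\vW_0\vW_0^\top$ has rank at most two with operator norm $\cO_{d,\P}(1)$.

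Next I would compare the two GE kernel matrices $\vK_t=\vPhi_t\vPhi_t^\top$ formed on the fresh data $\tilde{\vX}$, where $\vPhi_t=\tfrac{1}{\sqrt{N}}(\mu_1\tilde{\vX}\vW_t+\mu_2\vZ)$. Expanding the square and using $\|\tilde{\vX}\|=\cO_{d,\P}(\sqrt{n}+\sqrt{d})$ and $\|\vZ\|=\cO_{d,\P}(\sqrt{n}+\sqrt{N})$, the main term $\tfrac{\mu_1^2}{N}\tilde{\vX}(\vW_1\vW_1^\top-\vW_0\vW_0^\top)\tilde{\vX}^\top$ and the cross terms $\tfrac{\mu_1\mu_2}{N}\tilde{\vX}(\vW_1-\vW_0)\vZ^\top$ all have operator norm bounded by $\cO_{d,\P}(\sqrt{\psi_1/\psi_2}+1/\sqrt{\psi_2})$, which vanishes as $\psi_2\to\infty$. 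A parallel bound controls the test-point cross-kernel vector $(k_t(\vx,\tilde{\vx}_i))_{i\in[n]}$ in Euclidean norm. Since for fixed $\lambda>0$ the ridge predictor and its bias--variance decomposition depend Lipschitz-continuously on the kernel matrix and cross-kernel vector, this forces $|\cR_1(\lambda)-\cR_0(\lambda)|\to 0$, and hence $\delta\to 0$.

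The main obstacle is justifying the interchange of the joint proportional limit (underlying Theorems~\ref{thm:GET} and~\ref{thm:R0-R1}) with the subsequent $\psi_2\to\infty$ limit, since Theorem~\ref{thm:R0-R1} a priori only produces the pointwise limit $\delta(\eta,\lambda,\psi_1,\psi_2)$ at each fixed $\psi_2$. I would resolve this by working directly with the closed-form expression \eqref{delta_formula} from Appendix~\ref{sec:linear_pencil}: the operator-valued free-probability derivation identifies $\delta$ as a rational function of finitely many deterministic scalars, each of which admits a well-defined $\psi_2\to\infty$ limit in which the rank-one spike contribution cancels due to the $1/N$ normalization. The proposition then reduces to verifying that this symbolic limit equals zero, which is consistent with the operator-norm argument above.
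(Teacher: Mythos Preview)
Your final paragraph is the correct route and is exactly what the paper does: work directly with the closed-form $\delta$ in \eqref{delta_formula} and take $\psi_2\to\infty$. However, your claim that ``each of [the deterministic scalars] admits a well-defined $\psi_2\to\infty$ limit'' is false, and this is where the actual work lies. Several of the $\tau_i$ diverge: from Proposition~\ref{prop:tau_i} one has $\tau_1\sim \psi_2/(\psi_1\lambda)$ and $\tau_8\sim (\psi_2/\psi_1)^2$, so one cannot simply plug in limits term by term. The paper's argument tracks growth rates: using the self-consistent equation \eqref{eq:m1zequation} for $m_1(z)$ at $z=\lambda\psi_1/\psi_2\to 0$, it shows $m_1,m_2,m_1',m_2'$ all converge to finite nonzero constants, hence $\tau_7-\tau_5=-\mu_1^2\mu_1^*\theta_2(\psi_1/\psi_2)^2\lambda m_2=\Theta((\psi_1/\psi_2)^2)$, while $\tau_4+\tau_{12}-2\tau_6$ stays bounded. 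The combinations $\tau_1(\tau_7-\tau_5)=\Theta(\psi_1/\psi_2)$ and $(\tau_7-\tau_5)^2\tau_8=\Theta((\psi_1/\psi_2)^2)$ then vanish, and the denominator $\tau_1(\tau_2-\tau_3)-1\to -1$, giving $\delta\to 0$.

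Your operator-norm argument on the GE kernels is a reasonable heuristic and the bounds you write are correct, but it does not constitute a proof for the reason you yourself flag: $\delta$ is \emph{defined} as the pointwise limit at fixed $\psi_2$, so you would need a bound of the form $|\cR_0(\lambda)-\cR_1(\lambda)|\le C(\psi_1,\lambda)/\psi_2 + o_{d,\P}(1)$ with the constant uniform in $\psi_2$, and then a Lipschitz estimate for the full risk (not just the kernel matrix) in both the empirical and population covariances $\hSigmaPhi,\bSigmaPhi$ as well as the cross term $\vbeta_*^\top\vW_t$. This is plausible but is considerably more work than you sketch, and the paper sidesteps it entirely by computing with the formula.
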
 
Proposition~\ref{prop:large-width} agrees with Figure~\ref{fig:main}(b), where we see that the risk improvement is more prominent when the width $N$ is not too large. 
One explanation is that as $\psi_2=N/d$ increases, the initial CK already achieves lower prediction risk (e.g., see \cite[Figure 4]{mei2019generalization}), so the benefit of feature learning becomes less significant.  
 
\begin{figure}[t]  
\centering
\begin{minipage}[t]{0.328\linewidth}
\centering
{\includegraphics[width=1\textwidth]{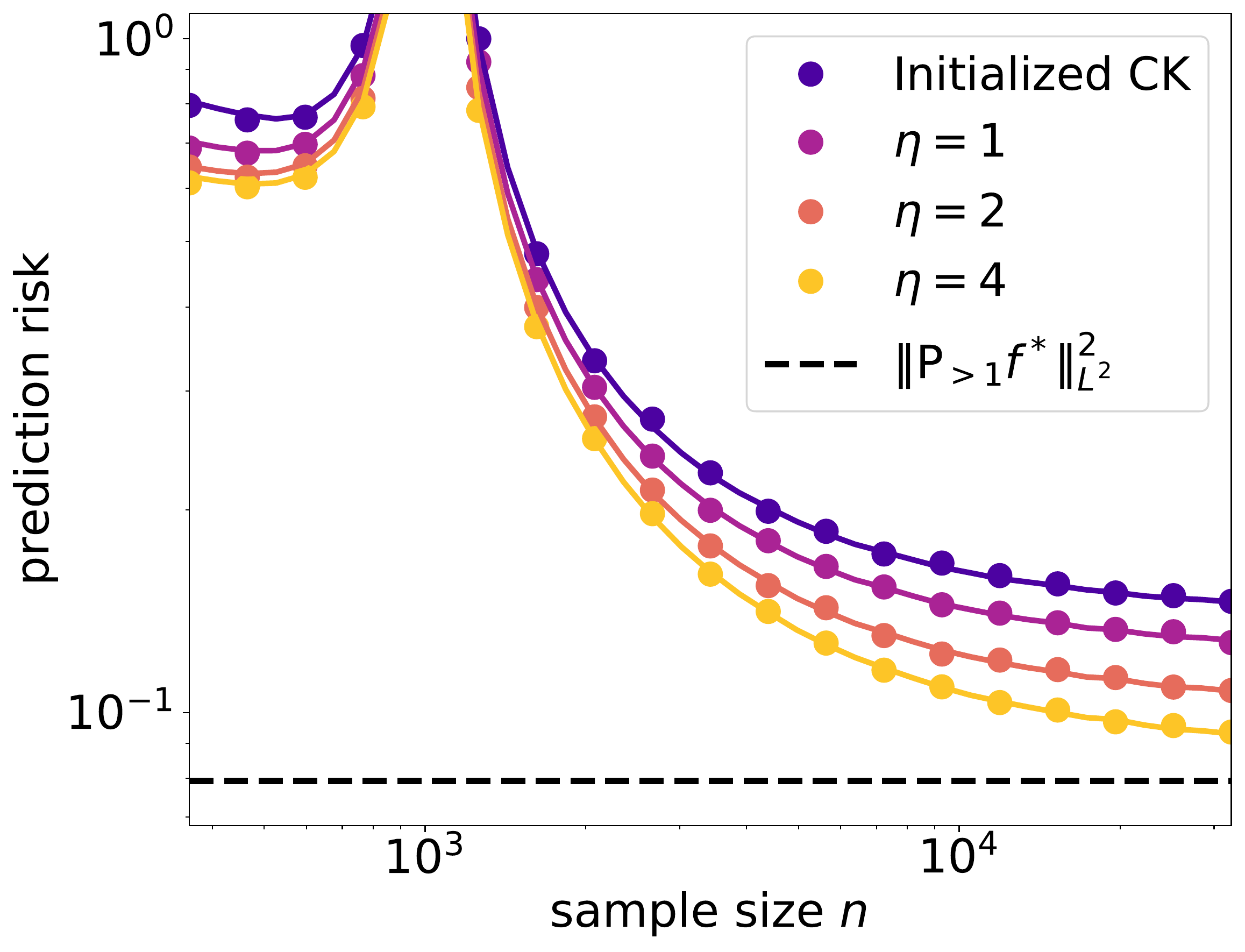}}  \\
\small (a) One step (risk vs.~sample size). 
\end{minipage}
\begin{minipage}[t]{0.328\linewidth}
\centering 
{\includegraphics[width=0.978\textwidth]{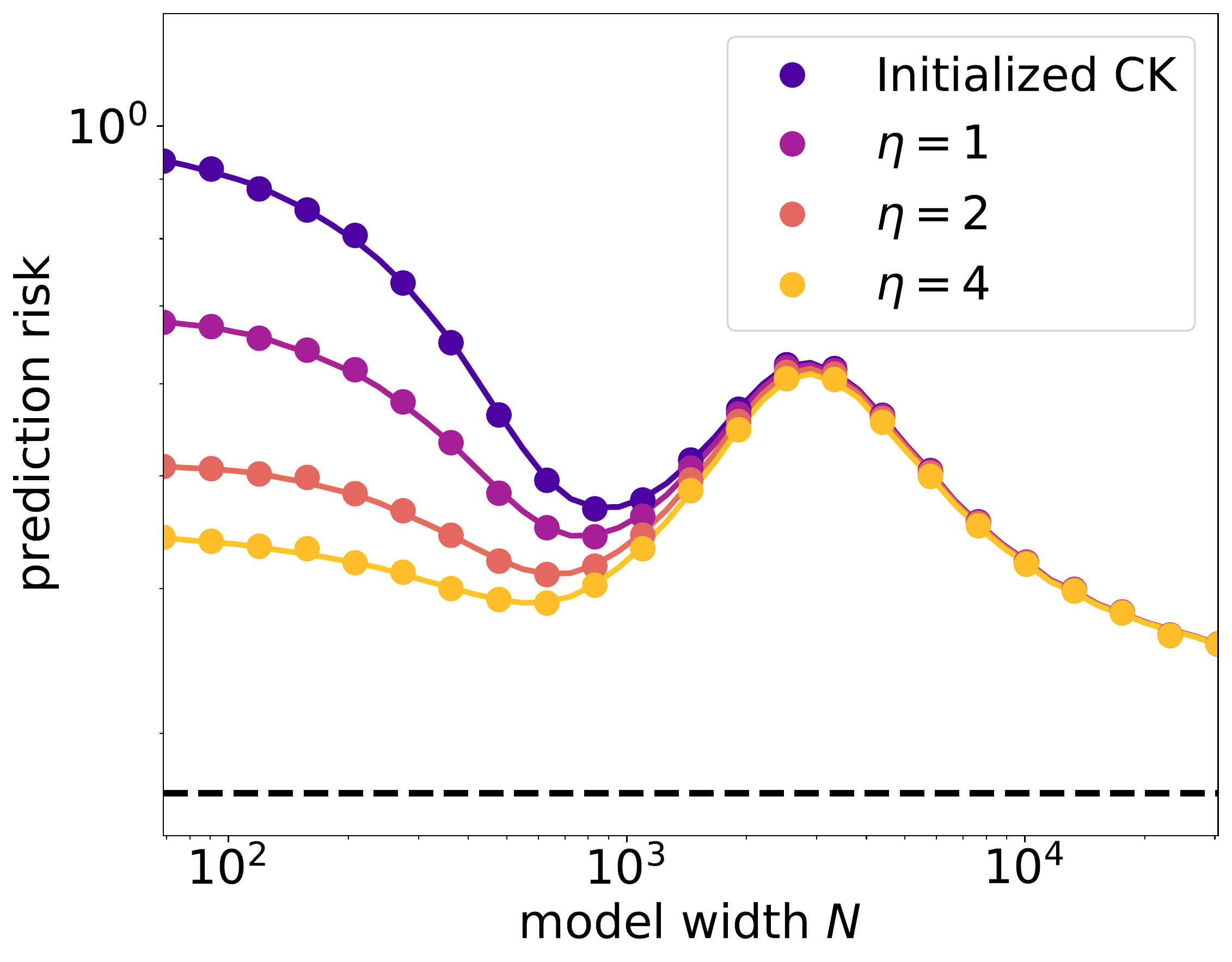}} \\ 
\small (b) One step (risk vs.~width). 
\end{minipage} 
\begin{minipage}[t]{0.328\linewidth}
\centering
{\includegraphics[width=1\textwidth]{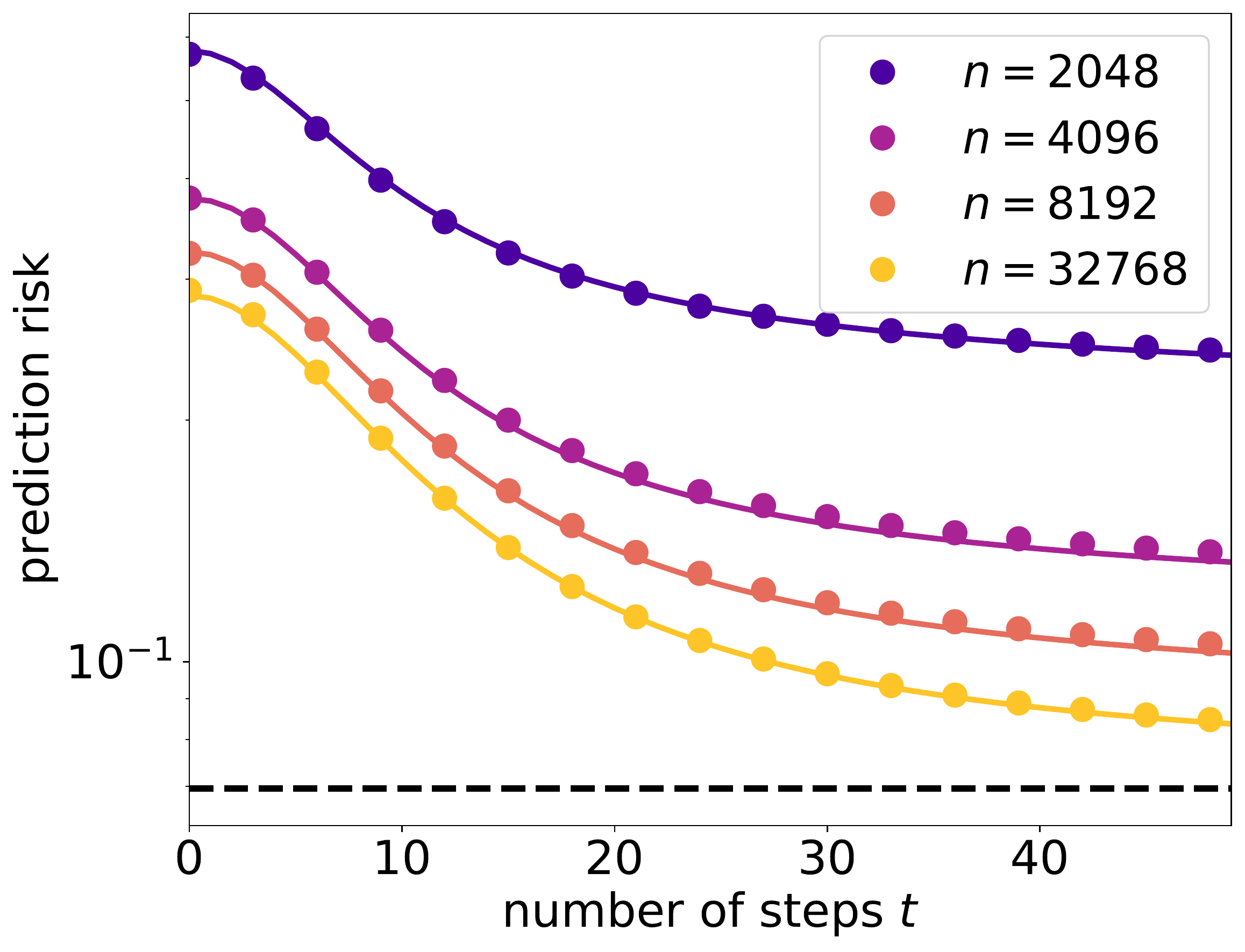}} \\ 
\small (c) Multiple steps (risk vs.~GD steps).  
\end{minipage}  
 \caption{\small Prediction risk of ridge regression on trained features ($\eta=\Theta(1)$): dots represent empirical simulations ($d=512$, averaged over 50 runs) and solid curves are asymptotics predicted by the GET; dashed black line corresponds to the kernel lower bound \eqref{eq:ridge-lower-bound}. 
 (a) $\sigma=\text{tanh}, \sigma^*=\text{SoftPlus}$; we set $\psi_2=2$,  $\lambda=10^{-4}$,  $\sigma_\eps=0.25$. 
 (b) $\sigma=\text{tanh}, \sigma^*=\text{ReLU}$; we set $\psi_1=5$,  $\lambda=10^{-2}$,  $\sigma_\eps=0.1$. 
 (c) $\sigma=\text{ReLU}, \sigma^*=\text{tanh}$; we set $\eta=0.2$, $\psi_2=2$, $\lambda=10^{-3}$.}   
\label{fig:main}  
\end{figure}

\subsection{$\eta=\Theta(\sqrt{N})$: Improvement Over the Kernel Lower Bound} 
\label{subsec:large-lr}
 
Now we take one gradient step with large learning rate $\eta=\Theta(\sqrt{N})$, which matches the asymptotic order of the Frobenius norm of the gradient $\vG_0$ and that of the initialized weight matrix $\vW_0$ as in \eqref{eq:lrlarge}. 
Note that after absorbing the prefactors, this learning rate scaling is analogous to the \textit{maximum update parameterization} \citep{yang2020feature}, which admits a feature learning limit; specifically, the change in each coordinate of the feature vector $[\sigma(\vW^\top\vx)]_i$ is $\tilde{\Theta}_{d,\P}(1)$, which has roughly the same magnitude as its value at initialization. 
  
Due to the large step size, the columns of the updated weight matrix $\vW_1$ are no longer near-orthogonal, which is an important property used in existing analyses of the Gaussian equivalence (e.g., see Proposition~\ref{prop:GET-perturbed} or \cite[Equation~(66)]{hu2020universality}). Indeed, we will see that in this regime, the ridge regression estimator on the trained CK features is no longer ``linear'' and can potentially outperform the kernel lower bound \eqref{eq:ridge-lower-bound} in the proportional limit. 
However, in the absence of GET, it is difficult to derive the precise asymptotics of the CK model. 
As an alternative, in this subsection we establish an \textit{upper bound} on the prediction risk $\cR_1(\lambda)$, which we then compare against the kernel ridge lower bound. 

\paragraph{Existence of ``Good'' Solution.} 
Given the trained first-layer weights $\vW_1$, we first construct a second-layer $\tilde{\va}$ for which the prediction risk can be easily upper-bounded. For a pair of nonlinearities $(\sigma,\sigma^*)$, we introduce a scalar quantity $\tau^*$ which is the optimum of the following minimization problem: 
\begin{align}
    \tau^* 
:=
    \inf_{\kappa\in\R}\, \E_{\xi_1}\left[\big(\sigma^*(\xi_1) - \E_{\xi_2}\sigma(\kappa\xi_1 + \xi_2) \big)^2\right], 
\label{eq:tau*}
\end{align} 
where $\xi_1,\xi_2\iid\cN(0,1)$. We write $\kappa^*$ as an optimal value at which $\tau^*$ is attained (when $\tau^*$ is not achieved by finite $\kappa$, the same argument holds by introducing a small tolerance factor $\epsilon>0$ in $\tau^*$; see Appendix~\ref{app:oracle-estimator}). 
Roughly speaking, $\tau^*$ approximates the prediction risk of a specific student model which takes the form of an average over \textit{subset} of neurons (after one feature learning step); in particular, the first term on the RHS of \eqref{eq:tau*} containing $\sigma^*$ corresponds to the teacher $f^*$, and the second term $\E_{\xi_2}$ represents the constructed student model.  
The following lemma shows that we can find
some $\tilde{\va}$ on the trained CK features whose prediction risk is approximately $\tau^*$, under the additional assumption that the activation function $\sigma$ is bounded. 
\begin{lemm}[Informal]
Given Assumptions \ref{assump:1} and \ref{assump:2}, and assume further that $\sigma$ is bounded. Then after one gradient step on $\vW$ with $\eta=\Theta(\sqrt{N})$, there exists some second-layer coefficients $\tilde{\va}$ such that the constructed student model $\tilde{f}(\vx) = \frac{1}{\sqrt{N}}\tilde{\va}^\top\sigma(\vW_1^\top\vx)$ achieves prediction risk ``close'' to $\tau^*$ when $\psi_1=n/d$ is large.  
\label{lemm:optimal-estimator}
\end{lemm}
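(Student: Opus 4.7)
The plan is to exhibit an explicit oracle second-layer $\tilde{\va}$ (depending on $\vW_0,\va,\vbeta_*$) and to compute by hand that the resulting student achieves prediction risk arbitrarily close to $\tau^*$ in the iterated limit $d,N\to\infty$ followed by $\psi_1\to\infty$ (and a final $\epsilon\to 0$). The idea is that with $\eta=\Theta(\sqrt N)$, the signal term $c\mu_1\mu_1^*(\sqrt N a_i)\vbeta_*$ injected into every neuron is of order one, so the post-step neurons behave like a continuum of independent "teacher-aligned" features indexed by the random scalar $\sqrt N a_i$; a properly normalized indicator on those with alignment near the optimal scale $\kappa^*$ then realises the average on the right-hand side of \eqref{eq:tau*}.

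First I would pin down the effective form of $\vW_1$. Combining Proposition~\ref{thm:W1-W0} with $\eta=c\sqrt N$ and the concentration $\tfrac1n\vX^\top\vy = \mu_1^*\vbeta_* + O_{d,\P}(\psi_1^{-1/2})$ yields, column by column,
\begin{align*}
\vw_i^{(1)} = \vw_i^{(0)} + \kappa_i\,\vbeta_* + \text{residual}_i,\qquad \kappa_i := c\,\mu_1\mu_1^*\,(\sqrt N\,a_i),
\end{align*}
with residual of vanishing norm uniformly in $i$. Since $\sqrt N a_i\iid\cN(0,1)$ is independent of $\vW_0$, the $\{\kappa_i\}$ are i.i.d.~$\cN(0,c^2\mu_1^2\mu_1^{*2})$. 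Decomposing $\vw_i^{(0)}$ along $\vbeta_*$ and its complement (the parallel coefficient is $O_\P(d^{-1/2})$), the preactivation at a fresh Gaussian input reduces to $\langle\vw_i^{(1)},\vx\rangle = \kappa_i\xi_1 + \xi_{2,i}+o_{d,\P}(1)$, where $\xi_1:=\langle\vbeta_*,\vx\rangle\sim\cN(0,1)$ and $\xi_{2,i}:=\langle\vw_{i,\perp}^{(0)},\vx\rangle$ are (conditionally on $\vx$) independent across $i$ with variance $\to 1$ and decorrelated from $\xi_1$.

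Second, I would localize the readout on the optimal alignment. Fixing a minimizer $\kappa^*$ of \eqref{eq:tau*} (or a near-minimizer if $\tau^*$ is not attained), a tolerance $\epsilon>0$, $I_\epsilon:=[\kappa^*-\epsilon,\kappa^*+\epsilon]$ and $p_\epsilon:=\Pr(\kappa_i\in I_\epsilon)$, set
\begin{align*}
\tilde a_i := \frac{1}{\sqrt N\,p_\epsilon}\,\mathbbm{1}\{\kappa_i\in I_\epsilon\},
\end{align*}
so that $\tilde f(\vx) = \tfrac{1}{Np_\epsilon}\sum_i\mathbbm{1}\{\kappa_i\in I_\epsilon\}\sigma(\kappa_i\xi_1+\xi_{2,i})+o_\P(1)$. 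Expanding $\E_\vx(\tilde f-f^*)^2$ and using the conditional independence between distinct neurons, the off-diagonal $i\neq j$ terms dominate and a Hoeffding-type concentration over the $N$ summands (whose boundedness is afforded by the extra hypothesis on $\sigma$) gives $\E_\vx \tilde f(\vx)^2 \to \E_{\xi_1}\bigl(\E[\sigma(\kappa\xi_1+\xi_2)\mid\kappa\in I_\epsilon]\bigr)^2$ and $\E_\vx[\tilde f(\vx)f^*(\vx)]\to \E_{\xi_1}[\sigma^*(\xi_1)\,\E[\sigma(\kappa\xi_1+\xi_2)\mid\kappa\in I_\epsilon]]$. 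Sending $\epsilon\to 0$ and invoking continuity of $\kappa\mapsto\E_{\xi_2}\sigma(\kappa\xi_1+\xi_2)$ (which follows from the bounded-derivative hypothesis on $\sigma$ in Assumption~\ref{assump:1}), the risk collapses to $\E_{\xi_1}(\sigma^*(\xi_1)-\E_{\xi_2}\sigma(\kappa^*\xi_1+\xi_2))^2=\tau^*$.

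The main obstacle is making the three limits $\psi_1\to\infty$, $N,d\to\infty$, and $\epsilon\to 0$ compatible. Shrinking $\epsilon$ sharpens the approximation of $\tau^*$ but inflates each $\tilde a_i$ by $1/p_\epsilon$, so the Hoeffding fluctuation of $\tilde f(\vx)$ scales like $1/(p_\epsilon\sqrt N)$ and the diagonal $i=j$ contribution to $\E_\vx \tilde f(\vx)^2$ behaves as $1/(Np_\epsilon)$; both are controllable precisely because the additional hypothesis makes each summand uniformly $O(1/(Np_\epsilon))$, forcing $\epsilon$ to shrink no faster than a polylogarithmic rate in $N$. A secondary subtlety is that Proposition~\ref{thm:W1-W0} bounds $\vG_0-\vA$ only in operator norm, so a single column could a priori carry large residual; the uniform column-wise bound needed for the residuals above is recovered by leveraging the independence of $\va$ from $(\vX,\vy,\vW_0)$, which spreads the gradient noise isotropically across the $N$ columns.
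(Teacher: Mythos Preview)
Your proposal is essentially the paper's own construction: the oracle $\tilde{\va}$ is a (normalized) indicator on the neurons whose initialized second-layer coefficient $\sqrt{N}a_i$ lands in a shrinking window around $\alpha^*=\kappa^*/(\bar\eta\mu_1\mu_1^*)$, so that $\tilde f$ is a simple average over those neurons; the risk is then reduced to $\tau^*$ through the same chain of substitutions (rank-1 gradient $\vA_1$, replace $a_i$ by $\alpha$, replace $\tfrac1n\vX^\top\vX$ by $\vI$, then law of large numbers over the $\vw_i^{(0)}$). One correction on your final paragraph: the column-wise residual control is \emph{not} driven by the isotropy of $\va$---on the selected subset all $a_{i_r}\approx\alpha/\sqrt N$ are nearly equal, so nothing is being spread---rather, the paper proves a refined Frobenius bound on the restricted submatrix $\vB_r$ by exploiting the centering $\E[\sigma'_\perp(z)]=0$ together with the near-orthogonality of distinct $\vx_i,\vx_j$, which forces the cross terms $\E[\sigma'_\perp(\vx_i^\top\vw)\sigma'_\perp(\vx_j^\top\vw)\,\vx_i^\top\vx_j]$ to be small and yields $\tfrac{N}{N_r}\|\vB_r\|_F^2\lesssim d/(nN)+o(1)$.
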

It is worth noting that the definition of $\tau^*$ does not involve the specific value of learning rate $\eta$. This is because for any choice of $\eta=\Theta(\sqrt{N})$, due to the Gaussian initialization of $a_i$, we can find a subset of weights that receive a ``good'' learning rate (with high probability) such that the corresponding neurons are useful in learning the teacher model. 
In addition, observe that $\tau^*$ is a simple Gaussian integral which can be numerically or analytically computed (see Appendix~\ref{app:oracle-estimator} for some examples). For instance, when $\sigma=\sigma^*=\text{erf}$, one can easily verify that $\kappa^*=\sqrt{3}$ and $ \tau^*=0$.    

\paragraph{Prediction Risk of Ridge Regression.}
Having established the existence of a ``good'' student model $\tilde{f}$ that achieves prediction risk close to $\tau^*$ defined in \eqref{eq:tau*}, we can now prove an upper bound for the prediction risk of the ridge regression estimator on trained CK features $\cR_1(\lambda)$ in terms of $\tau^*$.  

\begin{theo}
\label{thm:risk-large-lr}
Under the same assumptions as Lemma~\ref{lemm:optimal-estimator}, after one gradient step on $\vW$ with $\eta=\Theta(\sqrt{N})$, there exist constants $C, \psi_1^*>0$ such that for any $n/d>\psi_1^*$, the ridge regression estimator \eqref{eq:ridge-risk} satisfies
\begin{align}
    \cR_1(\lambda) \le 10\tau^* + C\Big(\sqrt{\tau^*}\cdot\sqrt{\tfrac{d}{n}} + \tfrac{d}{n}\Big),   
\end{align} 
with probability $1$ as $n,d,N\to\infty$, if we choose the ridge penalty: $n^{\eps-1}<N^{-1}\lambda<n^{-\eps}$ for some small $\eps>0$. 
\end{theo}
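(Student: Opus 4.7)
The plan is to use Lemma~\ref{lemm:optimal-estimator} as an oracle, compare $\hat{\va}_\lambda$ to this oracle via the defining optimality of ridge regression, and convert the resulting in-sample bound into a population bound through uniform convergence. No Gaussian equivalence is invoked, since the GET is expected to fail for $\eta=\Theta(\sqrt{N})$: the columns of $\vW_1$ are no longer near-orthogonal in this regime, which was the key structural hypothesis in Theorem~\ref{thm:GET}.

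First, taking the oracle $\tilde{\va}$ from Lemma~\ref{lemm:optimal-estimator} as a feasible comparator in the defining minimization \eqref{eq:ridge-risk} yields
\begin{align*}
\tfrac{1}{n}\|\tilde{\vy} - \vPhi\hat{\va}_\lambda\|^2 + \tfrac{\lambda}{N}\|\hat{\va}_\lambda\|^2 \,\le\, \tfrac{1}{n}\|\tilde{\vy} - \vPhi\tilde{\va}\|^2 + \tfrac{\lambda}{N}\|\tilde{\va}\|^2.
\end{align*}
The condition $\lambda/N < n^{-\eps}$, combined with the bound $\|\tilde{\va}\|^2\cdot\lambda/N = o_{d,\P}(1)$ that can be read off the explicit construction behind Lemma~\ref{lemm:optimal-estimator}, makes the regularization term for the oracle negligible, while evaluating the same inequality at $\va=0$ and using $\lambda/N > n^{\eps-1}$ furnishes the norm control $\|\hat{\va}_\lambda\|^2 \lesssim n^{1-\eps}$. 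Since $(\tilde{\vX},\tilde{\vy})$ is independent of $(\vW_1,\tilde{\va})$ and $\sigma$ is bounded, Bernstein-type concentration conditional on $(\vW_1,\tilde{\va})$ gives $\tfrac{1}{n}\|\tilde{\vy} - \vPhi\tilde{\va}\|^2 \le \tau^* + \sigma_\eps^2 + o_{d,\P}(1)$ by the guarantee of Lemma~\ref{lemm:optimal-estimator}. Decomposing $\tilde y_i - \hat f(\tilde{\vx}_i) = (f^*(\tilde{\vx}_i) - \hat f(\tilde{\vx}_i)) + \tilde\eps_i$ and controlling the noise cross term by sub-Gaussian concentration localized to the above norm ball yields the in-sample bound $\tfrac{1}{n}\sum_i(f^*(\tilde{\vx}_i) - \hat f(\tilde{\vx}_i))^2 \le \tau^* + o_{d,\P}(1)$.

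Next, we pass from the in-sample bound to the population risk. Conditioning on $\vW_1$ (independent of $\tilde{\vX}$) and applying a uniform deviation argument, via symmetrization and a Dudley-type covering (or equivalently a localized Rademacher complexity) over the norm ball $\{\va : \|\va\|^2 \le Cn^{1-\eps}\}$, gives
\begin{align*}
\big|\|\hat{f} - f^*\|_{L^2}^2 - \tfrac{1}{n}\sum_{i=1}^n(f^*(\tilde{\vx}_i) - \hat{f}(\tilde{\vx}_i))^2\big| \,\lesssim\, \|\hat f - f^*\|_{L^2}\cdot\sqrt{\tfrac{d}{n}} + \tfrac{d}{n}.
\end{align*}
The $d/n$ rate reflects the intrinsic $d$-dimensional nature of the single-index target $f^*$ together with the rank-$d$ structure of the pre-activations $\vW_1^\top\vx$ (which lie in an at most $d$-dimensional subspace of $\R^N$). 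Setting $x := \|\hat f - f^*\|_{L^2}$ and combining with the in-sample bound produces the quadratic inequality $x^2 \le \tau^* + C_1 x\sqrt{d/n} + C_2\, d/n$, whose solution satisfies $x \le \sqrt{\tau^*} + C\sqrt{d/n}$; squaring and expanding produces $\cR_1(\lambda) = x^2 \le \tau^* + C'\sqrt{\tau^*}\sqrt{d/n} + C''\, d/n$, which is absorbed into the stated bound $10\tau^* + C(\sqrt{\tau^*}\sqrt{d/n} + d/n)$ after accounting for universal constants arising from Bernstein and covering estimates. Upgrading from in-probability to almost-sure convergence is immediate via Borel--Cantelli since all tail bounds above are exponential.

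The main obstacle is obtaining the uniform deviation at the correct $d/n$ rate while $\|\hat{\va}_\lambda\|$ may grow as $n^{(1-\eps)/2}$: a naive Rademacher bound of the form $\|\hat{\va}_\lambda\|\sqrt{N/n}$ would be trivial in the proportional regime. Exploiting both the rank-$d$ structure of $\vW_1$ and the boundedness of $\sigma$ is essential to recover the correct dimensional scaling; a localized analysis that uses the curvature of the squared loss may also be required to convert the slow Rademacher rate into the desired fast rate. A secondary difficulty is verifying $\|\tilde{\va}\|^2\cdot\lambda/N = o_{d,\P}(1)$ for the specific construction of $\tilde{\va}$, which requires a careful probabilistic count of ``good'' neurons, namely those whose initial coefficient $a_i$ and weight $\vw_{0,i}$ land in the favorable regime that produces a useful updated weight under the large-$\eta$ gradient step \eqref{eq:lrlarge}.
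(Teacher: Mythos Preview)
Your proposal has a genuine gap at the uniform deviation step. The claim that the function class $\{\vx\mapsto\langle\va,\sigma(\vW_1^\top\vx)\rangle:\|\va\|^2\le Cn^{1-\eps}\}$ has Rademacher complexity of order $\sqrt{d/n}$ because ``$\vW_1^\top\vx$ lies in a $d$-dimensional subspace'' does not go through: the nonlinearity $\sigma$ destroys this low-rank structure, and the post-activation features $\sigma(\vW_1^\top\vx)\in\R^N$ do \emph{not} lie in any $d$-dimensional subspace. The effective input $\vW_1^\top\vx$ is indeed $d$-dimensional, but the class of functions of that input you are considering has $N$ free parameters, so standard complexity measures scale with $N$, not $d$. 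You correctly identify that the naive bound $\|\hat\va_\lambda\|\sqrt{N/n}\asymp n^{(1-\eps)/2}$ is useless, but the proposed remedy (``exploiting rank-$d$ structure and boundedness of $\sigma$'') is precisely the step that fails. A localized/fast-rate argument does not rescue this either, because localization trades the norm-ball radius for a local radius but still measures complexity in $N$, not $d$.

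The paper avoids this obstacle entirely by working directly with population quantities through a classical KRR decomposition. It introduces $\check f=\arg\min_{f\in\cH}\|f-f^*\|_{L^2}^2$ and splits $\cR_1(\lambda)$ into an approximation term $B_1=\|f^*-\check f\|_{L^2}^2$, a KRR bias $B_2$, and two variance terms $V_1,V_2$. The key analytic input is \emph{concentration of the feature covariance}: one shows $\|(\vSigma_\Phi+\lambda\vI)^{-1/2}(\widehat\vSigma_\Phi-\vSigma_\Phi)(\vSigma_\Phi+\lambda\vI)^{-1/2}\|\le t$ with $t\asymp\sqrt{N/n}$, using that $\vphi_{\vx}$ is sub-Gaussian with norm $K\asymp\|\vW_1\|_F/\sqrt{N}=O(1)$. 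This gives $B_2\lesssim(1+\tfrac{t^2}{(1-t)^2})(\|f^*-f_r\|_{L^2}^2+\lambda\|f_r\|_\cH^2)$ and $V_1+V_2\lesssim (n^{1-\eps}\lambda)^{-1}=o(1)$, with no uniform control over $\hat\va_\lambda$ required. The threshold $\psi_1^*$ is exactly what forces $t<0.1$ so that the multiplicative constant in front of $\|f^*-f_r\|_{L^2}^2$ stays below $10$. Crucially, the $d/n$ rate in the final bound is inherited \emph{entirely} from the oracle risk $\|f^*-f_r\|_{L^2}^2\le\tau^*+C(\sqrt{\tau^*}\sqrt{d/n}+d/n)$ of Lemma~\ref{lemm:optimal-estimator}; the KRR analysis itself contributes only constants and $o(1)$ terms. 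Your secondary concern about $\lambda\|\tilde\va\|^2/N=o(1)$ is handled exactly as you guessed, via the explicit subset construction with $\|\tilde\va\|^2=N/|\cA_r^\alpha|=\Theta(N^r)$ for $r\in(0,1/2)$.
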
 

\begin{wrapfigure}{R}{0.355\textwidth}   
\vspace{-5mm}
\centering  
\includegraphics[width=0.345\textwidth]{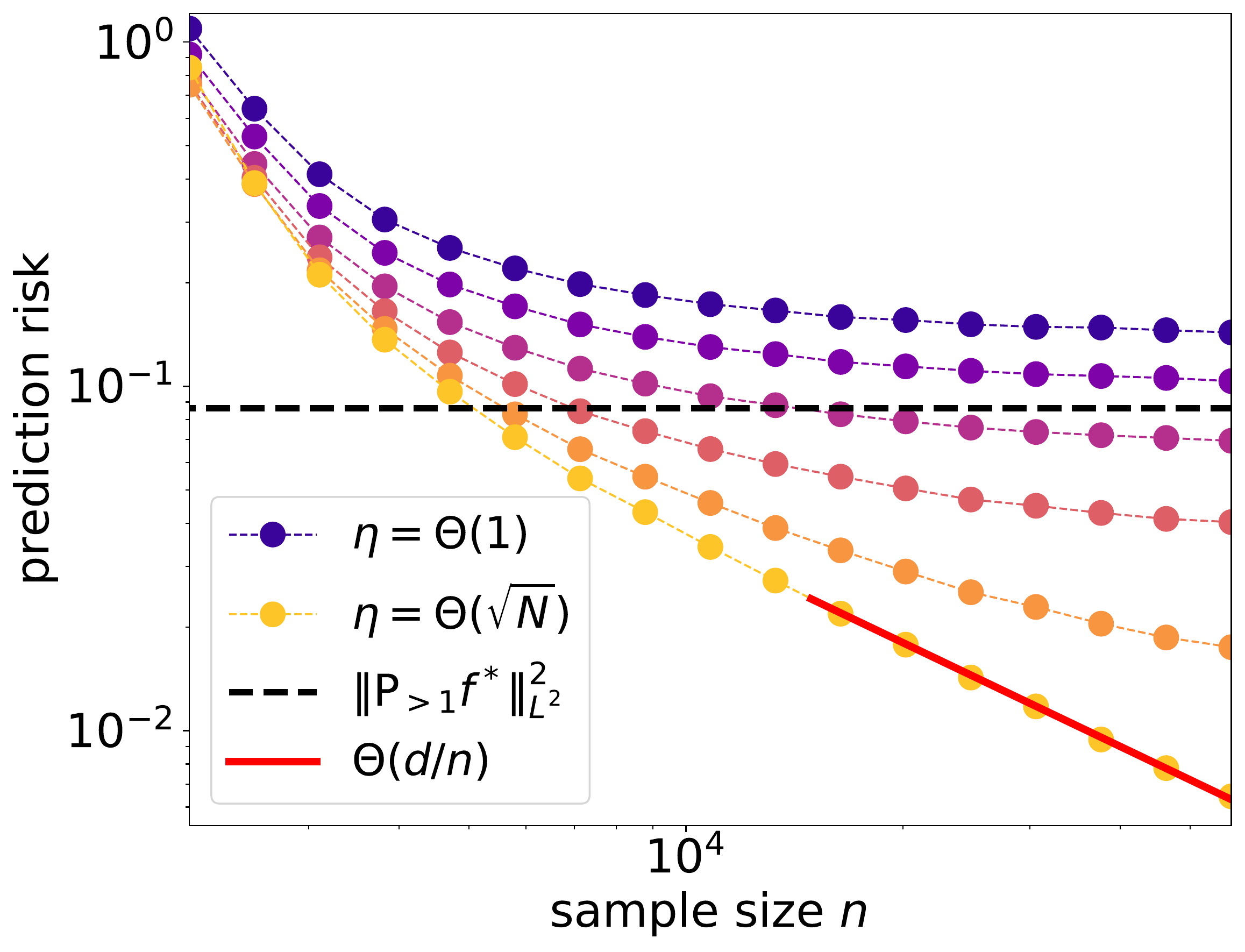}  
\vspace{-2.5mm}  
\caption{\small Prediction risk of ridge regression on CK trained for one step (empirical simulation, $d=1024$): brighter color represents larger step size scaled as $\eta=N^{\alpha}$ for $\alpha\in[0,1/2]$.  We choose $\sigma=\sigma^*=\text{erf}$, $\psi_2=2$, $\lambda=10^{-3}$, and $\sigma_\eps=0.1$. }
\label{fig:large-lr}    
\vspace{-9mm}   
\end{wrapfigure}   
   
While Theorem~\ref{thm:risk-large-lr} does not provide exact expression of the prediction risk,
the upper bound still allows us to compare the prediction risk of CK ridge regression before and after one large gradient step. In particular, if $\norm{\mathsf{P}_{>1} f^*}_{L^2}^2 \ge 10\tau^*$ (the constant $10$ is not optimized), we know that the trained CK can outperform the kernel lower bound \eqref{eq:ridge-lower-bound} (hence also the initialized CK) in the proportional limit, when the ratio $\psi_1=n/d$ is sufficiently large. 
The following corollary provides two examples of this separation (see Figure \ref{fig:large-lr}). 

\begin{coro}
Under the same conditions as Theorem~\ref{thm:risk-large-lr}, there exists some constant $\psi_1^*$ such that for any $\psi_1>\psi_1^*$, the following holds with probability 1 when $n,d,N\to\infty$ proportionally: 
\begin{itemize}[leftmargin=*,topsep=1mm, itemsep=0mm]
    \item For $\sigma=\sigma^*=\mathrm{erf}$,  $\cR_1(\lambda)= \bigO{\nicefrac{d}{n}}$, which vanishes if $\psi_1$ is large. 
    \item For $\sigma=\sigma^*=\mathrm{tanh}$, we have $\cR_1(\lambda)<\norm{\mathsf{P}_{>1} f^*}_{L^2}^2$. 
\end{itemize}
\end{coro}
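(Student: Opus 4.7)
The plan is to reduce each item to the upper bound supplied by Theorem~\ref{thm:risk-large-lr}, so the main work lies in evaluating or bounding the scalar $\tau^*$ of \eqref{eq:tau*} for each prescribed pair $(\sigma,\sigma^*)$, and then verifying that $\psi_1=n/d$ can be chosen large enough to absorb the $\mathcal{O}(\sqrt{\tau^*}\sqrt{d/n}+d/n)$ correction.

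For $\sigma=\sigma^*=\mathrm{erf}$, I would first verify the claim, already flagged in the paper, that $\tau^*=0$ at $\kappa^*=\sqrt{3}$. The cleanest route is the representation $\mathrm{erf}(x)=\E_{\eta\sim\cN(0,1/2)}[\mathrm{sign}(x-\eta)]$: substituting into $\E_{\xi_2}[\mathrm{erf}(\kappa\xi_1+\xi_2)]$ and applying Fubini, the combined variable $\xi_2-\eta\sim\cN(0,3/2)$ can be integrated out, leaving $\mathrm{erf}(\kappa\xi_1/\sqrt{3})$. Choosing $\kappa=\sqrt{3}$ recovers $\sigma^*(\xi_1)$ exactly, so $\tau^*=0$, and Theorem~\ref{thm:risk-large-lr} collapses to $\cR_1(\lambda)\le C\cdot d/n$; sending $\psi_1\to\infty$ makes this vanish.

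For $\sigma=\sigma^*=\tanh$, the goal is to establish the strict inequality $10\tau^*<\norm{\textsf{P}_{>1}f^*}_{L^2}^2$, after which the proof is completed by taking $\psi_1$ large enough to kill the remaining error term. I would first compute $\norm{\textsf{P}_{>1}f^*}_{L^2}^2$: since $\tanh$ is odd and $f^*(\vx)=\tanh(\langle\vx,\vbeta_*\rangle)$ is a single-index teacher, Stein's lemma yields $\mu_0^*=0$ and $\mu_1^*=1-\E[\tanh^2(\xi)]$, so
\begin{align*}
\norm{\textsf{P}_{>1}f^*}_{L^2}^2=\E[\tanh^2(\xi)]-\bigl(1-\E[\tanh^2(\xi)]\bigr)^2,
\end{align*}
a fixed positive constant. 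I would then exhibit some $\kappa$ with $\tau^*(\kappa)$ small. The natural first attempt $\kappa=(1-\E[\tanh^2(\xi)])^{-1}$ matches the linear slopes of $h_\kappa(z):=\E_{\xi_2}\tanh(\kappa z+\xi_2)$ and $\tanh(z)$ at the origin, so $\tau^*(\kappa)$ reduces to the squared mismatch in the higher-order Hermite components; this is a one-dimensional Gaussian integral that I would control either by direct numerical evaluation or by truncating the Hermite expansion of $\tanh$ and bounding the tail using its known decay rate.

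The main obstacle is precisely this quantitative comparison in the tanh case: both $\tau^*$ and $\norm{\textsf{P}_{>1}f^*}_{L^2}^2$ are small absolute constants, and the factor $10$ in Theorem~\ref{thm:risk-large-lr} leaves little slack. The hard step will therefore be producing a tight enough upper bound on $\tau^*$---possibly by optimizing $\kappa$ beyond the first-Hermite-matching choice, or by revisiting the constant $10$ in Theorem~\ref{thm:risk-large-lr} for this specific activation. Either way the remaining estimate is purely deterministic, reducing to a one-dimensional Gaussian integral, with no further probabilistic input beyond Theorem~\ref{thm:risk-large-lr}.
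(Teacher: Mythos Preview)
Your plan is correct and matches the paper's approach: both bullets follow immediately from Theorem~\ref{thm:risk-large-lr} once $\tau^*$ is computed, and the paper does exactly this in the ``Some Examples'' paragraph of Appendix~\ref{app:oracle-estimator}.

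For $\mathrm{erf}$, your sign-function representation is a clean alternative to the paper's direct identity $\E_{z\sim\cN(0,1)}[\mathrm{erf}(c_1 z+c_2)]=\mathrm{erf}\bigl(c_2/\sqrt{1+2c_1^2}\bigr)$; both give $\tau^*=0$ at $\kappa^*=\sqrt{3}$.

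For $\tanh$, your concern about tightness is unwarranted. The paper simply evaluates the one-dimensional integral numerically, finding $\tau^*\approx 3\times 10^{-4}$ at $\kappa^*\approx 1.6$ (which is essentially your first-Hermite-matching value $1/(1-\E[\tanh^2\xi])\approx 1.65$). Your computation of $\norm{\textsf{P}_{>1}f^*}_{L^2}^2=\E[\tanh^2\xi]-(1-\E[\tanh^2\xi])^2\approx 0.027$ is correct, so $10\tau^*\approx 0.003$ sits comfortably below $0.027$; there is no need to sharpen the constant $10$ or do a delicate Hermite-tail analysis. The paper treats this as a numerical check, not an analytic estimate.
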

In the two examples outlined above, training the features by taking one large gradient step on the first-layer parameters can lead to substantial improvement in the performance of the CK model. In fact, the new ridge regression estimator may outperform a wide range of kernel models outlined in Section~\ref{subsec:kernel-lower-bound}. 
However, we emphasize that this separation is only present in specific pairs of $(\sigma,\sigma^*)$ for which $\tau^*$ is small enough. 
In general settings, learning a good representation likely requires more than one gradient step (even if $f^*$ is a simple single-index model).

\section{Discussion and Conclusion}
\label{sec:conclusion}
We investigated how the conjugate kernel of a two-layer neural network \eqref{eq:two-layer-nn} benefits from feature learning in an idealized student-teacher setting, where the first-layer parameters $\vW$ are updated by one gradient descent step on the empirical risk.  
Based on the approximate low-rank property of the gradient matrix, we established a signal-plus-noise decomposition for the updated weight matrix $\vW_1$, and quantified the improvement in the prediction risk of conjugate kernel ridge regression under two different scalings of first-step learning rate $\eta$. 
To the best of our knowledge, this is the first work that rigorously characterizes the precise asymptotics of kernel models (defined by neural networks) in the presence of feature learning. 

We outline a few limitations of our current analysis as well as future directions.  

\begin{itemize}[leftmargin=*,topsep=0.5mm, itemsep=0.36mm]
    \item \textbf{Dependence between $\vW_1$ and $\vX$.} One of our crucial assumptions is that the trained weight matrix $\vW_1$ is \textit{independent} of the data $\tilde{\vX}$ on which the CK is computed. 
    While this does not cover the important scenario where feature learning and kernel evaluation are performed on the same data, our setting is very natural in the analysis of pretrained models or transfer learning, which would be an interesting extension. 
    \item \textbf{Scaling of Learning Rate.} Our findings in Section~\ref{sec:prediction-risk} illustrate that $\eta\!=\!\Theta(1)$ and $\eta\!=\!\Theta(\sqrt{N})$ result in drastically different behavior. One natural question to ask is whether there exists a ``phase transition'' in between the two regimes (see Figure~\ref{fig:large-lr}) that dictates whether the GET holds. Interestingly, \citet{refinetti2021classifying} showed that instead of breaking the near-orthogonality of weight matrix $\vW$ (via large gradient step), one can also introduce sufficiently large low-rank shifts to the input $\vX$ to enable the initial RF estimator to fit a nonlinear $f^*$. Intuitively, this may be due to the ``dual'' relation of $\vX$ and $\vW$ in the CK model. 
    \item \textbf{Rigorous Analysis of CK Spike.} In Section~\ref{subsec:alignment} we put forward a Gaussian equivalence hypothesis on the isolated eigenvalue/eigenvector of the trained CK matrix (see Figure~\ref{fig:CK-spike}); understanding whether and when such property holds is an important research direction.  
\end{itemize}

\bigskip

\subsection*{Acknowledgement}
 
The authors would like to thank (in alphabetical order) Konstantin Donhauser, Zhou Fan, Hong Hu, Masaaki Imaizumi, Ryo Karakida, Bruno Loureiro, Yue M. Lu, Atsushi Nitanda, Sejun Park, Ji Xu, Yiqiao Zhong for discussions and feedback on the manuscript.  

JB was supported by NSERC Grant [2020-06904], CIFAR AI Chairs program, Google Research Scholar Program and Amazon Research Award. 
MAE was supported by NSERC Grant [2019-06167], Connaught New Researcher Award, CIFAR AI Chairs program, and CIFAR AI Catalyst grant. TS was partially supported by JSPS KAKENHI (20H00576) and JST CREST. ZW was supported by NSF Grant DMS-2055340. Part of this work was completed when DW interned at Microsoft Research (hosted by GY). 

\bigskip

{

\fontsize{10}{11}\selectfont     

\bibliography{citation}
\bibliographystyle{alpha}

}

\newpage
{
\renewcommand{\contentsname}{Table of Contents}
\tableofcontents
}
 
\newpage
\appendix

\allowdisplaybreaks

\section{Background and Additional Results} 
 
\subsection{Additional Experiments}
\label{app:experiment}

\begin{figure}[htb!]  
\vspace{-1mm}  
\centering
\begin{minipage}[t]{0.388\linewidth}
\centering 
{\includegraphics[width=1\textwidth]{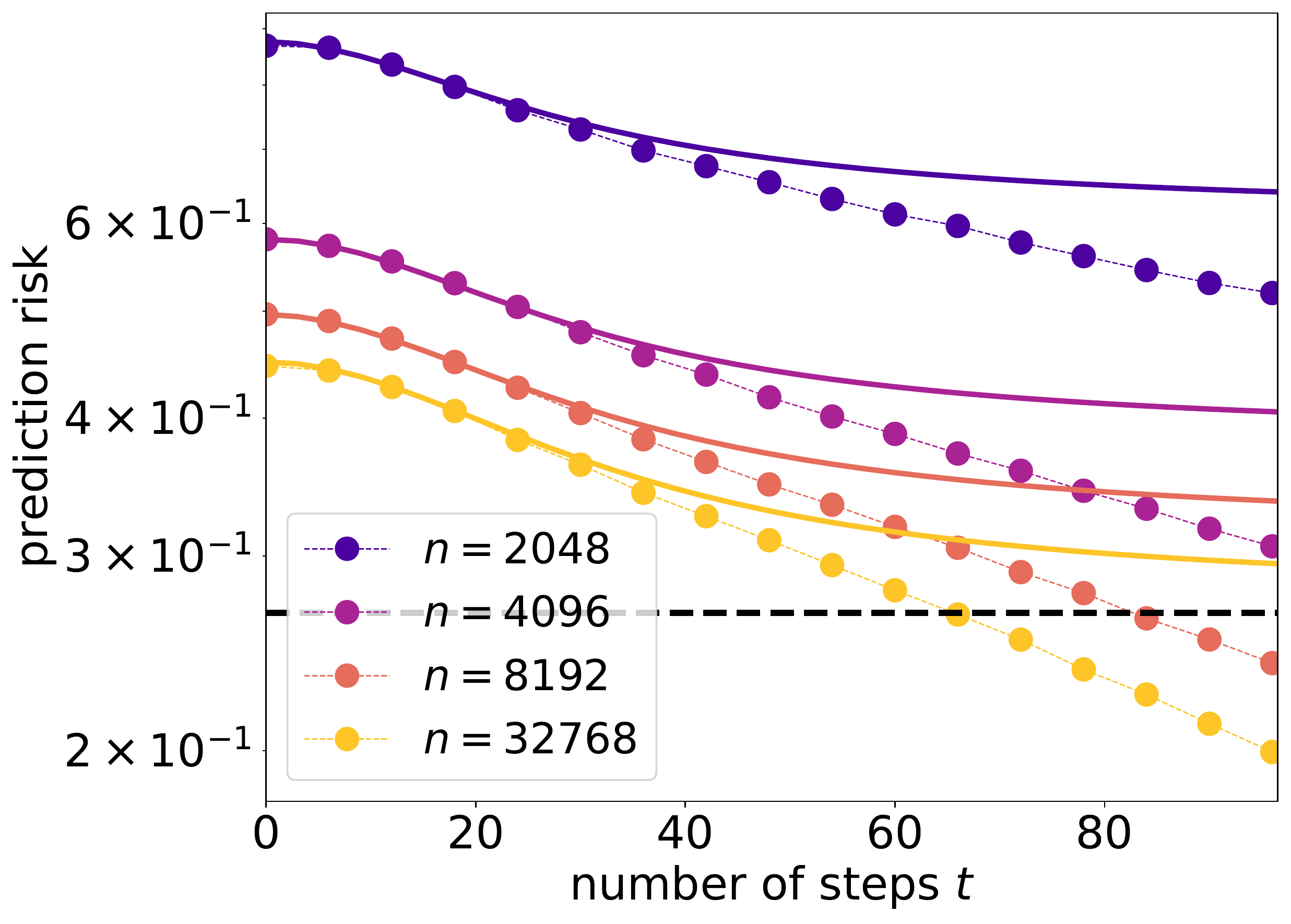}}  \\
\small (a) Failure of GET prediction. 
\end{minipage} 
\begin{minipage}[t]{0.4\linewidth}
\centering 
{\includegraphics[width=1\textwidth]{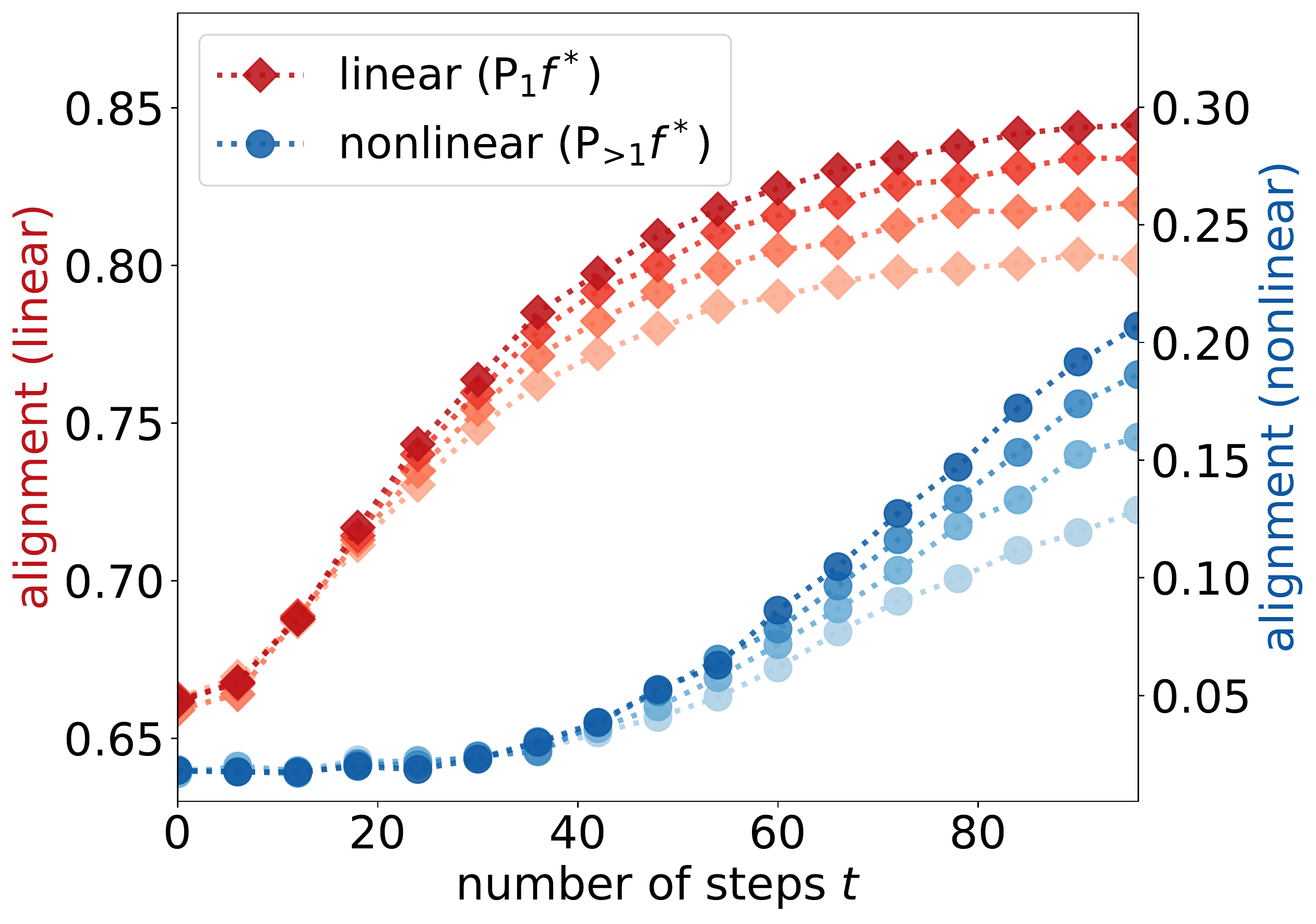}}  \\
\small (b) Alignment with teacher $f^*$.  
\end{minipage}
\caption{\small 
We choose $\sigma=\sigma^*=\text{ReLU}$, and set $\psi_2=2$, $d=512$, $\lambda=10^{-4}$, and $\eta = 0.1$. 
(a) Prediction risk of ridge regression on trained CK features: dots represent empirical simulations (averaged over 50 runs) and solid curves are asymptotic values predicted by the GET; dashed black line corresponds to the kernel ridge lower bound in \eqref{eq:ridge-lower-bound}. 
(b) Alignment between the student $f_\lambda^t$ and the linear (red) and nonlinear (blue) components of the teacher model \eqref{eq:alignment-ft}. Darker colors correspond to larger sample size in $n=\{2^{11}, 2^{12}, 2^{13}, 2^{15}\}$.  }    
\label{fig:GET-failure}  
\end{figure} 

\paragraph{Failure Cases of GET.} 
It is worth noting that Theorem~\ref{thm:GET} does not apply to the setting where $t$ scales with $n,d,N$. 
Because of our mean-field parameterization, the first-layer weight $\vW$ needs to travel sufficiently far away from initialization to achieve small training loss (see Figure~\ref{fig:feature-learning}). Hence in our experimental simulations (where $n,d,N$ are large but finite), as the number of steps $t$ or learning rate $\eta$ increases, we expect the Gaussian equivalence predictions to become inaccurate at some point. This transition is empirically demonstrated in Figure~\ref{fig:GET-failure}(a). Observe that for larger $t$, the GET predictions \textit{overestimate} the test loss; one possible explanation is that the trained kernel can learn nonlinear functions (which we show in Section~\ref{subsec:large-lr} for one gradient step with $\eta=\Theta(\sqrt{N})$ and specific choices of $f^*$), which the GET cannot capture.  

We provide additional empirical evidence on this explanation in Figure~\ref{fig:GET-failure}(b). 
To track the learning of the linear and nonlinear components of $f^*$, we recall the orthogonal decomposition:
\begin{align}
    f^*(\vx) = \underbrace{\mu_0^* + \mu_1^*\langle\vx,\vbeta_*\rangle}_{f^*_{\mathrm{L}}(\vx)} + \underbrace{\textsf{P}_{>1} f^*(\vx)}_{f^*_{\mathrm{NL}}(\vx)}.
\end{align}
Denote the CK ridge regression estimator on the feature map after $t$ gradient steps $\vx\to\sigma(\vW_t^\top\vx)$ as $f_\lambda^t$. We estimate the following alignment quantities (we normalize $f^*_{\mathrm{L}}$ and $f^*_{\mathrm{NL}}$ to have unit $L^2$-norm):
\begin{align}
    \text{\textit{Linear} component:~} \left\langle f^*_{\mathrm{L}}, f_\lambda^t\right\rangle_{L^2(\R^d,\Gamma)}. \quad~
    \text{\textit{Nonlinear} component:~} \left\langle f^*_{\mathrm{NL}}, f_\lambda^t\right\rangle_{L^2(\R^d,\Gamma)}. \quad 
    \label{eq:alignment-ft}
\end{align}
In Figure~\ref{fig:GET-failure}(b), we observe that the student model $f_\lambda^t$ first aligns with the linear component of the teacher model $f^*_{\mathrm{L}}$; on the other hand, when the student model begins to learn the nonlinear component $f^*_{\mathrm{NL}}$ (at $\sim$30 gradient steps), the Gaussian equivalent predictions (Figure~\ref{fig:GET-failure}(a)) overestimate the prediction risk. 

\paragraph{Singular Vector Alignment (Theorem~\ref{thm:alignment}).} In Figure~\ref{fig:appendix}(a), we compute the overlap between the leading eigenvector of $\vW_1$ and the linear component of the teacher model $\vbeta_*$. Observe that the empirical simulations (dots) closely match the analytic predictions of Theorem \ref{thm:alignment} (solid curves). Also, note that increasing the learning rate $\eta$ or the sample size $\psi_1=n/d$ both lead to greater alignment with the teacher model. 

\paragraph{Large Learning Rate (SoftPlus).}
In Figure~\ref{fig:appendix}(b) we repeat the large learning rate experiment in Section~\ref{subsec:large-lr} for a different nonlinearity $\sigma=\sigma^*=\text{SoftPlus}$, for which $\tau^*\approx 0.03>0$, and hence the upper bound in Theorem~\ref{thm:risk-large-lr} is \textit{non-vanishing}. In this case, we observe that the prediction risk of the CK ridge regression model (after one feature learning step) is also non-vanishing even when the step size is large; this indicates that although we do not provide precise asymptotic characterization in Theorem~\ref{thm:risk-large-lr}, the upper-bounding quantity $\tau^*$ in \eqref{eq:tau*} has predictive power on the actual prediction risk. 

\paragraph{Kernel Target Alignment.} 
In Section~\ref{subsec:alignment}, we observed that the trained CK aligns with training labels. Here we provide additional empirical evidence by tracking the \textit{Kernel Target Alignment} (KTA) \citep{cristianini2001kernel} between the CK and training labels during training. Specifically, we compute the following quantity at each gradient step $t$, which takes value between 0 and 1,
\begin{align}
    \mathrm{KTA} = \frac{\langle\mathbf{CK}_t,\vy\vy^\top\rangle}{\norm{\mathbf{CK}_t}_F\norm{\vy}^2}, 
    \label{eq:KTA}
\end{align}  
where $\mathbf{CK}_t$ denotes the CK matrix defined by $\vW_t$. 
Figure~\ref{fig:appendix}(c) shows the KTA for two-layer NN under our mean-field parameterization and also the NTK parameterization (which omits the $\frac{1}{\sqrt{N}}$-prefactor in \eqref{eq:two-layer-nn}). We optimize the first-layer weights $\vW$ until the training loss reaches $10^{-2}$ for both settings, and compute the KTA on the training and test data at gradient step. Observe that the trained CK in the mean-field model aligns with both the training and test labels (purple), whereas the NN in the kernel regime does not exhibit such alignment (orange). 

\begin{figure}[htb!]  
\centering
\begin{minipage}[t]{0.325\linewidth}
\centering 
{\includegraphics[width=0.99\textwidth]{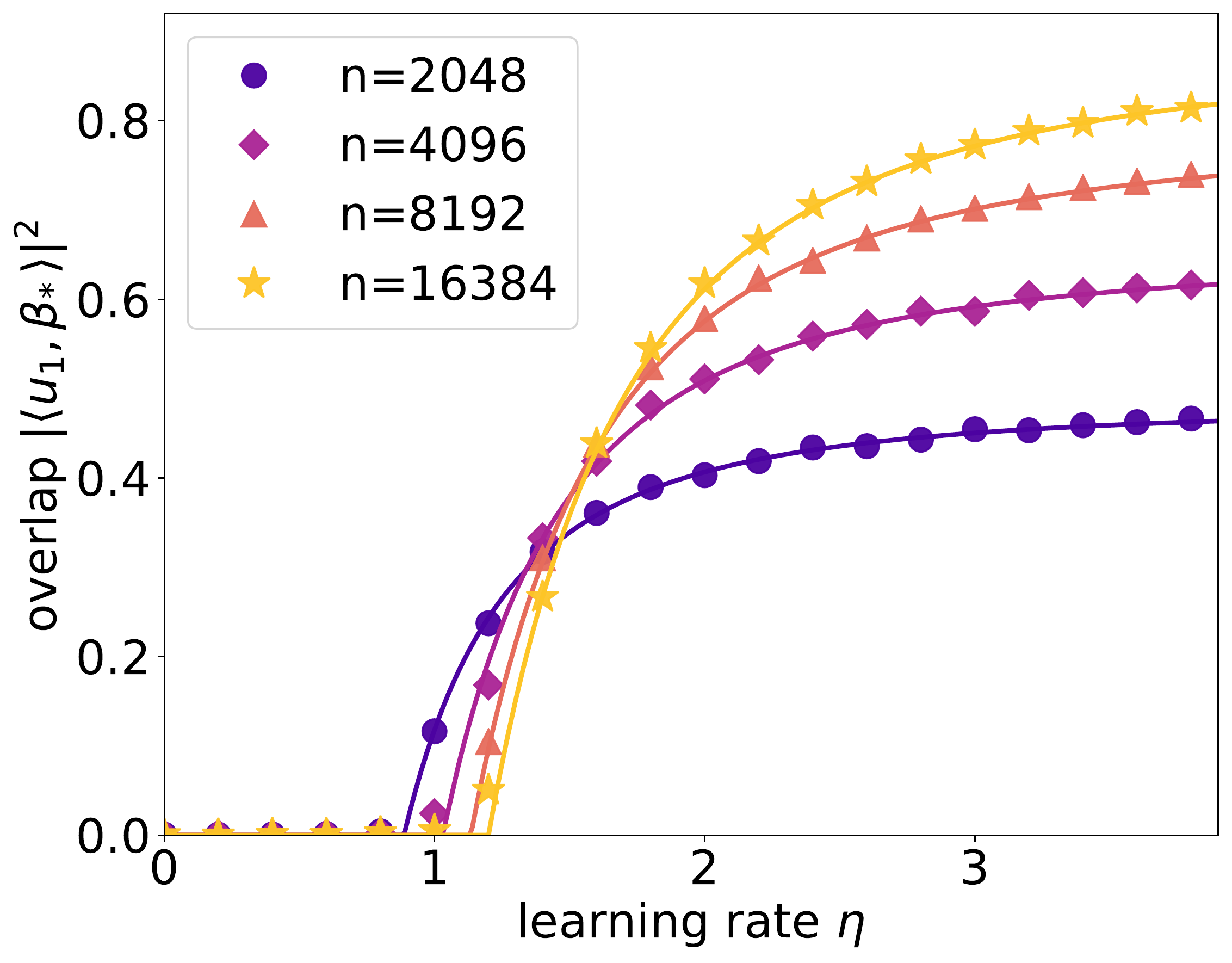}}  \\
\small (a) Alignment of singular vector. 
\end{minipage}
\begin{minipage}[t]{0.33\linewidth}
\centering 
{\includegraphics[width=1\textwidth]{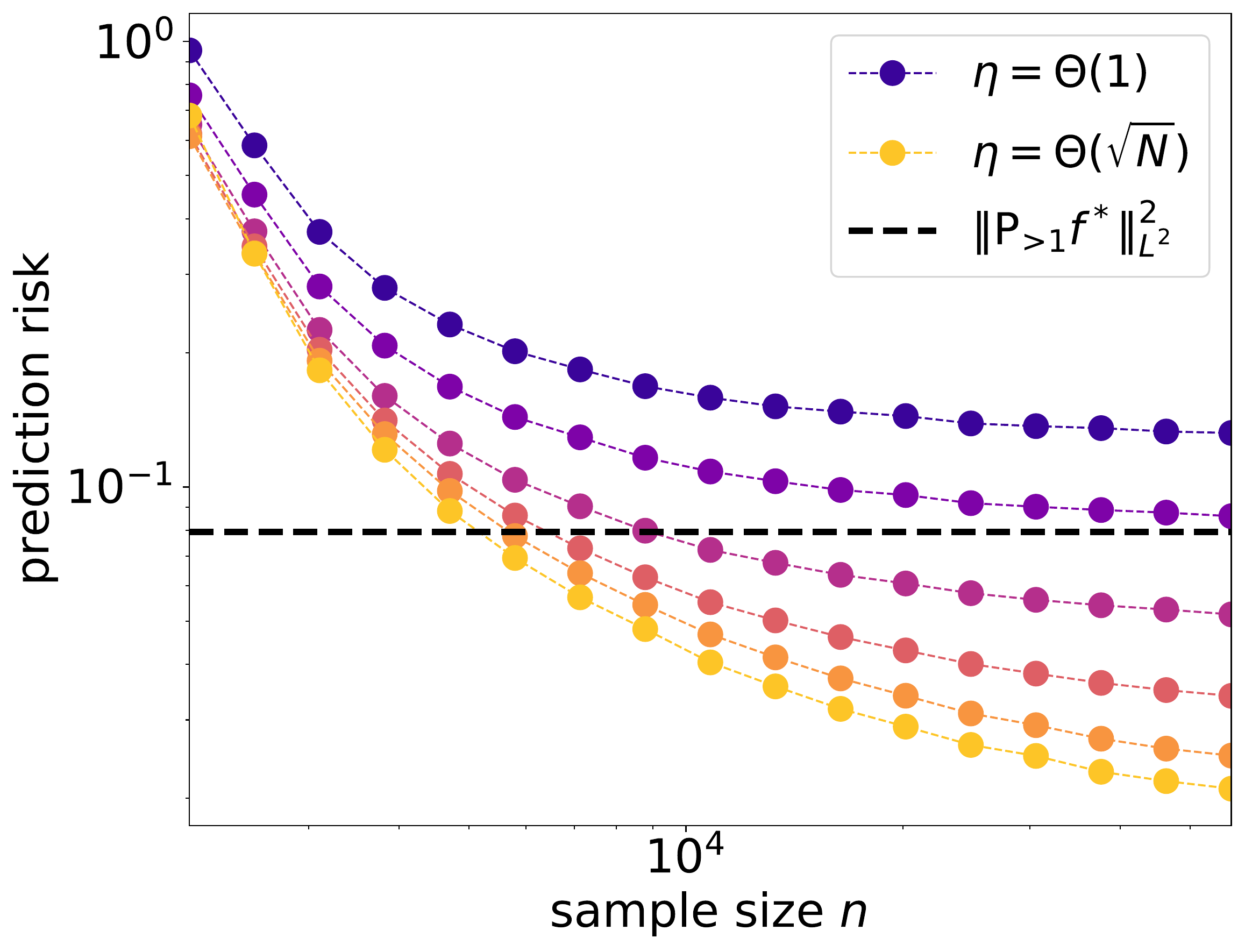}}  \\
\small (b) Prediction risk ($\eta=\Theta(N^{\alpha})$). 
\end{minipage} 
\begin{minipage}[t]{0.33\linewidth} 
\centering 
{\includegraphics[width=1\textwidth]{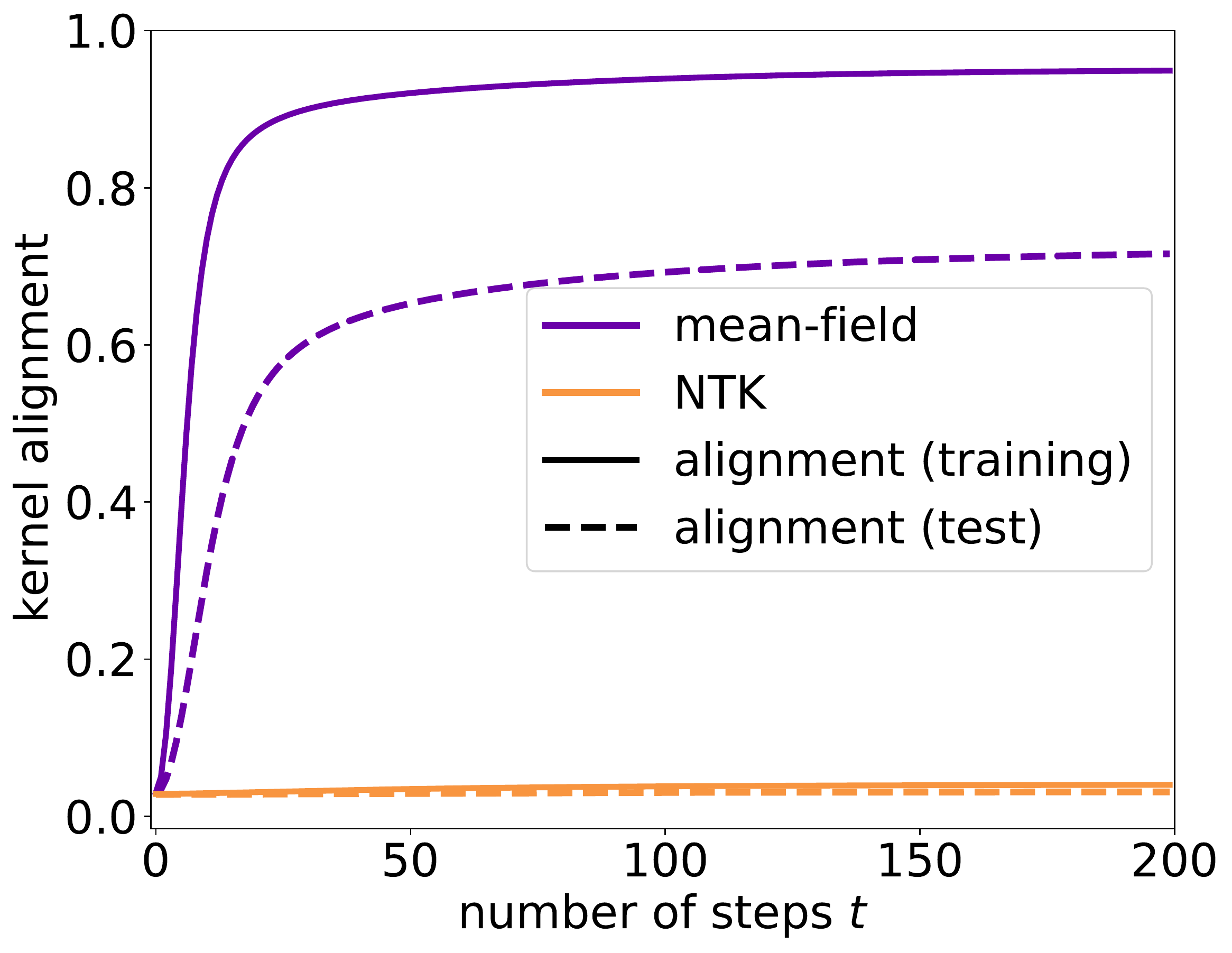}}  \\
\small (c) Kernel-target alignment. 
\end{minipage} 
\caption{\small 
(a) Alignment between leading singular vector of $\vW_1$ and the teacher model:  $\abs{\vu_1^\top\vbeta_*}^2$ vs.~learning rate $\eta=\Theta(1)$. 
Dots represent empirical simulations and solid lines are asymptotic values from Theorem~\ref{thm:alignment}. 
We set $\sigma=\sigma'=\text{tanh}$, $\psi_2=2$, $d=2048$. 
(b) Prediction risks of ridge regression on the trained CK after one gradient step (empirical simulation, $d=1024$): brighter color represents larger step size scaled as $\eta=N^{\alpha}$ for $\alpha\in[0,1/2]$. We choose $\sigma=\sigma^*=\text{SoftPlus}$, $\psi_2=2$, $\lambda=10^{-3}$, and $\sigma_\eps=0.1$.
(c) KTA \eqref{eq:KTA} between the trained CK and the labels (training and test) vs.~the number of gradient steps on weight $\vW$. We set $\psi_1=\psi_2=2$, $\sigma=\text{ReLU}$, $\sigma^*=\text{tanh}$. }    
\label{fig:appendix}  
\end{figure} 

\subsection{Additional Related Works}

\paragraph{The Kernel Regime and Beyond.}
  
The neural tangent kernel (NTK) \cite{jacot2018neural} describes the learning dynamics of wide neural network under specific parameter scaling. 
Such description is based on \textit{linearizing} the NN around its initialization, and the limiting kernel can be computed for various architectures \citep{arora2019exact,yang2020tensor}. Thanks to strong convexity of the kernel objective, global convergence rate guarantees of gradient descent can be established \cite{du2018gradient,ji2019polylogarithmic}. 
As mentioned in Section~\ref{subsec:related}, this first-order Taylor expansion fails to explain the \textit{adaptivity} of NNs; therefore, recent works also analyzed higher-order approximations of the training dynamics \citep{dyer2019asymptotics,huang2020dynamics}. Noticeably, a quadratic model (i.e., second-order approximation) can outperform kernel (NTK) estimators in certain settings \citep{allen2018learning,bai2019beyond}. 

In contrast to the aforementioned local approximations (via Taylor expansion and truncation), the \textit{mean-field} regime (e.g., \citep{nitanda2017stochastic,mei2018mean,chizat2018global}) deals with a different scaling limit under which the evolution of parameters can be described by some partial differential equation (for comparison between regimes see \citet{woodworth2020kernel,geiger2020disentangling}). While the mean-field limit can capture the presence of feature learning \citep{chizat2020implicit,nguyen2021analysis}, \textit{quantitative} guarantees often require additional conditions such as KL regularization \citep{nitanda2022convex,chizat2022mean}.  
Note that our parameterization \eqref{eq:two-layer-nn} mirrors the mean-field scaling, but we circumvent the difficulty of analyzing the nonlinear PDE because only the ``early phase'' (one gradient step) is considered.     

Finally, we highlight two concurrent papers that studied the mean-field dynamics of two-layer NNs (under one-pass SGD) in the high-dimensional asymptotic regime, and showed learnability results for certain target functions. \citet{abbe2022merged} established a separation between NNs and kernel methods in learning ``staircase-like'' functions on hypercube; \citet{veiga2022phase} analyzed how the model width and step size impact the learning of a well-specified two-layer NN teacher model.  

\paragraph{Spectrum of Kernel Random Matrices.}

Kernel matrices in the proportional regime was first analyzed by \cite{el2010spectrum} through Taylor expansion, and later their limiting spectra were fully described by \cite{cheng2013spectrum,do2013spectrum,fan2019spectral}. 
As an extension of kernel random matrices, the CK matrix has also been studied in \cite{pennington2017nonlinear,peche2019note,benigni2019eigenvalue,benigni2022largest} and \cite{louart2018random,fan2020spectra,wang2021deformed}, using the moment method and the Stieltjes transform method, respectively. 
In addition, the spectrum and concentration behavior of the NTK matrix were elaborated in \cite{montanari2020interpolation,fan2020spectra,wang2021deformed}. 
We remark that based on these prior results on the NTK of two-layer NNs, one can check our large learning rate $\eta=\Theta(\sqrt{N})$ satisfies $\sqrt{N}\eta\cdot\lambda_{\max}(\vF)=\bigThetdp{1}$, where $\vF$ is the \textit{Fisher information matrix}; heuristically speaking, this means that the chosen step size is not unreasonably large (under first-order approximation of the landscape).

\subsection{Linearity of Kernel Ridge Regression}
\label{app:kernel-lower-bound}

As previously mentioned, our kernel ridge regression lower bound (Proposition~\ref{prop:ridge-lower-bound}) is a simple combination of existing results, which we briefly outline below.  

\paragraph{Linear Regression on Input.}  
We first discuss the prediction risk of the ridge regression estimator on the input features. 
Recall that under Assumptions \ref{assump:1} and \ref{assump:2}, we may write: $f^*(\vx) = \mu_1^*\langle\vx,\vbeta_*\rangle + \textsf{P}_{>1} f^*(\vx)$. 
Given the ridge regression estimator on the input features: $\hat{\vtheta}_{\text{Lin}} \triangleq (\vX^\top\vX + \lambda n\vI_d)^{-1}\vX^\top\vy$, we have the following bias-variance decomposition, 
\begin{align}
    \cR_{\mathrm{Lin}}(\lambda)
=& 
    \underbrace{\E_{\vx}\left(f^*(\vx) - \vx^\top(\vX^\top\vX + \lambda n\vI_d)^{-1}\vX^\top f^*(\vX)\right)^2}_{\text{Bias}}
    + 
    \underbrace{\sigma_\varepsilon^2 \Trarg{(\vX^\top\vX + \lambda n\vI_d)^{-2}\vX^\top\vX}}_{\text{Variance}} + \littleodp{1}.
\end{align}
Following a similar computation as \cite[Theorem 4.13]{bartlett2021deep} and using the asymptotic formulae in \cite{dobriban2018high,wu2020optimal}, we can derive the following expression,
\begin{align}
    \cR_{\mathrm{Lin}}(\lambda)  
\overset{\P}{\to}
    \frac{\bar{m}'(-\lambda)}{\bar{m}^2(-\lambda)}\cdot\frac{\mu_1^{*2}}{(1+\bar{m}(-\lambda))^2} +  (\sigma_\varepsilon^2 + \mu_2^{*2})\cdot \left(\frac{\bar{m}'(-\lambda)}{\bar{m}^2(-\lambda)} - 1\right) +  \mu_2^{*2},  
    \label{eq:risk-linear}
\end{align} 
where $\bar{m}(-\lambda)>0$ is the Stieltjes transform of the limiting eigenvalue distribution of $\frac{1}{n}\vX\vX^\top$.  
Observe that $\cR_{\mathrm{Lin}}(\lambda)\ge\mu_2^{*2}$. 
In addition, as shown in \cite{dobriban2018high,wu2020optimal}, the optimal ridge regularization and the corresponding prediction risk can be written as  
\begin{align}
    \lambda_{\text{opt}} = \frac{\sigma_\eps^2 + \mu_2^{*2}}{\psi_1\mu_1^{*2}}, \quad
    \cR_{\mathrm{Lin}}(\lambda_{\mathrm{opt}})   \overset{\P}{\to} \frac{\sigma_\eps^2 + \mu_2^{*2}}{\lambda_{\text{opt}} \bar{m}(-\lambda_{\text{opt}})} - \sigma_\eps^2.  
    \label{eq:optimal-risk-linear}
\end{align}

\paragraph{Lower Bound for RF/Kernel Ridge Regression.} 
First note that for RF models \eqref{eq:RF-estimator}, the lower bound $\mu_2^{*2}$ is directly implied by the GET \citep{hu2020universality} under Assumptions~\ref{assump:1} and \ref{assump:2} (see Fact~\ref{fact:GET-lower-bound}). 
For inner-product kernels\footnote{Similar result can also be shown for Euclidean distance kernels following the analysis in \cite[Theorem 2.2]{el2010spectrum}.} in \eqref{eq:kernel-estimator}, if $g:\R\to\R$ is a smooth function in a neighborhood of $0$, then the same lower bound can be obtained from \cite[Theorem 4.13]{bartlett2021deep} (observe that the bias term is lower bounded by $\norm{\mathsf{P}_{>1}f^*}_{L^2}^2$). Finally, for the (first-layer) NTK, the kernel ridge regression estimator is given as
\begin{align}
   \hat{f}_{\text{NTK}}(\vx) = & \vg^\top(\vK+\lambda\vI)^{-1}\vy, \\
    \mathrm{where~} \vg_i =& \frac{1}{Nd}\sum_{k=1}^N\langle\vx,\vx_i\rangle\sigma'(\langle\vx,\vw_k\rangle)\sigma'(\langle\vx_i,\vw_k\rangle),\\ \mathrm{and~} \vK_{ij} = & \frac{1}{Nd}\sum_{k=1}^N\langle\vx_i,\vx_j\rangle\sigma'(\langle\vx_i,\vw_k\rangle)\sigma'(\langle\vx_j,\vw_k\rangle).  
\end{align}
Define the orthogonal decomposition $\sigma'(z) = b_0 + \sigma'_\perp(z)$, where $b_0 = \mu_1 = \E[\sigma'(z)]$, $b_1^2=\E[\sigma'(z)^2] - b_0^2$, for $z\sim\cN(0,1)$.   
Similar to \cite{adlam2020neural,montanari2020interpolation}, we make the following ``lineaized'' substitutions: 
\begin{align}
    \vg \approx \bar{\vg} \triangleq \frac{1}{d}\cdot b_0^2\vX\vx, \quad 
    \vK \approx \bar{\vK} \triangleq \frac{1}{d}\cdot b_0^2 \vX\vX^\top + b_1^2 \vI.  
\end{align} 
The error of this linear approximation has been studied in \cite[Lemma B.8]{montanari2020interpolation} and \cite[Theorem 2.7]{wang2021deformed}, which, together with \cite[Theorem 4.13]{bartlett2021deep},  entail the following equivalence under Assumption \ref{assump:1}, 
\begin{align}
    \cR_{\mathrm{NTK}}(\lambda) = \cR_{\mathrm{Lin}}\left(\frac{\lambda + b_1^2}{b_0^2\psi_1}\right) + \littleodp{1},  
\end{align} 
where $\cR_{\mathrm{Lin}}$ is the prediction risk of the ridge regression estimator on the input features defined in \eqref{eq:risk-linear}. 
Hence, the linear lower bound \eqref{eq:ridge-lower-bound} directly applies; in fact, the prediction risk is lower-bounded by the optimal ridge regression estimator on the input \eqref{eq:optimal-risk-linear}.

\paragraph{Kernel Lower Bound under Polynomial Scaling.} 
For high-dimensional input $\vx$ uniform on sphere or hypercube, \citet{ghorbani2019linearized,mei2021generalization} showed that RF and kernel ridge estimators can learn at most a degree-$k$ polynomial when $n=\bigO{d^{k+1-\varepsilon}}$; for the proportional scaling, this implies our lower bound $\norm{\mathsf{P}_{>1} f^*}_{L^2}^2$ (but under different input assumptions). \citet{donhauser2021rotational} provided a similar result for more general data distributions and a class of rotation invariant kernels based on power series expansion, but the dependence on $k$ is not sharp enough to cover the linear lower bound in Proposition~\ref{prop:ridge-lower-bound}.    

\bigskip 
\section{Proof for the Weight Matrix}
 
\subsection{Norm Control of Gradient Matrix}
\label{app:gradient-norm}

In this section we establish a few important properties of the gradient matrix defined in \eqref{eq:gradient-step-MSE}. For simplicity, we derive the results for the squared loss, but one may check that the same characterization holds for any differentiable loss function $\ell$ with Lipschitz derivative (w.r.t.~both arguments), such as the logistic loss, for which the gradient update on the first layer at step $t$ is given by 
\begin{align}
\label{eq:gradient-step}
\vW_{t+1} - \vW_{t} 
&=
    -\eta \sqrt{N}\cdot\sum_{i=1}^n \frac{1}{n}\partial_2\ell\Big(y_i, \sigma(\vx^\top\vW_t)\frac{\va}{\sqrt{N}}\Big)\cdot\vx_i\Big(\sigma'(\vx_i^\top\vW_t)\odot\frac{\va^\top}{\sqrt{N}}\Big),   
\end{align} 
where $\partial_2$ refers to the partial derivative w.r.t.~the second argument in $\ell$. In the following, for any $t\in\N$ and $i\in[N]$, we will always use $\vw_i^t$ with both subscript and superscript to indicate the $i$-th column of the weight matrix $\vW_t$ at time step $t$.

For our later analysis, a key quantity to control is the entry-wise $2$-$\infty$ matrix norm defined as $$\|\vM\|_{2,\infty}:=\max_{1\le i\le N}\|\vm_i\|,$$for any matrix $\vM\in\R^{d\times N}$ with the $i$-th column $\vm_i\in\R^d$ and $1\le i\le N$. It is straightforward to verify that 
\begin{equation}\label{eq:2_infty_norm}
    \|\vM\|_{2,\infty}\le \|\vM\|\le \|\vM\|_F\le \sqrt{N} \|\vM\|_{2,\infty}.
\end{equation}
In addition, for the Hadamard product with rank-1 matrix, we have the following property. 
\begin{fact}\label{fact:Hadamard1}
	For $\vm\in\R^m, \vn\in\R^n, \vM\in\R^{m\times n}$, we can write
	$\vm\vn^\top\odot\vM=\diag (\vm)\vM\diag (\vn)$, and
	\[\norm{\vm\vn^\top\odot\vM}\le \norm{\diag (\vm)}\cdot\|\vM\|\cdot\norm{\diag(\vn)}= \norm{\vm}_{\infty}\|\vM\|\norm{\vn}_{\infty}.\] 
\end{fact}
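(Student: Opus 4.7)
The plan is to verify the stated identity entry-wise and then invoke submultiplicativity of the operator norm together with the explicit formula for the operator norm of a diagonal matrix. Both steps are elementary linear algebra; this fact is essentially a convenient packaging that will be used later to bound Hadamard products appearing in the gradient matrix \eqref{eq:gradient-step-MSE}.

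For the matrix identity, I would compare the two sides entry-by-entry. By definition of the Hadamard product,
\[
(\vm\vn^\top \odot \vM)_{ij} \;=\; (\vm\vn^\top)_{ij}\, M_{ij} \;=\; m_i\, n_j\, M_{ij}.
\]
On the other hand, left multiplication by $\diag(\vm)$ scales the $i$-th row of the argument by $m_i$, and right multiplication by $\diag(\vn)$ scales the $j$-th column by $n_j$, so
\[
(\diag(\vm)\,\vM\,\diag(\vn))_{ij} \;=\; m_i\, M_{ij}\, n_j.
\]
These coincide, establishing $\vm\vn^\top \odot \vM = \diag(\vm)\,\vM\,\diag(\vn)$.

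For the norm bound, I would apply submultiplicativity of the operator norm to this factorization,
\[
\|\diag(\vm)\,\vM\,\diag(\vn)\| \;\le\; \|\diag(\vm)\|\cdot\|\vM\|\cdot\|\diag(\vn)\|,
\]
and then use the standard fact that the $\ell_2\to\ell_2$ operator norm of a diagonal matrix equals the maximum absolute value of its diagonal entries, i.e.~$\|\diag(\vm)\| = \|\vm\|_\infty$ and similarly for $\vn$. Combining these two observations yields the claimed inequality and equality simultaneously.

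There is no real obstacle in this proof: both ingredients are textbook. The only thing to be careful about is being explicit that $\|\cdot\|$ here denotes the $\ell_2 \to \ell_2$ operator norm, matching the notational convention set out in Section~\ref{sec:setup}, so that the identification $\|\diag(\vm)\| = \|\vm\|_\infty$ is the correct one to use.
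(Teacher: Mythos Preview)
Your proof is correct and is exactly the standard elementary argument; the paper itself states this as a \emph{Fact} without proof, so there is nothing to compare against. Your entry-wise verification plus submultiplicativity and $\|\diag(\vm)\|=\|\vm\|_\infty$ is precisely what one would supply if asked to justify it.
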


\subsubsection{Norm Bounds for the First Gradient Step} 
We begin with the first gradient step. Recall the definition of the gradient matrix under the squared loss (we omit the learning rate $\eta$ and prefactor $\sqrt{N}$): 
\begin{align} 
    \vG_0 
&=  
    -\frac{1}{n}\vX^\top \left[\left(\frac{1}{\sqrt{N}}\left(\frac{1}{\sqrt{N}}\sigma(\vX\vW_0)\va - \vy\right)\va^\top\right) \odot \sigma'(\vX\vW_0)\right] \label{eq:decomposition_gradient}\\
&= 
    \underbrace{\frac{1}{n}\cdot\frac{\mu_1}{\sqrt{N}}\vX^\top\vy\va^\top}_{\vA} + 
    \underbrace{\frac{1}{n}\cdot\frac{1}{\sqrt{N}}\vX^\top\left(\vy\va^\top\odot\sigma'_\perp(\vX\vW_0)\right)}_{\vB} 
    - \underbrace{\frac{1}{n}\cdot\frac{1}{N}\vX^\top\left(\sigma(\vX\vW_0)\va\va^\top\odot\sigma'(\vX\vW_0)\right)}_{\vC},
\end{align} 
where we utilized the orthogonal decomposition: $\sigma'(z) = \mu_1 + \sigma'_\perp(z)$. Due to Stein's lemma, we know that $\E[z\sigma(z)] = \E[\sigma'(z)] = \mu_1$, and hence $\E[\sigma'_\perp(z)]=0$ for $z\sim\cN(0,1)$. 
The following lemma provides norm control for the above decomposition. 
\begin{lemm}
Assume that $f^*\in L^2(\R^d,\Gamma)$, and both $f^*$ and $\sigma$ are Lipschitz functions. Then 
\begin{itemize}[topsep=0mm, itemsep=0.1mm]
    \item[(i)] $\E\|\vA\|_{2,\infty}\le \E\|\vA\|\le \E\|\vA\|_F\le C\sqrt{\frac{d}{nN}+\frac{1}{N}}$, 
    \item[(iii)]  $\E\norm{\vC}\le\E\norm{\vC}_F \le \frac{C}{N}\sqrt{1+\frac{d}{n}}$.
\end{itemize}
Furthermore, we have the following probability bounds. 
\begin{itemize}[topsep=0mm, itemsep=0.1mm]
    \item[(i)] 
    $\Parg{\|\vA\|_F\ge C\left(\sqrt{\frac{d}{nN}}+\sqrt{\frac{1}{N}}\right)}\le C'\left(e^{-cn}+e^{-cN}\right),$ \\
    $\Parg{\|\vA\|_F\le C\sqrt{\frac{d}{nN}}}\le C' \left(e^{-c\min\left\{\frac{nd^2}{(n^2+d^2)},\frac{nd}{n+d}\right\}}+e^{-cN}+e^{-cn}\right),$ and \\
    $\Parg{\|\vA\|_{2,\infty}\ge C\frac{(\sqrt{n}+\sqrt{d})\log n}{N\sqrt{n}}}\le C'\left(e^{-c\frac{(\sqrt{n}+\sqrt{d})^2}{n}\log^2 n}+e^{-cn}+Ne^{-c\log^2 n}\right).$ 
    \item[(ii)] 
    $\Parg{\|\vB\|\ge C\frac{(\sqrt{n}+\sqrt{d})(\sqrt{n}+\sqrt{N})\log^2 n}{n\sqrt{Nd}}}\le C'\left((n+N)e^{-c\log^2n}+e^{-(\sqrt{n}+\sqrt{d})^2}+e^{-cN}+e^{-cd}\right),$\\
    $\Parg{\norm{\vB}_F\ge C\frac{\sqrt{n} + \sqrt{d}}{\sqrt{nN}}} \le C'\left(e^{-cn}+e^{-cN}+e^{-c(\sqrt{n}+\sqrt{d})^2}\right).$ 
    \item[(iii)]  $\P\left(\norm{\vC}_F \ge C\frac{(\sqrt{d}+\sqrt{n})\log n \log N}{\sqrt{n}N} \right) \le 
    C'\left(Ne^{-cN}+ne^{-cd}+ne^{-c\log^2 n} + Ne^{-c\log^2 N}\right)$.  
\end{itemize}
Here all constants $C,C',c>0$ only depend on $\lambda_\sigma$, $\mu_1$, $\sigma_\eps$ and $\|f^*\|_{L^2(\R^d,\Gamma)}$.
\label{lemm:gradient-norm}    
\end{lemm}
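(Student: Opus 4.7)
\textbf{Proof proposal for Lemma \ref{lemm:gradient-norm}.} The strategy is to treat the three summands $\vA$, $\vB$, $\vC$ separately. For each, the plan is to (i) produce a factorization that reduces the operator/Frobenius/$2,\infty$ norm to a product of norms of simpler random matrices, and then (ii) invoke standard concentration tools (Hanson--Wright, Gaussian matrix concentration, sub-Gaussian maxima) on each factor. Expectation bounds will follow by integrating the tail bounds (or by direct second-moment computations where cleaner).

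For $\vA=\tfrac{\mu_1}{n\sqrt{N}}\vX^\top\vy\va^\top$, rank-one structure gives $\norm{\vA}=\norm{\vA}_F=\tfrac{|\mu_1|}{n\sqrt{N}}\norm{\vX^\top\vy}\cdot\norm{\va}$. Splitting $\vy=f^*(\vX)+\veps$, a direct second-moment computation shows $\E\norm{\vX^\top\vy}^2 \le C(n^2\mu_1^{*2}+n\snorm{f^*}_{L^2}^2+nd(\sigma_\eps^2+\lambda_\sigma^2))$, which recovers the $\sqrt{d/(nN)+1/N}$ expectation bound after using $\E\norm{\va}^2\le 1$. The two-sided concentration of $\norm{\vA}_F$ comes from: (a) concentration of $\norm{\vX^\top\vy}$ around $n\mu_1^*\norm{\vbeta_*}$ via a Gaussian--Lipschitz argument on $\vX$ (noting $\vy$ is $\lambda_\sigma$-Lipschitz in $\vX$) combined with Hanson--Wright for the noise $\vX^\top\veps$; and (b) $\chi^2$-concentration of $N\norm{\va}^2$. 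For $\norm{\vA}_{2,\infty}$ I would write $\norm{\vA}_{2,\infty}=\tfrac{|\mu_1|}{n\sqrt{N}}\norm{\vX^\top\vy}\cdot\max_{j}|a_j|$ and apply the standard Gaussian maxima bound $\max_j|a_j|\lesssim \sqrt{(\log N)/N}$ together with the tail control of $\norm{\vX^\top\vy}$ (the $\log n$ in the target follows by paying a union-bound factor for higher-moment control).

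For $\vB$ and $\vC$, the key observation is Fact \ref{fact:Hadamard1}, which turns the Hadamard products into sandwiched matrix products:
\begin{align*}
\vB &= \tfrac{1}{n\sqrt{N}}\,\vX^\top\,\diag(\vy)\,\sigma'_\perp(\vX\vW_0)\,\diag(\va), \\
\vC &= \tfrac{1}{nN}\,\vX^\top\,\diag\!\big(\sigma(\vX\vW_0)\va\big)\,\sigma'(\vX\vW_0)\,\diag(\va).
\end{align*}
From here, submultiplicativity reduces everything to controlling (i) $\norm{\vX}\lesssim\sqrt{n}+\sqrt{d}$ by the Davidson--Szarek bound; (ii) the $\ell^\infty$-norms $\norm{\vy}_\infty,\norm{\va}_\infty$ by sub-Gaussian maxima; (iii) the operator norms $\snorm{\sigma'_\perp(\vX\vW_0)}$ and $\snorm{\sigma'(\vX\vW_0)}$, which are bounded by $C(\sqrt{n}+\sqrt{N})$ with high probability using the centering of $\sigma'_\perp$ together with a Lipschitz-concentration argument (conditional on $\vX$, the map $\vW_0\mapsto\snorm{\sigma'_\perp(\vX\vW_0)}$ is $\lambda_\sigma\norm{\vX}/\sqrt{d}$-Lipschitz in Gaussian coordinates, and expectations can be handled via Gaussian comparison / noncommutative Khintchine); and (iv) $\norm{\sigma(\vX\vW_0)\va}_\infty$ for $\vC$, which is a vector of random feature outputs bounded by $\lambda_\sigma\norm{\vW_0}\cdot\max_i\norm{\vx_i}\cdot\norm{\va}$ up to the centering of $\sigma$ (a Lipschitz-in-$\va$ argument yields a $\log$-factor max bound). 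For the Frobenius bounds on $\vB$ I would instead compute $\E\norm{\vB}_F^2$ directly using $\E\sigma'_\perp(z)^2=\mu_2^2-\mu_1^2$ and independence across the $\vw_j$'s, which yields the clean $\sqrt{(n+d)/(nN)}$ rate.

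The main obstacle will be producing the sharp tail probabilities: each bound involves a product of three or four concentration events, so I will need to invoke Lipschitz/Hanson--Wright concentration \emph{conditionally} (typically freezing $\vX$ and taking expectations over $\vW_0,\va,\veps$), and then a final union bound absorbs the $\log$-factors. The trickiest individual step is the uniform control of $\snorm{\sigma'_\perp(\vX\vW_0)}$: the rows of $\sigma'_\perp(\vX\vW_0)$ are not independent, so I would bound $\E\snorm{\sigma'_\perp(\vX\vW_0)}$ by centering each column (independent across $j\in[N]$) and then applying a matrix Bernstein / noncommutative Khintchine inequality to the sum $\sum_j (\sigma'_\perp(\vX\vw_j))(\sigma'_\perp(\vX\vw_j))^\top$, before upgrading to high-probability bounds via Gaussian concentration of $\vW_0\mapsto\snorm{\sigma'_\perp(\vX\vW_0)}$. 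The $2,\infty$ probability bound for $\vA$ requires an additional Gaussian max argument with slightly stronger (sub-exponential, $\log^2 n$) tails, which forces the $\log n$-factor in the stated rate.
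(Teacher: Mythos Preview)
Your plan is correct and matches the paper's approach closely: the same rank-one structure for $\vA$, the same Hadamard factorizations (Fact~\ref{fact:Hadamard1}) for $\vB$ and $\vC$, and the same palette of concentration tools applied factor by factor. A few implementation points where the paper's execution differs from yours are worth noting. First, for $\norm{\sigma'_\perp(\vX\vW_0)}$ the paper does not build a matrix-Bernstein argument; it directly invokes \cite[Lemma~D.4]{fan2020spectra}, which already gives the $(\sqrt{n}+\sqrt{N})$ bound on the event $\cA_B=\{\norm{\vW_0}\le B,\ \sum_i(\norm{\vw_i^0}^2-1)^2\le B^2\}$, after which one only needs to bound $\P(\cA_B^c)$ with $B\asymp\sqrt{N/d}$. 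Second, for $\norm{\vB}_F$ the paper takes the cruder route $\norm{\vy\va^\top\odot\sigma'_\perp(\vX\vW_0)}_F\le\lambda_\sigma\norm{\vy}\,\norm{\va}$ using only boundedness of $\sigma'_\perp$, which avoids any second-moment computation (your identity $\E\sigma'_\perp(z)^2=\mu_2^2-\mu_1^2$ is in any case incorrect---$\mu_2$ is defined through $\sigma$, not $\sigma'$). Third, for the \emph{lower} bound on $\norm{\vA}_F$ the paper does not try to concentrate $\norm{\vX^\top\vy}$ around the signal level $n\mu_1^*$; instead it lower-bounds $\vy^\top\vX\vX^\top\vy$ by the noise contribution $\vvarepsilon^\top\vX\vX^\top\vvarepsilon\gtrsim\sigma_\eps^2 nd$ via Hanson--Wright conditioned on $\{\Tr(\vX\vX^\top)\asymp nd,\ \norm{\vX}\lesssim\sqrt{n}+\sqrt{d}\}$, and separately shows the cross term $f^*(\vX)^\top\vX\vX^\top\vvarepsilon$ is small by Hoeffding in $\vvarepsilon$. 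This matters because the signal lower bound of order $n$ can fall below the target $\sqrt{nd}$ when $n\ll d$; the noise term (which you also mention) is what actually delivers the stated rate.
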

\begin{remark}
In Lemma~\ref{lemm:gradient-norm}, we do not use the proportional scaling in Assumption~\ref{assump:1} to simplify the expressions. This is because the dependence on $n,d,N$ needs to be tracked separately in some of our calculations. 
\end{remark}

\begin{proof} 
We analyze the three matrices of interest separately. 

\paragraph{Part $(i)$.} We first upper-bound $\|\vA\|_F^2$. Notice that
\begin{align}
    \frac{n\sqrt{N}}{\mu_1}\|\vA\|_F\le& \|\vX^\top f^*(\vX)\va^\top\|_F+\|\vX^\top\vvarepsilon \va^\top\|_F\nonumber\\
    \le &\|\vX\|(\|f^*(\vX)\|+\|\vvarepsilon\|)\|\va\|.\label{eq:vA_control}
\end{align}
We know that Gaussian random matrices and vectors satisfy
\begin{align}
    \E\|\vvarepsilon\|^2=\sigma_{\eps}^2n,&\quad\E\|f^*(\vX)\|^2=n\|f^*\|_{L^2(\R^d,\Gamma)}^2,\label{eq:expecation_square_1}\\
    \E\|\va\|^2=1,&\quad\E\|\vX\|^2\le C_0(n+d),\label{eq:expecation_square_2}
\end{align}
where the last inequality is from \cite[Exercise 4.6.2]{vershynin2018high}. Based on Cauchy-Schwarz inequality, we can employ \eqref{eq:expecation_square_1} and \eqref{eq:expecation_square_2} to obtain
\begin{align*}
    \E\|\vA\|_{2,\infty}\le \E\|\vA\|\le \E\|\vA\|_F\le C_1\sqrt{\frac{d}{nN}+\frac{1}{N}},
\end{align*}
where constant $C_1>0$ only depends on $\mu_1$, $\sigma_\eps$ and $\|f^*\|_{L^2(\R^d,\Gamma)}$. As for the probability bound, we use the Lipschitz concentration property (e.g., see \cite[Theorem 5.2.2]{vershynin2018high}) of $\|\va\|$, $\|\vvarepsilon\|$ and $\|f^*(\vX)\|$, and apply \cite[Corollary 7.3.3]{vershynin2018high} for $\|\vX\|$ to obtain
\begin{align}
    \Parg{\|\vvarepsilon\|\ge \sigma_\eps\sqrt{n}}\le 2e^{-cn},\quad \Parg{\left|\|\va\|-1\right|\ge \frac{1}{2}}\le & 2e^{-cN},\label{eq:gaussian_norm}\\
    \Parg{\left|\|f^*(\vX)\|- \|f^*\|_{L^2(\R^d,\Gamma)}\sqrt{n}\right|\ge\frac{1}{2}\|f^*\|_{L^2(\R^d,\Gamma)}\sqrt{n} }\le & 2e^{-cn},\label{eq:f*X_norm}\\
    \Parg{\|\vX\|\ge \sqrt{n}+\sqrt{d}+t}\le & 2e^{-ct^2},\label{eq:gaussian_matrix_norm}
\end{align}
for any $t\ge 0$. Hence, from \eqref{eq:vA_control}, we arrive at
\[\Parg{\|\vA\|_F\ge \sqrt{\frac{d}{nN}}+\sqrt{\frac{1}{N}}+t}\le 4\left(e^{-cn}+e^{-cN}+e^{-ct^2nN}\right).\]
Note that the same probability bounds also applies to $\|\vA\|$ and $\|\vA\|_{2,\infty}$. Thus, we may take $t=\sqrt{\frac{1}{N}}$ to obtain the desired result. 
Now we provide lower bounds for $\vvarepsilon^\top\vX\vX^\top\vvarepsilon $ and $f^*(\vX)^\top\vX\vX^\top\vvarepsilon$. First, we define events $\cA_1$, $\cA_2$ and $\cA_3$ by
\[\cA_1:=\left\{\left|\Trarg{\vX\vX^\top}-nd\right|\le \frac{nd}{2}\right\},\quad\cA_2:=\left\{\|\vX\|\le \sqrt{d}+2\sqrt{n}\right\},\]
\[ \cA_3:=\left\{\|f^*(\vX)\|\le\frac{1}{2}\|f^*\|_{L^2(\R^d,\Gamma)}\sqrt{n}\right\}.\]
We know that $\Parg{\cA_1},\Parg{\cA_2},\Parg{\cA_3}\ge 1-2e^{-cn}$ by Bernstein's inequality, the Lipschitz Gaussian concentration inequality, and \cite[Corollary 7.3.3]{vershynin2018high}. Condition on $\cA_1\cap \cA_2$, by the Hanson-Wright inequality,  
\[\Parg{\vvarepsilon^\top\vX\vX^\top\vvarepsilon\le \frac{\sigma_\eps^2}{2}nd-t~\Big|~\cA_1\cap \cA_2}\le 2e^{-c\min\left\{\frac{t^2}{n(n^2+d^2)},\frac{t}{n+d}\right\}}.\]
Choosing $t=\sigma_\eps^2nd/4$, we have
\begin{equation}\label{eq:lower_bound_A1}
    \Parg{\vvarepsilon^\top\vX\vX^\top\vvarepsilon\le \frac{\sigma_\eps^2}{8}nd}\le 2e^{-c\min\left\{\frac{nd^2}{(n^2+d^2)},\frac{nd}{n+d}\right\}}+4e^{-cn}.
\end{equation} 
Similarly, by the general Hoeffding inequality, one can easily see that
\[\Parg{\left|f^*(\vX)^\top\vX\vX^\top\vvarepsilon\right|\ge t~\Big|~\cA_2\cap\cA_3}\le 2e^{-\frac{ct^2}{n(n^2+d^2)}}.\]
Thus, again, by \eqref{eq:gaussian_norm}, we obtain
\begin{equation}\label{eq:lower_bound_A2}
    \Parg{\left|f^*(\vX)^\top\vX\vX^\top\vvarepsilon\right|\ge \frac{\sigma_\eps^2}{32}nd}\le 2e^{-\frac{cnd^2}{(n^2+d^2)}}+2e^{-cN}+4d^{-cn}.
\end{equation}
Also, since the operator norm has the following lower bound
\begin{align*}
    \|\vX^\top\vy\va^\top\|=\|\va\|\|\vX^\top\vy\|= \|\va\|\left(\vy^\top\vX\vX^\top\vy\right)^{1/2}
    \ge \|\va\|\left(\vvarepsilon^\top\vX\vX^\top\vvarepsilon+2f^*(\vX)^\top\vX\vX^\top\vvarepsilon\right)^{1/2},
\end{align*}
by \eqref{eq:gaussian_norm}, \eqref{eq:lower_bound_A1} and \eqref{eq:lower_bound_A2}, we arrive at
\[\Parg{\frac{n^2N}{\mu_1^2}\|\vA\|^2\le\frac{\sigma_\eps^2}{16}nd}\le 16 \left(e^{-c\min\left\{\frac{nd^2}{(n^2+d^2)},\frac{nd}{n+d}\right\}}+e^{-cN}+e^{-cn}\right).\]
% The last inequality for $\|\vA\|_{2,\infty}$ in part $(i)$ will be addressed in the end of the proof. 
As for the last inequality on $\|\vA\|_{2,\infty}$, by definition we know that
\[\|\vA\|_{2,\infty}\le \frac{\mu_1}{n\sqrt{N}}\|\vX\|\left(\|f^*(\vX)\|+\|\vvarepsilon\|\right)\|\va\|_{\infty}.\]
The desired result can be obtained from the tail bound on the sup-norm of Gaussian random vector, $\Parg{\|\va\|_{\infty}\le t/\sqrt{N}}\ge 1-2Ne^{-ct^2}$, in combination with \eqref{eq:gaussian_norm}, \eqref{eq:f*X_norm} and \eqref{eq:gaussian_matrix_norm}.

\paragraph{Part $(ii)$.} As a result of Fact~\ref{fact:Hadamard1}, we have
\begin{equation}\label{eq:vB_control}
    \|\vB\|\le  \frac{1}{n\sqrt{N}}\|\vX\|\|\va\|_{\infty}(\|f^*(\vX)\|_{\infty}+\|\vvarepsilon\|_{\infty})\|\sigma'_\perp(\vX\vW_0)\|.
\end{equation}

We first control the operator norm of the random feature matrix $\sigma'_\perp(\vX\vW_0)$. 
Since $\sigma'_\perp$ is centered, \cite[Lemma D.4]{fan2020spectra} implies that 
\begin{equation}
    \Parg{\|\sigma_\perp'(\vX\vW_0)\|\ge C(\sqrt{n}+\sqrt{N})\lambda_\sigma B, \cA_B}\le 2e^{-cN},
\end{equation} where event $\cA_B$ is defined by 
\[\cA_B:=\left\{\|\vW_0\|\le B, \sum_{i=1}^N(\|\vw^0_i\|^2-1)^2\le B^2\right\},\]given any constant $B>0$.
Hence, we have
\begin{align}
    \Parg{\|\sigma_\perp'(\vX\vW_0)\|\ge C(\sqrt{n}+\sqrt{N})\lambda_\sigma B}\le 2e^{-cN}+\Parg{ \cA_B^c}. \label{eq:CK_norm}
\end{align}
Next, we estimate the failure probability of event $\cA_B^c$. By Bernstein's inequality, for any $t\ge 0,$ we have 
\begin{equation}
    \Parg{|\|\vw^0_1\|^2-1|^2\ge t^2}\le 2e^{-cd\min\{t^2,t\}},
\end{equation}
where we write $\vw_1^0$ as the first column of $\vW_0$ (and similarly for all $\vw_i^0$). Following the proof of Proposition 3.3 in \cite{fan2020spectra}, we can obtain that
\begin{align}
    \Parg{\sum_{i=1}^N\left(\|\vw^0_i\|^2-1\right)^2\ge 4t^2}\le 2e^{N\log 5-cd\min\{t^2,t\}}\label{eq:sum_vW_columns},
\end{align}
for any $t\ge0$. Besides, inequality \eqref{eq:gaussian_matrix_norm} implies that for any $t\ge 0$, 
\begin{equation}
    \Parg{\|\vW_0\|\le c'\sqrt{\frac{N}{d}}}\ge 1-2e^{-cd}.
\end{equation}
By choosing $t=c'\sqrt{\frac{N}{d}}$ in \eqref{eq:sum_vW_columns} and $B:=c'\sqrt{\frac{N}{d}}$ for sufficient large $c'>0$, we can claim that there exists sufficient large constant $c>0$ such that 
\[\Parg{\cA_B^c}\le 2e^{-cd}+2e^{-cN}.\]
Combining \eqref{eq:CK_norm} and the above inequality, we have
\begin{equation}\label{eq:CK_norm1}
    \Parg{\|\sigma_\perp'(\vX\vW_0)\|\ge C(\sqrt{n}+\sqrt{N})\sqrt{\frac{N}{d}}}\le 4e^{-cN}+2e^{-cd}.
\end{equation}
In addition, the following tail bound is due to property of (sub-)Gaussian random variables: 
\begin{align}
    \Parg{\|\va\|_{\infty}\le t_1/\sqrt{N}}\ge 1-2Ne^{-ct_1^2},\quad\Parg{\|\vvarepsilon\|_{\infty}\le t_2}\ge 1-2ne^{-ct_2^2},\label{eq:gaussian_max}
\end{align}for any $t_1,t_2\ge 0$.
Because $f^*$ is Lipschitz, $f^*(\vX)$ is a sub-Gaussian random vector with similar tail bound
\[\Parg{\|f^*(\vX)\|_{\infty}\le t_2}\ge 1-2ne^{-ct_2^2}.\]
Let $t_1=t_2=\log n$. Applying all these three tail bounds \eqref{eq:CK_norm1} and \eqref{eq:gaussian_matrix_norm}, \eqref{eq:vB_control} gives us the first part of the probability bound in $(ii)$. 
As for the second part, following the observation
\[\|\vB\|_F\le \frac{\mu_1}{n\sqrt{N}}\|\vX\|\|\vy\va^\top\odot\sigma'_\perp(\vX\vW_0)\|_F\le \frac{\mu_1\lambda_\sigma}{n\sqrt{N}}\|\vX\|\|\vy\|\|\va\|,\]
we can adopt \eqref{eq:gaussian_norm}, \eqref{eq:f*X_norm} and \eqref{eq:gaussian_matrix_norm} to conclude the second probability bound. 

\paragraph{Part $(iii)$.}
Finally, we analyze the lower-order term $\vC$. 
Recall the definitions $\vX=[\tilde{\vX},\ldots,\vx_n]^\top$, $\vW_0=[\vw^0_1,\ldots,\vw^0_N]$ and $\va=[a_1,\ldots,a_N]^\top$. 
We first observe that
\begin{align}
    &\E\|\sigma(\vX\vW_0)\va\va^\top\odot\sigma'(\vX\vW_0)\|_F^2\le \lambda_\sigma^2\sum_{j=1}^n\sum_{k=1}^N\E\left(\sum_{i=1}^N a_ia_k\sigma(\vx_j^\top\vw_i^0)\right)^2,\nonumber\\
    =& \lambda_\sigma^2\sum_{j=1}^n\sum_{k=1}^N\sum_{i,l=1}^N\E\left[ a_la_ia_k^2\sigma(\vx_j^\top\vw_i^0)\sigma(\vx_j^\top\vw_l^0)\right]=\lambda_\sigma^2\sum_{j=1}^n\sum_{k=1}^N\sum_{i=1}^N\E\left[ a_i^2a_k^2\sigma(\vx_j^\top\vw_i^0)^2\right],\nonumber\\
    \le &\frac{C'}{N^2}\sum_{j=1}^n\sum_{k=1}^N\sum_{i=1}^N\E\left[\sigma(\vx_j^\top\vw_i^0)^2\right]\le C''n,\label{eq:part3_last}
\end{align}
where the last inequality can be deduced by
\begin{align*}
    &\E[\sigma(\vx^\top\vw)^2]=\E_{\vw}\left[\E_{\vx}[\sigma(\vx^\top\vw)^2]\right]=\E_{\vw}\left[\E_{z}[\sigma(\|\vw\|z)^2]\right]\\
    \le& 2\E_{\vw}\left[\E_{z}(\sigma(\|\vw\|z)-\sigma(z))^2\right]+2\E_{\vw}\left[\E_{z}[\sigma(z)^2]\right]\\
     \le& 2\lambda^2_\sigma\E_{\vw}\left[(\|\vw\|-1)^2\right]+2\E_{\vw}\left[\E_{z}[\sigma(z)^2]\right]\le 4\lambda^2_\sigma+\E_{z}[\sigma(z)^2],
\end{align*}
which is uniformly bounded by a constant. Therefore, by \eqref{eq:expecation_square_2} and \eqref{eq:part3_last}, we get
\[\E\|\vC\|]\le \E\norm{\vC}_F\le \frac{1}{nN}\E[\|\vX\|^2]^{\frac{1}{2}}\E[\|\sigma(\vX\vW_0)\va\va^\top\odot\sigma'(\vX\vW_0)\|_F^2]^{\frac{1}{2}}\le \frac{C_3}{N}\sqrt{1+\frac{d}{n}}.\]
As for the tail control, because of Fact \ref{fact:Hadamard1}, we consider the following upper-bound, 
\begin{equation}\label{eq:vC_control}
    \norm{\vC}\le \norm{\vC}_F \le \frac{1}{nN}\norm{\vX}\norm{\sigma(\vX\vW_0)\va}_\infty\norm{\va}_\infty\norm{\sigma'(\vX\vW_0)}_F
\le
    \frac{\lambda_\sigma}{\sqrt{nN}}\norm{\vX}\norm{\sigma(\vX\vW_0)\va}_\infty\norm{\va}_\infty, 
\end{equation}
where the last inequality is due to $|\sigma'|$ being upper-bounded by $\lambda_\sigma$. 

To control $\norm{\sigma(\vX\vW_0)\va}_\infty$, 
note that since $\va$ is centered by Assumption~\ref{assump:1}, we can apply Bernstein inequality for $\va$ and $\vW$ conditioned on the event $\cM:=\left\{\left|\|\vx_i\|/\sqrt{d}-1\right|\le \nicefrac{1}{2},~i\in [n]\right\}$. Conventionally, we denote $\norm{\cdot}_{\psi_2}$ as the sub-Gaussian norm. Since $\big\|\|\vx_i\|-\sqrt{d}\big\|_{\psi_2}$ is bounded by some absolute constant (\cite[Theorem 3.1.1]{vershynin2018high}), we know that
\begin{equation}
    \Parg{\cM}\ge 1-ne^{-cd}.
\end{equation}

Notice that for any $j\in [n]$,
$\sigma(\vx_j^\top\vW_0)\va = \sum_{i=1}^N a_i\sigma(\vx_j^\top\vw_i^0)$ 
is the sum of $N$ independent and centered sub-Exponential random variables, where, in terms of \cite[Lemma D.5]{fan2020spectra}, the sub-Exponential norm $\norm{\cdot}_{\psi_1}$ of each term is bounded by the sub-Gaussian norm of the entries as follows, 
\[\|a_i\sigma(\vx_j^\top\vw_i^0) \|_{\psi_1}\le \|a_i\|_{\psi_2}\|\sigma(\vx_j^\top\vw_i^0) \|_{\psi_2}\le \frac{C\lambda_\sigma}{\sqrt{N}}\frac{\|\vx_j\|}{\sqrt{d}}\le \frac{3C\lambda_\sigma}{2\sqrt{N}},\]
for some absolute constant $C$. Thus, by Bernstein inequality \cite[Theorem 2.8.1]{vershynin2018high}, for each $j\in [n]$,
\begin{align}
    \Parg{ |\sigma(\vx_j^\top\vW_0)\va|\ge \log n}\le 2e^{-c(\log n)^2}.
\end{align}
Then we take the union over all $\vx_j$ and obtain $\norm{\sigma(\vX\vW_0)\va}_\infty\le \log n$ with probability at least $1-2ne^{-c(\log n)^2}$. Hence, by \eqref{eq:gaussian_matrix_norm}, \eqref{eq:gaussian_max} and \eqref{eq:vC_control}, we get
\begin{align}
    \Parg{\|\vC\|_F\ge \frac{ (\sqrt{d}+\sqrt{n}+t)\log n \log N}{\sqrt{n}N}} \le 2ne^{-c(\log n)^2} + 2N e^{-c(\log N)^2} + ne^{-cd} + 2e^{-ct^2} + 2Ne^{-cN}. 
\end{align}
Part $(iii)$ is established by choosing $t=\sqrt{d}$. 
This concludes the proof of the lemma. 
\end{proof}

Proposition~\ref{thm:W1-W0} is a direct consequence of the above norm bounds.
\begin{proofof}[Proposition~\ref{thm:W1-W0}]
Notice that $\vG_0-\vA=\vB+\vC$. In the proportional regime, by Lemma~\ref{lemm:gradient-norm}, there exist universal constants $C,c>0$ such that
\[\Parg{\|\vG_0-\vA\|\le C\frac{\log^2 n}{n}}\ge 1-ne^{-c\log^2 n}.\]
On the other hand, part $(i)$ in Lemma~\ref{lemm:gradient-norm} implies that 
\[\Parg{\|\vA\|\ge \frac{C}{\sqrt{n}}}\ge 1-e^{-cn},\]
for some constant $c,C>0$. Here we used the fact $\|\vA\|=\|\vA\|_F$ because it is a rank-one matrix. Conditioning on the two events stated above, we have
\begin{align}
    \|\vG_0-\vA\|\le \frac{C}{\sqrt{n}}\frac{\log^2 n}{\sqrt{n}}\le \frac{\log^2 n}{\sqrt{n}}\|\vA\|\le \frac{\log^2 n}{\sqrt{n}}\left(\|\vG_0\|+\|\vG_0-\vA\|\right).
\end{align}
As long as $n$ is sufficiently large such that $\frac{\log^2 n}{\sqrt{n}}<\frac{1}{2}$, we can obtain
\[\Parg{\|\vG_0-\vA\|\le \frac{2\log^2 n}{\sqrt{n}} \|\vG_0\|}\ge 1-ne^{-c\log^2 n}- e^{-cn},\]
which completes the proof.

\end{proofof}  
 
\subsubsection{Decomposition of Matrix A}  
Using the orthogonal decomposition \eqref{eq:orthogonal-decomposition}, we can further decompose the rank-1 matrix $\vA$ as follows
\begin{align}
    \vA = \underbrace{\frac{1}{n}\cdot\frac{\mu_1\mu_1^*}{\sqrt{N}}\vX^\top\vX\vbeta_*\va^\top}_{\vA_1} + 
    \underbrace{\frac{1}{n}\cdot\frac{\mu_1}{\sqrt{N}}\vX^\top\left(\mu_0^*\boldsymbol{1} + \textsf{P}_{>1} f^*(\vX)+\vvarepsilon\right)\va^\top}_{\vA_2}, \label{eq:A_1+A_2}
\end{align}
where we denote $\textsf{P}_{>1} f^*(\vX):=[\textsf{P}_{>1} f^*(\vx_1),\ldots,\textsf{P}_{>1} f^*(\vx_n)]^\top\in\R^n$. Similar to the previous Lemma~\ref{lemm:gradient-norm}, we have the following norm bound.  
\begin{lemm}
\label{lemm:gradient-norm2}
Assume that target function $f^*\in L^4(\R^d,\Gamma)$ is a Lipschitz function. We have  
\begin{itemize}[topsep=0mm, itemsep=0.1mm]
    \item[(i)] $\E\|\vA_1\|_F\le \frac{C}{\sqrt{N}}\left(1+\frac{d}{n}\right)$ and $\Parg{\|\vA_1\|_F\ge C\left(\frac{1}{\sqrt{N}}+\frac{d}{n\sqrt{N}}\right)}\le C'(e^{-cN}+e^{-cn})$;   
    \item[(ii)] $\E\norm{\vA_2}_F \le C\sqrt{\frac{d}{Nn}}$, and when $n\ge d$,
    \begin{equation}\label{eq:A_2_F}
        \Parg{\|\vA_2\|_F^2\ge \frac{Cd}{nN}}\le C'(e^{-c\sqrt{n}}+e^{-cN}+ne^{-cd}+d^{-1}),
    \end{equation}
\end{itemize} 
for some constants $C,C',c>0$ that only depend on $\mu_1$, $\sigma_\eps$ and $f^*$. 
\end{lemm}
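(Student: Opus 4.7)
The strategy is to exploit the rank-1 structure of both $\vA_1$ and $\vA_2$, reducing each Frobenius norm to the product of a single vector norm and $\|\va\|$, and then to apply the Gaussian concentration estimates already established in the proof of Lemma~\ref{lemm:gradient-norm} (namely \eqref{eq:gaussian_norm}, \eqref{eq:expecation_square_2}, and \eqref{eq:gaussian_matrix_norm}). For Part~(i), since $\vA_1 = \frac{\mu_1\mu_1^*}{n\sqrt{N}}\vX^\top\vX\vbeta_*\va^\top$ has rank one, $\|\vA_1\|_F = \frac{|\mu_1\mu_1^*|}{n\sqrt{N}}\|\vX^\top\vX\vbeta_*\|\cdot\|\va\|$. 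The crude bound $\|\vX^\top\vX\vbeta_*\|\le\|\vX\|^2$ (using $\|\vbeta_*\|=1$), combined with $\E\|\vX\|^2\le C(n+d)$ from \eqref{eq:expecation_square_2} and $\E\|\va\|^2=1$, immediately yields the expectation bound $\E\|\vA_1\|_F\le \frac{C}{\sqrt{N}}(1+d/n)$ via Cauchy--Schwarz. The tail bound follows by intersecting the high-probability events $\{\|\vX\|\le \sqrt{n}+\sqrt{d}+t\}$ and $\{\|\va\|\le 3/2\}$, whose failure probabilities are controlled by \eqref{eq:gaussian_matrix_norm} and \eqref{eq:gaussian_norm} respectively.

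Part~(ii) is the heart of the lemma. Setting $\vh := \mu_0^*\boldsymbol{1}+\textsf{P}_{>1}f^*(\vX)+\vvarepsilon\in\R^n$, the matrix $\vA_2$ is again rank-one, so $\|\vA_2\|_F = \frac{|\mu_1|}{n\sqrt{N}}\|\vX^\top\vh\|\cdot\|\va\|$, and the task reduces to bounding $\|\vX^\top\vh\|$. The crucial observation is that $\vX^\top\vh = \sum_{i=1}^n h_i\vx_i$ is a sum of $n$ \emph{independent, zero-mean} vectors in $\R^d$: the pairs $(h_i,\vx_i)$ are independent across $i$, and $\E[h_i\vx_i]=\mathbf{0}$ because $\E[\vx_i]=\mathbf{0}$ handles the constant $\mu_0^*$, the definition of $\textsf{P}_{>1}$ gives $\E[\vx_i\,\textsf{P}_{>1}f^*(\vx_i)]=\mathbf{0}$, and $\vx_i\perp\eps_i$ disposes of the noise term. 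Expanding $\E\|\sum_i h_i\vx_i\|^2 = \sum_i\E[h_i^2\|\vx_i\|^2]$ and bounding each summand by $\sqrt{\E h_i^4\cdot\E\|\vx_i\|^4}\le Cd$ via $f^*\in L^4$ and $\E\|\vx_1\|^4\le Cd^2$ yields $\E\|\vX^\top\vh\|^2\le Cnd$, and hence $\E\|\vA_2\|_F\le C\sqrt{d/(nN)}$ by Cauchy--Schwarz.

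For the probability bound \eqref{eq:A_2_F}, the naive estimate $\|\vX^\top\vh\|\le \|\vX\|\cdot\|\vh\|$ would only give $\|\vA_2\|_F^2 \lesssim (n+d)/N$, which for $n\ge d$ is too large by a factor of $n/d$; the mean-zero structure of $h_i\vx_i$ must be exploited. My plan is to split $\vh = \tilde{\vh}+\vvarepsilon$ with $\tilde{\vh}:=\mu_0^*\boldsymbol{1}+\textsf{P}_{>1}f^*(\vX)$ and handle the two pieces separately. For the noise part, condition on $\vX$ to get $\vX^\top\vvarepsilon\mid\vX \sim \cN(\mathbf{0},\sigma_\eps^2\vX^\top\vX)$; Hanson--Wright (as applied in \eqref{eq:lower_bound_A1}) combined with $\trarg{\vX^\top\vX}\asymp nd$ on the high-probability event $\cA_1\cap\cA_2$ from the proof of Lemma~\ref{lemm:gradient-norm} then delivers sub-exponential concentration of $\|\vX^\top\vvarepsilon\|^2$ around $\sigma_\eps^2 nd$. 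For the signal part $\vX^\top\tilde{\vh}$, the second-moment bound established above, combined with Chebyshev's inequality applied to the random variable $\|\vX^\top\tilde{\vh}\|^2$, yields polynomial tails and accounts for the $d^{-1}$ term; the remaining exponential terms $e^{-c\sqrt{n}}$, $e^{-cN}$, and $ne^{-cd}$ enter through control of $\|\vvarepsilon\|$, $\|\va\|$, and a union-bounded truncation event on the row norms $\|\vx_i\|$ analogous to $\cM$ from Part~(iii) of Lemma~\ref{lemm:gradient-norm}.

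The main obstacle, and the only step that is not a direct rerun of the arguments in Lemma~\ref{lemm:gradient-norm}, is the Chebyshev step: one must verify that the variance of $\|\vX^\top\tilde{\vh}\|^2$ is of order $n^2 d$ rather than the generic $(nd)^2$. This requires a fourth-moment expansion of $\|\sum_i \tilde h_i\vx_i\|^4 = \sum_{i,j,k,\ell}\tilde h_i\tilde h_j\tilde h_k\tilde h_\ell\langle\vx_i,\vx_j\rangle\langle\vx_k,\vx_\ell\rangle$ in which the overwhelming majority of cross terms vanish thanks to the same orthogonality $\E[\vx_i\,\textsf{P}_{>1}f^*(\vx_i)]=\mathbf{0}$ that drove the mean estimate — so the $L^4$ assumption on $f^*$ is precisely what is needed to control the surviving diagonal-type contributions and to close the variance bound.
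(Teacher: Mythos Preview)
Your plan is correct and matches the paper's proof almost exactly: Part~(i) is identical, and for Part~(ii) the paper likewise separates the noise contribution (handled by Hanson--Wright on $\vvarepsilon^\top\vX\vX^\top\vvarepsilon$) from the signal contribution, exploits the orthogonality $\E[\vx_i\,\textsf{P}_{>1}f^*(\vx_i)]=\mathbf{0}$, and obtains the polynomial $d^{-1}$ tail via a variance bound plus Chebyshev. The only organizational difference is that the paper splits $\vf_{\mathrm{NL}}^{*\top}\vX\vX^\top\vf_{\mathrm{NL}}^*$ into diagonal ($J_2$) and off-diagonal ($J_1$) parts and treats them separately---Chebyshev on $J_1$ (whose mean is zero, so only $\mathrm{Var}(J_1)\le Cn^2d$ is needed), and a direct bound on $J_2$ via the truncation $\{\|\vx_i\|^2\le 2d\}$ together with Lipschitz concentration of $\|\vf_{\mathrm{NL}}^*\|^2/n$---which is what actually produces the $e^{-c\sqrt{n}}$ and $ne^{-cd}$ terms, rather than $\|\vvarepsilon\|$ as you wrote.
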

\begin{proof} 
For simplicity, we denote $\textsf{P}_{>1} f^*(\vx)$ by $f^*_{\mathrm{NL}}(\vx)$ and $\textsf{P}_{>1} f^*(\vX) $ by $\vf^*_{\mathrm{NL}}\in\R^n$.
\paragraph{Part $(i)$. }
The expectation follows from \eqref{eq:expecation_square_2} and the following inequality,
\[\|\vA_1\|_F\le\frac{\mu_1\mu_1^*}{n\sqrt{N}} \|\vX\|^2\|\vbeta_*\|\|\va\|=\frac{\mu_1\mu_1^*}{n\sqrt{N}} \|\vX\|^2 \|\va\|.\]
The probability bound also follows from the same argument as Lemma~\ref{lemm:gradient-norm}. 
 
\paragraph{Part $(ii)$.} 
Following the proof of part $(i)$ in Lemma \ref{lemm:gradient-norm}, we can further decompose $\norm{\vA}_F$ into
\begin{equation}
   \|\vA_2\|_F\le \frac{\mu_1}{n\sqrt{N}}\|\vX\|\|\va\|\left(\mu_0^*\sqrt{n}+\|\vf^*_{\mathrm{NL}}\|+\|\vvarepsilon\|\right).
\end{equation}
Since $\textsf{P}_{>1} f^*$ is a Lipschitz function as well, we can again apply the Lipschitz concentration \eqref{eq:f*X_norm}. 
Hence, combining \eqref{eq:gaussian_matrix_norm}, \eqref{eq:gaussian_norm} and \eqref{eq:f*X_norm}, one can conclude the bound on the expectation of $\|\vA_2\|_F$. 

For the tail bound, we consider matrices
\[
\vA_2':=\frac{\mu_1}{n\sqrt{N}}\vX^\top\vf^*_{\mathrm{NL}}\va^\top, ~\vA_2'':=\frac{\mu_1}{n\sqrt{N}}\vX^\top\vvarepsilon\va^\top, ~\vA_2''':=\frac{\mu_0^*\mu_1}{n\sqrt{N}}\vX^\top\boldsymbol{1}\va^\top,
\]  
whose squared Frobenius norms are given by
\begin{align}
  \|\vA_2'\|_F^2=&\frac{\mu_1^2}{n^2N}\va^\top\va  \vf^{*\top}_{\mathrm{NL}}\vX\vX^\top\vf^*_{\mathrm{NL}},\quad 
  \|\vA_2''\|_F^2= \frac{\mu_1^2}{n^2N}\va^\top\va\vvarepsilon^\top\vX\vX^\top\vvarepsilon, \quad
  \|\vA_2'''\|_F^2=\frac{\mu_0^{*2}\mu_1^2}{n^2N}\va^\top\va\boldsymbol{1}^\top\vX\vX^\top\boldsymbol{1}.
\end{align}
Recall that \eqref{eq:gaussian_norm} implies 
\begin{equation}\label{eq:a_norm}
  \Parg{\|\va\|^2\ge 4}\le  2e^{-cN}.
\end{equation}

Let us first address $\vA_2''$. 
Due to \eqref{eq:a_norm}, it suffices to control $\vvarepsilon^\top\vX\vX^\top\vvarepsilon$, whose expectation with respect to $\vvarepsilon$ is $\sigma^2_\eps\Tr(\vX\vX^\top)$, and $\E[\Tr(\vX\vX^\top)]=nd$. Recalling the Lipschitz Gaussian concentration for $\|\vX\|_F$ and \eqref{eq:gaussian_norm}, we know that for some constant $c>0$, $\Parg{\cA_{\eps}}\ge 1-4e^{-cd}$, where $\cA_{\eps}:=\{\|\vX\|_F\le \sqrt{nd},\|\vX\|\le \sqrt{n}+\sqrt{d}\}$. This directly implies that 
\[\|\vX\vX^\top\|_F\le \|\vX\|_F\|\vX\|\le \sqrt{nd}\left(\sqrt{n}+\sqrt{d}\right),\]
conditioned on event $\cA_{\eps}$. Thus, the Hanson-Wright inequality (Theorem 6.2.1 \cite{vershynin2018high}) indicates that 
\begin{align}
      &\Parg{\vvarepsilon^\top\vX\vX^\top\vvarepsilon\ge t+ 4\sigma^2_{\eps} nd}\le \Parg{\vvarepsilon^\top\vX\vX^\top\vvarepsilon\ge t+ \sigma^2_{\eps} nd~\Big|~ \cA_{\eps}}+\Parg{\cA_{\eps}^c}\\
      \le & \Parg{\left|\vvarepsilon^\top\vX\vX^\top\vvarepsilon-\sigma^2_{\eps}\|\vX\|_F^2\right|\ge t~\Big|~ \cA_{\eps}}+ 4e^{-cd}
      \le 2e^{-c\min\left\{\frac{t^2}{nd(n+d)},\frac{t}{\left(\sqrt{d}+\sqrt{n}\right)^2}\right\}}+4e^{-cd}.
\end{align}
Thus, by choosing $t=nd$ and employing \eqref{eq:a_norm}, we have 
\begin{equation}\label{eq:A_2''}
      \Parg{\|\vA_2''\|_F^2\ge \frac{Cd}{nN}}\le 6e^{-cd}+2e^{-cN}, 
\end{equation}
where we simplified the expression using the assumption that $n\ge d$. 

Next we analyze $\|\vA_2'\|_F^2$. Notice that $\E[f^*_{\mathrm{NL}}(\vx_1)]=0$ and $\E[\vx_1f^*_{\mathrm{NL}}(\vx_1)]=\mathbf{0}$. Since $\vf^*_{\mathrm{NL}}$ is a random vector with independent mean-zero sub-Gaussian coordinates and $f^*\in L^2(\R^d,\Gamma)$, we know that
\begin{equation}\label{eq:f_perp_concentration}
    \Parg{\left|\frac{1}{n}\|\vf^*_{\mathrm{NL}}\|^2-\|f^*_{\mathrm{NL}}\|^2_{L^2(\R^d,\Gamma)}\right|\le n^{-1/4}}\ge 1-2e^{-c\sqrt{n}}.
\end{equation} 
We can further decompose $\|\vA_2'\|_F^2$ into two parts:
  \begin{equation}
      \|\vA_2'\|_F^2=\frac{\mu_1^2}{n^2N}\|\va\|^2\underbrace{\sum_{i\neq j}^n\vx_i^\top\vx_jf^*_{\mathrm{NL}}(\vx_i)f^*_{\mathrm{NL}}(\vx_j)}_{J_1}+\frac{\mu_1^2}{n^2N}\|\va\|^2\underbrace{\sum_{i=1}^n\|\vx_i\|^2(f^*_{\mathrm{NL}}(\vx_i))^2}_{J_2}.
  \end{equation}
Since $\E[\vx_if^*_{\mathrm{NL}}(\vx_i)]=\mathbf{0}$ for $1\le i\le n$, we deduce that $\E[J_1]=0$ and
\begin{align}
    \Var(J_1)=&\sum_{i\neq j}^n\E\left[(\vx_i^\top\vx_j)^2\left(f^*_{\mathrm{NL}}(\vx_i)f^*_{\mathrm{NL}}(\vx_j)\right)^2\right]+2\sum_{i\neq j\neq k}^n\E\left[\vx_i^\top\vx_j\vx_k^\top\vx_jf^*_{\mathrm{NL}}(\vx_i)f^*_{\mathrm{NL}}(\vx_k)\left(f^*_{\mathrm{NL}}(\vx_j)\right)^2\right]\\
    &+\sum_{i\neq j\neq k\neq l}^n\E\left[\vx_i^\top\vx_j\vx_k^\top\vx_lf^*_{\mathrm{NL}}(\vx_i)f^*_{\mathrm{NL}}(\vx_k)f^*_{\mathrm{NL}}(\vx_l)f^*_{\mathrm{NL}}(\vx_j)\right]\\
    =& \sum_{i\neq j}^n\E\left[(\vx_i^\top\vx_j)^2\left(f^*_{\mathrm{NL}}(\vx_i)f^*_{\mathrm{NL}}(\vx_j)\right)^2\right]\le n^2\E[(\vx_1^\top\vx_2)^4]^{1/2}\|f^*_{\mathrm{NL}}\|^4_{L^4(\R^d,\Gamma)}.
    \label{eq:J_1}
\end{align}
On the other hand, Bernstein's inequality \cite[Theorem 2.8.1]{vershynin2018high} indicates that for all $1\le i\neq j\le n$,
\begin{equation}\label{eq:x_ix_j}
      \Parg{|\vx_i^\top\vx_j|\ge t},~\Parg{|\|\vx_i\|^2-d|\ge t}\le 2e^{-c\min\left\{\frac{t^2}{d},t\right\}},
\end{equation}
which yields the sub-exponential condition for $\vx_1^\top\vx_2$. Thus, based on \cite[Theorem 2.3]{boucheron2013concentration}, we can obtain moment bounds for $\vx_1^\top\vx_2$, namely $\E[(\vx_1^\top\vx_2)^4]\lesssim d^2$, whence $ \Var(J_1)\le Cn^2d$ for some constant $C>0$. 
By Chebyshev's inequality, we deduce that 
\begin{equation}\label{eq:J_1_cheby}
    \Parg{\Big|\frac{\mu_1^2}{n^2N}\sum_{i\neq j}^n\vx_i^\top\vx_jf^*_{\mathrm{NL}}(\vx_i)f^*_{\mathrm{NL}}(\vx_j)\Big|>t}\le \frac{Cd}{n^2N^2t^2},
\end{equation}for all $t>0$. As for $J_2$, we apply \eqref{eq:f_perp_concentration} and \eqref{eq:x_ix_j} to all $\|\vx_i\|$. Letting $t=d$ in \eqref{eq:x_ix_j} and taking union bounds for all $1\le i\le n$, we obtain
\begin{equation}
    \Parg{\|\vx_i\|^2\le 2d,~\forall 1\le i\le n}\ge 1-2ne^{-cd}.
    \label{eq:xi-norm-union}
\end{equation}
Hence, the above equation and \eqref{eq:f_perp_concentration} lead the following bound
\begin{equation}\label{eq:J_2}
    \Parg{\frac{\mu_1^2}{n^2N} \sum_{i=1}^n\|\vx_i\|^2(f^*_{\mathrm{NL}}(\vx_i))^2\le \frac{Cd}{nN}}\ge 1-2e^{-c\sqrt{n}}-ne^{-cd}.
\end{equation} Therefore, by letting $t=\frac{d}{nN}$ in \eqref{eq:J_1_cheby} and combining \eqref{eq:a_norm} and \eqref{eq:J_2}, we can conclude that
\begin{equation}
    \Parg{\|\vA_2'\|_F^2\ge  \frac{Cd}{nN}}\le 2e^{-c\sqrt{n}}+ne^{-cd}+2e^{-cN}+\frac{c}{d},
    \label{eq:A2''-tail}
\end{equation}
for some constant $C,c>0$. 

Finally for $\vA_2'''$, we may employ a similar decomposition as $\vA_2'$,
\begin{align}
    \norm{\vA_2'''}_F^2 = \frac{\mu_0^{*2}\mu_1^2}{n^2N}\|\va\|^2\underbrace{\sum_{i\neq j}^n\vx_i^\top\vx_j}_{K_1}+\frac{\mu_0^{*2}\mu_1^2}{n^2N}\|\va\|^2\underbrace{\sum_{i=1}^n\|\vx_i\|^2}_{K_2}.
\end{align}
Following the same computation as \eqref{eq:J_1} and \eqref{eq:J_1_cheby}, we may control the tail of $K_1$ via Chebyshev's inequality due to $\E[K_1] = 0$, $\Var(K_1)\lesssim n^2d$. Whereas the bound on $K_2$ follows from \eqref{eq:a_norm} and the union bound \eqref{eq:xi-norm-union}. We omit the details for this part.
Combining these estimates, we know that $\norm{\vA_2'''}_F^2$ also obeys the same tail bound as \eqref{eq:A2''-tail}. 
The proof of \eqref{eq:A_2_F} is completed by combining the above calculations. 

\end{proof}

\subsubsection{Multiple Gradient Steps} 
Finally, we show via induction that the estimates in Lemma~\ref{lemm:gradient-norm} still hold after $t$ gradient steps with learning $\eta=\Theta(1)$ for fixed $t\in\N$ (note that we do not scale the number of steps $t$ jointly with $n,d,N$). For this lemma, we directly consider the proportional limit, that is, we do not keep track of the exact constants and dependence on $n,d,N$ separately for simplicity. 
 
\begin{lemm}
\label{lemm:gradient-norm-multi}
Under Assumption~\ref{assump:1}, given any fixed $t\in\N$ and learning rate $\eta=\Theta(1)$, the weight matrix after $t$ gradient steps $\vW_t$ defined in \eqref{eq:gradient-step-MSE} satisfies: 
\begin{align}
    \Parg{\norm{\vW_{t}-\vW_0}\ge C} &\le \Exp{-cN}; \\
    \Parg{\norm{\vW_{t}-\vW_0}_{2,\infty} \ge \frac{C\log^2 N}{\sqrt{N}}} &\le \Exp{-c\log^2 N}; \\
    \Parg{\norm{\vW_{t}-\vW_0}_F\ge C} &\le \Exp{-cN},  
\end{align}  
for some positive constants $c,C$. 
\end{lemm}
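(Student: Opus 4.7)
The natural strategy is induction on $t$, with the base case $t=0$ being immediate since $\vW_0-\vW_0=\mathbf{0}$. In the inductive step I would write
\begin{align}
\vW_{t+1}-\vW_0 = (\vW_t-\vW_0) + \eta\sqrt{N}\,\vG_t,
\end{align}
so by the triangle inequality it suffices to show, on a high-probability event, that $\eta\sqrt{N}\|\vG_t\|$ and $\eta\sqrt{N}\|\vG_t\|_F$ are $\bigO{1}$ and $\eta\sqrt{N}\|\vG_t\|_{2,\infty}=\bigO{\log^2 N/\sqrt{N}}$. Because $t$ is held fixed as $n,d,N\to\infty$, accumulating $t$ such per-step increments produces constants $C$ independent of $n,d,N$, and a union bound over the $t$ high-probability events preserves the exponentially small failure rates claimed in the statement.

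The core step is to extend each of the estimates of Lemma~\ref{lemm:gradient-norm} from $\vG_0$ to $\vG_t$. I would replay the three-term decomposition in \eqref{eq:decomposition_gradient}, writing $\vG_t = \vA + \vB_t - \vC_t$, where $\vA = \frac{\mu_1}{n\sqrt{N}}\vX^\top\vy\va^\top$ is exactly the rank-one matrix from the first step (independent of $\vW_t$) and so inherits its bounds from Lemma~\ref{lemm:gradient-norm}(i) verbatim. The terms $\vB_t$ and $\vC_t$ differ from $\vB,\vC$ only through the substitutions $\sigma(\vX\vW_0)\mapsto\sigma(\vX\vW_t)$ and $\sigma'(\vX\vW_0)\mapsto\sigma'(\vX\vW_t)$. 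Although $\vW_t$ is no longer independent of $\vX$, the bounded-derivative assumption on $\sigma$ in Assumption~\ref{assump:1} yields the Lipschitz estimates
\begin{align}
\bigl\|\sigma(\vX\vW_t)-\sigma(\vX\vW_0)\bigr\|_F + \bigl\|\sigma'(\vX\vW_t)-\sigma'(\vX\vW_0)\bigr\|_F \le 2\lambda_\sigma\,\|\vX\|\,\|\vW_t-\vW_0\|_F = \bigO{\sqrt{N}},
\end{align}
via the standard tail $\|\vX\|=\bigO{\sqrt{n}}$ and the inductive hypothesis. Since this is of the same order as the Lemma~\ref{lemm:gradient-norm} bound on $\|\sigma_\perp'(\vX\vW_0)\|$, inserting the splittings $\sigma^{(\ell)}(\vX\vW_t)=\sigma^{(\ell)}(\vX\vW_0)+[\sigma^{(\ell)}(\vX\vW_t)-\sigma^{(\ell)}(\vX\vW_0)]$ for $\ell\in\{0,1\}$ into the same chains of inequalities used there carries the operator- and Frobenius-norm bounds over to $\vB_t$ and $\vC_t$ with only enlarged constants. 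Notably, $\|\vC_t\|_F$ can be controlled without any sharp $\ell_\infty$-type bound on $\sigma(\vX\vW_t)\va$: the direct $\ell_2$ estimate $\|\sigma(\vX\vW_t)\va\|_2=\bigO{\sqrt{n}}$ (obtained by Lipschitz comparison with $\|\sigma(\vX\vW_0)\va\|_2$) already suffices to match the required accuracy.

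For the sharper $\|\cdot\|_{2,\infty}$ bound I would argue column-wise: each column of $\vG_t$ carries an explicit factor $a_i$ of magnitude $\bigO{\log N/\sqrt{N}}$ on a high-probability event, while the remaining factor $\frac{1}{n\sqrt{N}}\|\vX^\top[(\tfrac{1}{\sqrt{N}}\sigma(\vX\vW_t)\va-\vy)\odot\sigma'(\vX\vw_i^t)]\|$ is controlled using $\|\vX\|=\bigO{\sqrt{n}}$, the entry-wise boundedness of $\sigma'$, and $\|\tfrac{1}{\sqrt{N}}\sigma(\vX\vW_t)\va-\vy\|_2=\bigO{\sqrt{n}}$. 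These combine to give $\|\vG_t\|_{2,\infty}=\bigO{\log N/N}$, hence $\eta\sqrt{N}\|\vG_t\|_{2,\infty}=\bigO{\log N/\sqrt{N}}$, well within the claimed tolerance. The main technical obstacle lies less in any single bound than in orchestrating the induction consistently: the Lipschitz perturbation arguments above must be executed on the same event on which the inductive bounds $\|\vW_t-\vW_0\|\le C$ and $\|\vW_t-\vW_0\|_F\le C$ hold, and the final union bound across the $t$ gradient steps must be taken carefully so as to retain the exponential rates $e^{-cN}$ and $e^{-c\log^2 N}$ claimed in the statement.
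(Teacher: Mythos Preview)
Your proposal is correct and takes essentially the same approach as the paper: induction on $t$, with the inductive step handled by Lipschitz comparison of $\sigma(\vX\vW_t),\sigma'(\vX\vW_t)$ to their values at $\vW_0$ via $\|\vX\|\|\vW_t-\vW_0\|_F$. The only cosmetic difference is that the paper groups the prediction residual into a two-term decomposition $\vG_t=\vA^t+\vB^t$ with $\vA^t=\tfrac{\mu_1}{n\sqrt{N}}\vX^\top(\vy-f_t(\vX))\va^\top$, whereas you keep the three-term split $\vA+\vB_t-\vC_t$ with time-independent $\vA$; both bookkeepings yield the same bounds.
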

\begin{proof}
For the induction hypothesis, we assume that (under the proportional scaling in Assumption~\ref{assump:1}) after $t$ gradient steps with learning rate $\eta=\Theta(1)$, the weight matrix satisfies 
$\Parg{\norm{\vW_{t}-\vW_0}\ge C} \le \Exp{-cN}$, 
$\Parg{\norm{\vW_{t}-\vW_0}_{2,\infty} \ge \frac{C\log^2 N}{\sqrt{N}}} \le \Exp{-c\log^2 N}$, 
$\Parg{\norm{\vW_{t}-\vW_0}_F\ge C} \le \Exp{-cN}$. Our goal is to show that the same high-probability statements also hold for $\vW_{t+1}$  (for some different constants $c',C'$). 

We first control the difference in the prediction of the trained neural network compared to the initialized model. 
Following the same argument as \cite[Setion 6.6.1]{oymak2019towards}, we know that
\begin{align}
    \norm{f_t(\vX)} \le \norm{f_0(\vX)} + \norm{f_t(\vX) - f_0(\vX)} \lesssim \norm{f_0(\vX)} + \frac{1}{\sqrt{N}}\norm{\va}\norm{\vX}\norm{\vW_t - \vW_0}_F. 
\label{eq:ft_diff}
\end{align}
Note that $\norm{\vW_t - \vW_0}_F = \cO(1)$ with high probability due to the induction hypothesis. 
We now compute the next gradient update $\vG_t$ (we drop the learning rate $\eta=\Theta(1)$).    
\begin{align}
    \vG_t  
&=
    -\frac{1}{n}\vX^\top \left[\left(\frac{1}{\sqrt{N}}\left(\frac{1}{\sqrt{N}}\sigma(\vX\vW_t)\va - \vy\right)\va^\top\right) \odot \sigma'(\vX\vW_t)\right] \\
&= 
    \underbrace{\frac{1}{n}\cdot\frac{\mu_1}{\sqrt{N}}\vX^\top\left(\vy - f_t(\vX)\right)\va^\top}_{\vA^t} + 
    \underbrace{\frac{1}{n}\cdot\frac{1}{\sqrt{N}}\vX^\top\left(\left(\vy - f_t(\vX)\right)\va^\top\odot\sigma'_\perp(\vX\vW_t)\right)}_{\vB^t}. 
\end{align}

For $\vA^t$, following the same argument as Lemma~\ref{lemm:gradient-norm}, we have
\begin{align}
    \norm{\vA^t} = \norm{\vA^t}_F &\lesssim \frac{1}{n\sqrt{N}}\norm{\vX}(\norm{\vy}+\norm{f_t(\vX)})\norm{\va}. \\
    \norm{\vA^t}_{2,\infty} &\lesssim \frac{1}{n\sqrt{N}}\norm{\vX}(\norm{\vy}+\norm{f_t(\vX)})\norm{\va}_{\infty}. 
\end{align}
Now recall that $\norm{f_0(\vX)} \le \frac{1}{\sqrt{N}}\norm{\va}\norm{\sigma(\vX\vW_0)}$. 
Combining the norm control of $\va$ in \eqref{eq:gaussian_norm}, \eqref{eq:gaussian_max}, the norm control of $\vy$ due to \eqref{eq:f*X_norm} and \eqref{eq:gaussian_max}, the operator norm bound on $\vX$ and $\norm{\sigma(\vX\vW_0)}$ given in \eqref{eq:gaussian_matrix_norm} and \eqref{eq:CK_norm1} (where we applied \cite[Lemma D.4]{fan2020spectra} to the matrix $\sigma(\vX\vW_0)$, since $\sigma$ is centered), and the upper bound on $\norm{f_t(\vX)}$ given in \eqref{eq:ft_diff}, we arrive at
\begin{align}
    \Parg{\norm{\vA^t} \ge \frac{C'}{\sqrt{N}}} \le \Exp{-c'N}, \quad 
    \Parg{\norm{\vA^t}_{2,\infty} \ge \frac{C'\log N}{\sqrt{N}}} \le \Exp{-c'\log^2 N},
\end{align} 
for large enough $N$ and constants $c',C'>0$.  
Similarly for $\vB^t$, we have
\begin{align}
    \norm{\vB^t}_{2,\infty}&\le\norm{\vB^t} \lesssim \frac{1}{n\sqrt{N}}\norm{\vX}(\norm{\vy}_\infty + \norm{f_t(\vX)}_{\infty})\norm{\va}_{\infty}\norm{\sigma'_\perp(\vX\vW_t)}, \\
    \norm{\vB^t}_F &\lesssim \frac{1}{n\sqrt{N}}\norm{\vX}(\norm{\vy} + \norm{f_t(\vX)})\norm{\va}\max_{i,j}\abs{\sigma'_\perp(\vX\vW_t)}_{i,j}. 
\end{align} 
Again using \cite[Setion 6.6.1]{oymak2019towards}, we have
\begin{align}
\norm{\sigma'_\perp(\vX\vW_t)} 
\le&~ \norm{\sigma'_\perp(\vX\vW_0)} + 
\norm{\sigma'_\perp(\vX\vW_t) - \sigma'_\perp(\vX\vW_0)} \\
\lesssim&~ 
\norm{\sigma'_\perp(\vX\vW_0)} + \norm{\vX}\norm{\vW_t - \vW_0}_F. 
\end{align} 
Thanks to the norm control of $\vX$ in \eqref{eq:gaussian_matrix_norm}, the norm control of $\vy$ and $f_t(\vX)$ from \eqref{eq:gaussian_norm}, \eqref{eq:f*X_norm},  \eqref{eq:gaussian_max}, and \eqref{eq:ft_diff}, and the operator norm of the CK matrix given in \eqref{eq:CK_norm1}, we get 
\begin{align} 
    \Parg{\norm{\vB^t} \ge \frac{C' \log^2 N}{N}} \le \Exp{-c'\log^2 N}, \quad
    \Parg{\norm{\vB^t}_F \ge \frac{C'}{\sqrt{N}}} \le \Exp{-c'N},   
\end{align}
for large enough $N$. 
Consequently, given the induction hypothesis, we know that for the next time step $(t+1)$ with learning rate $\eta=\Theta(1)$, there exist some constants $c',C'$ such that
\begin{align}
    \Parg{\norm{\vW_{t+1}-\vW_0}\ge C'} &\le \Exp{-c'N}; \\
    \Parg{\norm{\vW_{t+1}-\vW_0}_{2,\infty} \ge \frac{C'\log^2 N}{\sqrt{N}}} &\le \Exp{-c'\log^2 N}; \\
    \Parg{\norm{\vW_{t+1}-\vW_0}_F\ge C'} &\le \Exp{-c'N}. 
\end{align}
Note that constants $c',C'>0$ may depend on $t$ but do not rely on $n,d,N$. 
We therefore conclude that the above statements hold true for any finite $t\in\N$. 

\end{proof} 

\subsection{Calculation of Alignment with Target Function}
In this section we prove Theorem \ref{thm:alignment}. We first characterize certain quadratic forms which will appear in many parts of our analysis. 

\subsubsection{Concentration of Quadratic Forms}
The following lemma is a direct adaptation from Lemma 2.7 and Lemma A.1 in \cite{bai1998no}. We also refer readers to section B.5 in \cite{bai2010spectral} for more details.
\begin{lemm}\label{lemm:quadratic_E}
Given any deterministic matrix $\vD\in\R^{d\times d}$ and $\vx\sim\cN (0,\vI)$ in $\R^d$, we have that for any $p\ge 1$,
\begin{equation}
    \E\left[\left|\vx^\top\vD\vx-\Tr\vD\right|^p\right]\le C_p\left( 3^{p/2}+(2p-1)!!\right)\left(\Trarg{ \vD\vD^\top}\right)^{p/2},
\end{equation}
where $C_p>0$ is a universal constant. Furthermore, if $\vD$ is a non-negative definite matrix, then we have
\begin{equation}
    \E\left[\left|\vx^\top\vD\vx\right|^p\right]\le C_p\left((\Tr\vD)^p+(2p-1)!!\Trarg{\vD^p}\right).
\end{equation}
\end{lemm}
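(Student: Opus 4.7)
My starting point is to reduce to the symmetric case. Since $\vx^\top\vD\vx = \vx^\top\vD_s\vx$ with $\vD_s:=(\vD+\vD^\top)/2$, and $\Tr\vD=\Tr\vD_s$, $\Tr(\vD_s\vD_s^\top)\le \Tr(\vD\vD^\top)$ (so any bound proved for $\vD_s$ transfers), I may assume $\vD$ is symmetric. Diagonalize $\vD=\vU\vLambda\vU^\top$ with $\vLambda=\diag(\lambda_1,\dots,\lambda_d)$; by rotational invariance of the standard Gaussian, $\vy:=\vU^\top\vx\sim\cN(0,\vI)$, so
\[
\vx^\top\vD\vx-\Tr\vD=\sum_{i=1}^d \lambda_i(y_i^2-1)=:\sum_i Z_i,
\]
a sum of \emph{independent centered} random variables with $\E Z_i^2=2\lambda_i^2$ and $\E|Z_i|^p\le |\lambda_i|^p\,\E|y_1^2-1|^p$. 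A standard Gaussian moment computation yields $\E|y_1^2-1|^p\le C(2p-1)!!$ for a universal $C$.

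\textbf{First inequality.} I would invoke a Rosenthal/Burkholder-type moment inequality for sums of independent mean-zero random variables:
\[
\E\Big|\sum_i Z_i\Big|^p \le C_p\Big(\Big(\sum_i \E Z_i^2\Big)^{p/2}+\sum_i \E|Z_i|^p\Big).
\]
The variance contribution is $(2\sum_i\lambda_i^2)^{p/2}=(2\Tr(\vD\vD^\top))^{p/2}$, giving the $3^{p/2}$-type prefactor (after absorbing the $2^{p/2}$ into $C_p$). For the higher-moment piece, use $\sum_i |\lambda_i|^p\le(\sum_i \lambda_i^2)^{p/2}$ for $p\ge 2$ (this is the $\ell_p\hookrightarrow \ell_2$ inclusion on eigenvalues), so that $\sum_i\E|Z_i|^p\le C(2p-1)!!\,(\Tr(\vD\vD^\top))^{p/2}$. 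Summing these yields the claimed bound $C_p(3^{p/2}+(2p-1)!!)(\Tr(\vD\vD^\top))^{p/2}$; for $p\in[1,2)$ the bound is immediate by Jensen from the $p=2$ case.

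\textbf{Second inequality (PSD case).} Split using $|a+b|^p\le 2^{p-1}(|a|^p+|b|^p)$ with $a=\Tr\vD$ and $b=\vx^\top\vD\vx-\Tr\vD$:
\[
\E|\vx^\top\vD\vx|^p \le 2^{p-1}\big((\Tr\vD)^p + \E|\vx^\top\vD\vx-\Tr\vD|^p\big).
\]
Here the centered term must be bounded by $(2p-1)!!\,\Tr(\vD^p)$ rather than the cruder $(\Tr(\vD^2))^{p/2}$ from the first part. For this I would abandon Rosenthal and expand directly: write $\E(\sum_i \lambda_i y_i^2-\sum_i\lambda_i)^p$ by the multinomial theorem, use Wick's formula $\E y_i^{2k_i}=(2k_i-1)!!$, and then group the terms where some exponent $k_i\ge 2$ (these carry a factor $\lambda_i^2\le \lambda_i^{k_i}\lambda_i^{2-k_i}$ and after summing produce $\Tr(\vD^p)=\sum_i \lambda_i^p$ up to combinatorial constants bounded by $(2p-1)!!$), while the multilinear ``all-distinct indices'' terms cancel against $-\Tr\vD$. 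Equivalently, one can pass through the cumulant expansion of the weighted chi-square $\sum \lambda_i y_i^2$, whose $k$-th cumulant equals $2^{k-1}(k-1)!\Tr(\vD^k)$, and apply the Leonov--Shiryaev formula.

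\textbf{Main obstacle.} The routine step is the first inequality; the real work lies in the PSD refinement, i.e.\ replacing $(\Tr(\vD^2))^{p/2}$ by $\Tr(\vD^p)$. This is strictly sharper when the spectrum is spread out (since $\Tr(\vD^p)\le(\Tr(\vD^2))^{p/2}$ for $p\ge 2$), and it requires exploiting the non-negativity of the $\lambda_i$ in the combinatorial expansion rather than just their second moments. The cleanest route is the cumulant/Leonov--Shiryaev bookkeeping described above, where the key inequality to verify is that every partition contributing to the $p$-th moment of the centered weighted chi-square can be charged to $(2p-1)!!\,\Tr(\vD^p)$; tracking the constant $C_p$ through this argument is the main bookkeeping burden.
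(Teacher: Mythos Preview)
The paper does not prove this lemma; it simply cites it as a direct adaptation of Lemma~2.7 and Lemma~A.1 in Bai (1998), pointing also to Section~B.5 of Bai--Silverstein (2010). So your proposal is an attempt to supply an argument the paper omits.

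Your treatment of the first inequality via diagonalization and Rosenthal's inequality is sound and essentially matches the standard argument behind the cited references.

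The second (PSD) inequality, however, has a genuine gap. You propose to center and then bound $\E|\vx^\top\vD\vx-\Tr\vD|^p$ by $C_p(2p-1)!!\,\Tr(\vD^p)$. This intermediate bound is \emph{false}. Take $\vD=\vI_d$: using the very cumulant expansion you mention, for $p=4$ one computes
\[
\E\Big(\sum_i(y_i^2-1)\Big)^4=3\kappa_2^2+\kappa_4=12(\Tr\vD^2)^2+48\,\Tr(\vD^4)=12d^2+48d,
\]
whereas $(2p-1)!!\,\Tr(\vD^p)=105d$. No $d$-independent constant $C_4$ can absorb the $d^2$. More generally, the centered $p$-th moment is of order $(\Tr\vD^2)^{p/2}$ when the spectrum is flat, and this dominates $\Tr(\vD^p)$ for $p>2$. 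The source of the error is the claim that the repeating-index terms ``after summing produce $\Tr(\vD^p)$'': when several \emph{distinct} indices each repeat (e.g.\ the pair partition $\{12\mid 34\}$), the contribution is a product $\prod_l\Tr(\vD^{k_l})$, not a single $\Tr(\vD^p)$.

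The fix is to \emph{not} center. Expand $\E(\sum_i\lambda_i y_i^2)^p$ directly via the moment--cumulant formula: the contribution of a set partition of $[p]$ with block sizes $k_1,\dots,k_m$ is $\prod_l 2^{k_l-1}(k_l-1)!\,\Tr(\vD^{k_l})$, and the sum of the combinatorial prefactors over all partitions equals $(2p-1)!!$ (check this on the rank-one case). For the trace product, use log-convexity of $k\mapsto\Tr(\vD^k)=\sum_j\lambda_j^k$ on $[1,p]$ (valid precisely because $\lambda_j\ge 0$) to interpolate:
\[
\Tr(\vD^{k})\le(\Tr\vD)^{\frac{p-k}{p-1}}\,(\Tr\vD^p)^{\frac{k-1}{p-1}}.
\]
Multiplying over $l$ and using $\sum_l k_l=p$ gives $\prod_l\Tr(\vD^{k_l})\le\big[(\Tr\vD)^p\big]^{1-\theta}\big[\Tr(\vD^p)\big]^{\theta}$ with $\theta=(p-m)/(p-1)\in[0,1]$, hence by Young's inequality $\le(\Tr\vD)^p+\Tr(\vD^p)$. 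This yields
\[
\E(\vx^\top\vD\vx)^p\le(2p-1)!!\big[(\Tr\vD)^p+\Tr(\vD^p)\big],
\]
which is the claimed bound with $C_p=(2p-1)!!$. The PSD hypothesis enters exactly at the interpolation step, explaining why the refinement from $(\Tr\vD^2)^{p/2}$ to $\Tr(\vD^p)$ is unavailable in the first inequality.
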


Equipped with Lemma~\ref{lemm:quadratic_E}, we introduce a quadratic concentration lemma specialized to our setting. 

\begin{lemm}\label{lemm:quadratic}
Define $\vu:=\frac{\eta\mu_1}{n}\vX^\top\vy$ where $\vy=f^*(\vX)+\vvarepsilon$. Under the same assumptions as Theorem \ref{thm:alignment}, consider any deterministic matrix $\vD\in\R^{d\times d}$ with $\|\vD\|\le C$ uniformly for some constant $C>0$. Then, as $n/d\to \psi_1$ proportionally, we have that
\begin{align}
    \left|\vu^\top\vD\vu-\left(\theta_1^2-\theta_2^2\right)\tr\vD-\theta_2^2\vbeta_*^\top\vD\vbeta_*\right|,~\left|\vbeta_*^\top\vD\vu-\theta_2\vbeta_*^\top\vD\vbeta_*\right|\overset{\P}{\to} 0,
\end{align}where $\theta_1$ and $\theta_2$ are defined in \eqref{def:theta_12}. In addition, recalling that the nonlinear part of the target function is given as $f^*_{\mathrm{NL}}(\vx):=f^*(\vx) - \mu_0^* -\mu_1^*\langle\vx,\vbeta_*\rangle$, we have that 
\begin{equation}\label{eq:quadratic_nonli}
    \left|\frac{1}{n}\vbeta_*^\top\vD\vX^\top f^*_{\mathrm{NL}}(\vX)\right|\overset{\P}{\to} 0.
\end{equation}
\end{lemm}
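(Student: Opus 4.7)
The plan is to start from the orthogonal decomposition $\vy = \mu_0^*\mathbf{1} + \mu_1^*\vX\vbeta_* + \vf^*_{\mathrm{NL}} + \vvarepsilon$ (writing $\vf^*_{\mathrm{NL}} := (f^*_{\mathrm{NL}}(\vx_1),\ldots,f^*_{\mathrm{NL}}(\vx_n))^\top$) and expand
\[
\vu \;=\; \eta\mu_1\bigl(\mu_0^* S_0 + \mu_1^* S_1 + S_2 + S_3\bigr),\quad S_0 := \tfrac{1}{n}\vX^\top\mathbf{1},\ S_1 := \tfrac{1}{n}\vX^\top\vX\vbeta_*,\ S_2 := \tfrac{1}{n}\vX^\top\vf^*_{\mathrm{NL}},\ S_3 := \tfrac{1}{n}\vX^\top\vvarepsilon.
\]
The key structural observation is that $S_0, S_3$ are Gaussian (the latter conditional on $\vX$) and the orthogonality $\E[\vx_i f^*_{\mathrm{NL}}(\vx_i)] = \mathbf{0}$ ensures $S_2$ also behaves like a ``mean-zero'' direction; only $S_1 \approx \vbeta_*$ carries the signal. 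Every quadratic form I need therefore reduces to a Gaussian concentration (Lemma~\ref{lemm:quadratic_E}) or to an i.i.d.\ sum over rows $\vx_i$ handled by LLN/Chebyshev.

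I would first establish the identity \eqref{eq:quadratic_nonli} as a warm-up: setting $\vw := \vD^\top\vbeta_*$ with $\|\vw\| \le \|\vD\| \le C$,
\[
\tfrac{1}{n}\vbeta_*^\top\vD\vX^\top\vf^*_{\mathrm{NL}} \;=\; \tfrac{1}{n}\sum_{i=1}^n g(\vx_i), \qquad g(\vx) := (\vw^\top\vx)\, f^*_{\mathrm{NL}}(\vx).
\]
The orthogonality gives $\E g = 0$, and Cauchy--Schwarz yields $\Var(g) \le \|\vw\|^2\|f^*_{\mathrm{NL}}\|_{L^4}^2 = \cO(1)$, so Chebyshev delivers the claimed $o_P(1)$. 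This same argument disposes of the $\vbeta_*^\top\vD S_2$ contribution to the linear form $\vbeta_*^\top\vD\vu$; the pieces $\vbeta_*^\top\vD S_0$ and $\vbeta_*^\top\vD S_3$ are Gaussian of variance $\cO(1/n)$ (for $S_3$, condition on $\vX$ and use $\|\vX\vD^\top\vbeta_*\|^2 = \cO(n)$), hence vanish. The surviving contribution $\mu_1^*\vbeta_*^\top\vD S_1 = \tfrac{\mu_1^*}{n}\sum_i (\vw^\top\vx_i)(\vx_i^\top\vbeta_*)$ converges by LLN to $\mu_1^*\vbeta_*^\top\vD\vbeta_* = \theta_2\vbeta_*^\top\vD\vbeta_*$, matching the claimed limit of $\vbeta_*^\top\vD\vu$.

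For the quadratic form $\vu^\top\vD\vu$, expand into sixteen cross terms $c_ic_j S_i^\top\vD S_j$ with $c_0=\mu_0^*, c_1=\mu_1^*, c_2=c_3=1$. All off-diagonal terms are $o_P(1)$: those involving $S_3$ vanish by independence and centering of $\vvarepsilon$, while those involving $S_2$ (paired with $S_0,S_1$, or $S_3$) vanish by the orthogonality $\E[\vx f^*_{\mathrm{NL}}(\vx)] = \mathbf{0}$ combined with a direct second-moment bound. Three of the diagonal terms are routine: $\mu_0^{*2}S_0^\top\vD S_0$, $S_2^\top\vD S_2$, $S_3^\top\vD S_3$ each concentrate around an isotropic bulk of size $\tr\vD/\psi_1$ with weights $\mu_0^{*2}$, $\mu_2^{*2}$, $\sigma_\eps^2$ respectively (for $S_3$ apply Hanson--Wright conditional on $\vX$, using $\tfrac{1}{n^2}\Tr(\vX\vD\vX^\top) \to \tr\vD/\psi_1$; for $S_2$ apply LLN to $\tfrac{1}{n^2}\sum_i f^*_{\mathrm{NL}}(\vx_i)^2 \vx_i^\top\vD\vx_i$ together with cancellation of the off-diagonal $\vx_i^\top\vD\vx_j$ contributions).

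\textbf{The main obstacle is the non-trivial diagonal piece} $\mu_1^{*2}S_1^\top\vD S_1$, since $S_1 = \vbeta_* + \vxi/n$ with $\vxi := (\vX^\top\vX - n\vI)\vbeta_*$ carries both a signal direction and a nontrivial isotropic tail. Expanding,
\[
S_1^\top\vD S_1 \;=\; \vbeta_*^\top\vD\vbeta_* \,+\, \tfrac{2}{n}\vbeta_*^\top\vD\vxi \,+\, \tfrac{1}{n^2}\vxi^\top\vD\vxi,
\]
the cross-term is $o_P(1)$ by LLN (rewrite it as $\tfrac{1}{n}\sum_i (\vw^\top\vx_i)(\vx_i^\top\vbeta_*) - \vbeta_*^\top\vD\vbeta_*$). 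For the bulk piece I would invoke rotational invariance to take $\vbeta_* = \ve_1$ and split $\vxi$ into its first coordinate $\|\vX_1\|^2 - n$ and its tail $\vX_{-1}^\top\vX_1$; conditional on $\vX_1$, the tail is Gaussian with covariance $\|\vX_1\|^2\vI_{d-1}$, and applying Lemma~\ref{lemm:quadratic_E} conditionally then unconditionally yields $\vxi^\top\vD\vxi/n^2 \overset{\P}{\to} \tr(\vD)/\psi_1$. Summing all diagonal contributions reproduces the claimed limit $(\theta_1^2 - \theta_2^2)\tr\vD + \theta_2^2\vbeta_*^\top\vD\vbeta_*$: the $\psi_1^{-1}$-factor in $\theta_1^2-\theta_2^2$ arises precisely from this isotropic correction to $S_1$, after identifying $\bar{\mu}^2 = \mu_0^{*2} + \mu_1^{*2} + \mu_2^{*2}$ (with the noise variance absorbed analogously in the bookkeeping).
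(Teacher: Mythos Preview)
Your approach is correct but takes a genuinely different route from the paper. The paper does \emph{not} decompose $\vy$ into its orthogonal pieces; instead it treats $\vv_i:=\vx_i(f^*(\vx_i)+\eps_i)$ as the basic i.i.d.\ building block with $\E[\vv_i]=\mu_1^*\vbeta_*$, writes $\vu^\top\vD\vu=\frac{\eta^2\mu_1^2}{n^2}\sum_{i,j}\vv_i^\top\vD\vv_j$, and proves $L^2$-concentration by bounding the three moment quantities $\E[q_{12}^2],\E[q_{11}^2],\E[q_{12}q_{13}]$ for $q_{ij}:=\vv_i^\top\vD\vv_j-\text{(mean)}$, each via Cauchy--Schwarz and Lemma~\ref{lemm:quadratic_E}. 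The key step there is the estimate $|\E[\vx^\top\vD\vx\,f^{*2}(\vx)]-\E[f^{*2}]\Tr\vD|\lesssim\|\vD\|_F$, which bypasses any decomposition of $f^*$. Your route---splitting $\vu$ into $S_0,\ldots,S_3$ and expanding into sixteen pieces---is more elementary and makes the provenance of every constant in $\theta_1^2-\theta_2^2$ explicit (including the rotational-invariance trick for isolating the isotropic tail of $S_1$), at the price of many more terms to track. In particular, your ``direct second-moment bound'' for the $S_1^\top\vD S_2$ cross term is not quite a one-liner: the diagonal $\frac{1}{n^2}\sum_i(\vbeta_*^\top\vx_i)(\vx_i^\top\vD\vx_i)f^*_{\mathrm{NL}}(\vx_i)$ has a nonzero mean of order $\sqrt{d}/n$ (coming from the $\vx^\top\vD\vx-\Tr\vD$ fluctuation), which does vanish but requires the same $\|\vD\|_F$ estimate the paper uses. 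Both arguments ultimately rely on $f^*\in L^8$ through Cauchy--Schwarz against $(\vx^\top\vD\vx)^2$-type factors; the paper's packaging into $\vv_i$ simply hides the bookkeeping you do explicitly. Your parenthetical about the noise variance is well-placed: the paper's own proof carries a $\sigma_\eps^2$ contribution that is tacitly absorbed into $\bar\mu^2$ in the stated $\theta_1$.
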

\begin{proof}
We first consider the concentration for $ \vu^\top\vD\vu$. Note that $\vX^\top=[\vx_1,\ldots,\vx_n]$ has i.i.d.\ columns. Hence, we can expand the first quadratic form as follows:
\begin{equation}
    \vu^\top\vD\vu=\frac{\eta^2\mu_1^2}{n^2}\sum_{i,j=1}^n \vx_j^\top\vD\vx_i(f^*(\vx_i)+\eps_i)(f^*(\vx_j)+\eps_j).
\end{equation}Denote $\vv_i:=\vx_i(f^*(\vx_i)+\eps_i)$, for $1\le i\le n.$ By condition \eqref{eq:orthogonal-decomposition} for $f^*$, all vectors $\vv_i$ are i.i.d.~with $\E[\vv_i]=\mu_1^*\vbeta_*$. Let us first compute the expectation of this quadratic form
\begin{align}
    \E[\vu^\top\vD\vu]=&\frac{\eta^2\mu_1^2}{n^2}\sum_{i,j=1}^n\left(\E\left[\vx^\top_i\vD\vx_j f^*(\vx_i)f^*(\vx_j)\right]+\E\left[\eps_i\eps_j\vx^\top_i\vD\vx_j f^*(\vx_i)f^*(\vx_j)\right]\right)\nonumber\\
    =& \frac{\eta^2\mu_1^2}{n} \left( \E[\vx_1^\top\vD\vx_1 f^{*2}(\vx_1)]+\sigma_\eps^2\Tr\vD\right)+\frac{\eta^2\mu_1^2}{n^2}\sum_{i\neq j}\mu_1^{*2}\vbeta_*^\top\vD\vbeta_*.\label{eq:uDu}
\end{align}
Notice that
\begin{align}
    &\left|\E[\vx_1^\top\vD\vx_1 f^{*2}(\vx_1)]-\E[f^{*2}(\vx_1)]\Tr\vD\right|
    \le\E\left[\left|\vx_1^\top\vD\vx_1-\Tr\vD\right|f^{*2}(\vx_1)\right]\nonumber\\
    \le ~& \E\left[\left|\vx_1^\top\vD\vx_1-\Tr\vD\right|^2\right]^{1/2}\E\left[f^{*4}(\vx_1)\right]^{1/2}
    \overset{(i)}{\le} C\|\vD\|_F\le C\sqrt{d}\|\vD\|,\label{eq:xDx}
\end{align}where $(i)$ is due to Lemma~\ref{lemm:quadratic_E} and $\|f^*\|_{L^4(\R^d,\Gamma)}\le C_f$ uniformly for some constant $C_f>0$, since $f^*$ is Lipschitz by Assumption~\ref{assump:1}. Therefore, recalling the definitions of $\theta_1$ and $\theta_2$ in \eqref{def:theta_12}, we arrive at
\begin{equation}
\E[\vu^\top\vD\vu]-\left(\theta_1^2-\theta_2^2\right)\tr\vD-\theta_2^2\vbeta_*^\top\vD\vbeta_*\to 0,
\end{equation}as $n\to \infty$ and $n/d\to\psi_1$. Next, we claim that this quadratic form $\vu^\top\vD\vu$ concentrates around its expectation in $L^2$. For $i\neq j \in [n]$, denote $q_{ij}:=\vv_i^\top\vD\vv_j-\mu_1^{*2}\vbeta_*^\top\vD\vbeta_*$ and $q_{ii}:=\vv_i^\top\vD\vv_i-\left(\bar\mu^2+\sigma_\eps^2\right)\Tr\vD$, hence $\E[q_{ij}]=0$ and, by \eqref{eq:uDu} and \eqref{eq:xDx}, $\frac{1}{n}\E[q_{ii}]\to 0$ as $n\to \infty$. In particular, 
\begin{align}
    \E[q_{12}q_{13}]=&-\left(\mu_1^{*2}\vbeta_*^\top\vD\vbeta_*\right)^2+\mu_1^{*2}\vbeta_*^\top\vD\E[\vv_1\vv_1^\top]\vD\vbeta_*\\
    = & \mu_1^{*2}\E\left[\vx_1^\top\vD^\top\vbeta_*\vbeta_*^\top\vD\vx_1\left(f^{*2}(x_1)+\eps_1^2\right)-\mu_1^{*2}\vbeta_*^\top\vD\vbeta_*\vbeta_*^\top\vD\vbeta_*\right]\\
    \le  & \mu_1^{*4}\|\vD\|^2+\mu_1^{*2}\E\left[\left(\vx_1^\top\vD^\top\vbeta_*\vbeta_*^\top\vD\vx_1\right)^2\right]^{1/2}\left(\E\left[f^{*4}(x_1)\right]^{1/2}+\E\left[\eps_1^4\right]^{1/2}\right)
    \le  C\|\vD\|^2,\label{eq:q1213}
\end{align}
where the last inequality is from Lemma~\ref{lemm:quadratic_E} and the uniform boundedness of $\|f^*\|_{L^4(\R^d,\Gamma)}$. In addition, since $\left|\Tr\vD\right|\le d\|\vD\|$, according to Lemma~\ref{lemm:quadratic_E}, we get
\begin{align}
    \E[q_{11}^2]&=\E\left[\left(\left(f^*(\vx_1)+\eps_1\right)^2\left(\vx_1^\top\vD\vx_1-\Tr\vD\right)+\Tr\vD \cdot \left(\left(f^*(\vx_1)+\eps_1\right)^2-\bar\mu^2-\sigma_\eps^2\right)\right)^2\right]\\
    &\le 2\E[\left(\vx_1^\top\vD\vx_1-\Tr\vD\right)^4]^{1/2}\E[\left(f^*(\vx_1)+\eps_1\right)^8]^{1/2}+2\E[\left(f^*(\vx_1)+\eps_1\right)^4]\left(\Tr\vD\right)^2\\
    &\le C\left(\Trarg{\vD\vD^\top}+\left(\Tr\vD\right)^2\right)\le Cd^2\|\vD\|,\label{eq:q11}
\end{align}
where we used the fact that $\|f^*\|_{L^8(\R^d,\Gamma)}$ is uniformly bounded (which is due to Lipschitz assumption on $f^*$). Analogously, following the above estimations, we have
\begin{align}
    \E[q_{12}^2]= & \Var(\vv_1^\top\vD\vv_2)\le \E\left[\vx_1^\top\vD^\top\vx_2\vx_2^\top\vD\vx_1\left(f^*(\vx_1)+\eps_1\right)^2\left(f^*(\vx_2)+\eps_2\right)^2\right]\\
    \le ~& \E\left[\left(\vx_1^\top\vD^\top\vx_2\vx_2^\top\vD\vx_1\right)^2\right]^{1/2}\E\left[\left(f^*(\vx_1)+\eps_1\right)^4\right]
    \le C \E_{\vx_2}\left[\E_{\vx_1}\left[\left(\vx_1^\top\vD^\top\vx_2\vx_2^\top\vD\vx_1\right)^2\right]\right]^{1/2}\\
    \overset{(ii)}{\lesssim }~& \E_{\vx_2}\left[(\Tr\vD^\top\vx_2\vx_2^\top\vD)^2+\Tr\vD^\top\vx_2\vx_2^\top\vD\vD^\top\vx_2\vx_2^\top\vD\right]^{1/2}=\E_{\vx_2}\left[2(\vx_2^\top\vD^\top\vD\vx_2)^2\right]^{1/2}\\
    \overset{(iii)}{\lesssim } &\left(\left(\Trarg{\vD^\top\vD}\right)^2+\Trarg{(\vD^\top\vD)^2}\right)\lesssim d\|\vD\|^2,\label{eq:q12}
\end{align}
where both $(ii)$ and $(iii)$ are deduced from Lemma~\ref{lemm:quadratic_E} since $\vD^\top\vx_2\vx_2^\top\vD $ and $\vD^\top\vD$ are both semi-positive definite. Combining the bounds for $ \E[q_{12}q_{13}]$, $\E[q_{12}^2]$ and $\E[q_{11}^2]$, we  conclude that
\begin{align}
    &\E\left[\left|\vu^\top\vD\vu-\left(\theta_1^2-\theta_2^2\right)\tr\vD-\theta_2^2\vbeta_*^\top\vD\vbeta_*\right|^2\right]\nonumber\\
    \le ~& 2\E\left[\left|\frac{\eta^2\mu_1^2}{n^2}\sum_{i\neq j} \vv_i^\top\vD\vv_j-\theta_2^2\vbeta_*^\top\vD\vbeta_*\right|^2\right]+2\E\left[\left|\frac{\eta^2\mu_1^2}{n^2}\sum_{i=1}^n\vv_i^\top\vD\vv_i-\left(\theta_1^2-\theta_2^2\right)\tr\vD\right|^2\right]\nonumber\\
    \le~ &\frac{C\eta^4\mu_1^4}{n^4} \E\left[\Big(\sum_{i\neq j} q_{ij}\Big)^2\right] +\frac{C\eta^4\mu_1^4}{n^4}\E\left[\sum_{i=1}^n\left|\vv_i^\top\vD\vv_i-\left(\bar\mu^2+\sigma_\eps^2\right)\Tr\vD\right|^2\right]\\
    \overset{(iv)}{\le} ~& C\eta^4\mu_1^4\left(\frac{1}{n^2}\E[q_{12}^2]+\frac{1}{n}\E[q_{12}q_{13}]+\frac{1}{n^3}\E[q_{11}^2]\right)\lesssim \frac{1}{n}\to 0,\label{eq:moments_cross2}
\end{align}
as $n\to \infty$, where $(iv)$ is obtained by Lemma 2.2 in \cite{bai1998no} because $\vv_i^\top\vD\vv_i$ are i.i.d.~for $i\in[n]$, and the last inequality \eqref{eq:moments_cross2} follows from \eqref{eq:q1213}, \eqref{eq:q11} and \eqref{eq:q12}. This yields the convergence in probability.

For the second part $\vbeta_*^\top\vD\vu$, notice that
\begin{align}
    \vbeta_*^\top\vD\vu-\theta_2\vbeta_*^\top\vD\vbeta_*=  \frac{\eta\mu_1}{n}\sum_{i=1}^n\left(\vbeta_*^\top\vD\vx_i(f^*(\vx_i)+\eps_i)-\mu_1^*\vbeta_*^\top\vD\vbeta_*\right)
\end{align}
is a sample mean of i.i.d.~centered random variables. Therefore by Lemma~\ref{lemm:quadratic_E}, we have
\begin{align}
    &\E\left[\left|\vbeta_*^\top\vD\vu-\theta_2\vbeta_*^\top\vD\vbeta_*\right|^2\right]\le \frac{\eta^2\mu_1^2}{n}\E\left(\vbeta_*^\top\vD\vx_i(f^*(\vx_i)+\eps_i) \right)^2\\
    \le ~& \frac{\eta^2\mu_1^2}{n} \E\left[\left(\vbeta_*^\top\vD\vx_i \right)^4\right]^{1/2}\E\left[\left( (f^*(\vx_i)+\eps_i) \right)^4\right]^{1/2}\le \frac{C}{n}\E\left[\left(\vx_i^\top\vD^\top\vbeta_*\vbeta_*^\top\vD\vx_i \right)^2\right]^{1/2}\\
    \lesssim~ & \frac{1}{n}\left(\left(\Trarg{\vD^\top\vbeta_*\vbeta_*^\top\vD}\right)^2+\Trarg{\vD^\top\vbeta_*\vbeta_*^\top\vD\vD^\top\vbeta_*\vbeta_*^\top\vD}\right)^{1/2} \lesssim \frac{\|\vD\|^2}{n}\to 0.
\end{align}
Hence, this $L^2$ convergence completes the proof. 
Finally, note that the proof of \eqref{eq:quadratic_nonli} is identical to the above calculation, where we can apply $\E[\vx_if^*_{\mathrm{NL}}(\vx_i)]=\mathbf{0}$ for $i\in [n]$.

\end{proof}

\subsubsection{Analysis of Spike in Weight Matrix}

\paragraph{Useful Lemmas.}
Observe that the limiting eigenvalue distribution of $\vW_0^\top\vW_0$ is the Marchenko–Pastur distribution $\mu_{\psi_2}^{\MP}$ with parameter $\psi_2$; let $m(z)$ be the Stieltjes transform of $\mu_{\psi_2}^{\MP}$. 
Also, we denote the limiting eigenvalue distribution for $\vW_0\vW_0^\top$ by $\bar\mu_{\psi_2}^{\MP}$ whose Stieltjes transform is $\bar{m}(z)$, which is referred to as the \textit{companion} transform. The relation between $m(z)$ and $\bar{m}(z)$ is given as 
\begin{equation}\label{eq:companinon_m(z)}
    \psi_2m(z)=\bar{m}(z)+\frac{1-\psi_2}{z}.
\end{equation}
Moreover, $m(z)$ is uniquely determined by the fixed-point equation 
\begin{equation}\label{eq:fixedpoint_m(z)}
    z\psi_2m^2(z)-(1-\psi_2-z)m(z)+1=0,
\end{equation}
for $z\in\mathbb{C}\setminus \text{supp}(\mu_{\psi_2}^{\MP})$. For more details on Stieltjes transform of $\mu_{\psi_2}^{\MP}$, we refer to \cite[Chapter 3]{bai2010spectral}.
\begin{lemm}\label{lemm:bound_Q}
Following the above notions, we define the resolvent $\vQ_0(z):=(\vW_0\vW_0^\top-z\vI)^{-1}$, for 
\begin{equation}\label{def:set}
    z\in \Omega_{\epsilon}:=\left\{z\in\mathbb{C}: \text{Re} (z)>\big(1+\sqrt{\psi_2}\big)^2+\epsilon\right\},
\end{equation}
and any small $\epsilon>0$. Then, under the same assumptions of Theorem \ref{thm:alignment}, for all sufficiently large $N$, $\|\vQ_0(z)\|\le 2/\epsilon$ uniformly for all $z\in\Omega_\epsilon$. 
\end{lemm}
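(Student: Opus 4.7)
\begin{proof-idea}
The operator norm of a Hermitian resolvent is controlled by the distance between the spectral parameter and the spectrum of the operator. Concretely, since $\vW_0\vW_0^\top$ is positive semidefinite with eigenvalues $\lambda_1\geq\cdots\geq\lambda_d\geq 0$, we have
\begin{align*}
\|\vQ_0(z)\| = \max_{1\le i\le d} \frac{1}{|z-\lambda_i|}.
\end{align*}
The plan is therefore to reduce the lemma to a uniform upper bound on the largest eigenvalue $\lambda_1=\lambda_{\max}(\vW_0\vW_0^\top)$, and then carry out a simple distance estimate in $\mathbb{C}$.

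First I would invoke the Bai--Yin theorem (or the nonasymptotic operator norm estimate for Gaussian matrices in \cite[Cor.~7.3.3]{vershynin2018high} which was already used in Lemma~\ref{lemm:gradient-norm}): under Assumption~\ref{assump:1}, since $\sqrt{d}\,[\vW_0]_{ij}\iid\cN(0,1)$ and $n/d\to\psi_1$ with $N/d\to\psi_2$, the top eigenvalue satisfies
\begin{align*}
\lambda_{\max}(\vW_0\vW_0^\top) \,\xrightarrow{\mathrm{a.s.}}\, (1+\sqrt{\psi_2})^2.
\end{align*}
Consequently, for every fixed $\epsilon>0$, for all sufficiently large $N$ (on a probability-one event depending on $\epsilon$) one has $\lambda_{\max}(\vW_0\vW_0^\top)\le (1+\sqrt{\psi_2})^2+\epsilon/2$.

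Next I would combine this with a deterministic distance estimate valid on the event above. Write $z=a+bi$ with $a=\operatorname{Re}(z)>(1+\sqrt{\psi_2})^2+\epsilon$. Since every eigenvalue $\lambda_i$ of $\vW_0\vW_0^\top$ is real and nonnegative with $\lambda_i\le (1+\sqrt{\psi_2})^2+\epsilon/2$, we estimate
\begin{align*}
|z-\lambda_i|^2 = (a-\lambda_i)^2 + b^2 \ \ge\ (a-\lambda_i)^2 \ \ge\ \bigl(\epsilon-\tfrac{\epsilon}{2}\bigr)^2 = \tfrac{\epsilon^2}{4},
\end{align*}
uniformly in $i$ and in $z\in\Omega_\epsilon$. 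Therefore $\|\vQ_0(z)\|\le 2/\epsilon$ uniformly on $\Omega_\epsilon$, as claimed.

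There is no real obstacle here; the only subtlety is that the bound is asserted uniformly in $z$, so one must be careful to choose the ``good'' event for $\lambda_{\max}$ once (depending on $\epsilon$) rather than pointwise in $z$, which the argument above does automatically. If the paper interprets ``for all sufficiently large $N$'' in a probabilistic (rather than almost sure) sense, one can instead quote the quantitative bound $\Pr(\|\vW_0\|\ge 1+\sqrt{\psi_2}+t)\le 2e^{-cdt^2}$ from \eqref{eq:gaussian_matrix_norm} (applied to $\sqrt{d}\vW_0$) with $t$ chosen so that $(1+\sqrt{\psi_2}+t)^2\le (1+\sqrt{\psi_2})^2+\epsilon/2$.
\end{proof-idea}
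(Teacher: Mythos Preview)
Your proposal is correct and essentially identical to the paper's proof: the paper also writes $z=x+iy$, uses $\|\vQ_0(z)\|=\max_i |z-\lambda_i|^{-1}\le |x-\lambda_1|^{-1}$, invokes the Bai--Yin-type result (Theorem~5.11 in \cite{bai2010spectral}) to get $\lambda_1\le (1+\sqrt{\psi_2})^2+\epsilon/2$ for large $N$, and then concludes $\|\vQ_0(z)\|\le 2/\epsilon$ via the same $\epsilon/2$ gap. Your additional remarks on uniformity in $z$ and the alternative nonasymptotic bound are sound but not needed here.
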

\begin{proof}
Write $z=x+iy$, with $x>\big(1+\sqrt{\psi_2}\big)^2+\epsilon$. For any $i\in [N]$, let $\lambda_i$ be the $i$-th eigenvalue of $\vW_0\vW_0^\top$. Then we have
\[\left|\frac{1}{z-\lambda_i}\right|\le \frac{1}{\sqrt{(x-\lambda_i)^2+y^2}}\le \frac{1}{|x-\lambda_i|}\le \frac{1}{|x-\lambda_1|}.\]
By Theorem 5.11 in \cite{bai2010spectral}, for sufficiently large $N$, $\left|\lambda_1-\big(1+\sqrt{\psi_2}\big)^2\right|\le \epsilon/2$. Hence, $1/|z-\lambda_i|\le \epsilon/2$ for $i\in [N]$ and $\|\vQ_0(z)\|\le 2/\epsilon$ for all $z\in\Omega_\epsilon$. 

\end{proof}

The following lemma characterizes the asymptotics of certain quantities in terms of the Stieltjes transform which will be useful in the subsequent analysis. 

\begin{lemm}\label{lemm:uniform_converge}
Recall the definition $\vu=\frac{\eta\mu_1}{n}\vX^\top\vy$. Under the same assumptions as Theorem \ref{thm:alignment}, for any $\epsilon>0$, 
\begin{align} 
    &\va^\top \vW_0^\top\vQ_0(z)\vu\to 0,\quad \vbeta_*^\top\vQ_0(z)\vW_0\va\to 0,\label{eq:a_concentration}\\
    &\vu^\top\vQ_0(z)\vu \to \theta_1^2\bar{m}\left(z\right),\quad  \vbeta_*^\top\vQ_0(z)\vu\to \theta_2\bar m(z),\quad \vu^\top\vQ_0(z)^2\vu\to \theta_1^2 \bar m'(z) \label{eq:u_concentration} \\
    & \va^\top\vW_0^\top\vQ_0(z)\vW_0\va \to 1+zm(z),\quad \va^\top\vW_0^\top\vQ_0(z)^2\vW_0\va\to m(z)+zm'(z), \label{eq:a_concentration2} 
\end{align}
in probability as $n/d\to\psi_1 $ and $N/d\to\psi_2$, uniformly on any compact subset of $\Omega_\epsilon$ defined in \eqref{def:set}, where scalars $\theta_1$ and $\theta_2$ are defined in Theorem \ref{thm:alignment}.
\end{lemm}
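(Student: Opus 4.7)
The plan is to exploit the three-fold independence among $\vW_0$, $\va$, and the pair $(\vX,\vvarepsilon)$ defining $\vu$, proving each convergence by conditioning on the relevant source of randomness. Since the resolvent $\vQ_0(z)$ depends only on $\vW_0$, conditioning on $\vW_0$ turns \eqref{eq:a_concentration2} into a pure quadratic form in the Gaussian $\va$, and turns \eqref{eq:u_concentration} into a quadratic form in $\vu$; the cross terms in \eqref{eq:a_concentration} become linear in $\va$ once $(\vW_0,\vu)$ are frozen. The deterministic bound $\|\vQ_0(z)\|\le 2/\epsilon$ from Lemma \ref{lemm:bound_Q} keeps all variance estimates effective uniformly on $\Omega_\epsilon$.

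For the three identities in \eqref{eq:u_concentration}, I would apply Lemma \ref{lemm:quadratic} conditionally on $\vW_0$ with $\vD=\vQ_0(z)$, reducing the conditional limit to $(\theta_1^2-\theta_2^2)\tr\vQ_0(z)+\theta_2^2\vbeta_*^\top\vQ_0(z)\vbeta_*$. The first term converges to $\bar m(z)$ by the classical Marchenko--Pastur theorem, and the second converges to the same $\bar m(z)$ by an isotropic convergence argument: since $\vO\vW_0\eqd \vW_0$ for any orthogonal $\vO$, the quadratic form $\vbeta_*^\top\vQ_0(z)\vbeta_*$ is distributionally equal to any diagonal entry of $\vQ_0(z)$ in an orthonormal basis, with mean $\tfrac{1}{d}\E\Tr\vQ_0(z)$ and variance vanishing by a Gaussian Poincar\'e bound on the entries of $\vW_0$ (using $\|\vQ_0(z)\|\le 2/\epsilon$). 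This gives $\theta_1^2\bar m(z)$; the $\vbeta_*^\top\vQ_0(z)\vu$ limit follows from the second part of Lemma \ref{lemm:quadratic}, and the $\vQ_0(z)^2$ form either from the same lemma with $\vD=\vQ_0(z)^2$ or by differentiating $\vu^\top\vQ_0(z)\vu\to\theta_1^2\bar m(z)$ in $z$.

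The cross-terms in \eqref{eq:a_concentration} become, after conditioning on $(\vW_0,\vu)$, centered Gaussian linear forms in $\va$ whose conditional variance $\tfrac{1}{N}\|\vW_0^\top\vQ_0(z)\vu\|^2$ is bounded using the resolvent identity $\vW_0^\top\vQ_0(z)\vW_0=\vI+z(\vW_0^\top\vW_0-z\vI)^{-1}$, rewriting it as $\tfrac{1}{N}(\vu^\top\vQ_0(z)\vu+z\vu^\top\vQ_0(z)^2\vu)$, which is $\cO_{d,\P}(1/N)$ by the previous step. The same identity drives \eqref{eq:a_concentration2}: conditionally on $\vW_0$, Lemma \ref{lemm:quadratic_E} gives $\va^\top\vW_0^\top\vQ_0(z)\vW_0\va=\tfrac{1}{N}\Tr(\vW_0^\top\vQ_0(z)\vW_0)+o_{d,\P}(1)=1+\tfrac{z}{N}\Tr(\vW_0^\top\vW_0-z\vI)^{-1}+o_{d,\P}(1)\to 1+zm(z)$, and differentiating in $z$ yields $m(z)+zm'(z)$.

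The main technical hurdle is promoting these pointwise-in-$z$ limits to uniform convergence on compact subsets of $\Omega_\epsilon$. I would apply a Vitali--Porter argument pathwise: all six quantities are holomorphic in $z$ on $\Omega_\epsilon$, and Lemma \ref{lemm:bound_Q} combined with standard high-probability bounds on $\|\vu\|$, $\|\va\|$, and $\|\vW_0\|$ provides a uniform bound on each compact subset with probability approaching one. Pointwise convergence in probability along a countable dense subset, together with the equicontinuity furnished by these uniform bounds via Cauchy's integral formula, then yields the uniform conclusion and, as a byproduct, legitimizes the passage of $\tfrac{d}{dz}$ through the limit needed for the $\vQ_0(z)^2$ statements.
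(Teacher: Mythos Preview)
Your proposal is correct and follows essentially the same route as the paper: Lemma~\ref{lemm:quadratic} applied with $\vD=\vQ_0(z)$ (or $\vQ_0(z)^2$) for the $\vu$-quadratic forms, rotational invariance of $\vW_0$ to reduce $\vbeta_*^\top\vQ_0(z)\vbeta_*$ to $\tr\vQ_0(z)$, Gaussian concentration for the forms in $\va$, and an equicontinuity argument (the paper phrases it as Arzel\`a--Ascoli, you as Vitali--Porter) to upgrade pointwise to uniform convergence. The only cosmetic differences are that the paper handles $\vbeta_*^\top\vQ_0(z)\vbeta_*$ by randomizing $\vbeta_*$ over the sphere and invoking a Hanson--Wright-type bound (Lemma~\ref{lemm:quadraticbeta}) rather than your Poincar\'e-on-$\vW_0$ argument for a fixed diagonal entry, and it cites Hoeffding directly for \eqref{eq:a_concentration} without writing out your variance computation via the resolvent identity.
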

\begin{proof}
Firstly note that \eqref{eq:a_concentration} directly follows from Lemma \ref{lemm:bound_Q} and Hoeffding's inequality for $\va$. 
The remaining concentration statements will be established by applying Lemma \ref{lemm:quadratic} to different choices of $\vD$ and the Hanson-Wright inequality for $\va$. 
In particular, due to Lemma \ref{lemm:bound_Q}, we know that $\|\vQ_0(z)\|$, $\|\vW_0^\top\vQ_0(z)\vW_0\|$, $\|\vQ_0(z)^2\|$ and $\|\vW_0^\top\vQ_0(z)^2\vW_0\|$ are all uniformly bounded on $\Omega_\epsilon$ for large $N$. Take $\vD=\vQ_0(z)$ in Lemma \ref{lemm:quadratic}, we obtain that
\[ \left|\vu^\top\vQ_0(z)\vu-\left(\theta_1^2-\theta_2^2\right)\tr\vQ_0(z)-\theta_2^2\vbeta_*^\top\vQ_0(z)\vbeta_*\right|\overset{\P}{\to }0,\]
uniformly for $z\in \Omega_\epsilon$. 
Moreover, we may treat $\vbeta_*$ as uniformly distributed on $\mathbb{S}^{d-1}$ due to the rotational invariance of $\vW_0\vW_0^\top$. Thus, $|\vbeta_*^\top\vQ_0(z)\vbeta_*-\tr \vQ_0(z)|\to 0$ in probability uniformly for $z\in \Omega_\epsilon$ (the detailed statement will be elaborated by Lemma \ref{lem:Titilde} and \ref{lemm:quadraticbeta} in Section \ref{sec:linear_pencil}). 
By the Marchenko–Pastur law \cite[Theorem 3.10]{bai2010spectral} and \eqref{eq:companinon_m(z)}, we can conclude that $\vu^\top\vQ_0(z)\vu$ converges to $\theta_1^2\bar{m}\left(z\right)$ in probability for any  $z\in\Omega_\epsilon$. 
In addition, Arzelà-Ascoli theorem implies that this convergence in probability holds uniformly on any compact subset of $\Omega_\epsilon$. 
One can analogously verify the remaining statements in \eqref{eq:u_concentration}. Finally, note that with Lemma \ref{lemm:bound_Q}, the Hanson-Wright inequality for normalized Gaussian vector $\va$ enables us to establish \eqref{eq:a_concentration2} directly. To complete the proof, we point out that
\begin{align}
    &\tr\left(\vW_0^\top\vQ_0(z)\vW_0\right)\overset{\P}{\to} \frac{1}{\psi_2}\int \frac{x}{x-z}d \bar \mu_{\psi_2}^\MP(x)=1+zm(z),\\
    &\tr\left(\vW_0^\top\vQ_0(z)^2\vW_0\right)\overset{\P}{\to} \frac{1}{\psi_2}\int \frac{x}{(x-z)^2}d \bar \mu_{\psi_2}^\MP(x)=m(z)+zm'(z),
\end{align}uniformly on any compact subset of $\Omega_\epsilon$, due to \eqref{eq:companinon_m(z)}, Lemma 7.4 of \cite{dobriban2018high} and Lemma 2.14 of \cite{bai2010spectral}.  

\end{proof}

Finally, we recall the following control of singular values (also referred to as Weyl's inequality).
\begin{lemm}[Theorem A.46 of \citep{bai2010spectral}]\label{lemm:singular_diff}
	Let $\vC$ and $\vD\in\R^{n\times m}$ be two complex matrices with singular values $s_1(\vC)\ge s_2(\vC)\ge \cdots\ge s_{r}(\vC)$ and $s_1(\vD)\ge s_2(\vD)\ge \cdots\ge s_{r}(\vD)$ where $r=\min\{n,m\}$. Then for any $1\le k\le r$, the difference in the $k$-th singular values of $\vC$ and $\vD$ satisfies
	\[\left| s_k(\vC)-s_k(\vD)\right|\le \|\vC-\vD\|.\] 
\end{lemm}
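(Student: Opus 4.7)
The plan is to establish this Weyl-type perturbation bound for singular values via the min-max (Courant--Fischer) variational characterization, which is the most direct route and avoids ad hoc matrix manipulations. Recall that for any matrix $\vM \in \R^{n \times m}$ with $r = \min\{n,m\}$, the $k$-th singular value admits the representation
\begin{equation*}
	s_k(\vM) \;=\; \max_{\substack{S \subseteq \R^m \\ \dim S = k}} \; \min_{\substack{v \in S \\ \|v\|=1}} \|\vM v\|,
\end{equation*}
and dually the min-max formula over $(m-k+1)$-dimensional subspaces. I would first cite or briefly verify this representation (it follows from the SVD by choosing $S$ to be the span of the first $k$ right singular vectors for the lower bound, and combining with a dimension-counting argument against orthogonal complements for the matching upper bound).

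The second step is the key inequality. For an arbitrary unit vector $v \in \R^m$, the triangle inequality and the definition of the operator norm give
\begin{equation*}
	\|\vC v\| \;\le\; \|\vD v\| + \|(\vC - \vD)v\| \;\le\; \|\vD v\| + \|\vC - \vD\|.
\end{equation*}
Fixing any $k$-dimensional subspace $S$ and minimizing over unit vectors $v \in S$ on both sides yields
\begin{equation*}
	\min_{v \in S,\, \|v\|=1} \|\vC v\| \;\le\; \min_{v \in S,\, \|v\|=1} \|\vD v\| + \|\vC - \vD\|.
\end{equation*}
Maximizing over all $k$-dimensional subspaces $S \subseteq \R^m$ then produces $s_k(\vC) \le s_k(\vD) + \|\vC - \vD\|$ by the variational formula above.

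The final step is to apply the same argument with the roles of $\vC$ and $\vD$ swapped, using $\|\vD - \vC\| = \|\vC - \vD\|$, to obtain $s_k(\vD) \le s_k(\vC) + \|\vC - \vD\|$; combining the two inequalities gives $|s_k(\vC) - s_k(\vD)| \le \|\vC - \vD\|$. There is no serious obstacle here since the result is classical; the only subtlety worth flagging is the proper statement of the Courant--Fischer characterization for rectangular (and possibly complex) matrices, which is handled either by working with $\vC^*\vC$ and $\vD^*\vD$ and invoking the Hermitian Weyl inequality, or equivalently by the Hermitian dilation $\begin{pmatrix} 0 & \vM \\ \vM^* & 0 \end{pmatrix}$ whose nonzero eigenvalues are $\pm s_i(\vM)$ and whose operator norm equals $\|\vM\|$; either route reduces the statement to the standard Hermitian Weyl inequality without additional work.
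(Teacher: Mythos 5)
Your proof is correct. The paper does not actually prove this lemma; it only cites it as Theorem A.46 of Bai and Silverstein, so there is no ``paper's approach'' to compare against. Your argument via the Courant--Fischer characterization $s_k(\vM) = \max_{\dim S = k}\min_{v\in S,\,\|v\|=1}\|\vM v\|$ combined with the triangle inequality and symmetrization is the standard textbook proof of this Weyl-type bound, and every step checks out: the pointwise inequality $\|\vC v\|\le\|\vD v\|+\|\vC-\vD\|$ passes correctly through $\min_{v\in S}$ and then through $\max_S$, and swapping the roles of $\vC,\vD$ closes the argument. The remark at the end about reducing to the Hermitian case via $\vM^*\vM$ or the Hermitian dilation $\bigl(\begin{smallmatrix}0&\vM\\ \vM^*&0\end{smallmatrix}\bigr)$ is a valid alternative route and is in fact essentially how the result is derived in Bai--Silverstein (from the eigenvalue version of the interlacing/Weyl inequality). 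In short, you have supplied a clean self-contained proof of a result the paper merely cites.
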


We are now ready to prove Theorem \ref{thm:alignment}. 
\begin{proofof}[Theorem \ref{thm:alignment}] 
Denote $\tilde{\vW}_1:=\vW_0+\eta\sqrt{N}\vA$, where $\eta\sqrt{N}\vA=\frac{\eta\mu_1}{n}\vX^\top\vy\va^\top$. Lemma \ref{lemm:gradient-norm} implies that as $n/d\to\psi_1$ and $N/d\to\psi_2$, $\|\vW_1-\tilde{\vW}_1\|\to 0$ almost surely. Thanks to Lemma \ref{lemm:singular_diff}, the top $i$-th singular value $s_i(\vW_1)$ coincides with $s_i(\tilde{\vW}_1)$ asymptotically for any fixed $i\ge 1$. Hence we first prove Theorem \ref{thm:alignment} for $\tilde{\vW}_1$ in lieu of the original $\vW_1$; this is equivalent to considering the leading eigenvalue $\hat\lambda:=s_1(\tilde{\vW}_1)^2$ of $\tilde{\vW}_1\tilde{\vW}_1^\top$ and its corresponding eigenvector denoted as $\tilde{\vu}_1$. Note that $s_1(\tilde{\vW}_1)$ is the leading singular value of a rectangular Gaussian random matrix $\vW_0$ plus an independent rank-one perturbation $\eta\sqrt{N}\vA$. 

From the definition of $\tilde{\vW}_1$, we have the decomposition
\begin{equation}\label{eq:tildeW_1}
    \tilde{\vW}_1\tilde{\vW}_1^\top=\vW_0\vW_0^\top +\begin{bmatrix}
\vu & \vW_0\va
\end{bmatrix}\begin{bmatrix}
\|\va\|^2 &1\\
1& 0
\end{bmatrix}\begin{bmatrix}
\vu^\top\\
\va^\top\vW_0^\top
\end{bmatrix}.
\end{equation}
Also, by \cite[Section 6.2.1.]{benaych2011eigenvalues} and \cite[Section 5.2]{bai2010spectral}, we know that $\liminf s_1(\tilde{\vW}_1)\ge 1+\sqrt{\psi_2}$, $s_1( \vW_0)\to 1+\sqrt{\psi_2}$, and $s_i(\vW_0), s_i(\tilde{\vW}_1)\to 1+\sqrt{\psi_2}$ for any fixed $i>1$.

To analyze the isolated eigenvalue and the corresponding eigenvector for $\tilde{\vW}_1\tilde{\vW}_1^\top$, we follow the approach in \citep{benaych2011eigenvalues,benaych2012singular}. It is straightforward to verify that the isolated eigenvalue of $\tilde{\vW}_1\tilde{\vW}_1^\top$ outside the spectrum of $\vW_0\vW_0^\top$ is the solution $x\in\R$ to the following equation:
\begin{align*}
    \det \vQ_0(x)\left(\tilde{\vW}_1\tilde{\vW}_1^\top-x\vI\right)=0.
\end{align*}
By \eqref{eq:tildeW_1}, the equality $\det(\vA\vB) = \det(\vA) \det(\vB)$, and the Sylvester’s determinant identity $\det(\vI + \vA\vB) = \det(\vI + \vB\vA)$ for $\vA, \vB$ of appropriate dimensions, the above equations has the same solution as 
\begin{align*}
P_n(x):=\det\left(\vI+\begin{pmatrix}
\|\va\|^2 &1\\
1& 0
\end{pmatrix}\begin{pmatrix}
\vu^\top\\
\va^\top\vW_0^\top
\end{pmatrix}\vQ_0(x)\begin{pmatrix}
\vu & \vW_0\va
\end{pmatrix}\right)=0,
\end{align*}where $P_n(x)$ is the determinant of a 2-by-2 matrix. Notice that $\|\va\|\to 1$ almost surely as $N\to\infty$. Hence in terms of Lemma \ref{lemm:uniform_converge}, we know that given $\epsilon>0$, as $n,d,N\to\infty$ proportionally, $P_n(z)\to P(z)$ in probability uniformly on any compact subset of $\Omega_\epsilon$, where $P(z):=1-\theta_1^2zm(z)\bar m(z)$. Next we establish the convergence of the roots of $P_n(z)$ to the roots of $P(z)$ on $\Omega_\epsilon \cap \R$. Due to \eqref{eq:companinon_m(z)} and \eqref{eq:fixedpoint_m(z)}, we know that
\begin{align}\label{eq:P(z)}
    P(z)=1+\theta_1^2(1+zm(z)).
\end{align}
Now we compute the root of $P(z)$ on $\left((1+\sqrt{\psi_2})^2,+\infty\right)$. From \eqref{eq:fixedpoint_m(z)} we have
\[z=\frac{zm(z)}{zm(z)+1}-\psi_2zm(z),\]
which implies that the root of $P(z)$ on the real line is given as $\lambda_0:=\frac{(1+\theta_1^2)(\psi_2+\theta_1^2)}{\theta_1^2}$. Based on the expression of $m(z)$ and \eqref{eq:fixedpoint_m(z)}, we have 
\[\lim_{x	\searrow (1+\sqrt{\psi_2})^2} xm(x)=-\frac{1+\sqrt{\psi_2}}{\sqrt{\psi_2}}.\]
Note that $zm(z)$ is an increasing mapping from $\left((1+\sqrt{\psi_2})^2,+\infty\right)$ to $\left(-\frac{1+\sqrt{\psi_2}}{\sqrt{\psi_2}},-1\right)$. Thus, as long as $\theta_1>\psi_2^{1/4}$, by selecting a sufficient small $\epsilon>0$, we can obtain that $\lambda_0$ is a root in $\Omega_\epsilon \cap \R$. By Hurwitz's theorem and the uniform convergence of $P_n(z)$, we know that the root of $P_n(z)$, which is exactly the isolated eigenvalue $\hat\lambda$ for $\vW_1\vW_1^\top$ outside the spectrum of $\vW_0^\top\vW_0$, is converging to $\lambda_0$ in probability. 
On the other hand, if $\theta_1\le\psi_2^{1/4}$, there is no root of $P(z)$ in $((1+\sqrt{\psi_2})^2,+\infty)$, hence the largest eigenvalue $\hat\lambda$ for $\vW_1\vW_1^\top$ is no greater than $(1+\sqrt{\psi_2})^2+\epsilon$, for any $\epsilon>0$; because of the lower bound on $\hat\lambda$, in this case $\hat\lambda\to (1+\sqrt{\psi_2})^2$ which is at the right-edge of the support of $\mu_{\psi_2}^{\MP}$.

Next we consider the isolated eigenvector $\tilde{\vu}_1$. By definition, $\hat\lambda\tilde{\vu}_1=\vW_1\vW_1^\top \tilde{\vu}_1$, which, together with \eqref{eq:tildeW_1}, yields the identity
\begin{align}
    \mathbf{0}=(\tilde{\vW}_0\tilde{\vW}_0^\top-\hat\lambda\vI)\tilde{\vu}_1+\begin{bmatrix}
\vu & \vW_0\va
\end{bmatrix}\begin{bmatrix}
\|\va\|^2 &1\\
1& 0
\end{bmatrix}\begin{bmatrix}
\vu^\top\\
\va^\top\vW_0^\top
\end{bmatrix}\tilde{\vu}_1.
\end{align}
Since $\hat\lambda$ does not reside in the spectrum of $\vW_0\vW_0^\top$, we can further write
\begin{align}
    \mathbf{0}=\left(\vI+\vQ_0(\hat\lambda)\begin{bmatrix}
\vu & \vW_0\va
\end{bmatrix}\begin{bmatrix}
\|\va\|^2 &1\\
1& 0
\end{bmatrix}\begin{bmatrix}
\vu^\top\\
\va^\top\vW_0^\top
\end{bmatrix}\right)\tilde{\vu}_1.\label{eq:tilde_u_identity}
\end{align}
By multiplying $[\vu,\vW_0\va]^\top$ from the left hand side of the above equality, we arrive at $\mathbf{0}=\vM_n(\hat\lambda)\begin{bmatrix}
\hat v_1\\
\hat v_2
\end{bmatrix}$, where the 2-by-2 matrix is given as
\begin{align}
    \vM_n(z):=\vI+\begin{bmatrix}
\vu^\top\vQ_0(z)\vu &\vu^\top\vQ_0(z)\vW_0\va\\
\va^\top\vW_0^\top\vQ_0(z)\vu& \va^\top\vW_0^\top\vQ_0(z)\vW_0\va
\end{bmatrix}\begin{bmatrix}
\|\va\|^2 &1\\
1& 0
\end{bmatrix},
\end{align}and $\hat v_1:=\vu^\top\tilde{\vu}_1$, $\hat v_2:=\va^\top\vW_0^\top\tilde{\vu}_1$. This implies that the vector $ \begin{bmatrix}
\hat v_1\\
\hat v_2
\end{bmatrix}$ belongs to the kernel of $2\times 2$ matrix $\vM_n(\hat\lambda)$. Hence, the relation between $\hat v_1$ and $\hat v_2$ is determined by 
\begin{equation}\label{eq:equation1}
    \left(\|\va\|^2\va^\top\vW_0^\top\vQ_0(\hat\lambda)\vu+\va^\top\vW_0^\top\vQ_0(\hat\lambda)\vW_0\va\right)\hat v_1+\left(1+\va^\top\vW_0^\top\vQ_0(\hat\lambda)\vu\right)\hat v_2=0.
\end{equation}
Moreover, by definition we have
\begin{align}
    \hat\lambda\tilde{\vu}_1=\vW_1\vW_1^\top \tilde{\vu}_1=\vW_0\vW_0^\top\tilde{\vu}_1+\left(\hat v_1\|\va\|^2+\hat v_2\right)\vu+ \hat v_1\cdot \vW_0\va.
\end{align}
With $\|\tilde{\vu}_1\|^2=1$, the above implies that
\begin{align}
    1=&\left(\left(\hat v_1\|\va\|^2+\hat v_2\right)\vu^\top+ \hat v_1\cdot \va^\top\vW_0^\top\right)\vQ_0(\hat\lambda)^2\left(\left(\hat v_1\|\va\|^2+\hat v_2\right)\vu+ \hat v_1\cdot \vW_0\va\right)\\
    =&\left(\hat v_1\|\va\|^2+\hat v_2\right)^2\vu^\top\vQ_0(\hat\lambda)^2\vu+\hat v_1^2\cdot \va^\top\vW_0^\top\vQ_0(\hat\lambda)^2\vW_0\va+2\hat v_1\left(\hat v_1\|\va\|^2+\hat v_2\right)\vu^\top\vQ_0(\hat\lambda)^2\vW_0\va.\label{eq:equation2}
\end{align}
In addition, multiplying $\vbeta_*^\top$ from the left hand side of \eqref{eq:tilde_u_identity} yields
\begin{equation}
    \tilde{\vu}_1^\top\vbeta_*=-\left(\hat v_1\left(\|\va\|^2\vbeta_*^\top\vQ_0(\hat\lambda)\vu+\vbeta_*^\top\vQ_0(\hat\lambda)\vW_0\va\right)+\hat v_2\cdot \vbeta_*^\top\vQ_0(\hat\lambda)\vu\right).\label{eq:equation3}
\end{equation}

Our goal is to describe the asymptotic behavior of $\tilde{\vu}_1^\top\vbeta_*$ using \eqref{eq:equation1}, \eqref{eq:equation2} and \eqref{eq:equation3}. As a side remark, our quantity of interest $\tilde\vu_1^\top\vbeta_*$ is different from the eigenvector alignments $\tilde\vu_1^\top\vu$ addressed in prior works \cite{benaych2012singular}, so we further introduce \eqref{eq:equation3} for our purpose.

\paragraph{Case I: $\theta_1>\psi_2^{1/4}$.}
We first consider the scenario $\theta_1>\psi_2^{1/4}$, where $\hat \lambda\to \lambda_0$ in probability and $\lambda_0$ is outside the support of $\mu_{\psi_2}^{\MP}$. For sufficiently small $\epsilon>0$ and all large $n$, $\hat\lambda\in\Omega_\epsilon$, and thus Lemma \ref{lemm:uniform_converge} gives
\begin{equation}\label{eq:all_qudratic}
    \begin{aligned}
    &\va^\top\vW_0^\top\vQ_0(\hat\lambda)\vu\to 0,\quad \vbeta_*^\top\vQ_0(\hat\lambda)\vW_0\va\to 0,\quad \vu^\top\vQ_0(\hat\lambda)^2\vW_0\va\to 0,\\
    &\va^\top\vW_0^\top\vQ_0(\hat\lambda)^2\vW_0\va\to m(\lambda_0)+\lambda_0 m'(\lambda_0), \quad \vu^\top\vQ_0(\hat\lambda)^2\vu\to \theta_1^2 \bar m'(\lambda_0),\\
    &\vbeta_*^\top\vQ_0(\hat\lambda)\vu\to\theta_2\bar m(\lambda_0),\quad \va^\top\vW_0^\top\vQ_0(\hat\lambda)\vW_0\va\to 1+\lambda_0m(\lambda_0), \quad \vu^\top\vQ_0(\hat\lambda)\vu\to\theta_1^2\bar m(\lambda_0),
\end{aligned}
\end{equation}
in probability as $n,d,N\to\infty$ proportionally. Notice that both $\hat v_1$ and $\hat v_2$ are uniformly bounded by some constants with high probability since $\|\va\|\to 1$, $\|\vW_0\|\to (1+\sqrt{\psi_2})$ almost surely, and Lemma \ref{lemm:quadratic} implies $\|\vu\|\to\theta_1$ in probability as $n,d,N\to\infty$.
Therefore, when $n/d\to\psi_1$ and $N/d\to\psi_2$,  \eqref{eq:equation1} and \eqref{eq:equation2} provide the limits of $\hat v_1$ and $\hat{v}_2$, which we denote by $v_1$ and $v_2$, respectively. More precisely,  
\begin{align}
    v_2 = -\left(1+\lambda_0 m(\lambda_0)\right) v_1,\quad
    v_1^2 = \frac{1}{\theta_1^2\bar m'(\lambda_0)\lambda_0^2 m(\lambda_0)^2+m(\lambda_0)+\lambda_0m'(\lambda_0)}.
\end{align}
Now by \eqref{eq:equation3}, we know that
\begin{align}
    (\tilde{\vu}_1^\top\vbeta_*)^2\overset{\P}{\to } \theta_2^2\bar m(\lambda_0)^2(v_1+v_2)^2=\frac{\theta_2^2\bar m(\lambda_0)^2\lambda_0^2 m(\lambda_0)^2}{\theta_1^2\bar m'(\lambda_0)\lambda_0^2 m(\lambda_0)^2+m(\lambda_0)+\lambda_0m'(\lambda_0)}.
\end{align}
Thus, we can apply formula \eqref{eq:companinon_m(z)}, condition $P(\lambda_0)=0$ in \eqref{eq:P(z)} and the following well-known facts of the Stieltjes transform (e.g., see \citep{bai2010spectral}) with $\lambda_0=(1+\theta_1^2)(\psi_2+\theta_1^2)/\theta_1^2$:
\begin{align*}
    \lambda_0 m(\lambda_0)=&\frac{-1}{\theta_1^2}-1,&\quad& m(\lambda_0)=\frac{-1}{\psi_2+\theta_1^2},\\
    \lambda_0 \bar{m}(\lambda_0)= & \frac{-\psi_2}{\theta_1^2}-1, &\quad& \bar{m}(\lambda_0)= \frac{-1}{1+\theta_1^2},\\
    \lambda_0 m'(\lambda_0)=& \frac{\theta_1^2(\theta_1^2+1)}{(\psi_2+\theta_1^2)(\theta_1^4-\psi_2)},&\quad& \lambda_0 \bar m'(\lambda_0)=\frac{\theta_1^2(\theta_1^2+\psi_2)}{(1+\theta_1^2)(\theta_1^4-\psi_2)},
\end{align*} to conclude that asymptotic alignment between the linear component $\vbeta_*$ of the teacher model and leading left singular vector $\tilde{\vW}_1$ satisfies
\begin{align}
    (\tilde{\vu}_1^\top\vbeta_*)^2 \overset{\P}{\to}  \frac{\theta_2^2(\theta_1^4-\psi_2)}{\theta_1^4(\theta_1^2+1)}=\frac{\theta_2^2}{\theta_1^2}\left(1-\frac{\psi_2+\theta_1^2}{\theta_1^2(\theta_1^2+1)}\right),\label{eq:aligment}
\end{align}
as $n,d,N\to\infty$ proportionally and when $\theta_1^4>\psi_2$. This establishes the alignment between $\vbeta_*$ and $\tilde{\vu}_1$. 
Now we return to the left singular vector $\vu_1$ of the original $\vW_1$ using Davis-Kahan $\sin\theta$ \cite[Theorem 4.4]{stewart1990matrix}:
\[\|\vu_1-\tilde{\vu}_1\|\le \frac{\sqrt{2}\|\vW_1-\tilde{\vW}_1\|}{\delta-\|\vW_1-\tilde{\vW}_1\|},\]
where $\delta:=s_1(\tilde{\vW}_1)-s_2(\tilde{\vW}_1)$. When $\theta_1^4>\psi_2$, $s_1(\tilde{\vW}_1)$ will stay outside of the bulk whereas $s_2(\tilde{\vW}_1)$ will stick to right edge of the bulk. Therefore, $\delta$ has a uniform lower bound and eventually $\|\vu_1-\tilde{\vu}_1\|\to 0$, which implies that $(\vu_1^\top\vbeta_*)^2$ has the same limit as $(\tilde{\vu}_1^\top\vbeta_*)^2$. 

\paragraph{Case II: $\theta_1^4\le \psi_2$.} 
On the other hand, if $\theta_1^4\le \psi_2$, we have proved that $\hat \lambda $ is approaching to the right-edge of the bulk of $\mu_{\psi_2}^{\MP}$. In fact, with probability one, $ \lambda_1<\hat\lambda$, where $\lambda_1$ is the largest eigenvalue of $\vW_0\vW_0^\top$; this is because $\det \vM_n(z)=\left(\vu^\top\vQ_0(z)\vW_0\va+1\right)^2+\vu^\top\vQ_0(z)\vu\left(\|\va\|^2-\va^\top\vW_0^\top\vQ_0(z)\vW_0\va\right)$ satisfies 
\[\lim_{z\to+\infty }\det \vM_n(z)=1, \quad \lim_{z	\searrow \lambda_1 }\det \vM_n(z)=-\infty,\]
while $\det \vM_n(\hat\lambda)=0$. Hence by the definition of $M_n(z)$, 
\begin{equation}
    \det \vM_n(\hat\lambda)=\left(1+\va^\top\vW_0^\top\vQ_0(\hat\lambda)\vu\right)^2-\hat\lambda\cdot\va^\top\vW_0^\top\vQ_0(\hat\lambda)\vW_0\va\cdot\vu^\top\vQ_0(\hat\lambda)\vu =0.\label{eq:det=0}
\end{equation}
Also, by the Cauchy–Schwarz inequality, we have
\[\hat\lambda=\frac{\left(1+\va^\top\vW_0^\top\vQ_0(\hat\lambda)\vu\right)^2}{\va^\top\vW_0^\top\vQ_0(\hat\lambda)\vW_0\va\cdot\vu^\top\vQ_0(\hat\lambda)\vu}\le \left(1+\frac{1}{\va^\top\vW_0^\top\vQ_0(\hat\lambda)\vu}\right)^2.\]
Since $\hat\lambda\to (1+\sqrt{\psi_2})^2$, given any small $\epsilon\in(0,\sqrt{\psi_2})$, for all sufficiently large $N$, we have 
\[(1+\sqrt{\psi_2}-\epsilon)^2\le \left(1+\frac{1}{\va^\top\vW_0^\top\vQ_0(\hat\lambda)\vu}\right)^2,\]
which indicates that $\va^\top\vW_0^\top\vQ_0(\hat\lambda)\vu\in \left(\frac{1}{-2-\sqrt{\psi_2}+\epsilon},\frac{1}{\sqrt{\psi_2}-\epsilon}\right)$. Therefore, for all large $N$, $|\va^\top\vW_0^\top\vQ_0(\hat\lambda)\vu|$ is bounded by some universal constant related to $\psi_2$. Then by \eqref{eq:det=0}, we can conclude $\va^\top\vW_0^\top\vQ_0(\hat\lambda)\vW_0\va\cdot\vu^\top\vQ_0(\hat\lambda)\vu$ is also asymptotically bounded by some constant.  On the other hand, since all eigenvalues of $\vW_0\vW_0^\top$ is smaller than $\hat\lambda$, we obtain
\[-\vu^\top\vQ_0(\hat\lambda)\vu>\frac{1}{\hat\lambda}.\]
This directly implies $-\va^\top\vW_0^\top\vQ_0(\hat\lambda)\vW_0\va$ has a constant upper bound for all large $N$. Following the proofs of \cite[Theorem 2.3]{benaych2011eigenvalues} and \cite[Theorem 2.10]{benaych2012singular} (with slight modifications of Lemma A.2 and Proposition A.3 in \cite{benaych2011eigenvalues}), it is straightforward to control the following quadratic forms by verifying the weak convergence of certain weighted spectral measures in combination with the Portmanteau theorem:
\begin{align}
    \liminf_{n\to\infty}\vu^\top\vQ_0(\hat\lambda)^2\vu\ge & \lim_{z	\searrow (1+\sqrt{\psi_2})^2} \theta_1^2 \bar m'(z)=+\infty,\label{eq:infinity1}\\
    \liminf_{n\to\infty} \va^\top\vW_0^\top\vQ_0(z)^2\vW_0\va \ge & \lim_{z	\searrow (1+\sqrt{\psi_2})^2} \big(m(z)+zm'(z)\big)=+\infty,\label{eq:infinity2}\\
    \liminf_{n\to\infty} -\va^\top\vW_0^\top\vQ_0(z)\vW_0\va\ge &\lim_{z	\searrow (1+\sqrt{\psi_2})^2} \big(-zm(z)-1\big)= \frac{1}{\sqrt{\psi_2}}.
\end{align}
Consequently, we know that  $-\va^\top\vW_0^\top\vQ_0(z)\vW_0\va\in \left(1/\sqrt{\psi_2},C(1+\sqrt{\psi_2})^2\right)$ for some constant $C>0$ and all large $N$. Hence $-\vu^\top\vQ_0(\hat\lambda)\vu\in \left(1/(1+\sqrt{\psi_2})^2,C\sqrt{\psi_2}\right)$. Now notice that 
\begin{align}
    \left(\vbeta_*^\top\vQ_0(\hat\lambda)\vu\right)^2\le \left(-\vbeta_*^\top\vQ_0(\hat\lambda)\vbeta_*\right)\left(-\vu^\top\vQ_0(\hat\lambda)\vu\right). 
\end{align}
In addition, by definition we have
\[-\va^\top\vW_0^\top\vQ_0(z)\vW_0\va=-\|\va\|^2+\hat\lambda \va^\top\left(\hat\lambda\vI-\vW_0^\top\vW_0\right)^{-1}\va.\]
By the same rotation invariance argument as in Lemma \ref{lemm:uniform_converge}, we may assume $\vbeta_*\sim\text{Unif}(\mathbb{S}^{d-1})$. 
Thus we can apply the Hanson-Wright inequality for $\va$ and $\vbeta_*$ to show that $-\vbeta_*^\top\vQ_0(\hat\lambda)\vbeta_*$ and $\va^\top\left(\hat\lambda\vI-\vW_0^\top\vW_0\right)^{-1}\va$ have comparable limits as $N,n,d\to\infty$ proportionally. This directly implies that $-\vbeta_*^\top\vQ_0(\hat\lambda)\vbeta_*$ also has a uniform upper bound for all large $d$. 
We conclude that as $n,d,N\to\infty$ proportionally, $|\vbeta_*^\top\vQ_0(\hat\lambda)\vu|$ is eventually bounded by some constant from above. 

On the other hand, as $n/d\to\psi_1$ and $N/d\to\psi_2$, from \eqref{eq:equation2}, \eqref{eq:infinity1} and \eqref{eq:infinity2} we know that $\hat v_1,\hat v_2\to 0$. This allows us to conclude that $\tilde{\vu}_1^\top\vbeta_*\to 0$. 
Finally, to translate the result back to $\vu_1$ which is the left singular vector of $\vW_1$, we denote $\vR:=\vW_1\vW_1^\top-\tilde\vW_1\tilde\vW_1^\top$. Recall that Lemma \ref{lemm:gradient-norm} ensures $\|\vR\|\to 0$ almost surely. Hence, we may repeat above computations for $\hat\lambda\vu_1=\vW_1\vW_1^\top\vu_1$, where with a slight abuse of notation we still denote $\hat\lambda $ as the largest eigenvalue of $\vW_1\vW_1^\top$. In this case, \eqref{eq:equation3} needs to be modified as
\[ \vu_1^\top\vbeta_*=-\left(\hat v_1\left(\|\va\|^2\vbeta_*^\top\vQ_0(\hat\lambda)\vu+\vbeta_*^\top\vQ_0(\hat\lambda)\vW_0\va\right)+\hat v_2\cdot \vbeta_*^\top\vQ_0(\hat\lambda)\vu\right)-\vbeta_*^\top\vQ_0(\hat\lambda)\vR\vu_1. \]
Since $\|\vR\|\to 0$, by a simple adaptation of Lemma A.2 in \cite{benaych2011eigenvalues}, one can directly verify $\vbeta_*^\top\vQ_0(\hat\lambda)\vR\vu_1\to 0$, as $N,n,d\to \infty$ proportionally. Hence we conclude that $\vu_1^\top\vbeta_*\to 0$ if $\theta_1^4\le \psi_2$. 
The theorem is established by combining the above cases. 

\end{proofof}

\bigskip
\section{Proof for Small Learning Rate ($\eta=\Theta(1)$)}

\subsection{Gaussian Equivalence for Trained Feature Map}

\paragraph{The Gaussian Equivalence Property.} To validate Theorem~\ref{thm:GET}, we follow the proof strategy of \citet{hu2020universality}, which established the GET for RF models using the Lindeberg approach and leave-one-out arguments \citep{el2018impact}.
We remark that concurrent to our work, \citet{montanari2022universality} proved the Gaussian equivalence property for a larger model class under an assumed central limit theorem, which is verified for two-layer RF or NTK models, and thus cannot directly imply our results on the trained features.  

We first introduce the notations used in this section. Given weight matrix $\vW$ and input $\vx$, we define the feature vector $\vphi_{\vx} = \frac{1}{\sqrt{N}}\sigma(\vW^\top\vx)\in\R^N$; similarly, given training data matrix $\tilde{\vX}\in\R^{n\times d}$, the kernel feature matrix is given as $\vPhi = \frac{1}{\sqrt{N}}\sigma(\tilde{\vX}\vW)\in\R^{n\times N}$. 
Also, the linearized noisy Gaussian feature can be written as: $\bar{\vphi}_{\vx} = \frac{1}{\sqrt{N}}\left(\mu_1\vW^\top\vx + \mu_2\vz\right)$, and the corresponding matrix $\bar{\vPhi} = \frac{1}{\sqrt{N}}\left(\mu_1\tilde{\vX}\vW + \mu_2\vZ\right)$, where $\vz,[\vZ]_i\iid\cN(0,\vI)$ for $i\in [n]$. 
We emphasize that in our analysis $\vW$ does not depend on $\tilde{\vX}$; for notational simplicity, in this subsection we omit the accent in $\tilde{\vX}$.  

We establish the Gaussian equivalence property (Theorem~\ref{thm:GET}) for kernel regression with respect to certain trained feature map under general convex loss $\ell$ satisfying Assumption (A.4) in \citet{hu2020universality}. 
Consider the estimators obtained from $\ell_2$-regularized empirical risk minimization:
\begin{align}
    &\hat{\va} \triangleq  \text{arg\,min}_{\va} \,\Big\{\frac{1}{n}\sum_{i=1}^n \ell(y_i,  \langle\va,\vphi_i\rangle) + \frac{\lambda}{N} \norm{\va}_2^2\Big\}, \label{eq:RFRR_nonlin} \\
    &\bar{\va} \triangleq  \text{arg\,min}_{\va} \,\Big\{\frac{1}{n}\sum_{i=1}^n \ell(y_i,  \langle\va,\bar{\vphi}_i\rangle) + \frac{\lambda}{N} \norm{\va}_2^2\Big\},\label{eq:RFRR_lin}
\end{align} 
where we abbreviated $\vphi_i = \vphi_{\vx_i} =  \frac{1}{\sqrt{N}}\sigma(\vW^\top\vx_i), \bar{\vphi}_i = \bar{\vphi}_{\vx_i} =   \frac{1}{\sqrt{N}}\left(\mu_1\vW^\top\vx_i + \mu_2\vz_i\right)$ for $i\in [n]$.

In our setting, the first-layer weight $\vW$ is no longer the initialized random matrix $\vW_0$. However, we can still write the weight matrix as a perturbed version of $\vW_0$, i.e., $\vW = \vW_0 + \vDelta$, where $\vDelta\in\R^{d\times N}$ corresponds to the update to the weights (possibly multiple gradient steps as in Lemma~\ref{lemm:gradient-norm-multi}) that is \textit{independent of} the training data $\vX$ for ridge regression (e.g., the weight matrix and the ridge regression estimator are trained on separate data). We aim to show that under suitable conditions on $\vDelta$, the Gaussian equivalence theorem holds for the kernel model defined by the perturbed features $\vx\to\frac{1}{\sqrt{N}}\sigma(\vx^\top\vW)$. Throughout this section, we take $\vW$ to be the trained first layer $\vW_t$ for $t\in\N$ according to \eqref{eq:gradient-step-MSE}. 

Define the set of weight matrices perturbed from the Gaussian initialization $\vW_0$ as
\begin{align}
\label{eq:orthogonality-condition}
    \cW: = \left\{\vW = \vW_0 + \vDelta \in\R^{d\times N} :\, \norm{\vDelta} = \bigO{1}, ~ \norm{\vDelta}_{2,\infty} = \bigO{\frac{\text{polylog}\,d}{\sqrt{d}}}\right\}.
\end{align}
Note that for learning rate $\eta=\Theta(1)$, we can verify that $\cW$ is a high-probability event after any finite number of gradient steps, as characterized in Lemma~\ref{lemm:gradient-norm} and \ref{lemm:gradient-norm-multi}.  The following proposition is a reformulation and extension of \cite[Theorem 1]{hu2020universality}, stating that the Gaussian equivalence property holds as long as $\vW$ remains ``close'' to the initialization $\vW_0$. 
\begin{prop}
\label{prop:GET-perturbed}
Under Assumptions \ref{assump:1} and \ref{assump:2}, and $\P(\cW) \ge 1 - \Exp{-c\log^2 N}$ for some $c>0$, we have that as $n,d,N\to\infty$ proportionally,  
$$\E_{\vx} \left(f^*(\vx) - \langle\vphi_{\vx}, \hat{\va}\rangle\right)^2 = (1+o_{d,\P}(1))\cdot \E_{\vx} \left(f^*(\vx) - \langle\bar{\vphi}_{\vx}, \bar{\va}\rangle\right)^2,$$
where $\hat{\va}$ and $\bar\va$ are defined in \eqref{eq:RFRR_nonlin} and \eqref{eq:RFRR_lin}.
\end{prop}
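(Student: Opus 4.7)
The plan is to adapt the Lindeberg exchange strategy of Hu-Lu to feature maps with perturbed weights. Since $\vW$ is independent of the ridge training data $\{\vx_i,y_i\}$ by construction, I would condition on $\vW$ throughout on the event $\cW$, reducing the claim to a deterministic statement about matrices obeying the $\cW$-bounds. Hu-Lu's argument uses only two structural properties of $\vW$: near-unit column norms and approximate column orthogonality. So the first task is to verify these for every $\vW\in\cW$. Writing $\vw_j = \vw^0_j + \vdelta_j$, the diagonal decomposes as
\[
\|\vw_j\|^2 = \|\vw^0_j\|^2 + 2\langle\vw^0_j,\vdelta_j\rangle + \|\vdelta_j\|^2,
\]
where the first term concentrates around $1$ by $\chi^2$-bounds, the third is $O(\mathrm{polylog}(d)/d)$ from $\|\vDelta\|_{2,\infty}$, and the cross term is $O(\mathrm{polylog}(d)/\sqrt{d})$ by Cauchy-Schwarz. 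For $j\neq k$, the analogous decomposition of $\langle\vw_j,\vw_k\rangle$ gives a $\tilde O(1/\sqrt d)$ bound on the $\vw^0$-$\vw^0$ piece, an $O(\mathrm{polylog}(d)/d)$ bound on the $\vdelta$-$\vdelta$ piece, and an $O(\mathrm{polylog}(d)/\sqrt d)$ bound on each cross piece $\langle\vw^0_j,\vdelta_k\rangle$ via the $\|\cdot\|_{2,\infty}$-assumption; a union bound then yields $\max_j|\|\vw_j\|^2-1|+\max_{j\ne k}|\langle\vw_j,\vw_k\rangle|=o_{d,\P}(1)$.

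With this in hand, I would re-run the Lindeberg exchange conditionally on $\vW$: interpolate between $\vPhi$ and $\bar\vPhi$ by swapping $\sigma(\vx_i^\top\vw_j)/\sqrt{N}$ for $(\mu_1\vx_i^\top\vw_j+\mu_2 z_{ij})/\sqrt{N}$ one index $(i,j)$ at a time, and Taylor-expand the test risk (a smooth functional of the feature matrix thanks to $\lambda>0$) to third order. First- and second-moment matching follows from $\E[z\sigma(z)]=\mu_1$ and $\E[\sigma(z)^2]=\mu_1^2+\mu_2^2$ applied to $\vx_i^\top\vw_j\approx\cN(0,1)$, which uses only the near-unit-norm property of $\vw_j$. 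The third-order remainder at each swap is $O(N^{-3/2})$ from Lipschitz control on $\sigma$ and its derivatives (via Assumption~\ref{assump:1}); summed over $nN=O(d^2)$ swaps this produces $o(1)$, provided one has a uniform resolvent bound on $\vPhi^\top\vPhi/n+(\lambda/N)\vI$, which in turn follows from the spectral-norm control of $\vW$ on $\cW$.

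The main obstacle is that $\vDelta$ is \emph{not} independent of $\vW_0$ after a gradient step (it is a deterministic function of $\vW_0,\vX',\vy'$), so one cannot borrow any step of Hu-Lu that relies on Gaussian rotational invariance of $\vW$. My workaround is the one outlined above: condition on all of $\vW$ at once and use only the deterministic $\cW$-properties; all randomness used in the swap lives in the fresh input $\vx_i$ and the auxiliary Gaussian $z_{ij}$, both independent of $\vW$ by assumption. A secondary subtlety is the $\|\hat\va\|$ bound, which Hu-Lu obtain via rotational invariance of $\vW_0$; in the perturbed setting this must be re-derived from the same resolvent estimate for $\vPhi$, again using only the $\cW$-properties. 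Once these replacements are installed, the rest of Hu-Lu's proof transfers with only cosmetic modifications and yields the claimed asymptotic equivalence.
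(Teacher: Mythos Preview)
Your high-level framing --- condition on $\vW\in\cW$, verify near-orthogonality, and rerun Hu--Lu's Lindeberg --- is exactly what the paper does. But the concrete scheme you sketch contains two genuine errors.

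First, the swap must be \emph{row-by-row} (one data point $i$ at a time), not entry-by-entry. Within a row, the entries $\sigma(\vx_i^\top\vw_j)$ for different $j$ all depend on the same $\vx_i$; a coordinatewise Lindeberg would require conditional moment matching of $\sigma(\vx_i^\top\vw_j)$ and $\mu_1\vx_i^\top\vw_j+\mu_2 z_{ij}$ given the rest of the row, which plainly fails (the former is deterministic given $\vx_i$). Relatedly, your arithmetic does not close: $nN$ swaps of size $O(N^{-3/2})$ sum to $\Theta(\sqrt N)$, not $o(1)$. Hu--Lu (and the paper) instead swap the whole vector $\vphi_i\mapsto\bar\vphi_i$; the per-swap error is then controlled by a CLT for the scalar $\langle\vphi_i,\valpha\rangle$ with $\valpha$ the current interpolating minimizer $\vg_k^*$, and with only $n=\Theta(N)$ swaps the $O(\mathrm{polylog}\,N/N^{3/2})$ per-swap bound sums to $o(1)$.

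Second, you correctly flag that one step of Hu--Lu uses randomness of $\vW_0$, but you misidentify which one. The $\ell_2$-bound $\|\vg_k^*\|=O(1)$ is immediate from $\lambda$-strong convexity and needs nothing about $\vW$. The step that genuinely relies on (sub-)Gaussianity of the columns of $\vW$ is the $\ell_\infty$-bound $\|\vg_k^*\|_\infty=O(\mathrm{polylog}\,N)$, which feeds into the CLT (the error in the CLT scales with $\|\valpha\|_\infty^2$). In the perturbed setting $\vW=\vW_0+\vDelta$ this has to be re-proved; the paper does so by isolating one coordinate of $\vg_k^*$, writing each appearance of $\vw_{N+1}=\vw^0_{N+1}+\vdelta_{N+1}$, and using the $\|\vDelta\|_{2,\infty}$ bound to show the $\vdelta$-contributions are at most $\mathrm{polylog}\,N$, so that Hu--Lu's original argument for the $\vw^0$-part goes through. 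Your proposed replacement --- a resolvent bound on $\vPhi^\top\vPhi$ --- gives operator-norm and hence $\ell_2$ control, but not the $\ell_\infty$ control the CLT needs.
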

From Proposition~\ref{prop:GET-perturbed} we know that Theorem~\ref{thm:GET} holds if the optimized weight matrix $\vW$ falls into the set $\cW$ with sufficiently high probability. This condition is in turn verified by Lemma~\ref{lemm:gradient-norm} and \ref{lemm:gradient-norm-multi}. 
Also note that in our setting of MSE loss and $\lambda>0$, the RHS of the above equation is bounded in probability.

\paragraph{Central Limit Theorem for Trained Features.}

Recall the single-index teacher assumption: $y_i = \sigma^*(\langle\vx_i,\vbeta^*\rangle) + \varepsilon_i$ for $i\in[n]$.  
Observe that for $\vW\in\cW$, the following near-orthogonality condition between the neurons holds with high probability
\begin{align}
    \norm{\vW} = \bigO{1}, \quad\text{and } 
    \max_{i\neq j}\,\left\{\langle\vw_i,\vw_j\rangle, \langle\vw_i,\vbeta_*\rangle\right\} = \bigO{\frac{\text{polylog}\,d}{\sqrt{d}}}.
\label{eq:near-orth} 
\end{align}

Importantly, for $\vW$ satisfying the near-orthogonality condition \eqref{eq:near-orth}, we can utilize the following central limit theorem from \citet{hu2020universality} derived via Stein's method.  
\begin{prop}[Theorem 2 in \citep{hu2020universality}]
Given Assumptions \ref{assump:1} and \ref{assump:2}, suppose that the activation $\sigma$ is an odd function. Let $\{\varphi_d(x;y)\}$ be a sequence of two-dimensional test functions, where $|\varphi_d(x;y)|,|\varphi_d'(x;y)|\le B_d(y)\left(1 + |x|\right)^K$ for some function $B_d$ and constant $K\ge 1$, then for $\vW$ satisfying \eqref{eq:near-orth}, and fixed vectors $\valpha\in\R^N, \vbeta\in\R^d$ with $\norm{\vbeta}=1$, we have  
\begin{align}
    &\abs{\E\varphi_d\left(\vphi_{\vx}^\top\valpha; \vx^\top\vbeta\right) -  \E\varphi_d\left(\bar{\vphi}_{\vx}^\top\valpha; \vx^\top\vbeta\right)} = 
    \bigO{\frac{\mathrm{polylog} N}{\sqrt{N}}\E[B_d(z)^{4}]^{1/4}\left(1+\norm{\valpha}_\infty^2 + \big(\tfrac{1}{\sqrt{N}}\norm{\valpha}\big)^{K'}\right)},
\end{align}
where $z\sim\cN(0,1)$, and $K'$ only depends on constant $K$. 
\label{prop:CLT}
\end{prop}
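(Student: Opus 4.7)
The plan is to prove this quantitative CLT via a neuron-by-neuron Lindeberg swap, using the near-orthogonality condition \eqref{eq:near-orth} as the core structural input. The key structural observation is that when $\sigma$ is odd and centered, its Hermite expansion takes the form $\sigma = \mu_1 H_1 + \sum_{k \ge 3,\, k\text{ odd}} c_k H_k$ with no $H_2$ contribution, and the total-variance identity $\sum_{k\ge 3} c_k^2 = \mu_2^2$ from \eqref{def:mu_1mu_2} matches the variance of the Gaussian surrogate $\mu_2 z$ in $\bar{\vphi}_{\vx}$. I would first condition on the linear projection $T := \vx^\top\vbeta$ and write $\vx = T\vbeta + \vx_\perp$, so that each pre-activation decomposes as $\vw_i^\top\vx = \langle\vw_i,\vbeta\rangle\, T + \vw_i^\top\vx_\perp$ where $\langle\vw_i,\vbeta\rangle = \bigO{\mathrm{polylog}\,d/\sqrt d}$ by \eqref{eq:near-orth}, and the residual pre-activations $\{\vw_i^\top\vx_\perp\}_i$ form a jointly Gaussian vector with covariance nearly diagonal (off-diagonal entries of order $\mathrm{polylog}\,d/\sqrt d$).

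Next I would run the telescoping swap. Define the interpolants
\begin{equation}
S^{(k)} := \frac{1}{\sqrt N}\sum_{i \le k}\alpha_i\bigl(\mu_1\vw_i^\top\vx + \mu_2 z_i\bigr) + \frac{1}{\sqrt N}\sum_{i > k}\alpha_i\sigma(\vw_i^\top\vx),
\end{equation}
so that $S^{(0)} = \vphi_{\vx}^\top\valpha$ and $S^{(N)} = \bar{\vphi}_{\vx}^\top\valpha$, yielding the telescoping identity
\begin{equation}
\E\varphi_d(\vphi_{\vx}^\top\valpha; T) - \E\varphi_d(\bar{\vphi}_{\vx}^\top\valpha; T) = \sum_{k=1}^N \E\bigl[\varphi_d(S^{(k-1)}; T) - \varphi_d(S^{(k)}; T)\bigr].
\end{equation}
For each $k$, Taylor-expand $\varphi_d$ in its first argument to third order around the neuron-$k$-omitted partial sum. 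The zeroth- and first-order differences vanish after conditioning because both $\alpha_k N^{-1/2}\sigma_\perp(\vw_k^\top\vx)$ and $\alpha_k N^{-1/2}\mu_2 z_k$ are mean-zero, up to $\langle\vw_k,\vbeta\rangle$- and $\langle\vw_k,\vw_j\rangle$-weighted errors that are killed by Hermite orthogonality. The second-order term matches exactly thanks to $\Var(\sigma_\perp(\vw_k^\top\vx_\perp)) = \mu_2^2$. The third-order remainder contributes $|\alpha_k|^3 N^{-3/2}\cdot\E|\partial_1^3\varphi_d(S^{(k)}_{\setminus k}; T)|$, and summing over $k$ delivers the $\norm{\valpha}_\infty^2 N^{-1/2}$ scaling in the bound once the polynomial growth of $\varphi_d'$ and moment control on the partial sum (of order $\norm{\valpha}/\sqrt N$) are invoked through Cauchy--Schwarz and the hypothesis $|\varphi_d'|\le B_d(y)(1+|x|)^K$.

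The main obstacle is the coupling between the swapped neuron and the second argument $T$ through the small but nonzero projection $\langle\vw_k,\vbeta\rangle$: a naive separation of $\vw_k^\top\vx$ from $T$ incurs an $O(1/\sqrt d)$ error per neuron, which would be fatal upon summing over $N\asymp d$ neurons. I would resolve this by running an inner Taylor expansion of $\varphi_d$ in its second argument alongside the first, exploiting the cancellation between the $\mu_1\vw_k^\top\vx$ and $\mu_2 z_k$ contributions in the swap so that only a third-order remainder in the joint expansion survives; the $\mathrm{polylog}\,N$ factor in the target bound arises precisely from a truncation of the moment-generating tails that keeps this inner expansion convergent under the growth hypothesis on $\varphi_d$. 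Cross-neuron correlations $\max_{i\neq j}|\langle\vw_i,\vw_j\rangle| = \bigO{\mathrm{polylog}\,d/\sqrt d}$ enter only in the variance-matching check and are absorbed into the same budget; the remaining moment and regularity estimates are routine and I would not detail them.
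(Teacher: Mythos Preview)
This proposition is not proved in the paper; it is quoted from \cite{hu2020universality} (their Theorem~2), and the paper explicitly remarks that the original argument is ``derived via Stein's method.'' Your proposal instead runs a direct Lindeberg telescoping swap. Structurally your outline captures the right ingredients: the Hermite decomposition of odd $\sigma$, the variance identity $\sum_{k\ge 3,\text{ odd}}c_k^2 = \mu_2^2$ matching the Gaussian surrogate, the conditioning on $T = \vx^\top\vbeta$, and the use of \eqref{eq:near-orth} to control cross-neuron correlations are all correct.

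There is, however, a genuine regularity gap. You Taylor-expand $\varphi_d(\cdot;T)$ to third order and write the per-swap remainder as $|\alpha_k|^3 N^{-3/2}\cdot\E|\partial_1^3\varphi_d|$, but the hypothesis controls only $|\varphi_d|$ and $|\varphi_d'|$; no bound on second or third derivatives is assumed, so neither your second-order matching step nor your third-order remainder is available in the form you state. This is precisely why the original uses Stein's method: solving the Stein equation for $h=\varphi_d(\cdot;T)$ yields an auxiliary function that gains one derivative of regularity over $h$, so $C^1$ input produces a $C^2$ Stein solution whose second derivative drives the error. The same issue afflicts your proposed ``inner Taylor expansion in the second argument,'' since no $y$-differentiability is assumed either. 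A Lindeberg route can in principle be rescued by mollifying $\varphi_d$ at a carefully chosen scale and tracking the resulting smoothing error against the polynomial growth $(1+|x|)^K$ and the $N^{-1/2}$ budget, but you have not addressed this, and without it the argument as written does not close.
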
 
 
We remark that our assumption of odd activation in Theorem~\ref{thm:GET} is required by the above Proposition~\ref{prop:CLT}, and we believe it could be removed with some extra work. Also, to verify the GET, we take $\varphi_d$ to be the test function defined in \cite[Equation (50)]{hu2020universality}. 
In our case, by \cite[Lemma 25]{hu2020universality} we know that there exists a function $B$ satisfying the growth condition such that $\E[B(z)^4]$ is bounded. Therefore, in order to apply Proposition~\ref{prop:CLT} and obtain the Gaussian equivalence theorem (see derivation in \cite[Section 2]{hu2020universality}),
we only need to control the $\ell_2$-norm and $\ell_\infty$-norm of certain vector $\valpha$ of interest. The following subsection establishes the required norm bound.

\paragraph{Norm Control Along the Interpolation Path.}
Following \cite{hu2020universality}, we construct an interpolating sequence between the nonlinear and linear features model. For any $0\le k\le n$, we define
\begin{align}  
    \!\!\!\!\!\! \vg_k^* \triangleq\text{arg\,min}_{\vg\in\R^N} \,\Bigg\{\sum_{i=1}^k \ell(y_i,  \langle\vg,\bar{\vphi}_i\rangle) +\! \sum_{j=k+1}^n \ell(y_j, \langle\vg,\vphi_j\rangle) + \frac{n}{N}\left(\lambda\norm{\vg}_2^2 + Q(\vg)\right)\Bigg\},   
\label{eq:interpolation-objective}
\end{align}  
where we introduce a perturbation term $$Q(\vg) \triangleq \gamma_1\vg^\top\left(\mu_1^2\vW^\top\vW + \mu_2^2\vI\right)\vg + \gamma_2\mu_1\sqrt{N}\vbeta_*^\top\vW\vg.$$ 
Note that when $\gamma_1=\gamma_2=0$, setting $k=0$ recovers the estimator on nonlinear features $\hat{\va}$, and similarly, setting $k=n$ gives the estimator on the linear Gaussian features $\bar{\va}$. 

We remark that the perturbation $Q(\vg)$ allows us to compute the prediction risk by taking the derivative of the objective w.r.t.~$\gamma_1,\gamma_2$ around 0 --- see \cite[Proposition 1]{hu2020universality} for details. 
Note that when $\norm{\vW}=\Theta(1)$, we may choose $\gamma^* = \frac{N}{n}\cdot\frac{\lambda/4}{\mu_1^2\norm{\vW}^2 + \mu_2^2}>0$ such that for $\abs{\gamma_1}\le\gamma^*, \abs{\gamma_2}\le 1$, the overall objective \eqref{eq:interpolation-objective} is $\frac{\lambda}{2}$-strongly convex (i.e., the strongly-convex regularizer dominates the concave part of $Q(\vg)$ when $\gamma_1<0$).  

While most of the statements in \citet{hu2020universality} hold for deterministic weight matrices satisfying \eqref{eq:near-orth}, the $\ell_\infty$-norm bound relies on the (sub-)Gaussian property of $\vW$ and thus only applies to RF models. The following lemma establishes a high probability upper bound on the $\ell_\infty$-norm of $\vg_k^*$ on our trained feature map. 
\begin{lemm} 
\label{lemm:sup-norm}
Given Assumptions \ref{assump:1} and \ref{assump:2}, if we further assume that $1-\P(\cW) \le \Exp{-c\log^2 N}$ for some constant $c>0$, then there exists some constant $c'>0$ such that for any $0\le k\le n$, 
\begin{align}
    \Parg{\norm{\vg_k^*}_\infty \ge \mathrm{polylog}N} \le \Exp{-c'\log^2 N}.
\end{align}
\end{lemm}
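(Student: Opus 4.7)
\begin{proofof}[Lemma~\ref{lemm:sup-norm}]
The plan is to adapt the neuron leave-one-out strategy of \citet{hu2020universality} to our trained-feature setting. Since $\vW$ is independent of the ridge data $(\tilde\vX, \tilde\vvarepsilon, \vZ)$ used in \eqref{eq:interpolation-objective}, I will condition throughout on $\vW$ and on the high-probability event $\cW$, so that $\vW=\vW_0+\vDelta$ may be treated as a deterministic matrix satisfying the near-orthogonality condition~\eqref{eq:near-orth}. By the choice of $\gamma^*$ stated just after \eqref{eq:interpolation-objective}, the Hessian of $\cL_k$ dominates $\tfrac{2n\lambda}{N}\vI$, hence $\cL_k$ is $\tfrac{2n\lambda}{N}$-strongly convex and $\vg_k^*$ is unique. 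Plugging in $\vg = \mathbf 0$ and using $\cL_k(\mathbf 0) = O(n)$ with high probability (from the sub-Gaussian tails of $y_i$) yields the crude bound $\|\vg_k^*\|^2 \le \tfrac{N}{n\lambda}\cL_k(\mathbf 0) = O(N/\lambda)$.

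For each $r\in[N]$ I introduce the constrained minimizer $\vg_k^{*,(-r)} := \argmin_{\vg:\, g_r=0}\cL_k(\vg)$. Subtracting the strong-convexity inequalities around $\vg_k^*$ and $\vg_k^{*,(-r)}$ (using $\nabla\cL_k(\vg_k^*)=\mathbf 0$ and $[\nabla\cL_k(\vg_k^{*,(-r)})]_s = 0$ for all $s\ne r$) gives the standard leave-one-out identity
\[|[\vg_k^*]_r| \,\le\, \|\vg_k^*-\vg_k^{*,(-r)}\| \,\le\, \tfrac{N}{2n\lambda}\,\bigl|\partial_r\cL_k(\vg_k^{*,(-r)})\bigr|.\]
Because $[\vg_k^{*,(-r)}]_r = 0$, the inner products in the loss do not involve the $r$-th feature coordinate, so
\[\partial_r\cL_k(\vg_k^{*,(-r)}) = \sum_{i=1}^{k}\ell'\!\bigl(y_i,\langle\vg_k^{*,(-r)},\bar\vphi_i\rangle\bigr)[\bar\vphi_i]_r + \sum_{j=k+1}^{n}\ell'\!\bigl(y_j,\langle\vg_k^{*,(-r)},\vphi_j\rangle\bigr)[\vphi_j]_r + \Xi_r,\]
where the residual $\Xi_r$ coming from the ridge penalty and the perturbation $Q(\vg)$ is $\tilde O(1/\sqrt N)$ in view of $\|\vW\| = O(1)$, $|\gamma_1|\le\gamma^*$, $|\gamma_2|\le 1$, and the $\ell_2$ bound on $\vg_k^{*,(-r)}$.

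The crux is a concentration bound on the two sums of size $n$. I will use the decomposition $\tilde\vx_j = \alpha_j\vw_r + \tilde\vx_j^{\perp}$ with $\alpha_j := \vw_r^\top\tilde\vx_j/\|\vw_r\|^2$ and $\tilde\vx_j^{\perp}\perp\vw_r$, so that $[\vphi_j]_r = \sigma(\|\vw_r\|^2\alpha_j)/\sqrt N$ is a sub-Gaussian scalar of scale $\tilde O(1/\sqrt N)$ depending on $\tilde\vx_j$ only through $\alpha_j$, and analogously $[\bar\vphi_i]_r$ depends on $\tilde\vx_i$ and $\vz_i$ only through $(\alpha_i, z_{i,r})$. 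For $s\ne r$, the ``surviving'' projections $\vw_s^\top\tilde\vx_j$ differ from their orthogonal counterparts $\vw_s^\top\tilde\vx_j^{\perp}$ by $|\langle\vw_r,\vw_s\rangle|\cdot|\alpha_j|\cdot\|\vw_r\|^2 = \tilde O(d^{-1/2})$ on $\cW$. Replacing $\tilde\vx_j$ by $\tilde\vx_j^{\perp}$ throughout the definition of $\vg_k^{*,(-r)}$ therefore perturbs the coefficients $c_j := \ell'(y_j,\langle\vg_k^{*,(-r)},\vphi_j\rangle)$ by $o(1)$ (via a quantitative continuity estimate for $\tilde\vX\mapsto\vg_k^{*,(-r)}$ inherited from strong convexity), after which the $c_j$'s are independent of the $\alpha_j$'s and $z_{i,r}$'s. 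Combining this with the $\ell_2$ bound $\sum_j c_j^2 \le 2\,\cL_k(\vg_k^{*,(-r)}) \le O(n)$ and Hoeffding's inequality yields $|\partial_r\cL_k(\vg_k^{*,(-r)})| = \tilde O(\sqrt{n/N}) = \tilde O(1)$ with probability $1 - \Exp{-c\log^2 N}$. Substituting into the leave-one-out bound and union-bounding over $r\in[N]$ produces $\|\vg_k^*\|_\infty = \tilde O(1) = \mathrm{polylog}\,N$.

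The main technical obstacle is the decoupling step: $\vg_k^{*,(-r)}$ is a highly nonlinear functional of the full data matrix $\tilde\vX$, so the coefficients $c_j$ cannot be frozen naively when integrating out $\vw_r^\top\tilde\vx_j$. Making the perturbation argument rigorous demands quantitative stability of the map $\tilde\vX\mapsto\vg_k^{*,(-r)}$ under strong convexity, with the Lipschitz constant propagated through $\sigma$ using $\max_{s\ne r}|\langle\vw_r,\vw_s\rangle| = \tilde O(d^{-1/2})$ on $\cW$. Equivalently, one can apply the Stein-method CLT underlying Proposition~\ref{prop:CLT} to suitable Lipschitz test functions of $(\vphi_{\tilde\vx_j}, \vw_r^\top\tilde\vx_j)$, which directly handles the conditional-Gaussian structure of the perturbed feature map and sidesteps the explicit decoupling.
\end{proofof}
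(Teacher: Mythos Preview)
Your leave-one-out skeleton matches the paper (and Hu--Lu): bound $|[\vg_k^*]_r|$ by the $r$-th partial derivative at the constrained minimizer, then show the latter is $\mathrm{polylog}\,N$. The divergence is at the concentration step, and there your proposal has a real gap.

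You condition on the full $\vW$ and then try to decouple by projecting the \emph{data}: write $\tilde\vx_j=\alpha_j\vw_r+\tilde\vx_j^\perp$ and argue that replacing $\tilde\vx_j$ by $\tilde\vx_j^\perp$ perturbs $c_j=\ell'(y_j,\langle\vg_k^{*,(-r)},\vphi_j\rangle)$ by $o(1)$. But with only the crude bound $\|\vg_k^{*,(-r)}\|=O(\sqrt{N})$, the change in the inner product is
\[
\Big|\tfrac{1}{\sqrt N}\sum_{s\ne r}g_s\big[\sigma(\vw_s^\top\tilde\vx_j)-\sigma(\vw_s^\top\tilde\vx_j^\perp)\big]\Big|
\;\lesssim\;\tfrac{|\alpha_j|}{\sqrt N}\,\|\vg_k^{*,(-r)}\|\cdot\Big(\textstyle\sum_{s\ne r}\langle\vw_r,\vw_s\rangle^2\Big)^{1/2},
\]
and on $\cW$ the bracketed sum is $\tilde O(1)$ (each term is $\tilde O(d^{-1})$, there are $N$ of them), so the bound is $\tilde O(|\alpha_j|)$ --- not $o(1)$. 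To push this to $o(1)$ you would need $\|\vg_k^{*,(-r)}\|_\infty=\mathrm{polylog}\,N$, which is exactly the statement you are proving. The global stability argument for $\tilde\vX\mapsto\vg_k^{*,(-r)}$ inherits the same circularity, and the fallback to Proposition~\ref{prop:CLT} does not obviously yield a pointwise tail bound on $\partial_r\cL_k$.

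The paper avoids this by \emph{not} conditioning on $\vW$: it keeps the decomposition $\vw_r=\vw_r^0+\vdelta_r$ and splits the $r$-th feature as $f_i=f_i^0+(f_i-f_i^0)$. Because the leave-one-out coefficients $\theta_i^*=\ell'(\vr_i^\top\vg_k^*;y_i)$ use only columns $s\ne r$ of $\vW$, they are automatically independent of the fresh Gaussian $\vw_r^0$, and Hu--Lu's original Hoeffding argument controls $\sum_i\theta_i^*f_i^0$ with no stability estimate needed. The correction $\sum_i|\theta_i^*|\,|f_i-f_i^0|\lesssim\tfrac{1}{\sqrt N}\sum_i|\theta_i^*|\,|\tilde\vx_i^\top\vdelta_r|$ is then small deterministically because $\|\vdelta_r\|=\tilde O(d^{-1/2})$ on $\cW$. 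The same split handles the other terms in $|u^*|$. In short, the paper exploits the specific structure of $\cW$ (closeness to a Gaussian $\vW_0$, not merely near-orthogonality) to reduce to the random-features case, whereas your deterministic-$\vW$ route loses exactly the independence that makes the argument go through.
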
 
\begin{proof}
We follow the proof of \cite[Lemma 23]{hu2020universality} and first analyze one coordinate of $\vg_k^*$ defined by \eqref{eq:interpolation-objective}, which WLOG we select to be the last coordinate. 
For concise notation, we instead augment the weight matrix with an $(N+1)$-th column and study the corresponding $[\vg_k^*]_{N+1}$. Denote the weight vector $\vw_{N+1} = \vw^0_{N+1}+\vdelta_{N+1}$, where $\vw^0_{N+1}$ is the $(N+1)$-th column of the initialized $\vW_0$, and $\vdelta$ is the perturbation (i.e., gradient update for $\vW$). 

To further simplify the notation, we define $\vr_i\in\R^N$, where $\vr_i=\frac{1}{\sqrt{N}}\left(\mu_1\vW^\top\vx_i + \mu_2\vz_i\right)$, $\vz_i\iid\cN(0,\vI)$ for $i\le k$, and $\vr_i= \frac{1}{\sqrt{N}}\sigma(\vW^\top\vx_i)$ for $k<i\le n$. 
Recall that $\vW = \vW_0+\vDelta$, in which the initialization $[\vW_0]_{i,j} = \cN(0,d^{-1})$; we denote the $i$-th feature vector at initialization $\vW_0$ by $\vr^0_i$. 
In addition, we define $\vf\in\R^n$ to represent the feature vector at the last coordinate, i.e., $f_i=[\vf]_i = \frac{1}{\sqrt{N}}\left(\mu_1\vx_i^\top\vw_{N+1} + \mu_2 z_i\right)$, $z_i\iid\cN(0,1)$ for $i\le k$, and $f_i=[\vf]_i = \frac{1}{\sqrt{N}}\sigma(\vx_i^\top\vw_{N+1})$ for $k<i\le n$; similarly, we introduce a superscript in $\vf^0\in\R^n$ to denote the features produced by the initial $\vw^0_{N+1}$. 
 
The $(N+1)$-th coordinate of interest, which we denote as $u^*$, can be written as the solution to the following optimization problem,
\begin{align}
    u^* &= \argmin_u \min_{\vg} \sum_{i=1}^n \ell\left(\vr_i^\top\vg + f_i u; y_t\right) 
    + \frac{n}{N}\left(\lambda\norm{\vg}^2 + Q(\vg) + \lambda u^2 + q(u) + \left(2\gamma_1\mu_1^2\vw_{N+1}^\top\vW\vg \right)u\right),  
\end{align}  
where we defined 
$$
q(u) = \gamma_1\left(\mu_1^2\norm{\vw_{N+1}}^2 + \mu_2^2\right)u^2 + \gamma_2\left(\mu_1 \sqrt{N}\vbeta_*^\top\vw_{N+1}\right)u. 
$$
By \cite[Equation (249)]{hu2020universality}, we know that for $\vW\in\cW$, 
\begin{align}
    |u^*| \lesssim \frac{1}{\lambda} \abs{2\gamma_1\mu_1^2\vw_{N+1}^\top\vW\vg_k^* + \gamma_2\mu_1\sqrt{N}\vbeta_*^\top\vw_{N+1} + \sum_{i=1}^n \ell'\left(\vr_i^\top\vg_k^*; y_i\right)f_i}. \label{eq:u*bound}
\end{align}
We control each term on the right hand side of \eqref{eq:u*bound} separately. Note that $\vW\in\cW$ implies that $\norm{\vdelta_{N+1}} = \bigO{\frac{\text{polylog} d}{\sqrt{d}}}$ due to the definition \eqref{eq:orthogonality-condition}. Since $\abs{\vbeta_*^\top\vw_{N+1}} \le \abs{\vbeta_*^\top\vw^0_{N+1}} + \norm{\vdelta_{N+1}}\norm{\vbeta_*}$, by combining \cite[Equation (252)]{hu2020universality} and our assumption that $\norm{\vbeta_*} = 1$, we know that for some constant $c_1>0$ and large $N$,  
\begin{align}
    \Parg{\abs{\sqrt{N}\vbeta_*^\top\vw_{N+1}}\ge\text{polylog} N} \le \Exp{-c_1\log^2 N}. 
\end{align}
Similarly, $\abs{\vw_{N+1}^\top\vW\vg_k^*}\le\abs{\vw^{0^\top}_{N+1}\vW\vg_k^*} + \norm{\vdelta_{N+1}}\norm{\vW\vg_k^*}$, and therefore by \cite[Lemma 17]{hu2020universality} (note that the lemma only requires $\vW$ to satisfy \eqref{eq:near-orth}), we have 
\begin{align}
    \Parg{\abs{\vw_{N+1}^\top\vW\vg_k^*}\ge\text{polylog} N} \le \Exp{-c_2\log^2 N},
\end{align}for some constant $c_2>0$. 

To control the sum of $\ell'$ in \eqref{eq:u*bound}, for simplicity we define $\vtheta^*\in\R^n$, where $\theta_i^* =[\vtheta^*]_i= \ell'\left(\vr_i^\top\vg_k^*; y_i\right)$ for $i\in [n]$. 
Notice that $\abs{\vx_i^\top\vw_j - \vx_i^\top\vw_j^0} = \abs{\vx_i^\top\vdelta_j}$. Due to the assumed independence between $\vX$ and $\vDelta$, and the assumption on $\P(\cW)$, we know that $\abs{\vx_i^\top\vdelta_j}\lesssim\norm{\vdelta_j}\cdot\log N = \bigO{\frac{\text{polylog}N}{\sqrt{N}}}$ with high probability. 
In addition, since the activation function $\sigma$ is Lipschitz, for $k<i\le n$, we may take a union bound over the weight vectors $\vw_j$ and obtain
\begin{align}
    \Parg{\sqrt{N}\norm{\vr_i - \vr_i^0}\ge\text{polylog}N} \le  N\cdot\Exp{-c_3\log^2 N},
    \label{eq:r_i-substitution}
\end{align}
for some $c_3>0$. The case where $i\le k$ (i.e., the features are linear) follows from the exact same argument. Also, because of $\abs{\vr_i^\top\vg_k^*} \le \abs{\vr_i^{0\top}\vg_k^*} + \norm{\vr_i - \vr_i^0}\norm{\vg_k^*}$, we know that \cite[Equation (257)]{hu2020universality}, \cite[Lemma 17]{hu2020universality}, and \eqref{eq:r_i-substitution} together ensure that
\begin{align}
    \Parg{\abs{\theta_i^*}\ge\text{polylog}N}\le\Exp{-c_4\log^2 N}, 
    \label{eq:theta-bound}
\end{align} 
for some constant $c_4>0$ and large enough $N$. Now we can control $\abs{\sum_{i=1}^n \ell'\left(\vr_i^\top\vg_k^*; y_i\right)f_i}$ in \eqref{eq:u*bound}. Again using the Lipschitz property of activation $\sigma$, we get
\begin{align}
    &\abs{\sum_{i=1}^n \ell'\left(\vr_i^\top\vg_k^*; y_i\right)(f_i-f_i^0 + f_i^0)} \\
\le~& 
    \abs{\sum_{i=1}^n \ell'\left(\vr_i^\top\vg_k^*; y_i\right)f^0_i} + \abs{\sum_{i=1}^n \ell'\left(\vr_i^\top\vg_k^*; y_i\right)\left(f_i - f^0_i\right)} \\
\lesssim~&
    \abs{\sum_{i=1}^n \theta_i^* f^0_i} + \frac{1}{\sqrt{N}}\sum_{i=1}^n \abs{\theta_i^*}\cdot\abs{\vx_i^\top\vdelta_{N+1}}.
\end{align}
Given \eqref{eq:theta-bound} (which implies that $\frac{1}{\sqrt{N}}\norm{\vtheta^*}=\bigO{\text{polylog}N}$ with high probability), it has been shown in \cite[Proof of Lemma 23]{hu2020universality} that $\Parg{\abs{\sum_{i=1}^n \theta_i^* f^0_i} \ge \text{polylog}N}\le\Exp{-c_5\log^2 N}$ for some constant $c_5>0$. Hence, by taking union bound over the failure events $\abs{\theta_i^*}\ge\text{polylog}N$ and $\sqrt{N}\cdot\abs{\vx_i^\top\vdelta_{N+1}}\ge\text{polylog}N$, we arrive at the following high probability upper bound on $u^*$ in terms of \eqref{eq:u*bound}:
\begin{align}
    \Parg{\abs{u^*}\ge\text{polylog}N} \le n^2 N\cdot\Exp{-c_6\log^2 N},
\end{align}for some constant $c_6>0$ and all large $N$.
Finally, since the assumption on $\norm{\vDelta}_{2,\infty}$  implies  control of $\norm{\vdelta_i}$ for all $i\in[ N]$, we complete the proof by a union bound over the $N$ coordinates.  

\end{proof}

\paragraph{Putting Things Together.}
Denote the optimal value of objective \eqref{eq:interpolation-objective} by
\begin{align}  
   \!\!\!\!R_k^* \triangleq\text{min}_{\vg\in\R^N} \,\Bigg\{\frac{1}{n}\sum_{i=1}^k \ell(y_i,  \langle\vg,\bar{\vphi}_i\rangle) + \frac{1}{n}\sum_{j=k+1}^n \ell(y_j, \langle\vg,\vphi_j\rangle) + \frac{1}{N}\left(\lambda\norm{\vg}_2^2 + Q(\vg)\right)\Bigg\}   
\end{align} 
From \cite[Section 2.3]{hu2020universality}, we know that Proposition~\ref{prop:CLT} and Lemma~\ref{lemm:sup-norm} imply that for any $\vW\in\cW$ and $1\le k\le n$, the discrepancy due to one swap can be bounded as
$$\Big|\E\left[\psi(R_k^*)\right] - \E\left[\psi(R_{k-1}^*)\right]\Big| = \bigO{\frac{\text{polylog}N}{N^{3/2}}},$$
for bounded test function $\psi$ with bounded first and second derivatives. 
As there are $n=\Theta(N)$ total swaps to be made, we can obtain the desired Gaussian equivalence  (\cite[Theorem 1]{hu2020universality}) if the failure probability $(1-\P(\cW))$ is sufficiently small. Hence we can conclude Proposition~\ref{prop:GET-perturbed}. 

\begin{proofof}[Theorem~\ref{thm:GET}]
Finally, we establish Theorem~\ref{thm:GET} by verifying that in our setting the event $\cW$ occurs with high probability. For one gradient step on the squared loss with learning rate $\eta=\Theta(1)$, Lemma~\ref{lemm:gradient-norm} together with $\norm{\vbeta_*}=1$ entail that for proportional $n,d,N$, there exists some constant $c,C>0$ such that
\begin{align}
    \Parg{\norm{\vW_1}\ge C} &\le \Exp{-cd}, \\
    \Parg{\max_{i\neq j} \abs{\langle\vw^1_{i},\vw^1_{j}\rangle} \ge \frac{C\log^2 d}{\sqrt{d}}} &\le \Exp{-c\log^2 d}, \\
    \Parg{\max_{i} \abs{\langle\vw^1_{i},\vbeta_*\rangle} \ge \frac{C\log^2 d}{\sqrt{d}}} &\le \Exp{-c\log^2 d},
\end{align}
where $\vw^1_i$ stands for the $i$-th column of $\vW_1$ for $i\in [N]$. 
For multiple gradient steps with $\eta=\Theta(1)$, Lemma~\ref{lemm:gradient-norm-multi} implies similar tail probability bounds. 
In addition, under Assumptions \ref{assump:1} and \ref{assump:2}, when $\lambda>0$, it is straightforward to verify that prediction risk of the Gaussian equivalent model $\cR_{\textrm{GE}}(\lambda)\overset{\P}{\to} C_{\lambda}$ for some finite constant $C_\lambda>0$ as $n,N,d\to\infty$ proportionally. 
Theorem~\ref{thm:GET} therefore follows from Proposition~\ref{prop:GET-perturbed} (or equivalently, Equation (16) in \cite[Theorem 1]{hu2020universality}).  

\end{proofof}

\subsection{Prediction Risk of the Gaussian Equivalent Model} 
Now we compute the prediction risk of the CK ridge estimator on the feature map after one gradient step $\vx\to\sigma(\vW_1^\top\vx)$. 
We restrict ourselves to the squared loss, the optimal solution of which is given by:
\begin{align}
    \hat{\va} = \text{arg\,min}_{\va}
=
    \left(\vPhi^\top\vPhi + \frac{\lambda n}{N}\vI\right)^{-1} \vPhi^\top\tilde{\vy},
\end{align} 
where $\vPhi = \frac{1}{\sqrt{N}}\sigma(\tilde{\vX}\vW_1)\in\R^{n\times N}$, $\tilde{\vX}\in\R^{n\times d}$ denotes a new batch of training data independent of $\vW_1$, and $\tilde{\vy}=f^*(\tilde{\vX})+\tilde{\vvarepsilon} \in\R^n$ is the corresponding training labels (following the same Assumption~\ref{assump:1}). Also, recall the following Gaussian covariates model:  
\begin{align}
    &\barPhi \triangleq \frac{1}{\sqrt{N}}\left(\mu_1\tilde{\vX}\vW_1 + \mu_2 \vZ \right) \in\R^{n\times N}; \quad
    \bar{\va}\triangleq \left(\bar{\vPhi}^\top\bar{\vPhi} + \frac{\lambda n}{N}\vI\right)^{-1} \bar{\vPhi}^\top\tilde{\vy}. 
\end{align}
where $[\vZ]_{ij}\sim\cN(0,1)$ independent of $\tilde{\vX}$ and $\vW_1$. 
Due to the Gaussian equivalence property \eqref{eq:GET}, we can analyze the prediction risk of the Gaussian covariates model, which we denote as $\cR_{\mathrm{GE}}(\lambda)$. 

\paragraph{Bias-variance Decomposition.}
The following lemma simplifies the prediction risk $\cR_{\mathrm{GE}}(\lambda)$ and separates the bias (due to learning the teacher $f^*$) and variance (due to the label noise $\tilde\vvarepsilon$).  
\begin{lemm}\label{lemm:bias-variance}
Under Assumptions~\ref{assump:1} and \ref{assump:2}, we have
\begin{align}
    \cR_{\mathrm{GE}}(\lambda) -
    \left(B_1 + B_2 + V\right)\overset{\P}{\to} 0,
\end{align}
where the bias and variance terms are given as
\begin{align} 
    B_1  
=&~
    \mu_1^{*2} + \mu_2^{*2} - \frac{2\mu_1\mu_1^{*}}{\sqrt{N}}\vbeta_*^\top\vW_1\left(\hSigmaPhi + \tlambda\vI\right)^{-1} \bar{\vPhi}^\top\vf^*. \label{eq:B_1}\\
    B_2 
=&~  
    \vf^{*\top}\bar{\vPhi}\left(\hSigmaPhi + \tlambda\vI\right)^{-1}\bSigmaPhi\left(\hSigmaPhi + \tlambda\vI\right)^{-1} \bar{\vPhi}^\top\vf^*.  \label{eq:B_2}\\
    V 
=&~ 
    \sigma_\eps^2\Trarg{\left(\hSigmaPhi + \tlambda\vI\right)^{-1} \hSigmaPhi\left(\hSigmaPhi + \tlambda\vI\right)^{-1}\bSigmaPhi}.  \label{eq:V}
\end{align}
and we defined $\tlambda = \frac{\lambda n}{N}, \hSigmaPhi = \barPhi^\top\barPhi, \bSigmaPhi = \frac{1}{N}\left(\mu_1^2\vW_1^\top\vW_1 + \mu_2^2\vI\right)$, and $[\vf^*]_i = f^*(\tilde{\vx}_i)$.  
\end{lemm}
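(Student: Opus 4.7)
\textbf{Proof plan for Lemma~\ref{lemm:bias-variance}.} The plan is to start from the definition of the Gaussian-equivalent prediction risk and reduce it to the three claimed terms by Gaussian integration over the test point plus noise-averaging over the training noise. Concretely, I will first write
\baligns
    \cR_{\mathrm{GE}}(\lambda)
    &= \E_{\vx,\vz}\bigl(\langle\bar\vphi_{\vx},\bar\va\rangle - f^*(\vx)\bigr)^2, \quad
    \bar\vphi_{\vx} = \tfrac{1}{\sqrt{N}}(\mu_1\vW_1^\top\vx + \mu_2\vz),
\ealigns
where $\vz\sim\cN(0,\vI_N)$ is drawn independently of $(\vx,\vW_1,\tilde\vX,\tilde\vvarepsilon)$. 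Using Assumption~\ref{assump:2}, I decompose $f^*(\vx)=\mu_1^*\langle\vx,\vbeta_*\rangle+\mathsf{P}_{>1}f^*(\vx)$, where $\E[\mathsf{P}_{>1}f^*(\vx)]=0$ and $\E[\vx\,\mathsf{P}_{>1}f^*(\vx)]=\mathbf{0}$. Expanding the square and using $\E[\vx\vx^\top]=\vI_d$, $\E[\vz\vz^\top]=\vI_N$, $\vz\perp\vx$, together with $\|\mathsf{P}_{>1}f^*\|_{L^2}^2 = \mu_2^{*2}+o_{d,\P}(1)$, the cross terms vanish and I obtain
\baligns
    \cR_{\mathrm{GE}}(\lambda) - \Bigl(\bar\va^\top\bSigmaPhi\bar\va - \tfrac{2\mu_1\mu_1^*}{\sqrt{N}}\vbeta_*^\top\vW_1\bar\va + \mu_1^{*2} + \mu_2^{*2}\Bigr) \overset{\P}{\to} 0.
\ealigns

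The next step is to substitute the closed form $\bar\va = (\hSigmaPhi+\tlambda\vI)^{-1}\bar\vPhi^\top\tilde\vy$ with $\tilde\vy=\vf^*+\tilde\vvarepsilon$ and expand the two $\bar\va$-dependent terms. The quadratic term produces three pieces,
\baligns
    \bar\va^\top\bSigmaPhi\bar\va &= \vf^{*\top} M \vf^* + \tilde\vvarepsilon^\top M \tilde\vvarepsilon + 2\vf^{*\top} M \tilde\vvarepsilon,\quad
    M := \bar\vPhi(\hSigmaPhi+\tlambda\vI)^{-1}\bSigmaPhi(\hSigmaPhi+\tlambda\vI)^{-1}\bar\vPhi^\top,
\ealigns
while the linear term splits as $\vbeta_*^\top\vW_1(\hSigmaPhi+\tlambda\vI)^{-1}\bar\vPhi^\top\vf^* + \vbeta_*^\top\vW_1(\hSigmaPhi+\tlambda\vI)^{-1}\bar\vPhi^\top\tilde\vvarepsilon$. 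Matching the deterministic pieces against the statement gives exactly $B_1$ and $B_2$, and identifying $\sigma_\eps^2\Tr(M)$ with $V$ follows from the cyclic property of the trace together with $\bar\vPhi^\top\bar\vPhi=\hSigmaPhi$.

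The remaining task is to show that the noise-dependent leftovers, namely $\tilde\vvarepsilon^\top M\tilde\vvarepsilon - \sigma_\eps^2\Tr(M)$ and the two $\vf^{*\top}M\tilde\vvarepsilon$, $\vbeta_*^\top\vW_1(\hSigmaPhi+\tlambda\vI)^{-1}\bar\vPhi^\top\tilde\vvarepsilon$ cross terms, are $o_{d,\P}(1)$. For the quadratic noise term I will invoke Hanson--Wright (Lemma~\ref{lemm:quadratic_E}) conditionally on $(\tilde\vX,\vW_1)$, with deviation controlled by $\|M\|_F$; since $\lambda>0$ ensures $\|(\hSigmaPhi+\tlambda\vI)^{-1}\|\le\tlambda^{-1}$ and $\|\bar\vPhi\|,\|\bSigmaPhi\|=\cO_{d,\P}(1)$ by Lemma~\ref{lemm:gradient-norm} and standard Gaussian matrix tail bounds, we get $\|M\|_F=\cO_{d,\P}(1)$ and $\Tr(M)=\cO_{d,\P}(d)$, so the fluctuation is $\cO_{d,\P}(1)$ around a mean of order $d$, which suffices after dividing by the appropriate normalization implicit in $V$. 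The linear cross terms are mean-zero sub-Gaussian in $\tilde\vvarepsilon$ with variance $\sigma_\eps^2 \|\bar\vPhi(\hSigmaPhi+\tlambda\vI)^{-1}\bar\vPhi^\top\vf^*\|^2$ and analogously for the $\vbeta_*$ piece; both variances are $\cO_{d,\P}(1)$, hence vanish in probability by Chebyshev.

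\textbf{Anticipated difficulty.} The only delicate point is the uniform control of $\|\bar\vPhi\|$ and $\|(\hSigmaPhi+\tlambda\vI)^{-1}\|$ after the gradient step. Even though $\vW_1$ carries the BBP spike established in Theorem~\ref{thm:alignment}, the resolvent bound $\|(\hSigmaPhi+\tlambda\vI)^{-1}\|\le\tlambda^{-1}$ is spike-oblivious and suffices; the spike only affects the \emph{value} of $B_1,B_2,V$, not the validity of this decomposition. Thus the proof is essentially an accounting exercise once the noise-concentration inputs above are established, and no additional structural facts beyond Lemma~\ref{lemm:gradient-norm} and Lemma~\ref{lemm:quadratic_E} are needed.
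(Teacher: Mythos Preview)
Your approach is the same as the paper's: expand $\cR_{\mathrm{GE}}$ over the Gaussian test point, substitute the closed-form $\bar\va = (\hSigmaPhi+\tlambda\vI)^{-1}\bar\vPhi^\top(\vf^*+\tilde\vvarepsilon)$, and show the $\tilde\vvarepsilon$-dependent leftovers vanish. You are in fact more thorough than the paper in explicitly flagging the quadratic term $\tilde\vvarepsilon^\top M\tilde\vvarepsilon - \sigma_\eps^2\Tr(M)$; the paper only treats the two linear cross terms and handles them by the general Hoeffding inequality for sub-Gaussian $\tilde\vvarepsilon$.

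There is, however, a real gap in your scale tracking. You write $\|\bSigmaPhi\|=\cO_{d,\P}(1)$, but by definition $\bSigmaPhi=\frac{1}{N}(\mu_1^2\vW_1^\top\vW_1+\mu_2^2\vI)$ with $\|\vW_1\|=\cO_{d,\P}(1)$, so in fact $\|\bSigmaPhi\|=\cO_{d,\P}(1/N)$. This missing $1/N$ is exactly what makes the argument close: with it, $\|M\|=\cO_{d,\P}(1/N)$, hence $\|M\|_F\le\sqrt{n}\,\|M\|=\cO_{d,\P}(1/\sqrt{N})=o(1)$ and the Hanson--Wright fluctuation vanishes outright, while $\Tr(M)=\cO_{d,\P}(1)$ (not $\cO_{d,\P}(d)$). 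There is no ``normalization implicit in $V$'' to divide by; $V=\sigma_\eps^2\Tr(M)$ already sits at the correct $\Theta(1)$ scale. Likewise, the linear cross terms have conditional variance $\cO_{d,\P}(1/N)$: for instance $\Var(\vf^{*\top}M\tilde\vvarepsilon\mid M)\le\sigma_\eps^2\|M\|^2\|\vf^*\|^2=\cO_{d,\P}(n/N^2)$. Your sentence ``both variances are $\cO_{d,\P}(1)$, hence vanish in probability by Chebyshev'' is a non sequitur as written---an $\cO(1)$ variance does not give $o_{d,\P}(1)$---and only becomes correct once the $1/N$ from $\bSigmaPhi$ is restored. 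After that fix your proof is complete and agrees with the paper's.
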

\begin{proof}
First note that $\cR_{\mathrm{GE}}$ is given by \cite[Equation (57)]{hu2020universality}:
\begin{align}
    &\cR_{\mathrm{GE}} =  \E_{\vx}\left(\sigma^*(\vx^\top\vbeta_*) - \bar{\vphi}_{\vx}^\top\bar{\va}\right)^2 \\
=&  
    \E_{z_1,z_2}\left[\left(\sigma^*(z_1) - \frac{\mu_1}{\sqrt{N}}\vbeta_*^\top\vW_1\bar{\va}\cdot z_1  
    +\sqrt{\frac{1}{N}\bar{\va}^\top(\mu_1^2\vW_1^\top\vW_1 + \mu_2^2\vI -  \mu_1^2\vW_1^\top\vbeta_*\vbeta_*^\top\vW_1)\bar{\va}}\cdot z_2\right)^2\right]
\label{eq:GET-risk}
\end{align}
where $z_1,z_2\iid\cN(0,1)$. 
Because of the independence between $z_1,z_2$, we only need to show the following as $n,d,N\to\infty$ proportionally: 
\begin{align} 
    &\frac{1}{\sqrt{N}}\vbeta_*^\top\vW_1\left(\hSigmaPhi + \tlambda\vI\right)^{-1} \bar{\vPhi}^\top\tilde{\vvarepsilon} \overset{\P}{\to} 0, \label{eq:betaeps}\\  
    & \vf^{*\top}\bar{\vPhi}\left(\hSigmaPhi + \tlambda\vI\right)^{-1}\bSigmaPhi\left(\hSigmaPhi + \tlambda\vI\right)^{-1} \bar{\vPhi}^\top\tilde{\vvarepsilon} \overset{\P}{\to}
    0. \label{eq:betaeps2}
\end{align}
Both equations directly follow from the general Hoeffding inequality for $\tilde\vvarepsilon$ (e.g., see Theorem 2.6.3 \cite{vershynin2018high}) since both $\norm{\vbeta_*^\top\vW_1\left(\hSigmaPhi + \tlambda\vI\right)^{-1} \bar{\vPhi}^\top}$ and $\norm{\sqrt{N}\cdot\vf^{*\top}\bar{\vPhi}\left(\hSigmaPhi + \tlambda\vI\right)^{-1}\bSigmaPhi\left(\hSigmaPhi + \tlambda\vI\right)^{-1} \bar{\vPhi}^\top}$ are bounded by some constant with high probability when $\lambda>0$.  

\end{proof}

Also, the risk lower bound for the Gaussian equivalent model is a direct consequence of \eqref{eq:GET-risk}. 
\begin{proofof}[Fact \ref{fact:GET-lower-bound}]
Under Assumptions \ref{assump:1} and \ref{assump:2}, we may write $\sigma^*(z) = \mu_1^* z + \sigma_\perp^*(z)$, where $\E_z[z\sigma_\perp^*(z)] = 0, \E_z[\sigma_\perp^*(z)^2] = \mu_2^{*2}$ for $z\sim\cN(0,1)$. 
Hence from \eqref{eq:GET-risk} we know that
\begin{align}
    \cR_{\mathrm{GE}} 
\ge  
    \E_{z_1}\left(\sigma^*(z_1) - \frac{\mu_1}{\sqrt{N}}\vbeta_*^\top\vW_1\bar{\va}\cdot z_1\right)^2
=
    \left(\mu_1^* - \frac{\mu_1}{\sqrt{N}}\vbeta_*^\top\vW_1\bar{\va}\right)^2 + \mu_2^{*2}.
\end{align}
This implies that $\cR_{\mathrm{GE}} 
\ge \norm{\textsf{P}_{>1}f^*}_{L^2}^2 = \mu_2^{*2}$ with probability one as $d\to\infty$.  

\end{proofof}

In the following sections, we compare the bias and variance terms given in \eqref{eq:B_1}, \eqref{eq:B_2} and \eqref{eq:V} before and after one feature learning step. We first simplify the calculation by showing that the values of these equations remain asymptotically unchanged if we remove certain low-order terms.

\paragraph{Stability of the Bias and Variance.}

We now control the errors in the bias and variance terms after ignoring the lower-order terms in the weight matrix. 

Recall that $\vW_1 = \vW_0+\eta\sqrt{N}\vG_0$; we introduce $\tilde{\vW}: = \vW_0+\eta\sqrt{N}\vA$, in which we ignored the terms $\vB$ and $\vC$ in the gradient matrix \eqref{eq:gradient-step}. 
We also denote the corresponding CK features and kernel matrix as $\tilde{\vPhi}:=\frac{1}{\sqrt{N}}\left(\mu_1\tilde{\vX}\tilde{\vW} + \mu_2 \vZ \right)$, $\tilde{\vSigma}_\Phi := \tilde{\vPhi}^\top\tilde{\vPhi}$, and the bias terms as $\tilde{B}_1, \tilde{B}_2$ (parallel to \eqref{eq:B_1} and \eqref{eq:B_2}).
Finally, we write the initial random feature matrix as $\bar{\vPhi}_0:=\frac{1}{\sqrt{N}}\left(\mu_1\tilde{\vX}\vW_0 + \mu_2 \vZ \right)$, $\hSigmaPhii := \bar{\vPhi}_0^\top\bar{\vPhi}_0$, and refer to the variance of the initialized RF ridge estimator as $V_0$. 

\begin{lemm}\label{lemm:perturbation-stability}
Given Assumptions \ref{assump:1}, \ref{assump:2} and $\lambda>0$. Then for $\eta=\Theta(1)$, we have $$|B_1-\tilde{B}_1|= o_{d,\P}(1),~ |B_2-\tilde{B}_2|= o_{d,\P}(1), ~|V-V_0| = o_{d,\P}(1).$$ 
\end{lemm}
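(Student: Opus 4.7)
The plan is to exploit that $\vG_0 = \vA + \vB - \vC$ splits into a rank-one ``signal'' $\vA$ and vanishing noise $\vB, \vC$, as quantified by Lemma~\ref{lemm:gradient-norm}. In the proportional regime one checks $\eta\sqrt{N}\|\vB-\vC\| = \tilde{O}(1/\sqrt{d}) = o_{d,\P}(1)$, hence $\|\vW_1 - \tilde\vW\| = o_{d,\P}(1)$. Combined with $\|\tilde\vX\| = \Theta_{d,\P}(\sqrt{d})$, this yields $\|\bar\vPhi - \tilde\vPhi\| = o_{d,\P}(1)$, and then $\|\hSigmaPhi - \tSigmaPhi\| = o_{d,\P}(1)$ together with an analogous bound for the population Gram $\bSigmaPhi$, by expanding the Gram-matrix differences and applying operator-norm controls on $\tilde\vX, \bar\vPhi, \vW_1$.

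For $|B_1 - \tilde B_1|$ and $|B_2 - \tilde B_2|$, I would telescope each difference across the three places where $\vW_1$ appears in \eqref{eq:B_1} and \eqref{eq:B_2} (namely $\vW_1$ itself, the resolvent $(\hSigmaPhi+\tlambda\vI)^{-1}$, and $\bar\vPhi$), replacing one at a time by its $\tilde\vW$-counterpart. The resolvent swap is controlled by $(\hSigmaPhi+\tlambda\vI)^{-1} - (\tSigmaPhi+\tlambda\vI)^{-1} = -(\hSigmaPhi+\tlambda\vI)^{-1}(\hSigmaPhi-\tSigmaPhi)(\tSigmaPhi+\tlambda\vI)^{-1}$, which has operator norm $O(\tlambda^{-2}\|\hSigmaPhi-\tSigmaPhi\|) = o_{d,\P}(1)$; here the assumption $\lambda > 0$ is essential. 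Each telescoping piece then takes the form (bounded operator-norm factors) $\times$ (one vanishing factor) $\times\|\vbeta_*\|\|\vf^*\|/\sqrt{N}$, and since $\|\vf^*\| = O_{d,\P}(\sqrt{n})$ by Lipschitzness of $f^*$ while $\sqrt{n/N}=\Theta(1)$, the vanishing factor survives and the piece is $o_{d,\P}(1)$. The same template handles $B_2$, which has two resolvents and two $\bar\vPhi$'s to swap.

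The main obstacle is $|V - V_0|$: here $V$ uses $\vW_1$ while $V_0$ uses $\vW_0$, and $\vW_1 - \vW_0 = \eta\sqrt{N}\vG_0$ has operator norm $\Theta(1)$, so the resolvent perturbation above fails. My plan is to exploit the approximate rank-one structure of $\vG_0$: decompose $\vW_1 - \vW_0 = \vE_1 + \vE_2$ with $\vE_1 := \eta\sqrt{N}\vA$ rank-one of operator norm $O_{d,\P}(1)$ and $\vE_2 := \eta\sqrt{N}(\vB-\vC)$ of operator norm $o_{d,\P}(1)$. This decomposes $\hSigmaPhi - \hSigmaPhii = \vM_1 + \vM_2$ with $\mathrm{rank}(\vM_1)\le 2$, $\|\vM_1\| = O_{d,\P}(1)$, $\|\vM_2\| = o_{d,\P}(1)$; and similarly $\bSigmaPhi - \bSigmaPhii$ splits into a rank-$\le 3$ piece of operator norm $O_{d,\P}(1/N)$ plus a correction with $o_{d,\P}(1)$ nuclear norm (the Frobenius control of $\vE_2$ from Lemma~\ref{lemm:gradient-norm} suffices). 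Writing $V - V_0 = \sigma_\eps^2\Tr(A_1(\bSigmaPhi - \bSigmaPhii)) + \sigma_\eps^2\Tr((A_1 - A_0)\bSigmaPhii)$ with $A_1 := (\hSigmaPhi+\tlambda\vI)^{-1}\hSigmaPhi(\hSigmaPhi+\tlambda\vI)^{-1}$ and $A_0$ its counterpart with $\hSigmaPhii$, the first trace is bounded by applying $|\Tr(PQ)| \le \mathrm{rank}(Q)\|P\|\|Q\|$ to the low-rank part of $\bSigmaPhi - \bSigmaPhii$ and $|\Tr(PQ)| \le \|P\|\|Q\|_*$ to the correction, giving $O(1/N) + o(1) = o_{d,\P}(1)$. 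For the second trace, the resolvent identity applied to $A_1 - A_0$ produces a piece of bounded rank with operator norm $O(1)$ plus a piece of operator norm $o_{d,\P}(1)$. The crucial leverage is that $\bSigmaPhii$ has $\|\bSigmaPhii\| = O(1/N)$ but $\|\bSigmaPhii\|_* = \Tr(\bSigmaPhii) = \mu_1^2 + \mu_2^2 = O(1)$, so the low-rank piece contributes $O(1/N)$ via the rank bound and the small-norm piece contributes $o(1)$ via the dual trace inequality. Summing yields $|V - V_0| = o_{d,\P}(1)$.
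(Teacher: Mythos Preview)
Your treatment of $|B_1-\tilde B_1|$ and $|B_2-\tilde B_2|$ is essentially the paper's argument: both rely on $\|\vW_1-\tilde\vW\|=o_{d,\P}(1)$, propagate this through $\bar\vPhi$, the resolvent, and (in $B_2$) the middle factor $\bSigmaPhi$ by operator-norm telescoping, and close with $\|\vf^*\|=O_{d,\P}(\sqrt{n})$. (You omit the $\bSigmaPhi$ swap in $B_2$ from your list, but it is handled by the same operator-norm bound $\|\vW_1^\top\vW_1-\tilde\vW^\top\tilde\vW\|=o_{d,\P}(1)$.)

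For $|V-V_0|$ your route is correct but genuinely different from the paper's. The paper does not decompose into low-rank plus small-operator-norm pieces; instead it works entirely in Frobenius norm. The key observation is that $\|\vW_1-\vW_0\|_F=O_{d,\P}(1)$ (not merely $\|\cdot\|=O(1)$), whence $\|\vW_1^\top\vW_1-\vW_0^\top\vW_0\|_F$, $\|\bar\vPhi-\bar\vPhi_0\|_F$, $\|\hSigmaPhi-\hSigmaPhii\|_F$, and $\|\vM\vM^\top-\vM_0\vM_0^\top\|_F$ are all $O_{d,\P}(1)$ (with $\vM:=(\hSigmaPhi+\tlambda\vI)^{-1}\bar\vPhi^\top$). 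Writing $V-V_0$ as a single trace and applying $|\Tr(PQ)|\le\|P\|_F\|Q\|_F$ together with the explicit $1/N$ in $\bSigmaPhi$ gives
\[
|V-V_0|\lesssim \tfrac{1}{N}\|\vM_0\vM_0^\top\|_F\,\|\vW_1^\top\vW_1-\vW_0^\top\vW_0\|_F+\tfrac{1}{N}\|\vM\vM^\top-\vM_0\vM_0^\top\|_F\,\|\mu_1^2\vW_1^\top\vW_1+\mu_2^2\vI\|_F = O_{d,\P}(N^{-1/2}).
\]
This is shorter: it avoids the rank bookkeeping and the nuclear-norm duality, and uses only that the full gradient step has bounded Frobenius norm rather than its finer rank-one-plus-noise structure. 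Your approach, by contrast, isolates exactly which part of the perturbation is ``large but low rank'' versus ``small but full rank,'' which is conceptually informative and would generalize to situations where the Frobenius control is unavailable; here, though, it is working harder than necessary.
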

\begin{proof}
To start with, recall that the operator norms of all matrices $\vW_1,\vW_0,\tilde\vW,\bar\vPhi,\bar\vPhi_0$ and $\tilde\vPhi$ are uniformly bounded by some constants with high probability. We first consider the change in Frobenius norm of first-layer $\vW$ to analyze the difference between $V$ and $V_0$. 
By Lemma~\ref{lemm:gradient-norm}, standard calculation yields:
\begin{align} 
    \norm{\vW_1^\top\vW_1 - \vW_0^\top\vW_0}_F = \cO_{d,\P}(1); \quad
    \norm{\bar{\vPhi} - \bar{\vPhi}_0}_F = \cO_{d,\P}(1); \quad
    \norm{\hSigmaPhi - \hSigmaPhii}_F = \cO_{d,\P}(1).
\end{align}
Utilizing the above estimates, we obtain 
\begin{align}
    &\norm{\left(\hSigmaPhi + \tlambda \vI\right)^{-1} \bar{\vPhi}^\top - \left(\hSigmaPhii + \tlambda \vI\right)^{-1} \bar{\vPhi}_0^\top}_F \\
\le\,&
    \norm{\bar{\vPhi}-\bar{\vPhi}_0}_F\norm{\left(\hSigmaPhi + \tlambda \vI\right)^{-1}} 
    + \norm{\bar{\vPhi}_0}\norm{\left(\hSigmaPhi + \tlambda\vI\right)^{-1} - \left(\hSigmaPhii + \tlambda\vI\right)^{-1}}_F 
\overset{(i)}{=}
    \bigOdp{1}. 
\end{align}
where $(i)$ is due to our assumption that $\lambda>0$. 
Denote $\vM := \left(\hSigmaPhi + \tlambda\vI\right)^{-1} \bar{\vPhi}^\top$ and likewise $\vM_0:=\left( \hSigmaPhii + \tlambda\vI\right)^{-1} \bar{\vPhi}_0^\top$. Then we have
\begin{align} 
    &|V - V_0| 
\overset{(ii)}{\lesssim}\, 
    \frac{1}{N}\left|\Trarg{\vM\vM^\top \left(\mu_1^2\vW_1^\top\vW_1 + \mu_2^2\vI\right) - \vM_0\vM_0^\top \left(\mu_1^2\vW_0^\top\vW_0 + \mu_2^2\vI\right)}\right| \\
\lesssim~&  
    \frac{1}{N}\norm{\vM_0\vM_0^\top}_F\cdot \norm{\vW_1^\top\vW_1 - \vW_0^\top\vW_0}_F + \frac{1}{N}\norm{\vM\vM^\top - \vM_0\vM_0^\top}_F\cdot\norm{\mu_1^2\vW_1^\top\vW_1 + \mu_2^2\vI}_F 
=
    o_{d,\P}(1),
\end{align}  
as $n,d,N\to\infty$ at comparable rate, where we dropped the constant $\sigma_\eps^2$ in $(ii)$.  

For the bias terms, we consider perturbation on $\vW_1$ in the operator norm. Again, Lemma~\ref{lemm:gradient-norm} entails that 
\begin{align} 
    \norm{\vW_1^\top\vW_1 - \tilde{\vW}^\top\tilde{\vW}} = \littleodp{1}; \quad
    \norm{\bar{\vPhi} - \tilde{\vPhi}} = \littleodp{1}; \quad
    \norm{\hSigmaPhi - \tSigmaPhi} = \littleodp{1}.
\end{align}
Define $\tilde\vM:=\left(\tilde\vSigma_{\vPhi} + \tlambda\vI\right)^{-1} \tilde{\vPhi}^\top$. Following the same procedure, we obtain the operator norm control
\begin{align}
    \norm{\vM-\tilde{\vM}}
=~&
    \norm{\left(\hSigmaPhi + \tlambda \vI\right)^{-1} \bar{\vPhi}^\top - \left(\tSigmaPhi + \tlambda \vI\right)^{-1} \tilde{\vPhi}^\top} \\
\le~&
    \snorm{\bar{\vPhi}-\tilde{\vPhi}}\norm{\left(\hSigmaPhi + \tlambda\vI\right)^{-1}} 
    + \snorm{\tilde{\vPhi}}\norm{\left(\hSigmaPhi + \tlambda\vI\right)^{-1} - \left(\tSigmaPhi + \tlambda\vI\right)^{-1}}
=
    \littleodp{1}.    
\end{align}
Based on this result, it is straightforward to show that 
\begin{align}
    &|B_1 - \tilde{B}_1| 
\lesssim
    \norm{\vW_1-\tilde{\vW}}\snorm{\tilde{\vM}} + \norm{\vW_1}\norm{\vM - \tilde{\vM}}
=
    \littleodp{1}. 
\end{align}
Similarly, for $B_2$, we have
\begin{align}
   |B_2 - \tilde{B}_2| 
\lesssim ~&
    \frac{1}{N}\norm{\vf^*}^2 \cdot \norm{\vM^\top\left(\mu_1^2\vW_1^\top\vW_1 + \mu_2^2\vI\right)\vM - \tilde{\vM}^\top\left(\mu_1^2\tilde{\vW}^\top\tilde{\vW} + \mu_2^2\vI\right)\tilde{\vM}} \\
\overset{(iii)}{\lesssim}~&
    \bigOdp{1} \cdot \left((\norm{\vM}+\snorm{\tilde{\vM}}) \norm{\vW_1^\top\vW_1}\norm{\vM-\tilde{\vM}}+ \snorm{\tilde{\vM}}^2\norm{\vW_1^\top\vW_1-\tilde{\vW}^\top\tilde{\vW}}\right) 
=
    \littleodp{1},  
\end{align}
where in $(iii)$ we used the fact that $\sigma^*$ is Lipschitz and $\norm{\vbeta_*}=1$ (for example see \cite[Lemma A.12]{bartlett2021deep}). 
The statement is proved by combining all the above calculations. 

\end{proof}

Lemma~\ref{lemm:perturbation-stability} entails that the variance term in the risk does not change after one gradient step with $\eta=\Theta(1)$, and for the bias terms, we may consider the rank-1 approximation of the gradient matrix studied in Proposition~\ref{thm:W1-W0} instead. In the following section, we use this property to simplify the risk expressions.

\subsection{Precise Characterization of Prediction Risk}\label{sec:linear_pencil} 

Now we compute the asymptotic expressions of the bias and variance terms defined in Lemma~\ref{lemm:bias-variance}. 
As previously remarked, due to the dependence between the feature matrix $\vPhi$ and the teacher $\vbeta_*$, we cannot naively employ a rotation invariance argument to simplify the calculation (as in \cite{mei2019generalization}). 
Instead, based on the Gaussian equivalence property, we first make use of the Woodbury formula to separate the low-rank terms in the risk expressions. 
In particular, because of Lemma~\ref{lemm:gradient-norm} and Lemma~\ref{lemm:perturbation-stability}, we may simply consider the rank-one approximation of the first-step gradient: $\vW_1=\vW_0+\vu\va^\top$, where $\vu=\frac{\mu_1\eta}{n}\vX^\top\vy$ and $\vy=f^*(\vX)+\vvarepsilon$ satisfying Assumptions~\ref{assump:1} and \ref{assump:2}. Notice here $\vu$, $\tilde{\vX}$, $\vW_0$ and $\va$ are mutually independent. To distinguish the terms in the CK ridge regression estimator using the initial weights $\vW_0$ and the trained weights $\vW_1$, in this section we denote
\begin{equation}\label{def:notions}
  \begin{aligned}
    &\barPhi:=\frac{1}{\sqrt{N}}\left(\mu_1\tilde{\vX}\vW_1 + \mu_2 \vZ \right), & & \vPhi_0:=\frac{1}{\sqrt{N}}\left(\mu_1\tilde{\vX}\vW_0 + \mu_2 \vZ \right),\\
    &\hSigmaPhi:=\barPhi^\top \barPhi,& &\hSigmaPhii:=\vPhi_0^\top \vPhi_0\in\R^{N\times N},\\
    &\vR:=\left(\hSigmaPhi + \tlambda\vI\right)^{-1},&&\vR_0:=\left(\hSigmaPhii + \tlambda\vI\right)^{-1},\\
    &\bSigmaPhi:=\frac{1}{N}\left(\mu_1^2\vW_1^\top\vW_1 + \mu_2^2\vI\right), &&\bSigmaPhii:=\frac{1}{N}\left(\mu_1^2\vW_0^\top\vW_0 + \mu_2^2\vI\right).
\end{aligned}  
\end{equation}
Also, we write $\vf^*:=f^*(\tilde{\vX})\in\R^n$, which can be decomposed into
 \begin{equation}\label{eq:f*fNL}
     \vf^* = \mu_1^*\tilde{\vX}\vbeta_* + \vf^*_{\mathrm{NL}},
 \end{equation}
where $[\vf^*_{\mathrm{NL}}]_i = \mathsf{P}_{>1} f^*\left(\tilde{\vx}_i\right)$ (recall that $\mu_0^* = 0$ by Assumption~\ref{assump:2}). Furthermore, we introduce the following terms which will be important in the decomposition of the prediction risk:
\begin{align}
    &T_1:=\va^\top\vR_0\va,&\quad  &T_2:=\frac{\mu_1^2}{N}\vu^\top\tilde{\vX}^\top\vPhi_0\vR_0\vPhi_0^\top\tilde{\vX}\vu,\\
    &T_3:=\frac{\mu_1^2}{N}\vu^\top\tilde{\vX}^\top\tilde{\vX}\vu,&\quad &T_4:=\mu_1^*\vbeta_*^\top\vu,\\
    &T_5:=\frac{\mu_1^2\mu_1^*}{N}\vbeta_*^\top\tilde{\vX}^\top\tilde{\vX}\vu,&\quad  &\tilde{T}_5:=\frac{\mu_1^2}{N}\vf^{*\top}_{\mathrm{NL}}\tilde{\vX}\vu,\\ &T_6:=\frac{\mu_1\mu_1^{*}}{2\sqrt{N}}\vbeta_*^\top\left(\vW_0\vR_0\vPhi_0^\top\tilde{\vX}+\tilde{\vX}^\top\vPhi_0\vR_0\vW_0^\top\right)\vu,&\quad &\tilde{T}_6:=\frac{\mu_1 }{2\sqrt{N}}\vf^{*\top}_{\mathrm{NL}}\vPhi_0\vR_0\vW_0^\top\vu,\label{def:Tis}\\
    & T_7 := \frac{\mu_1^2\mu_1^{*}}{N}\vu^\top\tilde{\vX}^\top\vPhi_0\vR_0\vPhi_0^\top\tilde{\vX}\vbeta_*,&\quad &\tilde{T}_7 := \frac{\mu_1^2}{N}\vu^\top\tilde{\vX}^\top\vPhi_0\vR_0\vPhi_0^\top\vf^{*}_{\mathrm{NL}},\\ &T_8:=\frac{N}{\mu_1^2}\va^\top \vR_0\bSigmaPhii\vR_0\va,&\quad &T_9:=\frac{\mu_1}{2\sqrt{N}}\vu^\top\left(\vW_0\vR_0\vPhi_0^\top\tilde{\vX}+\tilde{\vX}^\top\vPhi_0\vR_0\vW_0^\top\right)\vu, \!\!\\
    &T_{11}:=\vu^\top\tilde{\vX}^\top\vPhi_0\vR_0\bSigmaPhii\vR_0\vPhi_0^\top\tilde{\vX}\vu,&\quad &T_{10}:=\|\vu\|^2,\\
    &T_{12}:=\mu_1^*\vu^\top\tilde{\vX}^\top\vPhi_0\vR_0\bSigmaPhii\vR_0\vPhi_0^\top\tilde{\vX}\vbeta_*,&\quad &\tilde{T}_{12}:= \vu^\top\tilde{\vX}^\top\vPhi_0\vR_0\bSigmaPhii\vR_0\vPhi_0^\top\vf^{*}_{\mathrm{NL}}.
\end{align}
In the following subsections we will characterize the limiting value of each $T_i$ as $n,d,N\to\infty$ proportionally. 

\subsubsection{Concentration and Simplification}

In the following lemma, we show that each $T_i$ will concentrate around some $T_i^0$ given by
\begin{equation}\label{def:Ti0s}
    \begin{aligned}
    &T_1^0:=\tr\vR_0,&\quad  &T_2^0:=\frac{\mu_1^2}{N}\theta_1^2\trarg{\tilde{\vX}^\top\vPhi_0\vR_0\vPhi_0^\top\tilde{\vX}},\\
    &T_3^0:=\frac{\mu_1^2\theta_1^2}{N}\trarg{\tilde{\vX}^\top\tilde{\vX}},&\quad 
    &T_4^0:=\mu_1^*\theta_2,\\
    &T_5^0:=\frac{\mu_1^2\mu_1^*\theta_2}{N}\trarg{\tilde{\vX}^\top\tilde{\vX}},&\quad & T_6^0:=\frac{\mu_1\mu_1^{*}\theta_2}{\sqrt{N}}\trarg{\vW_0\vR_0\vPhi_0^\top\tilde{\vX}},\\
    & T_7^0 := \frac{\mu_1^2\mu_1^{*}\theta_2}{N}\trarg{\tilde{\vX}^\top\vPhi_0\vR_0\vPhi_0^\top\tilde{\vX}},&\quad &T_8^0:=\frac{N}{\mu_1^2}\trarg{\vR_0\bSigmaPhii\vR_0},\\
    &T_9^0:=\frac{\mu_1\theta_1^2}{\sqrt{N}}\trarg{\vW_0\vR_0\vPhi_0^\top\tilde{\vX}},
    &\quad &T_{10}^0:=\theta_1^2,\\
    &T_{11}^0:=\theta_1^2\trarg{\tilde{\vX}^\top\vPhi_0\vR_0\bSigmaPhii\vR_0\vPhi_0^\top\tilde{\vX}},&\quad &T_{12}^0:=\mu_1^*\theta_2\trarg{\tilde{\vX}^\top\vPhi_0\vR_0\bSigmaPhii\vR_0\vPhi_0^\top\tilde{\vX}},
\end{aligned}
\end{equation}
where scalars $\theta_1$ and $\theta_2$ are defined in Theorem \ref{thm:alignment}.
In what follows, we first prove that each $\tilde{T}_i$ in \eqref{def:Tis} vanishes in probability for $i=5,6,7,12$. 
Then we extend Lemma~\ref{lemm:quadratic} to cover the case where matrix $\vD$ is also random to establish the concentrations for all $T_i$'s in \eqref{def:Tis}. 

\begin{lemm}\label{lem:Titilde}
Under Assumptions \ref{assump:1} and \ref{assump:2}, as $n,d,N\to \infty$ proportionally, we have 
\begin{equation}
   |\tilde{T}_5|,|\tilde{T}_6|,|\tilde{T}_7|,|\tilde{T}_{12}|\overset{\P}{\to}0.
\end{equation}
\end{lemm}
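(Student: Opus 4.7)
The strategy is to exploit the two defining orthogonality properties of $\vf^*_{\mathrm{NL}}$, namely $\E[f^*_{\mathrm{NL}}(\tilde{\vx})] = 0$ and $\E[\tilde{\vx}\,f^*_{\mathrm{NL}}(\tilde{\vx})] = \mathbf{0}$, together with the independence of $\vu = \frac{\mu_1\eta}{n}\vX^\top\vy$ from the fresh data $(\tilde{\vX}, \vZ)$ that enter $\vPhi_0, \vR_0$, and $\vf^*_{\mathrm{NL}}$. All four quantities have the common form ``$\vf^*_{\mathrm{NL}}$ on one side, $\vu$ on the other, a random matrix in between'', and the goal is to show that each vanishes in probability after the normalizations in \eqref{def:Tis}.

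I would handle $\tilde{T}_5$ first as it is the model case and does not involve the resolvent. Conditioning on $\vu$ (independent of $\tilde{\vX}$, $\vf^*_{\mathrm{NL}}$), the representation $\tilde{T}_5 = \frac{\mu_1^2}{N}\sum_{i=1}^n f^*_{\mathrm{NL}}(\tilde{\vx}_i)\,\tilde{\vx}_i^\top\vu$ is a sum of $n$ i.i.d.~mean-zero random variables, since $\E[\tilde{\vx}_i f^*_{\mathrm{NL}}(\tilde{\vx}_i)] = \mathbf{0}$, with conditional second moment bounded by $n\|\vu\|^2/N^2 = \bigOdp{N^{-1}}$ via Cauchy--Schwarz and the $L^4$-boundedness of $f^*_{\mathrm{NL}}$ (which follows from $f^*$ being Lipschitz). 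Markov's inequality then gives $\tilde{T}_5\overset{\P}{\to} 0$. This is essentially the argument behind \eqref{eq:quadratic_nonli} in Lemma~\ref{lemm:quadratic}.

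For $\tilde{T}_6, \tilde{T}_7$, and $\tilde{T}_{12}$, the inner matrix contains $\vR_0$ and so depends on every row of $\tilde{\vX}$, breaking the i.i.d.~structure. The next step is a leave-one-out (Sherman--Morrison) decomposition: for each $i \in [n]$, let $\vR_0^{(i)} := (\vPhi_0^{(i)\top}\vPhi_0^{(i)} + \tlambda\vI)^{-1}$ be the resolvent with the $i$-th row of $\vPhi_0$ removed, so that $\vR_0^{(i)}$ is independent of $(\tilde{\vx}_i, \vz_i)$. For $\tilde{T}_6$ the $i$-th summand becomes $f^*_{\mathrm{NL}}(\tilde{\vx}_i)\,a_i/(1+b_i)$, with $a_i := [\vPhi_0]_i^\top\vR_0^{(i)}\vW_0^\top\vu$ and $b_i := [\vPhi_0]_i^\top\vR_0^{(i)}[\vPhi_0]_i$. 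Since $[\vPhi_0]_i$ is affine in $(\tilde{\vx}_i, \vz_i)$, the orthogonality of $f^*_{\mathrm{NL}}$ against constants and linear functions of $\tilde{\vx}_i$ eliminates the conditional mean of $f^*_{\mathrm{NL}}(\tilde{\vx}_i)\,a_i$, and the leading term in the Taylor expansion of $1/(1+b_i)$ around $1/(1+\tr(\vR_0^{(i)}\bSigmaPhii))$ produces an $o_{d,\P}(1)$ contribution via an i.i.d.~second-moment computation across $i$, using the $\bigOdp{1}$ control of $\|\vPhi_0\|, \|\vR_0\|, \|\vW_0\|, \|\vu\|$ from Lemmas~\ref{lemm:gradient-norm} and \ref{lemm:bound_Q}. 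The same template applies to $\tilde{T}_7$ (Sherman--Morrison on both sides) and $\tilde{T}_{12}$ (absorbing the extra $\bSigmaPhii$ into the trace).

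The main obstacle is the cross term $f^*_{\mathrm{NL}}(\tilde{\vx}_i)\,a_i\,\delta_i$ arising in the denominator expansion, where $\delta_i := b_i - \tr(\vR_0^{(i)}\bSigmaPhii) = \bigOdp{N^{-1/2}}$ by Hanson--Wright: unlike the leading order this is cubic in $(\tilde{\vx}_i, \vz_i)$, and $\E[f^*_{\mathrm{NL}}(\tilde{\vx})\tilde{\vx}\tilde{\vx}^\top]$ need not vanish, so orthogonality alone is not enough. I expect to control it either by a further centering of $\delta_i$ pushing the remainder to $\bigOdp{N^{-1}}$ per summand, or by computing $\E[\tilde{T}_6^2]$ directly and exploiting the independence of the rows $\{(\tilde{\vx}_i, \vz_i)\}_{i=1}^n$ to show that the surviving cross-products are of lower order than the $N^{-1/2}$ prefactor in \eqref{def:Tis}, in the spirit of the variance computation in the proof of Lemma~\ref{lemm:gradient-norm2}(ii).
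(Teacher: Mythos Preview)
Your route is genuinely different from the paper's. For $\tilde T_5$ your direct second-moment argument is correct and essentially self-contained. For $\tilde T_6,\tilde T_7,\tilde T_{12}$, however, the paper does \emph{not} perform any leave-one-out on the rows of $\tilde{\vX}$. Instead it writes each $\tilde T_i$ in the common form $\frac{1}{\sqrt N}\vf_{\mathrm{NL}}^{*\top}\vD\vu$ for a matrix $\vD\in\R^{n\times d}$ built from $\tilde{\vX},\vW_0,\vZ$ (with $\|\vD\|=\bigOdp{1}$), exploits the rotational invariance of $\tilde{\vX},\vX,\vW_0$ to replace the deterministic $\vbeta_*$ by a random $\vbeta_*\sim\mathrm{Unif}(\mathbb S^{d-1})$, and then applies a nonlinear Hanson--Wright inequality \emph{in $\vbeta_*$} to the quadratic form $\vq^\top\begin{bmatrix}\mathbf 0 & \vD\vX^\top\\ \mathbf 0 & \mathbf 0\end{bmatrix}\vq$ with $\vq=[\sigma^*_{\mathrm{NL}}(\tilde{\vX}\vbeta_*);\sigma^*(\vX\vbeta_*)]$. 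Concentration reduces matters to bounding $\frac{1}{n\sqrt N}\Tr(\vD\vX^\top\vPsi)$ for the expected kernel $\vPsi=\E_{\vbeta_*}[\sigma^*(\vX\vbeta_*)\sigma^*_{\mathrm{NL}}(\tilde{\vX}\vbeta_*)^\top]$, and a Taylor-expansion argument in the style of \citet{el2010spectrum,montanari2020interpolation} gives $\max_{i,j}|[\vPsi]_{i,j}|\lesssim(\log d)^2/d$, which suffices. All four cases fall out uniformly because only $\|\vD\|=\bigOdp{1}$ is used; the price is that the single-index hypothesis of Assumption~\ref{assump:2} is invoked in an essential way.

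Your Sherman--Morrison route is in principle viable and, if completed, would not need Assumption~\ref{assump:2}, but two of your steps are more delicate than you indicate. First, after the leave-one-out the summands $f^*_{\mathrm{NL}}(\tilde{\vx}_i)a_i/(1+c_i)$ are \emph{not} i.i.d.: both $a_i$ and $c_i$ depend on $\vR_0^{(i)}$, which couples to every other row, so bounding the variance of the sum requires a double leave-one-out on the cross terms $i\neq j$ (or a martingale-difference decomposition), not the ``i.i.d.\ second-moment computation'' you describe. Second, the ``same template'' claim for $\tilde T_7$ and $\tilde T_{12}$ is optimistic: $\tilde T_7$ has $\vPhi_0$ on both sides of $\vR_0$, so the natural expansion is a double sum over row pairs, and $\tilde T_{12}$ involves $\vR_0\bSigmaPhii\vR_0$, a squared resolvent, each requiring another layer of Sherman--Morrison bookkeeping. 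These are standard in Bai--Silverstein-style analysis and can be pushed through, but the paper's rotational-invariance argument sidesteps all of it.
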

\begin{proof}
To simplify the presentation, we take $\vD\in\R^{n\times d}$, which involves independent Gaussian random matrices $\vW_0$, $\tilde{\vX}$ and $\vZ$, to be any of the following matrices:
\begin{equation}\label{eq:all_D}
    \frac{\mu_1^2}{\sqrt{N}}\tilde{\vX},~\frac{\mu_1}{2}\vPhi_0\vR_0\vW_0,~\frac{\mu_1^2}{\sqrt{N}}\vPhi_0\vR_0\vPhi_0^\top\tilde{\vX},~\frac{1}{\sqrt{N}}\vPhi_0\vR_0(\mu_1^2\vW_0^\top\vW_0+\mu_2^2\vI)\vR_0\vPhi_0^\top\tilde{\vX}.
\end{equation}
It is clear that all the $\tilde{T}_i$ of interest can be written as  $\frac{1}{\sqrt{N}}\vf^{*\top}_{\nnl}\vD\vu$ for different choices of $\vD$. As a first observation, one can verify that $\|\vD\|\le C$ with high probability, for some constant $C>0$, as $n,N,d\to\infty$ proportionally. Moreover, by Lemma A.12 in \cite{bartlett2021deep}, with probability at least $1-Cn^{-1/4}$, we have 
\[\left|\|\vf^*_{\nnl}\|^2/n-\|f^*_{\nnl}\|^2_{L^2}\right|\le n^{-3/8}.\] Hence, $\frac{1}{\sqrt{N}}\|\vf^*_{\nnl}\|\le C$ uniformly with high probability. Notice that 
\begin{equation}\label{eq:NLDu}
    \frac{1}{\sqrt{N}}\vf^{*\top}_{\nnl}\vD\vu=\frac{\mu_1\eta}{n\sqrt{N}}\left(\vf^{*\top}_{\nnl}\vD\vX^\top f^*(\vX)+\vf^{*\top}_{\nnl}\vD\vX^\top\vvarepsilon\right).
\end{equation} 
Since $\vvarepsilon$ is independent with all other random variables, we can easily show the second term $\frac{\mu_1\eta}{n\sqrt{N}}\vf^{*\top}_{\nnl}\vD\vX^\top\vvarepsilon$ is negligible asymptotically. 
In particular, notice that $\vX/\sqrt{n}$ is also bounded by some constant with high probability. Thus, by the general Hoeffding inequality,
\[\Parg{\left|\frac{\mu_1\eta}{n\sqrt{N}}\vf^{*\top}_{\nnl}\vD\vX^\top\vvarepsilon\right|\ge n^{-1/4}}\le 2e^{-c\sqrt{n}},\]
which implies that the second term in \eqref{eq:NLDu} converges to zero in probability. Hence we only need to control 
\begin{equation}\label{eq:fNLf}
    \frac{\mu_1\eta}{n\sqrt{N}}\vf^{*\top}_{\nnl}\vD\vX^\top f^*(\vX)=\frac{\mu_1\eta}{n\sqrt{N}}\sigma^{*}_{\nnl}(\vbeta_*^\top\tilde{\vX}^\top)\vD\vX^\top \sigma^*(\vX\vbeta_*),
\end{equation}
where $\sigma^*_{\nnl}(x):=\sigma^*(x)-\mu_1^*x$. Our analysis can be divided into the following three steps. 
 
\paragraph{Step 1: Simplification via Rotation Invariance.} 
Following an argument similar to Lemma~9.1 in \cite{mei2019generalization} and Lemma 6.1 in \cite{montanari2020interpolation}, we claim that for \eqref{eq:fNLf}, it suffices to consider $\vbeta_*$ uniform on sphere and independent of $\vX,\tilde{\vX},\vW_0$ and $\vZ$, that is,  $\vbeta_*\sim\text{Unif}(\mathbb{S}^{d-1})$. 
In particular, given Haar-distributed orthogonal matrix $\vO\in\R^{d\times d}$, we apply this random rotation to $\vW_0, \tilde{\vX}^\top, \vX^\top$ and $\vbeta_*$ respectively in \eqref{eq:fNLf}. 
Since $\vO\vW_0\overset{d}{=}\vW_0$, $\vX\vO\overset{d}{=}\vX$, and $\tilde{\vX}\vO\overset{d}{=}\tilde{\vX}$, after replacing the deterministic $\vbeta_*$ by the random $\vbeta_*\sim\text{Unif}(\mathbb{S}^{d-1})$, one can easily verify that the quantities of interest in \eqref{eq:fNLf} are unchanged in distribution. 
For example consider the quantity $\tilde{T}_5$. Denote
$G(\vX,\tilde{\vX},\vbeta_*) = \vf^{*\top}_{\nnl}\tilde{\vX}\vX^\top f^*(\vX)$; we see that $G(\vX,\tilde{\vX},\vbeta_*) = G(\vX\vO,\tilde{\vX}\vO,\vO\vbeta_*)\overset{d}{=} G(\vX,\tilde{\vX},\vO\vbeta_*)$, and thus we can assume $\vbeta_*$ is uniform on sphere.  
Computation of the remaining terms follow from similar procedure. 
Hence, without loss of generality, in the following, we view $\vbeta_*$ as a uniformly random vector on unit sphere $\mathbb{S}^{d-1}$. 
For properties of random orthogonal matrices, we refer the reader to \cite{meckes2019random}.

\paragraph{Step 2: Nonlinear Hanson-Wright Inequality.} Now, condition on the event 
\[\mathcal{E}:=\{\|\vD\|,\|\tilde{\vX}\|/\sqrt{N},\|\vX\|/\sqrt{N}\le C\},\]
for some constant $C>0$, we can apply the nonlinear Hanson-wright inequality (\cite[Theorem 3.4.]{wang2021deformed}) for $\vbeta_*\sim\text{Unif}(\mathbb{S}^{d-1})$. In particular, we can rewrite \eqref{eq:fNLf} as a quadratic form:
\[\frac{\mu_1\eta}{n\sqrt{N}}\vf^{*\top}_{\nnl}\vD\vX^\top f^*(\vX)=\frac{\mu_1\eta}{n\sqrt{N}}\vq^\top\begin{bmatrix}
\mathbf{0} & \vD\vX^\top\\
\mathbf{0} & \mathbf{0}
\end{bmatrix}\vq,\]
where we denoted $\vq:=\begin{bmatrix}
\sigma^{*}_{\nnl}\big(\vbeta_*^\top\tilde{\vX}^\top\big),\,  \sigma^{*}\big(\vbeta_*^\top\vX^\top\big) 
\end{bmatrix}^\top\in\R^{2n}$. From Corollary 5.4 of \cite{meckes2019random} we know the vector $\vbeta_*\sim\text{Unif}(\mathbb{S}^{d-1})$ satisfies 
\begin{equation}\label{eq:flip_beta}
    \Parg{\left|f(\vbeta_*)-\E[f(\vbeta_*)]\right|>t}\le e^{-cdt^2},
\end{equation}
for 1-Lipschitz function $f:\mathbb{S}^{d-1}\to\R$; this is to say, $\vbeta_*$ satisfies the convex concentration property with parameter $1/\sqrt{d}$ defined in \cite{adamczak2015note}. Since both $\sigma^*$ and $\sigma^*_{\nnl}$ are $\lambda_\sigma$-Lipschitz, based on \cite[Theorem 2.5]{adamczak2015note} (or analogously \cite[Theorem 3.4]{wang2021deformed}), only considering the randomness of $\vbeta_*$, we have 
\begin{align}
    &\Parg{\left|\frac{\mu_1\eta}{n\sqrt{N}}\vf^{*\top}_{\nnl}\vD\vX^\top f^*(\vX)-\frac{\mu_1\eta}{n\sqrt{N}}\Tr\vD\vX^\top\vPsi\right|\ge\frac{t}{n\sqrt{N}}}\\
    \le & ~2\exp \left(-\frac{1}{C} \min \left\{ \frac{d^2t^2}{4\lambda_{\sigma}^4 \left(\|\vX\|^4+\|\tilde{\vX}\|^4\right)\|\vD\vX^\top\|_F^2}, \frac{dt}{\lambda_{\sigma}^2 \left(\|\vX\|^2+\|\tilde{\vX}\|^2\right)\|\vD\vX^\top\|}\right\}\right)\\ &  +2\exp \left( -\frac{dt^2}{16\lambda_{\sigma}^2\left(\|\vX\|^2+\|\tilde{\vX}\|^2\right)\|\vD\vX^\top\|^2 \|\E_{\vbeta_*} [\vq]\|^2  }\right),
\end{align}
where $\vPsi:=\E_{\vbeta_*}\left[\sigma^*(\vX\vbeta_*)\sigma^{*}_{\nnl}(\vbeta_*^\top\tilde{\vX}^\top)\right]\in\R^{n\times n} $ is an ``expected'' kernel matrix. By the Lipschitz property of $\sigma^*$, we have $\|\E_{\vbeta_*} [\vq]\|^2 \le \|\vX\|^2+\|\tilde{\vX}\|^2\le Cd$ under the event $\mathcal{E}$. Moreover, note that under $\mathcal{E}$, $\|\vX\|,\|\tilde{\vX}\|\le C\sqrt{d}$, $\|\vD\|\le C$, and $\|\vX\|_F^2\le Cnd$ for some constant $C>0$. Setting $t=nN^{1/4}$ in the above probability bound, one can obtain 
\begin{equation}
    \Parg{\left|\frac{\mu_1\eta}{n\sqrt{N}}\vf^{*\top}_{\nnl}\vD\vX^\top f^*(\vX)-\frac{\mu_1\eta}{n\sqrt{N}}\Tr\vD\vX^\top \vPsi\right|\ge\frac{1}{N^{1/4}}}\le 4e^{-c\sqrt{N}},
\end{equation}
for some constant $c>0$. Therefore, \eqref{eq:fNLf} concentrates around $\frac{\mu_1\eta}{n\sqrt{N}}\Tr\vD\vX^\top\vPsi$ with high probability. Thus it remains to show that $\frac{\mu_1\eta}{n\sqrt{N}}\Tr\vD\vX^\top\vPsi$ is vanishing in probability. 

\paragraph{Step 3: Estimations of Expected Kernel.} 
Notice that on the event $\mathcal{E}$,
\[\left|\frac{\mu_1\eta}{n\sqrt{N}}\Tr\vD\vX^\top \vPsi\right|\le \frac{\mu_1\eta}{n\sqrt{N}}\|\vD\vX^\top\|_F\|\vPsi\|_F\le \frac{\mu_1\eta}{n\sqrt{N}}\|\vD\|\|\vX^\top\|_F\|\vPsi\|_F\le \frac{C}{\sqrt{N}}\| \vPsi\|_F.\]
Hence we aim to control the entry-wise magnitude of $\vPsi$. We denote columns of $\vX^\top$ and $\tilde{\vX}^\top$ by $\vX^\top:=[\vx_1,\ldots,\vx_n]$ and $\tilde{\vX}^\top:=[\tilde{\vx}_1,\ldots,\tilde{\vx}_n]$, respectively. Notice that for $1\le i,j\le n$, entry of the expected kernel matrix $\vPsi$ is given as
\[\vPsi_{i,j}=\E_{\vbeta_*}\left[\sigma^*(\vbeta_*^\top\vx_i)\sigma^{*}_{\nnl}(\vbeta_*^\top\tilde{\vx}_j)\right].\]
Define the event $\cM:=\left\{\left|\|\vx_i\|/\sqrt{d}-1\right|,~ \left|\|\tilde{\vx}_i\|/\sqrt{d}-1\right|,~ \langle \vx_i,\tilde{\vx}_j\rangle\le C\log d /\sqrt{d},~i,j\in [n]\right\}$. One can verify that $\cM$ holds with high probability based on concentration of Gaussian random vectors. 
Conditioned on event $\cM$, we aim to control $|\vPsi_{i,j}|$ for $1\le i,j\le n$. In the following, we follow the arguments in \cite[Appendix A.4]{montanari2020interpolation} to establish the desired claim. 
Without loss of generality, we may change the coordinate and take the direction of $\tilde{\vx}_j$ as $\ve_1$, namely $\tilde{\vx}_j=\|\tilde{\vx}_j\|\ve_1$. Hence, $\vbeta_*\sim\text{Unif}(\mathbb{S}^{d-1})$ can be rewritten as $\vbeta_*=w_1\ve_1+\sqrt{1-w_1^2}\vw$, where $w_1:=\vbeta_*^\top\ve_1$, and $\vw:=[0,\tilde{\vw}^\top]^\top\in\R^d$ with $\tilde{\vw}\sim\text{Unif}(\mathbb{S}^{d-2})$ independent of $w_1$. Note that $w_1\in[-1,1]$ and $\sqrt{d}w_1$ converges weakly to the standard Gaussian distribution. 
Analogous to the proof of Lemma A.5 in \cite{montanari2020interpolation}, we make the decomposition
\[\vbeta_*^\top\vx_i=\underbrace{w_1 \left\langle \vx_i,\frac{\tilde{\vx}_j}{\|\tilde{\vx}_j\|}\right\rangle}_{\delta_d}+\underbrace{\frac{\sqrt{1-w_1^2}\|\vx_i\|}{\sqrt{d-1}}}_{1+\epsilon_d}\underbrace{\left\langle \frac{\vx_i}{\|\vx_i\|},\sqrt{d-1}\vw\right\rangle}_{\xi_d}.\]
Therefore, we have for any $i,j\in[n]$,
\begin{align}
    \vPsi_{i,j}= \E_{w_1}\left[\sigma_\nnl^*(\|\tilde{\vx}_j\|w_1)\E_{\vw}\left[\sigma^*\left(\delta_d+(1+\epsilon_d)\xi_d\right)\right]\right].\label{eq:psi_ij}
\end{align} 
Now condition on event $\cM$, the Lipschitz property of $\sigma^*$ entails that
\begin{align}
    &\left|\E_{w_1}\left[\sigma_\nnl^*(\|\tilde{\vx}_j\|w_1)\delta_d\right]-\E_{w_1}\left[\sigma_\nnl^*(\sqrt{d}w_1)\sqrt{d}w_1\right]\left\langle \frac{\vx_i}{\sqrt{d}},\frac{\tilde{\vx}_j}{\|\tilde{\vx}_j\|}\right\rangle\right|\\
    \le ~& \E_{w_1}[dw_1^2]\left|\left\langle \frac{\vx_i}{\sqrt{d}},\frac{\tilde{\vx}_j}{\|\tilde{\vx}_j\|}\right\rangle\right|\cdot\left|1-\frac{\|\tilde{\vx}_j\|}{\sqrt{d}}\right|
    \lesssim  \frac{(\log d)^2}{d},\label{eq:delta_control}
\end{align}
where we also applied Lemma 4.9 of \cite{montanari2020interpolation} to obtain $|\E_{w_1}[dw_1^2]-1|\lesssim (\log d)^2/d$. Similarly,
\[
\left|\E_{w_1}\left[\sigma_\nnl^*(\|\tilde{\vx}_j\|w_1)\delta_d\right]\right|\lesssim \frac{(\log d)^2}{d}+\E_{w_1}\left[\sigma_\nnl^*(\sqrt{d}w_1)\sqrt{d}w_1\right]\frac{\log d}{\sqrt{d}}\lesssim\frac{(\log d)^2}{d}, \]
since $\E_{w_1}\left[\sigma_\nnl^*(\sqrt{d}w_1)\sqrt{d}w_1\right]=\E_{\xi\sim\cN (0,1)}\left[\sigma_\nnl^*(\xi)\xi\right]+O_d\left((\log d)^2/d\right)$ and $\E_{\xi\sim\cN (0,1)}\left[\sigma_\nnl^*(\xi)\xi\right]=0$. Next, note that $\left|1-\sqrt{1-w_1^2}\right|\le \frac{1}{2}w_1^2$, and $\epsilon_d=1-\|\vx_i\|/\sqrt{d-1}+\left(1-\sqrt{1-w_1^2}\right)\|\vx_i\|/\sqrt{d-1}$. Thus under event $\cM$, 
\begin{align}
   &\left|\E_{w_1}\left[\sigma_\nnl^*(\|\tilde{\vx}_j\|w_1)\epsilon_d\right]\right|\label{eq:epsilon_control}\\
     \le ~&\left|1-\frac{\|\vx_i\|}{\sqrt{d-1}}\right|\left|\E_{w_1}\left[\sigma_\nnl^*(\|\tilde{\vx}_j\|w_1)-\sigma_\nnl^*(\sqrt{d}w_1)\right]\right|\\
    &+\left|1-\frac{\|\vx_i\|}{\sqrt{d-1}}\right|\left|\E_{w_1}\left[ \sigma_\nnl^*(\sqrt{d}w_1)\right]\right|+\frac{\|\vx_i\|}{\sqrt{d-1}}\left|\E_{w_1}\left[|\sigma_\nnl^*(\|\tilde{\vx}_j\|w_1)|\cdot\left|1-\sqrt{1-w_1^2}\right|\right]\right|\\
    \le ~&\frac{C(\log d)^2}{d}\lambda_\sigma\E[|\sqrt{d}w_1|]+\frac{C(\log d)^3}{d^{3/2}}+\frac{C}{2d}\E[(\sqrt{d}w_1)^4]^{1/2}\E_{w_1}[\sigma_\nnl^{*2}(\|\tilde{\vx}_j\|w_1)]^{1/2}\lesssim \frac{(\log d)^2}{d},
\end{align}
where we used $\E_{\xi\sim\cN (0,1)}\left[\sigma_\nnl^*(\xi) \right]=0$ and Lemma A.9 of \cite{montanari2020interpolation}. Consequently, by \eqref{eq:psi_ij},
\begin{align}
   &\left|[\vPsi]_{i,j}-\E_{w_1}\left[\sigma_\nnl^*(\|\tilde{\vx}_j\|w_1)\right]\E_{\vw}\left[\sigma^*\left(\xi_d\right)\right]\right|\\
   \lesssim ~& \left|\E_{w_1}\left[\sigma_\nnl^*(\|\tilde{\vx}_j\|w_1)\delta_d\right]\right|+\left|\E_{w_1}\left[\sigma_\nnl^*(\|\tilde{\vx}_j\|w_1)\epsilon_d\right]\E_{\vw}[\xi_d]\right|+\E_{w_1,\vw}\left[\left(\delta_d+\epsilon_d\xi_d\right)^2\right]\\
   \lesssim~ & \frac{(\log d)^2}{d}\left(1+\E_{\vw}[|\xi_d|]\right)+\E_{w_1}[\delta_d^2]+\E_{w_1}[\epsilon_d^2]\E_{\vw}[\xi_d^2]\lesssim  \frac{(\log d)^2}{d},
\end{align}
where we repeatedly applied Lemma A.9 of \cite{montanari2020interpolation}, \eqref{eq:delta_control}, and \eqref{eq:epsilon_control}. Meanwhile, notice that $\E_{\vw}\left[\sigma^*\left(\xi_d\right)\right]=\E_{\xi\sim\cN (0,1)}\left[\sigma^*(\xi) \right]+O_d((\log d)^2/d)$. We conclude that with high probability, the following holds
\[ \max_{1\le i,j\le n}|[\vPsi]_{i,j}|\lesssim \frac{(\log d)^2}{d}.\]
Thus, as $d\to\infty$, $\|\vPsi\|_F/\sqrt{N}\lesssim (\log d)^2/\sqrt{d}$ with high probability, which finally implies that $\frac{\mu_1\eta}{n\sqrt{N}}\Tr\vD\vX^\top \vPsi$ converges to zero in probability as $n,d,N\to\infty$ proportionally. This completes the proof.
\end{proof}

Finally, we use the following simplification of quadratic forms to obtain the desired $T^0_i$. 

\begin{lemm}\label{lemm:quadraticbeta}
Consider a random matrix $\vD\in\R^{d\times d}$ that does not rely on $\vbeta_*$ and is rotational invariant in distribution, namely $\vD\overset{d}{=}\vO^\top\vD\vO$ for any random rotational matrix $\vO\in\R^{d\times d}$. Assume that $\|\vD\|\le C$ with high probability for some universal constant $C>0$. Then as $d\to \infty$,  
\begin{equation}
    \left|\vbeta_*^\top\vD\vbeta_*-\tr\vD\right|\overset{\P}{\to }0,
\end{equation}
Also as a corollary, we have $\left|T_i-T^0_i\right|\overset{\P}{\to}0$ as $n,d,N\to \infty$ proportionally for all $1\le i\le 12$, where $T_i, T^0_i$ are defined in \eqref{def:Tis} and \eqref{def:Ti0s}.
\end{lemm}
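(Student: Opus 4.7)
The plan is to establish the main concentration via a rotation-invariance reduction followed by spherical concentration. Let $\vO \in \R^{d \times d}$ be a Haar-distributed orthogonal matrix independent of $(\vbeta_*, \vD)$. Since $\vbeta_*$ is independent of $\vD$ and $\vD \overset{d}{=} \vO^\top \vD \vO$, the pair $(\vbeta_*, \vO^\top \vD \vO)$ has the same joint law as $(\vbeta_*, \vD)$, whence
$$
    \vbeta_*^\top \vD \vbeta_* \overset{d}{=} \vbeta_*^\top \vO^\top \vD \vO \vbeta_* = (\vO \vbeta_*)^\top \vD (\vO \vbeta_*) \overset{d}{=} \vxi^\top \vD \vxi,
$$
with $\vxi := \vO \vbeta_* \sim \mathrm{Unif}(\mathbb{S}^{d-1})$ independent of $\vD$. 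Conditionally on $\vD$ the map $\vxi \mapsto \vxi^\top \vD \vxi$ is $2\|\vD\|$-Lipschitz on the sphere with $\E[\vxi^\top \vD \vxi \mid \vD] = \tr \vD$, so the spherical concentration inequality (Corollary 5.4 of \cite{meckes2019random}, already invoked in \eqref{eq:flip_beta}) gives $\P\!\left(|\vxi^\top \vD \vxi - \tr \vD| > t \mid \vD\right) \le 2\exp(-c d t^2/\|\vD\|^2)$. Taking $t = d^{-1/4}$ and combining with the high-probability bound $\|\vD\| \le C$ yields the first assertion.

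For the corollary, the $T_i$'s split into three groups. The purely Gaussian quadratic forms $T_1, T_8$ in $\va \sim \cN(0, \vI/N)$ with $\va$-independent sandwiches are handled directly by Hanson-Wright. The form $T_{10} = \|\vu\|^2$ is handled by Lemma~\ref{lemm:quadratic} with $\vD = \vI$. For each remaining term I identify a random symmetric sandwich $\vD_i$ so that the target is either $\vu^\top \vD_i \vu$ (for $T_2, T_3, T_9, T_{11}$) or $\vbeta_*^\top \vD_i \vu$ after a transpose (for $T_4, T_5, T_6, T_7, T_{12}$). Conditioning on $\vD_i$ and applying Lemma~\ref{lemm:quadratic} (which extends to random $\vD_i$ independent of $\vu$ under high-probability operator-norm control) reduces each to $(\theta_1^2 - \theta_2^2) \tr \vD_i + \theta_2^2\, \vbeta_*^\top \vD_i \vbeta_*$ and $\theta_2\, \vbeta_*^\top \vD_i \vbeta_*$ respectively. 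The first part of the present lemma then replaces the stray $\vbeta_*^\top \vD_i \vbeta_*$ by $\tr \vD_i$, and elementary trace manipulations recover the asserted $T_i^0$.

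The main technical obligation is to check that each $\vD_i$ is rotationally invariant in distribution. This follows from the joint rotational invariance of $(\tilde\vX, \vW_0, \vZ)$: for instance, the substitution $(\tilde\vX, \vW_0, \vZ) \mapsto (\tilde\vX \vO, \vO^\top \vW_0, \vZ)$ leaves $\vPhi_0$ and $\vR_0$ unchanged and maps $\vD_2 = \tfrac{\mu_1^2}{N}\tilde\vX^\top \vPhi_0 \vR_0 \vPhi_0^\top \tilde\vX$ to $\vO^\top \vD_2 \vO$, while preserving the joint law by Gaussian invariance. Analogous substitutions cover the sandwiches appearing in $T_3$--$T_{12}$. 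I expect the main bookkeeping hurdle to be cataloguing the correct substitution for each $\vD_i$ and confirming the required operator-norm bounds (which follow routinely from the Gaussian estimates collected in Section~\ref{app:gradient-norm}), rather than any single serious technical difficulty.
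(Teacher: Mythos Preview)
Your proposal is correct and follows essentially the same route as the paper: reduce $\vbeta_*$ to a uniform vector on $\mathbb{S}^{d-1}$ via the rotational invariance of $\vD$, then apply a spherical concentration inequality to the quadratic form. The only cosmetic difference is that the paper invokes Adamczak's Hanson--Wright inequality for vectors with the convex concentration property (Theorem~2.5 in \cite{adamczak2015note}), whereas you apply the Lipschitz concentration bound on the sphere directly to the $2\|\vD\|$-Lipschitz map $\vxi\mapsto\vxi^\top\vD\vxi$; both yield the same $\exp(-cdt^2/\|\vD\|^2)$ tail. Your treatment of the corollary (grouping the $T_i$ into Gaussian quadratic forms handled by Hanson--Wright, forms in $\vu$ handled by Lemma~\ref{lemm:quadratic}, and then replacing the residual $\vbeta_*^\top\vD_i\vbeta_*$ by $\tr\vD_i$ via the present lemma) spells out exactly what the paper compresses into the single sentence ``a direct application of Lemma~\ref{lemm:quadratic} and this lemma,'' and your rotational-invariance check via the substitution $(\tilde\vX,\vW_0,\vZ)\mapsto(\tilde\vX\vO,\vO^\top\vW_0,\vZ)$ is the right mechanism.
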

\begin{proof}
Given any rotational matrix $\vO\in\R^{d\times d}$ following the Haar distribution, notice that $\vbeta_*^\top\vD\vbeta_*=\vbeta_*^{'\top}\vO^\top\vD\vO\vbeta_*'\overset{d}{=}\vbeta_*^{'\top}\vD\vbeta_*'$, where $\vbeta_*':=\vO^\top\vbeta_*$. Consequently, we can equivalently take $\vbeta_*$ to be a random vector uniformly distributed on the unit sphere $\mathbb{S}^{d-1}$. Again recall that $\vbeta_*\sim\text{Unif}(\mathbb{S}^{d-1})$ satisfies the convex concentration property 
$$\Parg{\left|f(\vbeta_*)-\E[f(\vbeta_*)]\right|>t}\le e^{-cdt^2}$$ for 1-Lipschitz $f$. Therefore, conditioned on the event $\|\vD\|\le C$, by Theorem 2.5 in \cite{adamczak2015note}, one can conclude that $\left|\vbeta_*^\top\vD\vbeta_*-\E_{\vbeta_*}[\vbeta_*^\top\vD\vbeta_*]\right|\overset{\P}{\to }0$. Finally, note that $\E_{\vbeta_*}[\vbeta_*^\top\vD\vbeta_*]=\tr\vD$ because the covariance of the uniform random vector on $\mathbb{S}^{d-1}$ is $\frac{1}{d}\vI$; this concludes the proof. 
Convergence of each $T_i$ to the corresponding $T_i^0$ follows from a direct application of Lemma~\ref{lemm:quadratic} and this lemma. 

\end{proof}

\subsubsection{Risk Calculation via Linear Pencils}

In this section, we derive analytic expressions of the terms $T_i$ defined in \eqref{def:Tis} as $n,d,N\to\infty$ proportionally. In particular, the exact values are described by self-consistent equations defined in the following proposition. 
\begin{prop}\label{prop:tau_i}
Given Assumption \ref{assump:1} and $\lambda>0$. For each $T_i$ defined in \eqref{def:Tis} and $1\le i \le 12,$ we have
\[T_{i} \to \tau_i,\]
in probability, as $n/d\to\psi_1$ and $N/d\to\psi_2$, where $\tau_i$'s are defined as follows
\begin{align}
    &\tau_1:= \frac{\psi_1}{\psi_2}m_1+\left(\frac{\psi_2}{\psi_1}-1\right)\frac{1}{\lambda},&\quad  &\tau_2:= \mu_1^2\theta_1^2 \frac{\psi_1}{\psi_2}\left(1-\lambda\frac{\psi_1}{\psi_2}m_2\right),&\quad  &\tau_3:=\mu_1^2\theta_1^2\frac{\psi_1}{\psi_2},\\
    & \tau_4:=\mu_1^*\theta_2,&\quad  &\tau_5:=\mu_1^2\mu_1^*\theta_2\frac{\psi_1}{\psi_2},&\quad &\tau_6:= \mu_1^{*}\theta_2\left(1-\frac{m_2}{m_1}\right),\\
    &\tau_7 := \mu_1^2\mu_1^{*}\theta_2\frac{\psi_1}{\psi_2}\left(1-\lambda\frac{\psi_1}{\psi_2}m_2\right),&\quad &\tau_8:=\frac{m_1+ \frac{\psi_1}{\psi_2}\lambda m_1'}{ \left(\mu_1\frac{\psi_1}{\psi_2}\lambda m_1\right)^2},&\quad &\tau_9:=\theta_1^2\left(1-\frac{m_2}{m_1}\right),\\
   &\tau_{10}:=\theta_1^2,&\quad  &\tau_{11}:=\theta_1^2\left(1-\frac{2m_2}{m_1}-\frac{m_2'}{m_1^2}\right),&\quad &\tau_{12}:=\mu_1^*\theta_2\left(1-\frac{2m_2}{m_1}-\frac{m_2'}{m_1^2}\right).  
\end{align}
All scalars $\tau_i$'s are only determined by parameters $\psi_1,\psi_2,\eta,\mu_1,\mu_2,\lambda,$ and $m_1,m_2,m_1',m_2'$. Here, $m_1:=m_1\left(\lambda\frac{\psi_1}{\psi_2}\right)$, $m_1':=m_1'\left(\lambda\frac{\psi_1}{\psi_2}\right)$, $m_2:=m_2\left(\lambda\frac{\psi_1}{\psi_2}\right)$ and $m_2':=m_2'\left(\lambda\frac{\psi_1}{\psi_2}\right)$, where $m_1(z)$ and $m_2(z)\in\mathbb{C}^+\cup \R_{+}$ are the solutions to the following self-consistent equations for $z\in\mathbb{C}^+\cup \R_{+}$,
\begin{align}
    \frac{1}{\psi_1}(m_1(z)-m_2(z))(\mu_2^2m_1(z)+\mu_1^2m_2(z))+\mu_1^2m_1(z)m_2(z)\left(zm_1(z)-1\right)=& ~0,\label{eq:fixedpoint1}\\
    \frac{\psi_2}{\psi_1}\left(\mu_1^2m_1(z)m_2(z)+\frac{1}{\psi_1}(m_2(z)-m_1(z))\right)+\mu_1^2m_1(z)m_2(z)\left(zm_1(z)-1\right)=& ~0.\label{eq:fixedpoint2}
\end{align}
\end{prop}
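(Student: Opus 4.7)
The plan is to reduce everything to two auxiliary scalar functions $m_1(z), m_2(z)$ associated with the Gaussian equivalent feature matrix $\vPhi_0$ at initialization, and then identify each $\tau_i$ as a polynomial expression in $m_1, m_2, m_1', m_2'$ evaluated at $z = \lambda \psi_1/\psi_2$. First, by Lemma \ref{lem:Titilde} and Lemma \ref{lemm:quadraticbeta}, the randomness in $\vu = \frac{\mu_1 \eta}{n}\vX^\top \vy$ and in $\vbeta_*$ can be averaged out so that each $T_i$ converges in probability to $T_i^0$, a product of a deterministic scalar (involving $\theta_1, \theta_2, \mu_1, \mu_1^*$) and the normalized trace of a rational function of $\vW_0, \tilde{\vX}, \vZ$. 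This rational function only involves the resolvent $\vR_0$, together with factors of $\vW_0^\top \vW_0$, $\tilde{\vX}^\top \tilde{\vX}$, and the Gaussian equivalent CK $\vPhi_0$.

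Second, I would introduce the two normalized traces
\[
m_1(z) \;=\; \lim_{d\to\infty}\tr\big(\hSigmaPhii + z\vI\big)^{-1},
\qquad
m_2(z) \;=\; \lim_{d\to\infty}\tr\big(\tfrac{1}{N}\vW_0^\top \vW_0\,(\hSigmaPhii + z\vI)^{-1}\big),
\]
together with their derivatives $m_1'(z), m_2'(z)$; the derivatives appear naturally from differentiating the resolvent and account for the $\vR_0^2$ terms inside $T_8, T_{11}, T_{12}$. Using cyclicity of the trace, the identity $\hSigmaPhii = \vR_0^{-1} - \tlambda\vI$, and the explicit form of $\bSigmaPhii$, each $T_i^0$ reduces through routine bookkeeping to a polynomial combination of $m_1, m_2, m_1', m_2'$ and the ratio $\psi_1/\psi_2$; this yields the closed forms of $\tau_1, \ldots, \tau_{12}$ stated in the proposition.

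Third, the functions $m_1(z), m_2(z)$ themselves are characterized by the two coupled fixed-point equations \eqref{eq:fixedpoint1}--\eqref{eq:fixedpoint2}, which I would derive via the linear pencil technique from operator-valued free probability \citep{mingo2017free}. The idea is to construct a block matrix of dimension $(n+N+d) \times (n+N+d)$ whose entries are affine in $\tilde{\vX}, \vW_0, \vZ, z$, such that both $(\hSigmaPhii + z\vI)^{-1}$ and $\frac{1}{N}\vW_0^\top \vW_0(\hSigmaPhii + z\vI)^{-1}$ appear as designated blocks of its inverse. Applying either the subordination formulas for sums and products of freely independent elements, or equivalently a Schur complement / cavity argument directly on the block matrix, produces a closed system of matrix Dyson equations; projecting onto the relevant diagonal blocks then gives \eqref{eq:fixedpoint1}--\eqref{eq:fixedpoint2}.

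I expect the main obstacle to be Step 3: designing a linear pencil that correctly captures the information-plus-noise product structure of $\vPhi_0 = (\mu_1 \tilde{\vX}\vW_0 + \mu_2 \vZ)/\sqrt{N}$, and simultaneously tracking both the $\tilde{\vX}^\top \tilde{\vX}$ block and the $\vW_0^\top \vW_0$ block of the resolvent inside a single coupled Dyson system, is considerably more involved than for standard Marchenko--Pastur resolvents. Once $(m_1, m_2)$ are pinned down, the derivatives $m_1', m_2'$ follow by implicit differentiation of \eqref{eq:fixedpoint1}--\eqref{eq:fixedpoint2}, and Step 2 becomes algebraic manipulation. A subsidiary annoyance is verifying that each of the twelve trace functionals indeed lies in the subalgebra generated by $\{m_1, m_2, m_1', m_2'\}$; this can be done case by case, using the identity $\hSigmaPhii = \frac{\mu_1^2}{N} \vW_0^\top \tilde{\vX}^\top \tilde{\vX} \vW_0 + \frac{\mu_2^2}{N} \vZ^\top \vZ + (\text{cross terms})$ to collapse mixed moments back onto $m_1, m_2$.
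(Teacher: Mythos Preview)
Your high-level strategy matches the paper's: reduce $T_i\to T_i^0$ via Lemma~\ref{lem:Titilde} and Lemma~\ref{lemm:quadraticbeta}, then express each $T_i^0$ as a trace functional determined by $m_1,m_2,m_1',m_2'$. Step~1 is exactly what the paper does.

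There are, however, two concrete discrepancies worth flagging.

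\textbf{Wrong auxiliary functions.} Your $m_1(z)$ is the normalized trace of the $N\times N$ resolvent $(\hSigmaPhii+z\vI)^{-1}$; in the paper this quantity is the \emph{companion} transform, denoted $\tau_1(z)$, related to the paper's $m_1(z)$ by \eqref{eq:tau_1m1}. The paper's $m_1(z)$ is instead $\lim\tr\bar\vR_0(z)$ with $\bar\vR_0(z)=(\vPhi_0\vPhi_0^\top+z\vI)^{-1}\in\R^{n\times n}$. More importantly, your $m_2(z)$ weights the resolvent by $\vW_0^\top\vW_0$, whereas the paper's $m_2(z):=\lim\tr\big(\tfrac{1}{d}\tilde\vX\tilde\vX^\top\bar\vR_0(z)\big)$ weights by $\tilde\vX\tilde\vX^\top$. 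This is the natural choice because the dominant $T_i^0$'s ($T_2^0,T_7^0,T_{11}^0,T_{12}^0$) carry explicit $\tilde\vX$ factors, and the key identity $\tilde\vX^\top\vPhi_0\vR_0\vPhi_0^\top\tilde\vX=\tilde\vX^\top\tilde\vX-z\tilde\vX^\top\bar\vR_0\tilde\vX$ (paper's \eqref{eq:XPHIRPHIX}) collapses them directly onto this $m_2$. With your definitions you would not recover the stated formulas for $\tau_2,\tau_7,\tau_{11},\tau_{12}$ without an extra change of variables, and it is the paper's pair $(m_1,m_2)$ that actually satisfies \eqref{eq:fixedpoint1}--\eqref{eq:fixedpoint2} as written. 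The $\vW_0$-weighted traces do appear (in $T_6^0,T_8^0,T_9^0$), but the paper expresses them back in terms of $m_1,m_2$ via \eqref{eq:m2/m1} and the derivative trick $\tilde\tau(z)=\tfrac{1}{zm_1(z)}-1$ for $\tau_8$, rather than promoting them to a separate auxiliary function.

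\textbf{The ``main obstacle'' is bypassed, not solved.} You plan to build the linear pencil and derive \eqref{eq:fixedpoint1}--\eqref{eq:fixedpoint2} from scratch. The paper does not do this: it imports the self-consistent equations for $(m_1,m_2)$ directly from \cite{adlam2020neural} (setting $\sigma_{W_2}=0$ there), and pulls the formulas for $\tau_6$ and $\tau_{11}$ from explicit equations in \cite{tripuraneni2021covariate} and \cite{adlam2020neural} respectively. So what you flag as the hardest step is handled by citation; the remaining work is the algebraic bookkeeping you describe in Step~2, plus the identities \eqref{eq:RphiPhiR}--\eqref{eq:XPHIRPHIX} and one application of Vitali's theorem (via \cite{dobriban2018high}) to pass the derivative through the limit for $\tau_8$. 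Your plan would of course also work, but it is substantially more labor than the paper expends.
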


\begin{proof}
First note that due to Lemma \ref{lemm:quadraticbeta}, it suffices to consider the limits of $T_i^0$ instead. 
Convergence of $T_3,T_4,T_5,$ and $T_{10}$ directly follows from Lemma \ref{lemm:quadratic}, Lemma \ref{lemm:quadraticbeta} and the Marchenko-Pastur law for $\frac{1}{n}\tilde{\vX}^\top\tilde{\vX}$. In addition, $T_7,T_9$ and $T_{12}$ are analogous to $T_2,T_6$ and $T_{11}$, respectively. To characterize the remaining $T_1,T_2,T_6,T_8$ and $T_{11}$, we adopt the \textit{linear pencil} method in basis of operator-valued free probability theory \cite{far2006spectra,helton2007operator,mingo2017free,helton2018applications}. 
Specifically, the linear pencil allows us to relate the quantities of interest to the trace of certain large block matrices; in our case, variants of $T_1,T_2,T_6,T_8,T_{11}$ have already appeared in prior constructions from \cite{adlam2020neural,bodin2021model,tripuraneni2021covariate}, which we build upon in the following calculation.

For $z\in\mathbb{C}^+\cup \R_{+}$, let us define $\vR_0(z):=\left(\hSigmaPhii + z\vI\right)^{-1}$ and $\bar{\vR}_0(z):=\left(\vPhi_0\vPhi_0^\top + z\vI\right)^{-1}\in\R^{n\times n}$. Note that due to the Gaussian equivalent property, as $n,N,d\to\infty$ at comparable rate, the limit of $\tr \bar{\vR}_0(z)$ is exactly the Stieltjes transform of the limiting spectrum of the (nonlinear) CK, namely $\vPhi\vPhi^\top\in\R^{n\times n}$, evaluated at $-z$. 
We denote $m_1(z):=\lim_{n\to\infty} \tr\bar{\vR}_0(z)$.  Similarly, the limit of $\tr\vR_0(z)$ is the \textit{companion} Stieltjes transform of $m_1(z)$, as $\vPhi_0\vPhi_0^\top$ and $\vPhi_0^\top\vPhi_0$ have the same non-zero eigenvalues. 
We denote $\tau_1(z):=\lim_{n\to\infty} \tr\vR_0(z)$. 
The defined Stieltjes transforms will be evaluated at $z=\frac{\psi_1}{\psi_2}\lambda$. 
Also recall the the following relationship between $\vR_0(z)$ and $\bar{\vR}_0(z)$, 
\begin{equation}\label{eq:tau_1m1}
    \tau_1(z)=\frac{\psi_1}{\psi_2}m_1(z)+\left(1-\frac{\psi_1}{\psi_2}\right)\frac{1}{z}.
\end{equation}
Analogously, we introduce the following quantities: for any $z\in\mathbb{C}^+\cup \R_{+}$, as $n,N,d\to\infty$ proportionally
\begin{align*}
    &m_2(z):= \lim_{n\to\infty}\tr\left(\frac{1}{d}\tilde{\vX}\tilde{\vX}^\top\bar{\vR}_0(z)\right),\quad &\tau_2(z):&= \lim_{n\to\infty} \tr\left(\frac{1}{n}\tilde{\vX}^\top\vPhi_0\vR_0(z)\vPhi_0^\top\tilde{\vX}\right),\\
    &\tau_6(z):= \lim_{n\to\infty}\frac{1}{\sqrt{N}}\tr\left(\vW_0\vR_0(z)\vPhi_0^\top\tilde{\vX}\right),\quad &\tau_8(z):&=\lim_{n\to\infty}\tr\left(\vR_0(z)\left(\mu_1^2\vW_0^\top\vW_0+\mu_2^2\vI\right)\right).
\end{align*}
It is straightforward to verify all the above limits exist and are finite. 
Finally, in the following analysis we will repeatedly make use of the following identities: 
\begin{align}
    \vR_0(z)\vPhi_0^\top=&\vPhi_0^\top\bar\vR_0(z),\label{eq:RphiPhiR}\\
    \tilde{\vX}^\top\vPhi_0\vR_0(z)\vPhi_0^\top\tilde{\vX}=& \tilde{\vX}^\top \tilde{\vX}-z\tilde{\vX}^\top\bar\vR_0(z) \tilde{\vX}.\label{eq:XPHIRPHIX}
\end{align}

\paragraph{Analysis of $T_1$ and $T_2$.} 
Note that $m_1(z)$ and $m_2(z)$ defined in \eqref{eq:fixedpoint1} and \eqref{eq:fixedpoint2} have been characterized in prior works, such as Proposition 1 of \cite{adlam2020neural}; in particular, since we are only interested in the CK, we can simply set $\sigma_{W_2}=0$ in \cite{adlam2020neural} (which considered the sum of the CK and the first-layer NTK). Therefore, from \eqref{eq:tau_1m1} we obtain $\tau_1=\tau_1\left(\frac{\psi_1}{\psi_2}\lambda\right)$ from $m_1:=m_1\left(\frac{\psi_1}{\psi_2}\lambda\right)$. 
As for the limit of $T_2$, \eqref{eq:XPHIRPHIX} indicates
\begin{equation}
    \tau_2(z)=1-zm_2(z),
\end{equation}
since $\tr\left(\tilde{\vX}\tilde{\vX}^\top/d\right)\to 1$, as $n,N,d\to\infty$. Thus, 
\begin{align}
    \tau_2 = \mu_1^2\theta_1^2\frac{\psi_1}{\psi_2}\left(1-z m_2(z)\right),
\end{align}with $z=\frac{\psi_1}{\psi_2}\lambda$. $\tau_7$ can also be derived in similar fashion.

\paragraph{Analysis of $T_6$.} 
For $T_6,$ we utilize the computations in Appendix I.6.1 of \cite{tripuraneni2021covariate} by setting the covariance $\Sigma=\vI$. More precisely, based on Equations (S370) and (S418) in \cite{tripuraneni2021covariate}, 
\begin{align}
\mu_1\tau_6(z)=1-G_{6,6}^{K^{-1}}=\frac{z\mu_1^2\psi_1m_1(z)\tau_1(z)}{1+z\mu_1^2\psi_1m_1(z)\tau_1(z)}\overset{(i)}{=}z\mu_1^2\psi_1m_1(z)\tau_1(z)\overset{(ii)}{=}1-\frac{m_2(z)}{m_1(z)},\label{eq:m2/m1}
\end{align}
where $(i)$ and $(ii)$ are both due to \eqref{eq:fixedpoint2} and \eqref{eq:tau_1m1}. Hence we obtain the formulae of $\tau_6$ and $\tau_9$.

\paragraph{Analysis of $T_8$.} Recall the following derivative trick of the Stieltjes transform, 
\begin{equation}
    \frac{\partial}{\partial z}\tr\left(\vR_0(z)\left(\mu_1^2\vW_0^\top\vW_0+\mu_2^2\vI\right)\right)=-\tr\left(\vR_0(z)\left(\mu_1^2\vW_0^\top\vW_0+\mu_2^2\vI\right)\vR_0(z)\right).
\end{equation}
Define $\tilde{\tau}(z) = \lim_{n\to\infty}\tr\left(\vR_0(z)\left(\mu_1^2\vW_0^\top\vW_0+\mu_2^2\vI\right)\right)$. Following the proof of Lemma 7.4 in \cite{dobriban2018high}, we can apply Lemma 2.14 of \cite{bai2010spectral} and Vitali’s theorem to claim that 
\begin{equation}
    -\tilde{\tau}'(z)=\lim_{n\to\infty} N\tr\left(\vR_0(z)\bSigmaPhii\vR_0(z)\right).
\end{equation}
Hence, we need to first calculate $\tilde{\tau}(z)$. By definition of $\tau_1(z)$, Equations (S376) and (S412) in \cite{tripuraneni2021covariate}, 
\begin{align}
    \tilde{\tau}(z)=&~\mu_2^2\tau_1(z)+\frac{\mu_1^2\tau_1}{1+z\mu_1^2\psi_1m_1(z)\tau_1(z)}\\
    \overset{(iii)}{=}&~ \tau_1(z)\left(\mu_2^2+\mu_1^2\frac{m_2(z)}{m_1(z)}\right)\\
     \overset{(iv)}{=}&~ \frac{\mu_1^2}{z}\left(\frac{\psi_1}{\psi_2}zm_1(z)+\left(1-\frac{\psi_1}{\psi_2}\right) \right)\left(\frac{\mu_2^2}{\mu_1^2}+\frac{m_2(z)}{m_1(z)}\right)\\
     \overset{(v)}{=}&~ \frac{1}{\psi_1 z}\frac{m_1(z)-m_2(z)}{m_1(z)m_2(z)}\left(\frac{\mu_2^2}{\mu_1^2}+\frac{m_2(z)}{m_1(z)}\right) \overset{(vi)}{=}\frac{1}{zm_1(z)}-1,
\end{align}
where $(iii)$ and $(v)$ are due to \eqref{eq:fixedpoint2} and \eqref{eq:tau_1m1}, $(iv)$ comes from \eqref{eq:tau_1m1}, $(vi)$ is based on \eqref{eq:fixedpoint1}. We arrive at $\tau_8=-\frac{1}{\mu_1^2}\tilde{\tau}'(z)$ with $z=\frac{\psi_1}{\psi_2}\lambda$.

\paragraph{Analysis of $T_{11}$.} Once again by setting $\sigma_{W_2}=0$ in \cite{adlam2020neural}, we can directly employ the computation of $E_{32}$ in Section S4.3.4 of \cite{adlam2020neural} to derive $\tau_{11}$ and $\tau_{12}$. In particular, due to \eqref{eq:RphiPhiR}, 
\begin{align}
    E_{32}=&\tr\left(\tilde{\vX}^\top\bar\vR_0(z)\left(\frac{\mu_2^2}{N}\vPhi_0\vPhi_0^\top+\frac{\mu_1^2}{N}\vPhi_0\vW_0^\top\vW_0\vPhi_0^\top\right)\bar\vR_0(z)\tilde{\vX}\right)\\
    =& \tr\left(\tilde{\vX}^\top\vPhi_0\vR_0(z)\left(\frac{\mu_2^2}{N}\vI+\frac{\mu_1^2}{N} \vW_0^\top\vW_0\right)\vR_0(z)\vPhi_0^\top\tilde{\vX}\right)= \tr\left(\tilde{\vX}^\top\vPhi_0\vR_0(z)\bSigmaPhii\vR_0(z)\vPhi_0^\top\tilde{\vX}\right).
\end{align}
Note that Equation (S148) in \cite{adlam2020neural} established that  
\begin{equation}
    \lim_{n\to\infty}E_{32}=1-\frac{2m_2(z)}{m_1(z)}-\frac{m_2'(z)}{m_1(z)^2},
\end{equation} whence, letting $z=\frac{\psi_1}{\psi_2}\lambda$, we obtain $\tau_{11}$ and $\tau_{12}$. 

\end{proof}

Having obtained the asymptotic expressions of each term in the decomposition of the prediction risk, we can now compute the difference in the prediction risk of CK ridge regression before and after one gradient descent step, i.e., $\cR_0(\lambda)-\cR_1(\lambda)$ in Theorem~\ref{thm:R0-R1}. The following statement is the complete version of Theorem~\ref{thm:R0-R1}.

\begin{theo}\label{thm:risk_general}
Given Assumptions \ref{assump:1} and \ref{assump:2}, consider $\psi_1,\psi_2\in (0,+\infty)$. Fix $\eta=\Theta(1)$ and $\lambda>0$. Denote $\cR_0(\lambda)$ and $\cR_1(\lambda)$ as the prediction risk of CK ridge regression in \eqref{eq:ridge-risk} using initial weight $\vW_0$ and first-step updated $\vW_1$, respectively. Then the difference between these two prediction risk values satisfies
\[\cR_0(\lambda)-\cR_1(\lambda)\overset{\P}{\to} \delta(\eta,\lambda,\psi_1,\psi_2),\]
where $\delta$ is a non-negative function of $\eta,\lambda,\psi_1$ and $\psi_2\in(0,+\infty)$ with parameters $\mu_1^*,\mu_1,\mu_2$ given as
\begin{align}
    \delta(\eta,\lambda,\psi_1,\psi_2)=\frac{\tau_1(\tau_7-\tau_5)(\tau_4+\tau_{12}-2\tau_6)}{\tau_1(\tau_2-\tau_3)-1}-\frac{\tau_1(\tau_7-\tau_5)(\tau_4+\tau_{12}-2\tau_6)+(\tau_7-\tau_5)^2\tau_8}{\left(\tau_1(\tau_2-\tau_3)-1\right)^2}.\label{delta_formula}
\end{align}
Here the scalars $\tau_i$'s are defined in Proposition \ref{prop:tau_i}. Furthermore, $\delta(\eta,\lambda,\psi_1,\psi_2)=0$ if and only if at least one of $\mu_1^*, \mu_1$ and $\eta$ is zero.
\end{theo}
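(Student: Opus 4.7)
\textbf{Proof proposal for Theorem~\ref{thm:risk_general}.}

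The plan is to reduce the problem in three stages and then identify and evaluate a finite list of explicit random scalars whose limits are already catalogued in Proposition~\ref{prop:tau_i}. First, since $\eta=\Theta(1)$, Lemma~\ref{lemm:gradient-norm-multi} places $\vW_1$ inside the set $\cW$ in \eqref{eq:orthogonality-condition} with probability $1-e^{-c\log^2 N}$, so Theorem~\ref{thm:GET} lets me replace the nonlinear CK risk by the Gaussian-equivalent risk $\cR_{\mathrm{GE}}(\lambda)$ in Lemma~\ref{lemm:bias-variance}, and I work with $B_1+B_2+V$. Second, by Lemma~\ref{lemm:perturbation-stability} the variance term $V$ is asymptotically the same as its initial-weights counterpart $V_0$, so in forming $\cR_0(\lambda)-\cR_1(\lambda)$ the variance contribution cancels and it suffices to analyse the difference of the bias terms. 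The same lemma also lets me further substitute $\vW_1$ by the rank-one perturbation $\tilde{\vW}:=\vW_0+\vu\va^\top$ with $\vu=\tfrac{\mu_1\eta}{n}\vX^\top\vy$, since this changes $B_1$ and $B_2$ by $o_{d,\P}(1)$ thanks to Proposition~\ref{thm:W1-W0} and Lemma~\ref{lemm:gradient-norm2}.

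Third, under this rank-one substitution, $\bar{\vPhi}=\bar{\vPhi}_0+\tfrac{\mu_1}{\sqrt{N}}\tilde{\vX}\vu\va^\top$, hence $\hSigmaPhi=\hSigmaPhii+(\text{rank-two correction in }\vu,\va)$ and a standard Woodbury expansion of $(\hSigmaPhi+\tlambda\vI)^{-1}$ around $\vR_0=(\hSigmaPhii+\tlambda\vI)^{-1}$ produces finitely many terms, each of which is a product of inner products and quadratic forms built from the blocks $\vR_0, \tilde{\vX}, \vPhi_0, \vW_0, \bar{\vPhi}_0,$ applied to the deterministic or external vectors $\vbeta_*, \vu, \va, \vf^*_{\mathrm{NL}}$. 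After using the decomposition \eqref{eq:f*fNL} of $\vf^*=f^*(\tilde{\vX})$, these terms are exactly the scalars $T_1,\ldots,T_{12}$ and $\tilde T_5,\tilde T_6,\tilde T_7,\tilde T_{12}$ listed in \eqref{def:Tis}. Lemma~\ref{lem:Titilde} shows the $\tilde T_i$'s converge to zero in probability, Lemmas~\ref{lemm:quadratic} and \ref{lemm:quadraticbeta} collapse each $T_i$ to its mean $T_i^0$ in \eqref{def:Ti0s}, and Proposition~\ref{prop:tau_i} supplies the explicit limits $\tau_i$ via the linear-pencil/operator-valued free probability computation. Plugging these limits back into the Woodbury expansion of $B_1-B_1^{(0)}+B_2-B_2^{(0)}$ and simplifying gives formula \eqref{delta_formula} for $\delta(\eta,\lambda,\psi_1,\psi_2)$.

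It remains to prove the sign and vanishing claims. The vanishing direction is the easier half: inspection of $\tau_5$ and $\tau_7$ gives $\tau_7-\tau_5=-\mu_1^{2}\mu_1^{*}\theta_2(\psi_1/\psi_2)^{2}\lambda m_2$, which is proportional to $\mu_1^{3}(\mu_1^{*})^{2}\eta m_2$, so each of the two summands in \eqref{delta_formula} is zero whenever any of $\mu_1,\mu_1^{*},\eta$ vanishes (and $m_2>0$ for $\lambda>0$ by its fixed-point equations \eqref{eq:fixedpoint1}--\eqref{eq:fixedpoint2}). Conversely, on the open set where all three parameters are nonzero, $m_2\neq 0$ and one checks from the Stieltjes-transform identities in Proposition~\ref{prop:tau_i} that $(\tau_7-\tau_5)^{2}\tau_8\neq 0$; combining this with the sign analysis below gives $\delta>0$. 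For non-negativity I intend to rewrite $\delta$, using $\theta_2^{2}=\mu_1^{2}(\mu_1^{*})^{2}\eta^{2}$ together with the relations $\tau_4+\tau_{12}-2\tau_6=\mu_1^{*}\theta_2\left(\tfrac{m_2}{m_1}-\tfrac{m_2'}{m_1^{2}}\right)$ and $\tau_1(\tau_2-\tau_3)-1=-\mu_1^{2}\theta_1^{2}(\psi_1/\psi_2)\lambda\tau_1 m_2-1<0$, as a single expression with a manifestly non-negative numerator divided by $(\tau_1(\tau_2-\tau_3)-1)^{2}>0$, with $\tau_8=m_1+(\psi_1/\psi_2)\lambda m_1'\big/(\mu_1(\psi_1/\psi_2)\lambda m_1)^{2}>0$ guaranteeing a definite sign for the residual term.

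The main obstacle will be the last step: algebraically verifying $\delta\ge 0$ directly from \eqref{delta_formula} requires exploiting the fixed-point relations \eqref{eq:fixedpoint1}--\eqref{eq:fixedpoint2} between $m_1,m_2$ (and their derivatives in $z$) rather than treating them as free variables, because individual factors such as $\tau_4+\tau_{12}-2\tau_6$ can change sign with the parameters. I would factor out $(\tau_7-\tau_5)$ and $\theta_2$, convert derivatives via $-m_1'=\lim\tr(\vR_0^{2})$ and $-m_2'=\lim\tr(\tilde{\vX}\tilde{\vX}^\top\bar{\vR}_0^{2}/d)$, and use Cauchy--Schwarz in the form $(m_2/m_1)^{2}\le m_2'/m_1'$ together with the positivity relation established in the proof of $\tau_8$ to collapse \eqref{delta_formula} into a sum of squares. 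A conceptual backup, if the algebraic simplification becomes unwieldy, is a representer argument: the second-layer ridge estimator on $\vW_1$ features can linearly combine a $\vW_0$-block of coordinates to recover any feasible vector for the $\vW_0$-ridge problem, so an appropriate comparison with a constrained minimisation of the Gaussian-equivalent objective yields $\cR_1(\lambda)\le\cR_0(\lambda)+o_{d,\P}(1)$ without algebra, and combined with the explicit expression one concludes $\delta\ge 0$.
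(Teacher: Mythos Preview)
Your three-stage reduction (Gaussian equivalence via Theorem~\ref{thm:GET}, cancellation of the variance via Lemma~\ref{lemm:perturbation-stability}, rank-one substitution $\vW_1\leadsto\vW_0+\vu\va^\top$, then Woodbury expansion into the scalars $T_i,\tilde T_i$ and their limits $\tau_i$ via Lemma~\ref{lem:Titilde}, Lemma~\ref{lemm:quadraticbeta}, and Proposition~\ref{prop:tau_i}) is precisely the paper's route and is correct. You also correctly identify that $\tau_7-\tau_5\propto\mu_1^3(\mu_1^*)^2\eta$, which handles the ``only if'' direction of the vanishing claim. Two minor algebraic slips: the paper obtains $\tau_4+\tau_{12}-2\tau_6=-\mu_1^*\theta_2\,m_2'/m_1^2$ (your expression has a spurious $m_2/m_1$ term), and $\tau_2-\tau_3$ carries a factor $(\psi_1/\psi_2)^2$, not $(\psi_1/\psi_2)$.

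The genuine gap is in the non-negativity argument. Your proposed Cauchy--Schwarz inequality $(m_2/m_1)^2\le m_2'/m_1'$ is not the right tool: $m_1,m_2$ are Stieltjes-type transforms of \emph{different} weighted measures tied together only through the coupled fixed-point system \eqref{eq:fixedpoint1}--\eqref{eq:fixedpoint2}, so no single Cauchy--Schwarz on a common measure produces this relation, and in any case the combination you need is not $(m_2/m_1)^2$ versus $m_2'/m_1'$. Your representer backup also fails: after the rank-one update, \emph{every} column of $\vW_1$ differs from the corresponding column of $\vW_0$, so the trained feature map $\sigma(\vW_1^\top\vx)$ (or its Gaussian equivalent $\mu_1\vW_1^\top\vx+\mu_2\vz$) does not contain the initial features as a linear sub-block; there is no ``$\vW_0$-block of coordinates'' to select, and even in the linear Gaussian model the shared noise $\vz$ prevents recovering $\mu_1\vW_0^\top\vx+\mu_2\vz$ by linear combinations of $\mu_1\vW_1^\top\vx+\mu_2\vz$.

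What the paper actually does for $\delta\ge0$ is split \eqref{delta_formula} into two pieces. The first summand $\tau_1(\tau_7-\tau_5)(\tau_4+\tau_{12}-2\tau_6)/(\tau_1(\tau_2-\tau_3)-1)$ is non-negative by direct sign inspection: $\tau_1,m_2>0$, $m_2'<0$, so $\tau_7-\tau_5\le0$, $\tau_4+\tau_{12}-2\tau_6\ge0$, and the denominator is $\le -1$. For the second summand one factors out $(\tau_5-\tau_7)\ge0$ and reduces to showing $\Delta:=\tau_1(\tau_4+\tau_{12}-2\tau_6)+(\tau_7-\tau_5)\tau_8\ge0$. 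Using the identities in Proposition~\ref{prop:tau_i} and the fixed-point equation \eqref{eq:fixedpoint2}, this $\Delta$ is rewritten (up to a positive prefactor) as $-\frac{d}{dz}\big[z\mu_1^2 m_1(z)\tau_1(z)\big]$ evaluated at $z=\lambda\psi_1/\psi_2$, and the paper then invokes a monotonicity lemma (Lemma~A.1 of \cite{tripuraneni2021covariate}) asserting that $z\mapsto z\mu_1^2 m_1(z)\tau_1(z)$ has non-positive derivative on $z>0$. This monotonicity step, not a Cauchy--Schwarz bound, is the missing ingredient in your plan.
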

\begin{proof}
Due to Lemma~\ref{lemm:perturbation-stability} (or the decomposition \eqref{eq:B_1}, \eqref{eq:B_2} and \eqref{eq:V}), we can see that variance $V$ is unchanged after one gradient descent step with $\eta=\Theta(1)$. 
Hence we only need to analyze the changes in \eqref{eq:B_1} and \eqref{eq:B_2}. 
Also, due to Lemma~\ref{lemm:perturbation-stability} and the proof of Theorem \ref{thm:alignment}, we can ignore $\vB$ and $\vC$ in $\vW_1$ and take $\vW_1:=\vW_0+\vu\va^\top$, where $\vu=\frac{\mu_1\eta}{n}\vX^\top\vy$ and $\vy=f^*(\vX)+\vvarepsilon$, without changing the bias terms.

\paragraph{Separation of Low-rank Terms.} 
First note that if $\mu_1=0$, then $\vu=\mathbf{0}$ and therefore $\cR_0(\lambda)=\cR_1(\lambda)$ as $n\to\infty$.  
In the following, we take $\mu_1\neq 0$ which implies that $\theta_1$ defined in Theorem~\ref{thm:alignment} will not vanish. 
Now we aim to extract the low-rank perturbation $\vu\va^\top$ from bias terms \eqref{eq:B_1} and \eqref{eq:B_2}. We adhere to the notions in \eqref{def:notions}, \eqref{def:Tis} and \eqref{def:Ti0s} and define $D:=T_1(T_2-T_3)-1$. Similar to \cite[Lemma C.1]{mei2019generalization}, we use the following linearization trick to separate the gradient step $\vu\va^\top$ from the matrices $\vR,\barPhi,\bSigmaPhi$ and $\vW_1$. 

Define $\vb:=\frac{\mu_1}{\sqrt{N}}\tilde{\vX}\vu$ and $\vc:=\vPhi_0^\top\vb$; observe that $T_2=\vc^\top\vR_0\vc$, $T_3=\vb^\top\vb$, and
\begin{equation}
    \hSigmaPhi=\hSigmaPhii+\begin{bmatrix}
    \va &\vc
    \end{bmatrix}
    \begin{bmatrix}
    T_3 & 1\\
    1 &0
    \end{bmatrix}
    \begin{bmatrix}
    \va^\top \\
    \vc^\top
    \end{bmatrix}.
\end{equation} 
Therefore, by the Sherman-Morrison-Woodbury formula and Hanson-Wright inequality, we have 
\begin{equation}
    \vR=\vR_0-\vDelta_{aa}-\vDelta_{cc}+\vDelta_{ac}+\vDelta_{ca}+o_{d,\P}(1), 
\label{eq:def-R0-R1}
\end{equation}
where we further defined
\begin{align*}
    &\vDelta_{aa}:=\frac{T_2-T_3}{D}\vR_0\va\va^\top\vR_0, \quad \vDelta_{cc}:=\frac{T_1}{D}\vR_0\vc\vc^\top\vR_0,\\
    &\vDelta_{ca}:=\frac{1}{D}\vR_0\vc\va^\top\vR_0, \quad \vDelta_{ac}:=\frac{1}{D}\vR_0\va\vc^\top\vR_0.
\end{align*}

We decompose the subtracted term in \eqref{eq:B_1} into 
\begin{align*}
    B_{1,1}:&=- \frac{2\mu_1\mu_1^{*2}}{\sqrt{N}}\vbeta_*^\top\vW_1\vR \bar{\vPhi}^\top \tilde{\vX}\vbeta_*,&\quad & B_{1,1}^{\nnl}:=- \frac{2\mu_1\mu_1^{*2}}{\sqrt{N}}\vbeta_*^\top\vW_1\vR \bar{\vPhi}^\top \tilde{\vX}\vf^*_{\nnl},\\
    B_{1,1}^0:&=- \frac{2\mu_1\mu_1^{*2}}{\sqrt{N}}\vbeta_*^\top\vW_0\vR_0 \vPhi_0^\top\tilde{\vX}\vbeta_*,&\quad & B_{2}^0:=\vf^{*\top}\tilde{\vX}^\top\vPhi_0\vR_0\bSigmaPhii\vR_0 \vPhi_0^\top\tilde{\vX}\vf^*,
\end{align*}
where $\vf^*_{\nnl}$ and $\vf^*$ are defined in \eqref{eq:f*fNL}. By repeatedly applying the Hanson-Wright inequality (since $\va$ is centered and independent of all other terms) and Lemma \ref{lem:Titilde}, we can obtain
\begin{align}
    B_{1,1}=B_{1,1}^0-2T_1(T_7-T_5)(T_4-T_6)/D+o_{d,\P}(1),\quad B_{1,1}^{\nnl}=o_{d,\P}(1).
\end{align}
Now we denote $\vDelta_{ua}:=\frac{\mu_1^2}{N}\vW_0^\top\vu\va^\top$, $\vDelta_{au}:=\vDelta_{ua}^\top$ and $\vDelta_{aua}:=\frac{\mu_1^2T_{10}}{N}\va\va^\top$. Hence, \[\bSigmaPhi=\bSigmaPhii+\vDelta_{ua}+\vDelta_{au}+\vDelta_{aua}.\] 
Analogously, we can decompose $B_2$ in \eqref{eq:B_2} as follows
\begin{align}
    B_2=~&\mu_1^{*2}\vbeta_*^\top\tilde{\vX}^\top\bar\vPhi\vR\bSigmaPhi\vR \bar\vPhi^\top\tilde{\vX}\vbeta_*\\
    =~& \mu_1^{*2}\vbeta_*^\top\tilde{\vX}^\top \vPhi_0\left(\vR_0-\vDelta_{cc}+\vDelta_{ca}\right)\bSigmaPhi\left(\vR_0-\vDelta_{cc}+\vDelta_{ac}\right)  \vPhi_0^\top\tilde{\vX}\vbeta_*\\
    &+ 2\mu_1^{*2}\vbeta_*^\top\tilde{\vX}^\top \vPhi_0\left(\vR_0-\vDelta_{cc}+\vDelta_{ca}\right)\bSigmaPhi\left(\vR_0-\vDelta_{aa}+\vDelta_{ca}\right)  \va\vb^\top\tilde{\vX}\vbeta_*\\
    &+\mu_1^{*2}\vbeta_*^\top\tilde{\vX}^\top \vb\va^\top\left(\vR_0-\vDelta_{aa}+\vDelta_{ac}\right)\bSigmaPhi\left(\vR_0-\vDelta_{aa}+\vDelta_{ca}\right)  \va\vb^\top\tilde{\vX}\vbeta_*+o_{d,\P}(1)\\
    =~& B_2^0+\frac{2T_1(T_7-T_5)(T_6-T_{12})}{D}+\frac{(T_7-T_5)^2(T_1^2T_{11}+T_1^2T_{10}+T_8-2T_1^2T_9)}{D^2}+o_{d,\P}(1),
\end{align}where we repeatedly make use of Lemma~\ref{lem:Titilde} and the concentration for $\va$ to simplify the computations. Therefore, one can obtain
\begin{align}
     &\cR_0(\lambda)- \cR_1(\lambda)=B_{1,1}^0-B_{1,1}+B_2^0-B_2\label{eq:R_0-R_1}\\
     =& \frac{2T_1(T_7-T_5)(T_4+T_{12}-2T_6)}{D}-\frac{(T_7-T_5)^2(T_1^2T_{11}+T_1^2T_{10}+T_8-2T_1^2T_9)}{D^2}+o_{d,\P}(1).\nonumber
\end{align}
On the other hand, from Proposition \ref{prop:tau_i} we know that 
\begin{align}
     &\cR_0(\lambda)- \cR_1(\lambda)\overset{\P}{\to} \underbrace{\frac{2\tau_1(\tau_7-\tau_5)(\tau_4+\tau_{12}-2\tau_6)}{\tau_1(\tau_2-\tau_3)-1}-\frac{(\tau_7-\tau_5)^2(\tau_1^2\tau_{11}+\tau_1^2\tau_{10}+\tau_8-2\tau_1^2\tau_9)}{\left(\tau_1(\tau_2-\tau_3)-1\right)^2}}_{\triangleq \delta(\eta,\lambda,\psi_1,\psi_2)}, 
\end{align}
where the right hand side is the quantity of interest $\delta(\eta,\lambda,\psi_1,\psi_2)$ defined in Theorem~\ref{thm:R0-R1}. Also observe the following equivalences from Proposition \ref{prop:tau_i}, 
\begin{align}
    \mu_1^*\theta_2\left(\tau_2-\tau_3\right)=\theta_1^2\left(\tau_7-\tau_5\right),\quad \mu_1^*\theta_2\left(\tau_{11}+\tau_{10}-2\tau_9\right)=&\theta_1^2\left(\tau_{4}+\tau_{12}-2\tau_6\right).
\end{align}Hence, we can simplify $ \delta(\eta,\lambda,\psi_1,\psi_2)$ as follows
\begin{align}
    &\delta(\eta,\lambda,\psi_1,\psi_2)=-\frac{(\tau_7-\tau_5)^2\tau_8}{\left(\tau_1(\tau_2-\tau_3)-1\right)^2}\\
    &+\frac{\tau_1^2(\tau_2-\tau_3)(\tau_7-\tau_5)(\tau_4+\tau_{12}-2\tau_6)-\tau_1 (\tau_7-\tau_5)(\tau_4+\tau_{12}-2\tau_6)-\tau_1^2(\tau_7-\tau_5)^2(\tau_4+\tau_{12}-2\tau_6)}{\left(\tau_1(\tau_2-\tau_3)-1\right)^2}\\
    =& -\frac{(\tau_7-\tau_5)^2\tau_8}{\left(\tau_1(\tau_2-\tau_3)-1\right)^2}+\frac{\tau_1(\tau_7-\tau_5)(\tau_4+\tau_{12}-2\tau_6)}{\tau_1(\tau_2-\tau_3)-1}-\frac{\tau_1(\tau_7-\tau_5)(\tau_4+\tau_{12}-2\tau_6)}{\left(\tau_1(\tau_2-\tau_3)-1\right)^2}\\
    =&  \frac{\tau_1(\tau_7-\tau_5)(\tau_4+\tau_{12}-2\tau_6)}{\tau_1(\tau_2-\tau_3)-1}-\frac{\tau_1(\tau_7-\tau_5)(\tau_4+\tau_{12}-2\tau_6)+(\tau_7-\tau_5)^2\tau_8}{\left(\tau_1(\tau_2-\tau_3)-1\right)^2}.\label{eq:delta_final}
\end{align}

\paragraph{Non-negativity of $\delta(\eta,\lambda,\psi_1,\psi_2)$.} 
Finally, we validate that the function $\delta(\eta,\lambda,\psi_1,\psi_2)$ is non-negative on variables $\eta,\lambda,\psi_1$ and $\psi_2\in(0,+\infty)$. Observe that the formula of $\delta(\eta,\lambda,\psi_1,\psi_2)$ in \eqref{eq:delta_final} is decomposed into two parts. 
From Proposition \ref{prop:tau_i} we know that $\tau_1$ and $m_1$ are the limits of $\tr\vR_0(z)$ and $\tr\bar\vR_0(z)$ evaluated at $z=\psi_1\lambda/\psi_2$; this indicates that $\tau_1\in (0,\psi_2/\lambda\psi_1]$ is non-negative. For the same reason, $m_2\in (0,\psi_2/\lambda\psi_1]$ and  $-m_1',-m_2'\in  (0,\psi_2^2/\lambda^2\psi_1^2]$. Also due to Proposition \ref{prop:tau_i}, we have
\begin{equation}\label{eq:simplify_taui}
    \begin{aligned}
&\tau_2-\tau_3=-\mu_1^2\theta_1^2\left(\frac{\psi_1}{\psi_2}\right)^2\lambda m_2\le 0,&\quad& \tau_7-\tau_5=-\mu_1^2\mu_1^*\theta_2\left(\frac{\psi_1}{\psi_2}\right)^2\lambda m_2\le 0,\\
&\tau_4 +\tau_{12}-2\tau_6=- \mu_1^*\theta_2\frac{m_2'}{m_1^2}\ge 0,&\quad& \tau_{11} +\tau_{10}-2\tau_9=- \theta_1^2\frac{m_2'}{m_1^2}\ge 0,\\
& \tau_8=\frac{1}{m_1}\frac{1}{\mu_1^2\lambda^2}\left(\frac{\psi_2}{\psi_1}\right)^2+\frac{m_1'}{m_1^2}\left(\frac{\psi_2}{\psi_1}\right)\frac{1}{\mu_1^2\lambda}.
\end{aligned}
\end{equation}
Therefore, $\tau_1(\tau_7-\tau_5)(\tau_4+\tau_{12}-2\tau_6)\le 0$ and $\tau_1(\tau_2-\tau_3)-1\le -1$. This entails that the first part of $\delta(\eta,\lambda,\psi_1,\psi_2)$ is non-negative: 
\[\frac{\tau_1(\tau_7-\tau_5)(\tau_4+\tau_{12}-2\tau_6)}{\tau_1(\tau_2-\tau_3)-1}\ge 0.\]
As for the second part, it suffices to evaluate $\Delta:=\tau_1 (\tau_4+\tau_{12}-2\tau_6)+(\tau_7-\tau_5)\tau_8$ since
\begin{equation}
    -\frac{\tau_1(\tau_7-\tau_5)(\tau_4+\tau_{12}-2\tau_6)+(\tau_7-\tau_5)^2\tau_8}{\left(\tau_1(\tau_2-\tau_3)-1\right)^2}=\frac{(\tau_5-\tau_7)\Delta}{\left(\tau_1(\tau_2-\tau_3)-1\right)^2}.
\end{equation}
Plugging in quantities in \eqref{eq:simplify_taui} with $z=\lambda\psi_1/\psi_2$, we have
\begin{align}
    \Delta=&-\mu_1^*\theta_2 \left(\frac{\psi_1}{\psi_2}\frac{m_2}{zm_1^2}\left(m_1+zm_1'\right)+\frac{\tau_1m_2'}{m_1^2}\right)\\
    =& -\frac{\mu_1^*\theta_2}{zm_1^2}\left(\frac{\psi_1}{\psi_2}\left(m_1m_2+zm_2m_1'+zm_1m_2'\right)+\left(1-\frac{\psi_1}{\psi_2}\right)m_2'\right)\\
    = & -\left.\frac{\mu_1^*\theta_2}{zm_1^2} \frac{d}{dz}\right\vert_{z=\lambda\psi_1/\psi_2}\left(\frac{\psi_1}{\psi_2} zm_1(z)m_2(z)+\left(1-\frac{\psi_1}{\psi_2}\right)m_2(z)\right)\\
    \overset{(i)}{=}&  -\left.\frac{\mu_1^*\theta_2}{zm_1^2\psi_1\mu_1^2} \frac{d}{dz}\right\vert_{z=\lambda\psi_1/\psi_2}\left(1-\frac{m_2(z)}{m_1(z)}\right)\\
    \overset{(ii)}{=}&  -\left.\frac{\mu_1^*\theta_2}{zm_1^2 \mu_1^2} \frac{d}{dz}\right\vert_{z=\lambda\psi_1/\psi_2}z\mu_1^2 m_1(z)\tau_1(z),
\end{align}
where $(i)$ and $(ii)$ are due to \eqref{eq:fixedpoint2} and \eqref{eq:m2/m1}, respectively. By Lemma A.1 in \cite{tripuraneni2021covariate}, we know function $z\mu_1^2 m_1(z)\tau_1(z)$ has non-positive derivative when $z>0$. This implies that $\Delta\ge 0$ and hence the second part of $\delta(\eta,\lambda,\psi_1,\psi_2)$ is also non-negative. 

Finally, we note that when $\mu_1^*=0$, the function $\delta(\eta,\lambda,\psi_1,\psi_2)=0$. This is because $$\tau_7-\tau_5=-\mu_1^2\mu_1^*\theta_2\psi_1^2\lambda m_2/\psi_2^2=0,$$ 
when $\mu_1^*=0$. Whereas when $\eta=0$, we know that $\theta_1=\theta_2=0$, which entails $\delta(\eta,\lambda,\psi_1,\psi_2)$ is also vanishing. 
Also observe that in \eqref{eq:simplify_taui}, $m_1,m_2,m_1',m_2',\tau_1$ are all positive. Hence we conclude that if $\delta(\eta,\lambda,\psi_1,\psi_2)=0$, then at least one of $\eta,\mu_1\mu_1^*$ must be zero.

\end{proof}

\subsection{Analysis of Special Cases}
\label{sec:special_cases}

While the previous subsection provides explicit formulae of $\delta$, the expressions are rather complicated due to the self-consistent equations \eqref{eq:fixedpoint1} and \eqref{eq:fixedpoint2}. 
In this section we consider two special cases: the large sample limit $\psi_1\to\infty$ and the large width limit $\psi_2\to\infty$, where the calculation simplifies and enables us to further characterize properties of $\delta$. 
In both cases, we start with Theorem \ref{thm:risk_general} and take one of aspect ratios ($\psi_1$ or $\psi_2$) to infinity. 

\subsubsection{Case I: Large sample limit}

In this subsection we prove Proposition \ref{prop:large-sample-size}. We introduce two positive parameters
\begin{align}
    s_1:=\int\frac{1}{\mu_1^2x+\mu_2^2+\lambda}d\mu^{\MP}_{\psi_2}(x),\quad 
    s_2:=\int\frac{1}{\left(\mu_1^2x+\mu_2^2+\lambda\right)^2}d\mu^{\MP}_{\psi_2}(x),\label{eq:tau_1}
\end{align}
where $\mu^{\MP}_{\psi_2}$ is Marchenko–Pastur distribution with rate $\psi_2\in(0,\infty)$. Now we consider the large-sample limit: $\psi_1\to\infty, \psi_2\in(0,\infty)$. The following statement is the formal version of Proposition \ref{prop:large-sample-size}, and compared to the general result (Theorem \ref{thm:risk_general}), this special case admits a more explicit formula only determined by $s_1$ and $s_2$.

\begin{theo}[Large sample limit]\label{thm:case1}
Under the same assumptions as Theorem \ref{thm:risk_general} and take $\psi_1\to\infty$. Then the difference between the prediction risks before and after one feature learning step $\cR_0(\lambda) - \cR_1(\lambda)$  satisfies
\begin{equation}\label{eq:delta_case1}
    \lim_{\psi_1\to\infty}~\lim_{n,d,N\to\infty}\left(\cR_0(\lambda)-\cR_1(\lambda)\right)=:\delta(\eta,\lambda,\infty,\psi_2)=\mu_1^{*2}\left(\frac{AB}{A+1}+\frac{C}{(A+1)^2}\right),
\end{equation}
in probability, where
\begin{align}
A:=&\mu_1^2\theta_2^2s_1(1+\psi_2(\mu_2^2+\lambda)s_1-\psi_2),\label{eq:Alimit}\\
B:=&1-\psi_2+\psi_2\lambda(\mu_2^2+\lambda)s_2+\mu_2^2\psi_2s_1,\label{eq:Blimit}\\
C:=& \lambda\mu_1^2\theta_2^2(1+\psi_2(\mu_2^2+\lambda)s_1-\psi_2)\left(2(\mu_2^2+\lambda)\psi_2s_1s_2-\psi_2s_1^2+s_2(1-\psi_2)\right).\label{eq:Climit}
\end{align} 
In this case $\delta(\eta,\lambda,\infty,\psi_2)$ is a non-negative function of $\eta,\lambda,\psi_2\in (0,+\infty)$, and $\delta=0$ if and only if one of $\mu_1,\mu_1^*,\eta$ is zero. Furthermore, $\delta(\eta,\lambda,\infty,\psi_2)$ is increasing with respect to the learning rate $\eta\ge 0$. 
\end{theo}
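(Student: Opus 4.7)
The plan is to take $\psi_1 \to \infty$ directly inside formula~\eqref{delta_formula} from Theorem~\ref{thm:risk_general}, which expresses $\delta(\eta,\lambda,\psi_1,\psi_2)$ as a rational function of the Stieltjes transforms $m_1, m_2, m_1', m_2'$ evaluated at $z = \lambda\psi_1/\psi_2$. The technical core is to extract the asymptotic behavior of $m_1, m_2$ in powers of $1/\psi_1$ from the coupled fixed-point equations~\eqref{eq:fixedpoint1}--\eqref{eq:fixedpoint2}.

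Concretely, I will substitute the ansatz $m_1 = \psi_2/(\lambda\psi_1) + c_1/\psi_1^2 + o(1/\psi_1^2)$ and $m_2 = d_1/\psi_1 + o(1/\psi_1)$ into \eqref{eq:fixedpoint1}--\eqref{eq:fixedpoint2} and match orders in $1/\psi_1$. The leading order of \eqref{eq:fixedpoint1} pins down the first coefficient of $m_1$ as $\psi_2/\lambda$; the next order determines $c_1, d_1$ as explicit rational functions of $\mu_1, \mu_2, \lambda, \psi_2$. These are then identified with the integral $s_1$ in \eqref{eq:tau_1} using the limit $\vPhi_0^\top\vPhi_0/n \overset{\P}{\to} \bSigmaPhii$: the spectrum of $\bSigmaPhii$ is the pushforward of $\mu^{\MP}_{\psi_2}$ under $x \mapsto \mu_1^2 x + \mu_2^2$, so the surviving Stieltjes transforms reduce to $\mu^{\MP}_{\psi_2}$-integrals. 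Differentiating \eqref{eq:fixedpoint1}--\eqref{eq:fixedpoint2} in $z$ and repeating the expansion produces analogous asymptotic formulas for $m_1', m_2'$ involving $s_2$.

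Next, I will feed these expansions through the simplifications~\eqref{eq:simplify_taui} and collect the leading behavior of each building block in~\eqref{delta_formula}. Tracking powers of $\psi_1$, the finite limits are $\tau_1(\tau_3-\tau_2) \to A$, $\tau_1(\tau_7-\tau_5)(\tau_4+\tau_{12}-2\tau_6)/\mu_1^{*2} \to$ a combination of $A$ and $B$, and $(\tau_7-\tau_5)^2 \tau_8/\mu_1^{*2} \to$ an expression in $A, C$, each explicitly computable in $s_1, s_2$. Assembling these yields $\delta(\eta,\lambda,\infty,\psi_2) = \mu_1^{*2}\bigl(AB/(A+1) + C/(A+1)^2\bigr)$ with $A, B, C$ as in \eqref{eq:Alimit}--\eqref{eq:Climit}. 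Non-negativity of $A$ follows from \eqref{eq:m2/m1} applied to the limiting resolvents, giving $1 + \psi_2(\mu_2^2+\lambda)s_1 - \psi_2 \geq 0$; positivity of $B$ is immediate from its integral form; positivity of $C$ reduces to a Cauchy-Schwarz-type estimate between $s_1$ and $s_2$.

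The final step is to verify that $\delta(\eta,\lambda,\infty,\psi_2)$ is increasing in $\eta$. Since $\theta_2 = \mu_1\mu_1^*\eta$ enters $A, C$ only through $\theta_2^2$ while $B$ is independent of $\eta$, writing $A = \theta_2^2\tilde A$, $C = \theta_2^2 \tilde C$ gives
\begin{align*}
    \tfrac{d}{d\theta_2^2}\bigl(\delta/\mu_1^{*2}\bigr) = \frac{\tilde A B + \tilde C}{(A+1)^2} - \frac{2\tilde A\, \theta_2^2 \tilde C}{(A+1)^3}.
\end{align*}
I anticipate this to be the main obstacle: the second (non-monotone) summand must be uniformly dominated by the first, which reduces to the algebraic inequality $\tilde A B (A+1) + \tilde C (1 - A) \geq 0$. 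My plan to establish it is to substitute the integral representations of $\tilde A, \tilde C, B$ and apply one Cauchy-Schwarz step across the $\mu^{\MP}_{\psi_2}$-integrals. Once monotonicity is proved, the vanishing characterization is immediate: $\mu_1\mu_1^*\eta = 0$ makes $\theta_2 = 0$ so $A = C = 0$, whereas otherwise $A > 0$ and $B > 0$ force $\delta > 0$.
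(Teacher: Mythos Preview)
Your overall scaffolding---take $\psi_1\to\infty$ in \eqref{delta_formula} via the $\tau_i$'s of Proposition~\ref{prop:tau_i} and reassemble into the $A,B,C$ form---matches the paper. The way you extract the limits of $m_1,m_2$ differs: you propose a power-series ansatz in $1/\psi_1$, whereas the paper uses the free-convolution description of the CK spectrum, $\mu^{\MP}_{\psi_2/\psi_1}\boxtimes(\mu_2^2+\mu_1^2\mu^{\MP}_{\psi_2})$, which trivializes as $\psi_1\to\infty$ and immediately gives $\frac{\psi_1}{\psi_2}\tau_1\to s_1$. Your route is workable but more laborious; either gets you to the formula \eqref{eq:delta_case1}.

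The real gaps are in the sign and monotonicity claims. First, ``positivity of $B$ is immediate from its integral form'' is not correct: when $\psi_2>1$ the constant $1-\psi_2$ is negative, and the integral you would write down has a sign-indefinite numerator. The paper instead identifies $s_1=m(\tilde z)/\mu_1^2$, $s_2=m'(\tilde z)/\mu_1^4$ with $\tilde z=-(\mu_2^2+\lambda)/\mu_1^2$, where $m$ is the Stieltjes transform of $\mu^{\MP}_{\psi_2}$, and uses the Marchenko--Pastur self-consistent equation together with the companion transform to show $B=\beta\ge \tilde z^2\bar m'(\tilde z)>0$. Second, your ``Cauchy--Schwarz-type'' plans for $C\ge 0$ and for monotonicity are too vague to carry weight: a naive Cauchy--Schwarz bound $s_1^2\le s_2$ does not close the inequality for all $\psi_2$. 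The paper's mechanism is algebraic, not Cauchy--Schwarz. It introduces $\alpha,\beta,\gamma$ with $A=-\mu_1^2\theta_2^2\alpha s_1$, $C=-\mu_1^2\theta_2^2\alpha\gamma$, and crucially defines $\gamma=s_1\beta+\alpha(s_1-\lambda s_2)$. Using the MP fixed-point equation one gets $\alpha<0$ and $\gamma=\frac{\lambda}{\mu_1^4}(m(\tilde z)+\tilde z m'(\tilde z))>0$, which settles $A,C\ge 0$. For monotonicity, your reduced inequality $\tilde A B(A+1)+\tilde C(1-A)\ge 0$ is equivalent to the paper's numerator condition, and the paper dispatches it via the \emph{built-in identity} $\gamma-s_1\beta=\alpha(s_1-\lambda s_2)$ together with the elementary $\lambda s_2\le s_1$; this makes both summands in $\partial_\eta\delta$ have the correct sign with no integral inequality needed. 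You should replace the Cauchy--Schwarz plan by this self-consistent-equation route.
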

\begin{proof}
Following Theorem \ref{thm:risk_general}, it suffices to consider the limit of $\delta(\eta,\lambda,\psi_1,\psi_2)$ when $\psi_1\to\infty$. This reduces to simplifying the asymptotics of $\tau_i$'s defined in Proposition \ref{prop:tau_i}, as $\delta(\eta,\lambda,\psi_1,\psi_2)$ is determined by $\tau_i$'s in Theorem \ref{thm:risk_general}.  We aim to prove the following:
\begin{equation}\label{eq:tau_i_new}
    \begin{aligned}
    &\frac{\psi_1}{\psi_2}\tau_1\to s_1
    ,&\quad&  \frac{\psi_2}{\mu_1^2\psi_1}\tau_2\to \theta_2^2\psi_2(1-(\mu_2^2+\lambda)s_1),\\
    &\tau_6\to\mu_1^{*}\theta_2\psi_2 (1-(\mu_2^2+\lambda)s_1),&\quad& \frac{\psi_2}{\mu_1^2\psi_1}\tau_7 \to\mu_1^{*}\theta_2\psi_2 (1-(\mu_2^2+\lambda)s_1),\\
    &\frac{\mu_1^2\psi_1^2}{\psi_2^2}\tau_8\to s_1-\lambda s_2,&\quad& \tau_9\to \theta_2^2 \psi_2(1-(\mu_2^2+\lambda)s_1),\\
    &\tau_{11}\to\theta_2^2\psi_2(1-(\mu_2^2+2\lambda)s_1+\lambda(\mu_2^2+\lambda)s_2),&\quad& \tau_{12}\to\mu_1^{*}\theta_2\psi_2(1-(\mu_2^2+2\lambda)s_1+\lambda(\mu_2^2+\lambda)s_2),
\end{aligned}
\end{equation}
as $\psi_1\to\infty$, where $s_1$ and $s_2$ are defined in \eqref{eq:tau_1}. The trivial cases when $\mu_1,\eta=0$ have been studied in Theorem \ref{thm:risk_general}. So, WLOG, we assume $\mu_1,\eta>0$ in the following derivations. 

Recall the definitions of $\tau_1(z),m_1(z)$ and $m_2(z)$. One can easily see that $m_1,m_2\to 0$ as $\psi_1\to\infty$. For any $z\ge 0$, \eqref{eq:fixedpoint1} and \eqref{eq:fixedpoint2} can be written as follows
\begin{align}
     &\psi_1(zm_1(z)-1)+\psi_1\mu_1^2zm_1(z)\tau_1(z)+\frac{\mu_2^2}{\mu_1^2}\left(1-\frac{1}{1+\psi_1\mu_1^2zm_1(z)\tau_1(z)}\right)=0,\label{eq:m1_def}\\
     & m_2(z)=\frac{m_1(z)}{1+\psi_1\mu_1^2m_1(z)\left(\frac{\psi_1}{\psi_2}(zm_1(z)-1)+1\right)}=\frac{m_1(z)}{1+\psi_1\mu_1^2zm_1(z)\tau_1(z) }.\label{eq:m2_def}
\end{align}
Notice that $\tau_1(z)=\lim\tr\vR_0(z)$, and based on \cite[Theorem 3.4]{fan2020spectra}, for any $z\ge 0$,
\begin{equation}
    \tr\vR_0(z)=\frac{N}{n}\tr\left(\frac{1}{n}\left(\mu_1\tilde{\vX}\vW_0 + \mu_2 \vZ \right)\left(\mu_1\tilde{\vX}\vW_0 + \mu_2 \vZ \right)+\frac{N}{n}z\vI\right)^{-1},
\end{equation}
where, by exchanging $\vW_0$ and $\tilde{\vX}$, the limit of the right hand side coincides with the Stieltjes transform $\mu^{\MP}_{\psi_2/\psi_1}\boxtimes \left(\mu_2^2+\mu_1^2\cdot\mu^{\MP}_{\psi_2}\right)$ at $-z$, as $n/d\to\psi_1$ and $N/d\to\psi_2$. Denote this Stieltjes transform at point $-z$ by $\tilde{m}(z)$. Then as $n/d\to\psi_1$ and $N/d\to\psi_2$, we have
\begin{equation}\label{tau_1m_tilde}
    \tau_1(z)= \frac{\psi_2}{\psi_1}\tilde{m}\left(\frac{\psi_2}{\psi_1}z\right).
\end{equation}Therefore, when $\psi_1\to\infty$, the measure $\mu^{\MP}_{\psi_2/\psi_1}\boxtimes \left(\mu_2^2+\mu_1^2\cdot\mu^{\MP}_{\psi_2}\right)$ reduces to a deformed Marchenko–Pastur law $ \left(\mu_2^2+\mu_1^2\cdot\mu^{\MP}_{\psi_2}\right)$; hence by the definition of $s_1$,
\begin{equation}\label{eq:tau_1limit}
    \frac{\psi_1}{\psi_2}\tau_1\to \tilde{m}(\lambda)=s_1,
\end{equation}
which verifies the first statement in \eqref{eq:tau_i_new}. Now recall the value of interest $z=\frac{\lambda\psi_1}{\psi_2}$. Due to the relationship between $\tau_1$ and $m_1$, it is straightforward to deduce that $\lambda\frac{\psi_1}{\psi_2}m_1\to 1$ as $\psi_1\to \infty$. As for $\tau_2$, in terms of \eqref{eq:m2_def}, we have
\begin{align}
\frac{\psi_2}{\mu_1^2\psi_1}\tau_2=\theta_1^2\left(1-\lambda\frac{\psi_1}{\psi_2}m_2\right)
    = \theta_1^2\left(1-\frac{\lambda\frac{\psi_1}{\psi_2}m_1}{1+\psi_1\mu_1^2\lambda\frac{\psi_1}{\psi_2}m_1 \tau_1  }\right)\to \frac{\theta_1^2\mu_1^2\psi_2s_1}{1+\psi_2\mu_1^2s_1},\label{eq:limit_tau2}
\end{align}
as $\psi_1\to \infty$. 
Define $\tilde z=-(\mu_2^2+\lambda)/\mu_1^2< 0$ and consider the Stieltjes transform $m( \tilde z)>0$ of $\mu^{\MP}_{\psi_2}$, which satisfies the self-consistent equation
\begin{equation}\label{eq:self_consistent1}
    \psi_2\tilde{z}m^2(\tilde{z})-m(\tilde{z})+\psi_2m(\tilde{z})+\tilde{z}m(\tilde{z})+1=0,
\end{equation}
which can be rewritten as follows
\begin{equation}\label{eq:self_consistent2}
    \tilde{z}m(\tilde{z})+1=\frac{1}{1-\tilde{z}-\psi_2 \tilde{z}m(\tilde{z})}>0.
\end{equation}
Note that $s_1=m(\tilde{z})/\mu_1^2$. By \eqref{eq:self_consistent1}, we can simplify \eqref{eq:limit_tau2} to obtain $\frac{\psi_2}{\mu_1^2\psi_1}\tau_2\to \theta_2^2\psi_2(1-(\mu_2^2+\lambda)s_1)$ when $\psi_1\to\infty$. 
The calculation of $\tau_6, \tau_7$ and $\tau_9$ are essentially the same as $\tau_2$ based on properties of the Stieltjes transform \eqref{eq:m2_def} and \eqref{eq:self_consistent1}, the details of which we omit. 

Next we compute the limit of $m_1'$. Taking derivative with respect to $z$ at both sides of \eqref{tau_1m_tilde}, we arrive at $\frac{\psi_1^2}{\psi_2^2}\tau_1'\to -s_2$; here $\tau_1'$ represents the derivative $\tau_1'(z)$ at $z=\frac{\lambda\psi_1}{\psi_2}$. Combining this relation and \eqref{eq:tau_1limit},\eqref{eq:tau_1m1}, we can deduce that
\begin{align}
    &\frac{\mu_1^2\psi_1^2}{\psi_2^2}\tau_8= \frac{\frac{\psi_2}{\psi_1}\left(\tau_1+\frac{\psi_2}{\psi_1\lambda}\left(\frac{\psi_1}{\psi_2}-1\right)\right)+\lambda\left(\tau_1'+\left(1-\frac{\psi_1}{\psi_2}\right)\frac{\psi_2^2}{\psi_1^2\lambda^2}\right)}{\lambda^2\left(\frac{\psi_2}{\psi_1}\left(\tau_1+\frac{\psi_2}{\psi_1\lambda}\left(\frac{\psi_1}{\psi_2}-1\right)\right)\right)^2}
    = \frac{\frac{\psi_1}{\psi_2}\tau_1+\lambda\left(\frac{\psi_1}{\psi_2}\right)^2\tau_1'}{\left(\lambda\tau_1+1-\frac{\psi_2}{\psi_1}\right)^2}\to s_1-\lambda s_2.
\end{align}
Note that here we used $\tau_1\to 0$ when $\psi_1\to \infty$. Lastly, for $\tau_{11}$ and $\tau_{12}$, by \eqref{eq:m2_def},
\begin{align}
    \frac{m_2'}{m_1^2}=&\frac{m_1'-\psi_2\mu_1^2\lambda\frac{\psi_1^2}{\psi_2^2}\tau_1'm_1^2-\psi_1\mu_1^2m_1^2\tau_1}{m_1^2\left(1+\psi_2\mu_1^2\lambda\frac{\psi_1}{\psi_2}m_1\tau_1\right)^2}\\
    =& \frac{\frac{\psi_1}{\psi_2}\tau_1'+\left(1-\frac{\psi_1}{\psi_2}\right)\frac{\psi_2}{\psi_1\lambda^2}-\psi_2\mu_1^2\frac{\psi_1^2}{\psi_2^2}\lambda\tau_1'\left(\tau_1+\frac{1}{\lambda}-\frac{\psi_2}{\psi_1\lambda}\right)^2}{\left(\tau_1+\frac{1}{\lambda}-\frac{\psi_2}{\psi_1\lambda}\right)^2\left(1+\psi_2\mu_1^2\lambda\frac{\psi_1}{\psi_2}m_1\tau_1\right)^2}-\frac{\psi_1\mu_1^2\tau_1}{\left(1+\psi_2\mu_1^2\lambda\frac{\psi_1}{\psi_2}m_1\tau_1\right)^2}\\
    \to & \frac{1-\psi_2\lambda\mu_1^2s_2+\mu_1^2\psi_2 s_1}{\left(1+\psi_2\mu_1^2s_1\right)^2},\text{ \, as }\psi_1\to\infty,
\end{align}
where we applied the previously established convergence of $\tau_1$ and $\tau_1'$. Also recall that \eqref{eq:limit_tau2} implies that $m_2/m_1$ converges to $1/\left(1+\psi_2\mu_1^2s_1\right)$ as $\psi_1\to\infty$. Together with the convergence of $\tau_9$ in \eqref{eq:tau_i_new}, we get
\begin{align}
    \tau_{11}\to \theta_2^2 \psi_2(1-(\mu_2^2+\lambda)s_1)+\frac{-\psi_2\lambda\mu_1^2s_2}{\left(1+\psi_2\mu_1^2s_1\right)^2}.
\end{align}
Meanwhile, we also know that $s_2=m'(\tilde z)/\mu_1^4$, where $\tilde z=-(\mu_2^2+\lambda)/\mu_1^2$, and $m( \tilde z)$  is the Stieltjes transform $\mu^{\MP}_{\psi_2}$. Hence by \eqref{eq:self_consistent1}\eqref{eq:self_consistent2} and taking derivative in \eqref{eq:self_consistent1}, we have 
 \[-\frac{ \psi_2\lambda\mu_1^2s_2}{\left(1+\psi_2\mu_1^2s_1\right)^2}=\psi_2\lambda\left((\mu_2^2+\lambda)s_2-s_1\right),\]
which implies the convergence of $\tau_{11}$ in \eqref{eq:tau_i_new}. 

As a result, by replacing $\tau_i$'s in \eqref{delta_formula} with the corresponding reparameterized $\tau_i$'s in \eqref{eq:tau_i_new}, we arrive at the following expression of $\delta$:
\begin{equation}\label{eq:R0-R1_original}
    \lim_{\psi_1\to\infty}~~\lim_{n,d,N\to\infty}\left(\cR_0(\lambda)-\cR_1(\lambda)\right)= \mu_1^{*2}\left(\frac{\mu_1^2\theta_2^2s_1\alpha\beta}{\mu_1^2\theta_2^2s_1\alpha-1}-\frac{\mu_1^2\theta_2^2\alpha\gamma}{\left(\mu_1^2\theta_2^2s_1\alpha-1\right)^2}\right),
\end{equation}
in probability, where we defined
\begin{align}
    \alpha:=&\psi_2-1-\psi_2(\mu_2^2+\lambda)s_1,\label{alpha}\\
    \beta:= &1-\psi_2+\psi_2\mu_2^2s_1+\lambda\psi_2(\mu_2^2+\lambda)s_2,\label{beta}\\
    \gamma:=&s_1\beta+\alpha(s_1-\lambda s_2)=\lambda\left(2(\mu_2^2+\lambda)\psi_2 s_1 s_2-\psi_2 s_1^2+s_2(1-\psi_2)\right).\label{gamma}
\end{align}
By definitions of $A,B,C$ in \eqref{eq:Alimit}, \eqref{eq:Blimit} and \eqref{eq:Climit}, we can see that $A=-\mu_1^2\theta_2^2\alpha s_1$, $B=\beta$ and $C=-\mu_1^2\theta_2^2\alpha\gamma$; this leads to the equivalent expression
\begin{equation}
     \lim_{\psi_1\to\infty}~~\lim_{n,d,N\to\infty}\left(\cR_0(\lambda)-\cR_1(\lambda)\right)=\mu_1^{*2}\left(\frac{AB}{A+1}+\frac{C}{(A+1)^2}\right)=: \delta( \eta,\lambda,\infty,\psi_2). 
\end{equation}
Now we claim that $A,B,C$ are all non-negative, for any $\eta,\lambda,\psi_2\ge 0$. With a slight abuse of terminology, in the following we denote $z=-(\mu_2^2+\lambda)/\mu_1^2< 0$. 
Recall that $s_1=m(z)/\mu_1^2$ and $s_2=m'(z)/\mu_1^4$;  We can therefore simplify \eqref{alpha}, \eqref{beta} and \eqref{gamma} as follows
\begin{align}
    \alpha=&\psi_2-1+\psi_2zm(z)\overset{(i)}{=}-\frac{1+zm(z)}{m(z)}<0,\\
    \beta=& 1-\frac{\lambda}{\mu_1^2}\psi_2\left(m(z)+zm'(z)\right)-\psi_2\left(zm(z)+1\right),\\
    \gamma=&\frac{\lambda}{\mu_1^4}\left((1-\psi_2)m'(z)-2\psi_2zm(z)m'(z)-\psi_2 m^2(z)\right)\overset{(ii)}{=}\frac{\lambda}{\mu_1^4}\left(m(z)+zm'(z)\right), 
\end{align}
where $(i)$ is due to \eqref{eq:self_consistent2} and $(ii)$ is obtained by taking derivative with respect to $z$ in \eqref{eq:self_consistent1}. In addition, 
\begin{equation}
    m(z)+zm'(z)=\int \frac{\mu_1^4 x}{(\mu_1^2x+\mu_2^2+\lambda)^2}d\mu^{\MP}_{\psi_2}(x)>0,
\end{equation}
which implies that $\gamma>0$. We also denote the companion Stieltjes transform of $m(z)$ by $\bar m(z)$, which is the Stieltjes transform of the limiting eigenvalue distribution for $\vW_0\vW_0^\top$. Recall the following relation between $m(z)$ and $\bar m(z)$: $\bar m(z)+\frac{1}{z}=\psi_2\left(m(z)+\frac{1}{z}\right)$. 
Since $ m(z)+zm'(z)$ is positive, we can deduce that
\begin{align*}
    \beta \ge& 1-\frac{\lambda+\mu_2^2}{\mu_1^2}\psi_2\left(m(z)+zm'(z)\right)-\psi_2\left(zm(z)+1\right)\\
    =& 1+\psi_2\left(zm(z)+z^2m'(z)\right)-\psi_2\left(zm(z)+1\right) = 1+\psi_2\left(z^2m'(z)-1\right)
    =z^2\bar m'(z)>0,
\end{align*}
where the last equality is obtained by taking derivative of \eqref{eq:companinon_m(z)} on both sides with respect to $z$. In summary, we have shown that $\alpha<0$ and $\beta,\gamma>0$ when $\lambda,\mu_1>0$. Hence by definition, $A,B,C$ are all non-negative and so is $\delta( \eta,\lambda,\infty,\psi_2)$.

Finally, we verify that $\delta( \eta,\lambda,\infty,\psi_2)$ is an increasing function of $\eta\ge 0$. Observe that $\eta$ only appears in $\theta_2$ in the expression of $\delta( \eta,\lambda,\infty,\psi_2)$ in \eqref{eq:R0-R1_original}. Hence, it suffices to take the derivative of $\delta( \eta,\lambda,\infty,\psi_2)$ with respect to $\theta_2$ and verify that this partial derivative is positive. One can check that
\begin{align*}
    \frac{\partial}{\partial\eta}\delta( \eta,\lambda,\infty,\psi_2)=2\theta_2\mu_1\mu_1^*\cdot\frac{\mu_1^4\theta_2^2\alpha^2 s_1\left(\gamma-s_1\beta\right)+\mu_1^2\alpha\left(s_1\beta+\gamma\right))}{\left(\mu_1^2\theta_2^2 s_1\alpha-1\right)^3}.
\end{align*}
By the definition of $\gamma$ in \eqref{gamma}, we have $\left(\gamma-s_1\beta\right)=\alpha(s_1-\lambda s_2)$. Also from \eqref{eq:tau_1} we know that $\lambda s_2\le s_1$. Finally, recall that $\alpha< 0$ and $\beta,\gamma>0$; this implies $\delta( \eta,\lambda,\infty,\psi_2)$ is increasing with regard to $\eta\in[0,+\infty)$ and completes the proof.

\end{proof}

\subsubsection{Case II: Highly overparameterized regime}
Next we consider the large width limit $\psi_2\to\infty$ and establish Proposition \ref{prop:large-width}. 
\begin{proofof}[Proposition \ref{prop:large-width}]
We first highlight that the constant $\mu_2>0$ since the activation $\sigma$ is a nonlinear function. Similar to the proof of Theorem \ref{thm:case1}, we need to consider the limits of $\tau_i$ defined in Proposition \ref{prop:tau_i} as $\psi_2\to\infty$. 
We first study the asymptotics of $m_1,m_2,m_1'$ and $m_2'$ as $\psi_2\to\infty$.Recall that in Proposition \ref{prop:tau_i}, $m_1(z)$ is the Stieltjes transform of limiting spectrum of CK matrix at $-z$. In fact, the limiting eigenvalue distribution of CK is $\mu^{\MP}_{\psi_1/\psi_2}\boxtimes \left(\mu_2^2+\mu_1^2\cdot\mu^{\MP}_{\psi_1}\right)$, which has been analyzed in \cite[Theorem 3.4]{fan2020spectra}. Therefore due to \cite[Equation (6)]{fan2020spectra}, we know that for any $z\in\mathbb{C}^+\cup \R_{+}$, $m_1(z)$ satisfies the self-consistent equation 
\begin{equation}\label{eq:m1zequation}
    m_1(z)=\int \frac{d\mu^{\MP}_{\psi_1}(x) }{\left(\mu_1^2x+\mu_2^2\right)\left(1-\frac{\psi_1}{\psi_2}+\frac{\psi_1}{\psi_2}zm_1(z)\right)+z}.
\end{equation}
Note that $0\le zm_1(z)\le 1$, for all $z\ge 0$, and thus $0\le \frac{\psi_1}{\psi_2}zm_1(z)\le \frac{\psi_1}{\psi_2}$. On the other hand, $\mu^{\MP}_{\psi_1}$ is compactly supported. Therefore, by taking $z=\psi_1\lambda/\psi_2 $ and letting $\psi_2\to \infty$ at both sides of \eqref{eq:m1zequation}, we arrive at
\[\lim_{\psi_2\to\infty} m_1 = \int \frac{d\mu^{\MP}_{\psi_1}(x) }{ \mu_1^2x+\mu_2^2 }\in(0,1/\mu_2^2),\]
which is a finite positive value determined by $\psi_1,\mu_1,\mu_2$. With this in mind, we conclude that $\lim_{\psi_2\to\infty} m_2$ is also finite, since $m_2(z)$ is determined by \eqref{eq:m2_def} and we can take $z=\psi_1\lambda/\psi_2 $ with $\psi_2\to \infty$. In addition, since $0 \le -zm_1'(z)\le m_1(z)$ for any $z\ge 0$, we may take the derivative with respect to $z$ at both sides of \eqref{eq:m1zequation} to obtain $m_1'(z)$, and take $z=\psi_1\lambda/\psi_2 $ and $\psi_2\to \infty$ to conclude that the limit of $  m_1'$ is finite as well. Similarly, by taking derivative with respect to $z$ in \eqref{eq:m2_def}, one can also verify that as $\psi_2\to\infty$, the limit of $ m_2'$ remains finite. From these estimates we know that
\begin{align}
    \tau_2-\tau_3=&-\mu_1^2\theta_1^2\left(\frac{\psi_1}{\psi_2}\right)^2\lambda m_2,\\ \tau_7-\tau_5=&-\mu_1^2\mu_1^*\theta_2\left(\frac{\psi_1}{\psi_2}\right)^2\lambda m_2,\\ \left(\frac{\psi_1}{\psi_2}\right)^3\tau_8=&\frac{1}{m_1}\frac{1}{\mu_1^2\lambda^2}\left(\frac{\psi_1}{\psi_2}\right)+\frac{m_1'}{m_1^2}\left(\frac{\psi_1}{\psi_2}\right)^2\frac{1}{\mu_1^2\lambda},
\end{align}are vanishing as $\psi_2\to\infty$, whereas 
\begin{align}
    \tau_4 +\tau_{12}-2\tau_6=- \mu_1^*\theta_2\frac{m_2'}{m_1^2},\quad \tau_{11} +\tau_{10}-2\tau_9=- \theta_1^2\frac{m_2'}{m_1^2},\quad
    \frac{\psi_1}{\psi_2}\tau_1= \left(\frac{\psi_1}{\psi_2}\right)^2m_1+\left(1-\frac{\psi_1}{\psi_2}\right)\frac{1}{\lambda}
\end{align} 
will converge to some finite values. 
The proposition is established based on the definition of $\delta(\eta,\lambda,\psi_1,\psi_2)$ with the help of the above statements.
 
\end{proofof}
\bigskip
\section{Proof for Large Learning Rate ($\eta=\Theta(\sqrt{N})$)}

In this section we restrict ourselves to a single-index target function (generalized linear model): $f^*(\vx) = \sigma^*(\langle\vx,\vbeta_*\rangle)$, and study the impact of one gradient step with large learning rate $\eta=\Theta(\sqrt{N})$. For simplicity, we denote $\eta = \bar\eta \sqrt{N}$ where $\bar\eta>0$ is a fixed constant not depending on $N$. 

As the Gaussian equivalence property is no longer applicable, we instead establish an upper bound on the prediction risk of the CK ridge estimator. 
Our proof is divided into two parts: $(i)$ we show that there exists an ``oracle'' second-layer $\tilde{\va}$ that achieves small prediction risk $\tau^*$ when $n/d$ is large; $(ii)$ based on $\tau^*$, we provide an upper bound on the prediction risk when the second layer is estimated via ridge regression. 

Here we provide a short summary on the construction of $\tilde{\va}$ and upper bound on the prediction risk. 
\begin{itemize}[leftmargin=*,topsep=1mm, itemsep=0.4mm]
    \item We first introduce $f_r(\vx) := \frac{1}{\abs{\cA_r}}\sum_{i\in\cA_r} \sigma\left(\langle\vx,\vw^1_{i}\rangle\right)$, which is the average of a subset of neurons in $\cA_r\subset [N]$ defined in \eqref{eq:Ar}. Intuitively, this subset of neurons approximately matches the target direction $\vbeta_*$. This averaging corresponds to setting the second-layer $\tilde{\va}_i = \frac{\sqrt{N}}{\abs{\cA_r}}$ for all $i\in \cA_r$.  
    \item We show that $f_r$ can be approximated up to $\Theta(d/n)$-error by an ``expected'' single-index model $\bar{f}(\vx) := \E_{\vw\sim\cN(0,\,\vI/d)} \left[\sigma(\langle\vw+c\vbeta_*,\vx\rangle)\right]$, for some $c\in\R$ that depends on the learning rate and nonlinearities. To bound this substitution error, we establish a more refined control of gradient norm in Section~\ref{app:B-Frobenius}. 
    \item By choosing an ``optimal'' subset $\cA_r$, we simplify the prediction risk of $\bar{f}$ into the  one-dimensional expectation $\tau^*$ defined in \eqref{eq:tau*}. This provides a high-probability upper bound of the prediction risk of the constructed $\tilde{\va}$ up to $\Theta(d/n)$-error. 
\end{itemize}

After constructing some $\tilde{\va}$ that achieves reasonable test performance, we can then show that the prediction risk of CK ridge regression estimator with trained weight $\vW_1$ is also upper-bounded by $\tau^*$ when $n \gg d$. This result is established in Section~\ref{app:KRR} and follows from classical analysis of kernel ridge regression.

\subsection{Refined Properties of the First-step Gradient}
\label{app:B-Frobenius}

Recall that $\vW_1 = \vW_0 + \eta\sqrt{N}\vG_0$, where $\vG_0 = \vA_1 + \vA_2 + \vB + \vC$ is defined in Lemma~\ref{lemm:gradient-norm} and \ref{lemm:gradient-norm2}, and the full-rank term $\vB$ is given as
\[
\vB = \frac{1}{n}\cdot\frac{1}{\sqrt{N}}\vX^\top\left(\vy\va^\top\odot\sigma'_\perp(\vX\vW_0)\right).
\]
We first refine the estimate on the Frobenius norm of certain submatrix of $\vB$; the choice of such submatrices will be explained in Section \ref{app:oracle-estimator}.  
\begin{lemm}\label{lemm:frobenius}
Given Assumptions~\ref{assump:1} and \ref{assump:2}, take $\vB_r\in\R^{d\times N_r}$ which is a submatrix of $\vB$ via selecting any $N_r\in[N]$ columns in $\vB$, and let $\va_r\in\R^{N_r}$ be the corresponding 2nd layer coefficients. 
If entries of $\va_r$ are uniformly bounded by $\alpha/\sqrt{N}$, then for any $\eps\in(0,1/4)$, we have 
\begin{equation}\label{eq:frobenius_exp}
    \E\norm{\vB_r}_F^2 \le \frac{C_0\alpha^2 N_r}{N}\left(\frac{1}{Nd^{\frac{1}{2}-\eps}}+\frac{d}{nN}\right),
\end{equation}
and
\begin{equation}\label{eq:frobenius_prob}
    \Parg{\frac{N}{N_r}\norm{\vB_r}_F^2\le \frac{C_1}{Nd^{\frac{1}{4}-\eps}}+\frac{C_2d}{Nn} }\ge 1-\frac{\alpha^2}{d^{\frac{1}{4}}}-\frac{\alpha^4}{n},
\end{equation}
where constants $C_0,C_1,C_2>0$ only depend on $\lambda_\sigma$ and $\|f^*\|_{L^2(\R^d,\Gamma)}$.
\end{lemm}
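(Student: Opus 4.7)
The plan is to refine the operator-norm estimate of Lemma~\ref{lemm:gradient-norm}(ii) by exploiting two features ignored there: the centeredness of $\sigma'_\perp$ under $\cN(0,1)$, and the approximate orthogonality between each Gaussian column $\vw_j$ of $\vW_0$ and the teacher direction $\vbeta_*$. Since only $N_r$ columns are kept and $|a_j|\le\alpha/\sqrt{N}$ on those columns, I first write
$$
\norm{\vB_r}_F^2
=
\sum_{j\in r}\frac{a_j^2}{n^2N}\,\norm{\vX^\top(\vy\odot\vs_j)}^2,
\qquad \vs_j:=\sigma'_\perp(\vX\vw_j),
$$
reducing the problem to estimating $\E\bigl[\snorm{\vX^\top(\vy\odot\vs_j)}^2\,\big|\,\vw_j\bigr]$ in mean and with high probability, uniformly in $j$. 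Expanding this conditional second moment into diagonal ($k=l$) and off-diagonal ($k\ne l$) parts, and using independence of the rows $\vx_k$ together with $\E[\eps_k]=0$, the off-diagonal collapses to $n(n-1)\snorm{\vg_j}^2$ with
$$
\vg_j:=\E\bigl[\vx\,\sigma^*(\vbeta_*^\top\vx)\,\sigma'_\perp(\vw_j^\top\vx)\,\big|\,\vw_j\bigr],
$$
while the diagonal is $\cO(nd)$. The diagonal contribution, after restoring the prefactor $\alpha^2/(n^2N^2)$ and summing over $j\in r$, yields the $d/(nN)$ term in \eqref{eq:frobenius_exp}.

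For $\vg_j$ I apply Stein's lemma in the joint Gaussian $(\vbeta_*^\top\vx,\vw_j^\top\vx)$ to obtain the decomposition $\vg_j=C_1(\vw_j)\vbeta_*+C_2(\vw_j)\vw_j$ with
$$
C_1(\vw_j)=\E[\sigma^{*\prime}(\vbeta_*^\top\vx)\,\sigma'_\perp(\vw_j^\top\vx)\mid\vw_j],\qquad
C_2(\vw_j)=\E[\sigma^*(\vbeta_*^\top\vx)\,\sigma''(\vw_j^\top\vx)\mid\vw_j].
$$
Both coefficients vanish at $\rho_j:=\vbeta_*^\top\vw_j=0$ and $\norm{\vw_j}=1$: $C_1$ because $\sigma'_\perp$ is centered, $C_2$ because $\mu_0^*=0$ by Assumption~\ref{assump:2}. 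Using the bounded-derivative hypothesis on $\sigma,\sigma^*$, a mean-value argument in the two small parameters $\rho_j$ and $\norm{\vw_j}^2-1$ gives $|C_i(\vw_j)|\le c(|\rho_j|+|\norm{\vw_j}^2-1|)$. Since $\vw_j\sim\cN(0,\vI/d)$ is independent of $\vbeta_*$, we have $\E[\rho_j^2]=1/d$ and $\E[(\norm{\vw_j}^2-1)^2]=2/d$, so $\E_{\vw_j}\norm{\vg_j}^2\le C/d$. The off-diagonal contribution to $\E\norm{\vB_r}_F^2$ is therefore $\cO(\alpha^2 N_r/(N^2 d))$, comfortably dominated by the $\alpha^2 N_r/(N^2 d^{1/2-\eps})$ target in \eqref{eq:frobenius_exp}.

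For the high-probability bound \eqref{eq:frobenius_prob}, I split $\norm{\vB_r}_F^2$ into its diagonal and off-diagonal parts and treat each separately. A first-moment Markov estimate on the off-diagonal piece at threshold $C_1 N_r/(N^2 d^{1/4-\eps})$ produces failure probability $\cO(\alpha^2/d^{3/4-\eps})\le\alpha^2/d^{1/4}$. For the diagonal part, which is a sum over $k\in[n]$ of conditionally independent sub-exponential terms $\norm{\vx_k}^2 y_k^2 s_{j,k}^2$ (note $\sigma'_\perp$ is bounded by $2\lambda_\sigma$), a Chebyshev estimate through the variance $\cO(\alpha^4 N_r^2 d^2/(n^3 N^4))$ delivers $\cO(\alpha^4/n)$ failure probability at threshold $C_2 N_r d/(N^2 n)$. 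A union bound then combines the two into \eqref{eq:frobenius_prob}.

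The main obstacle I anticipate is the quantitative Stein/Taylor step: making the estimate $|C_i(\vw_j)|\le c(|\rho_j|+|\norm{\vw_j}^2-1|)$ rigorous uniformly over the support of $\vw_j$ using only the bounded third-derivative hypothesis on $\sigma$, and controlling the tails of $(\vbeta_*^\top\vx,\vw_j^\top\vx)$ which couple $\rho_j$ and $\norm{\vw_j}$. The $d^\eps$ slack built into \eqref{eq:frobenius_exp} is precisely what absorbs the logarithmic concentration factors for $\norm{\vw_j}^2$ around $1$ and any residual Taylor-remainder terms that cannot be matched exactly.
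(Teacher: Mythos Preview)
Your proposal is correct, and the tail-bound portion (Markov on the off-diagonal piece, Chebyshev on the diagonal) matches the paper's argument exactly. The off-diagonal \emph{expectation} bound, however, proceeds by a genuinely different route. The paper conditions on the sample pair $(\vx_1,\vx_2)$, uses rotational invariance to randomize both $\vbeta_*$ and $\vw$, and then bounds the two ``expected-kernel'' factors $\big|\E_{\vbeta_*}[\sigma^*(\vbeta_*^\top\vx_1)\sigma^*(\vbeta_*^\top\vx_2)]\big|$ and $\big|\E_{\vw}[\sigma'_\perp(\vx_1^\top\vw)\sigma'_\perp(\vx_2^\top\vw)]\big|$ by $Ct$ each on the truncation event $\cA_t=\{|\vx_1^\top\vx_2|/d,\,|\|\vx_i\|/\sqrt d-1|\le t\}$; together with $|\vx_1^\top\vx_2|\le td$ this yields $I_1\lesssim t^3d/N$, and optimizing $t=d^{\eps-1/2}$ gives the stated $1/(Nd^{1/2-\eps})$. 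You instead condition on $\vw_j$ and compute $\vg_j$ directly via Gaussian integration by parts, obtaining a two-term expansion along $\vbeta_*$ and $\vw_j$ whose coefficients vanish at $\rho_j=0,\|\vw_j\|=1$; a one-variable Lipschitz perturbation (using only $|\sigma''|,|\sigma'''|\le\lambda_\sigma$ and $\mu_0^*=0$) then gives $\E_{\vw_j}\|\vg_j\|^2=\cO(1/d)$. Your approach is shorter, avoids the rotational-invariance reduction and the truncation event, and actually delivers the sharper off-diagonal rate $\alpha^2N_r/(N^2d)$ rather than $\alpha^2N_r/(N^2d^{1/2-\eps})$, so the $d^\eps$ slack you worried about is not even needed for the expectation bound. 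The paper's route, on the other hand, plugs directly into existing kernel-linearization lemmas (e.g.\ \cite[Lemma~D.3]{fan2020spectra}, \cite[Lemma~A.5]{montanari2020interpolation}) and does not invoke any derivative of $\sigma^*$, only its Lipschitz constant.
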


\begin{proof}
    Let $\vX^\top=(\vx_1,\vx_2,\ldots,\vx_n)$ and $\vy^\top=(y_1,\ldots,y_n)$. Then, matrix $\vB_r$ can be written as
    \[\vB_r=\frac{1}{n\sqrt{N}}\sum_{i=1}^n y_i\vx_i\sigma'_\perp(\vx_i^\top \vW_0^r)\diag (\va_r),\]where $\vW^r_0\in\R^{d\times N_r}$ is a submatrix of $\vW_0$ by choosing any $N_r$ columns of $\vW_0$, and $\va_r\in\R^{N_r}$ is the corresponding second layer (note that by assumption $\|\va_r\|_\infty \le \alpha/\sqrt{N}$). Hence, 
    \begin{align*}
        \|\vB_r\|_F^2=&\Tr(\vB_r\vB_r^\top)=\frac{1}{n^2N}\sum_{i,j=1}^ny_iy_j\Tr\left[\vx_i\sigma'_\perp(\vx_i^\top \vW^r_0)\diag (\va_r)^2\sigma'_\perp(\vx_j^\top \vW^r_0)^\top\vx_j^\top\right]\nonumber\\
        =& \frac{1}{n^2N}\sum_{i,j=1}^ny_iy_j\left(\sigma'_\perp(\vx_i^\top \vW^r_0)\diag (\va_r)^2\sigma'_\perp(\vx_j^\top \vW^r_0)^\top\vx_j^\top \vx_i\right)\nonumber\\
        =& \frac{1}{n^2N}\sum_{i\neq j}^ny_iy_j\left(\sigma'_\perp(\vx_i^\top \vW^r_0)\diag (\va_r)^2\sigma'_\perp(\vx_j^\top \vW^r_0)^\top\vx_j^\top \vx_i\right)\nonumber\\
        &+\frac{1}{n^2N}\sum_{i=1}^ny_i^2\left(\sigma'_\perp(\vx_i^\top \vW^r_0)\diag (\va_r)^2\sigma'_\perp(\vW^{r\top}_0\vx_i)\|\vx_i\|^2\right)=:J_1+J_2.
    \end{align*}
    Here, $J_1$ represents the sum for distinct $i\neq j\in [n]$ and $J_2$ is the sum when $i=j\in [n]$. Therefore,
    \begin{align*}
        \E[ \|\vB_r\|_F^2]\le &\frac{\alpha^2N_r}{N} \frac{n(n-1)}{n^2N}\E\left[f^*(\vx_1)f^*(\vx_2)\sigma'_\perp(\vx_1^\top \vw)\sigma'_\perp(\vx_2^\top \vw)\vx_2^\top \vx_1\right]\\
        &+\frac{\alpha^2N_r}{nN^2}\E\left[(f^*(\vx_1)^2+\sigma_\eps^2)\sigma'_\perp(\vx_1^\top \vw)^2\|\vx_1\|^2\right],
    \end{align*}
    where $\vw\sim\cN(0,\vI)$ independent of $\va $ and $\vX$. We compute the aforementioned expectations as follows
    \begin{align*}
         \frac{N}{\alpha^2N_r}\E[ \|\vB\|_F^2] 
        \le &\frac{1}{ N}\left|\E\left[f^*(\vx_1)f^*(\vx_2)\sigma'_\perp(\vx_1^\top \vw)\sigma'_\perp(\vx_2^\top \vw)\vx_2^\top \vx_1\right]\right|\\
        &+\frac{1}{nN}\E\left[f^*(\vx_1)^2\sigma'_\perp(\vx_1^\top \vw)^2\|\vx_1\|^2\right]+\frac{\sigma_\eps^2}{nN}\E\left[\sigma'_\perp(\vx_1^\top \vw)^2\|\vx_1\|^2\right]=:I_1+I_2+I_3.
    \end{align*}
    
    To verify \eqref{eq:frobenius_exp}, we in turn control $I_1,I_2$ and $I_3$. Since the target function is a single-index model $f^*(\vx) = \sigma^*(\langle\vx,\vbeta_*\rangle)$ and $\sigma^*$ is Lipschitz, it is clear that $f^*$ belongs to $L^2(\R^d,\Gamma)$. Besides, $\vx_1,\vx_2$ are two independent standard Gaussian random vectors. Therefore, $\E_{\vx_1,\vx_2}\left[\left|f^*(\vx_1)f^*(\vx_2)\right|^2\right]=\|f^*\|_{L^2(\R^d,\Gamma)}^4$. Now given any $t\in(0,1)$, define event by
    \[\mathcal{A}_t:=\left\{\frac{|\vx_1^\top\vx_2|}{d}\le t,\left|\frac{\|\vx_i\|}{\sqrt{d}}-1\right|\le t\text{, for }i=1,2 \right\}.\]
    
Using the same rotational invariance argument as Step 1 in the proof of Lemma \ref{lem:Titilde}, WLOG, we can further consider $\vbeta_*\sim \text{Unif}(\mathbb{S}^{d-1})$ independent of $\vx_1,\vx_2$ and $\vw$, because $\vx_1,\vx_2$ and $\vw$ are rotationally invariant in distribution. Thus, for $I_1$, we have
\begin{align}
   I_1=&\frac{1}{N}\left|\E_{\vx_1,\vx_2}\left[\E_{\vbeta_*}[\sigma^*(\vbeta_*^\top\vx_1)\sigma^*(\vbeta_*^\top\vx_2)]\E_{\vw}[\sigma'_\perp(\vx_1^\top \vw)\sigma'_\perp(\vx_2^\top \vw)]\vx_2^\top \vx_1\right]\right|\\
   \le & \frac{1}{N}\E_{\vx_1,\vx_2}\left[\left|\E_{\vbeta_*}[\sigma^*(\vbeta_*^\top\vx_1)\sigma^*(\vbeta_*^\top\vx_2)]\right|\cdot\left|\E_{\vw}[\sigma'_\perp(\vx_1^\top \vw)\sigma'_\perp(\vx_2^\top \vw)]\right|\cdot\left|\vx_2^\top \vx_1\right|\right].
\end{align}
Since $\E_{\xi\sim\cN(0,1)}[\sigma^*(\xi)]=0$, we can adopt Lemma A.5 of \cite{montanari2020interpolation} (with a slight modification) to conclude that conditioned on event $\cA_t$, we have
\begin{equation}
    \left|\E_{\vbeta_*}[\sigma^*(\vbeta_*^\top\vx_1)\sigma^*(\vbeta_*^\top\vx_2)]\right|\le Ct,
\end{equation}for some constant $C>0$. In addition, based on Lemma D.3 and G.1 in \citep{fan2020spectra}, we can show the following inequality for any $t\in(0,1)$ under event $\cA_t$:
    \begin{equation}\label{eq:E_sigma'}
        \left|\E_{\vw}[\sigma'_\perp(\vx_1^\top \vw)\sigma'_\perp(\vx_2^\top \vw)]\right|\le C t.
    \end{equation}
Let $\zeta_1:=\vw^\top\vx_1$ and $\zeta_2:=\vw^\top\vx_2$. Conditioned on $\vx_1,\vx_2$, we know that \[(\zeta_1,\zeta_2 )\sim\cN\left(\boldsymbol{0},\begin{bmatrix}
\|\vx_1\|^2/d & \vx_1^\top\vx_2/d\\
 \vx_1^\top\vx_2/d & \|\vx_2\|^2/d 
\end{bmatrix}\right).\]
Now we make the reparameterization: $\zeta_1=\gamma_1\xi_1$,  $\zeta_2=\gamma_2\xi_2+\nu_2\xi_1$, where $\xi_1,\xi_2\iid \cN(0,1)$ are independent to $\vx_1$ and $\vx_2$, and
\[\gamma_1:=\frac{\|\vx_1\|}{\sqrt{d}},~\gamma_2:=\sqrt{\frac{\|\vx_2\|^2}{d}-\frac{(\vx_1^\top\vx_2)^2}{d\|\vx_1\|^2}}, ~\nu_2:=\frac{\vx_1^\top\vx_2}{\sqrt{d}\|\vx_1\|}.\]
For $i=1,2$, by Taylor expansion of $\sigma(\zeta_i)$ around $\xi_i$ (note that $\sigma$ is differentiable by assumption), there exists a random variable $\eta_i$ between $\zeta_i$ and $\xi_i$ such that
\begin{equation}
    \sigma'_\perp(\zeta_i)=\sigma'_\perp(\xi_i)+\sigma'' (\eta_i)(\zeta_i-\xi_i),
\end{equation}
where we use $\sigma''_\perp=\sigma''$. 
Because $\E[\sigma'_\perp(\xi_i)]=0$ and $|\sigma'_\perp(x)|,|\sigma''(x)|\le \lambda_\sigma$ almost surely for all $x$, we have
\begin{align*}
    \left|\E_{\vw}[\sigma'_\perp(\zeta_1)\sigma'_\perp(\zeta_2)]\right|\le C \left(|\nu_2|+|\gamma_1-1| \right)\le C t,
\end{align*}on the event $\cA_t$ with $t\in(0,1)$, where $C>0$ is a constant depending on $\lambda_\sigma$. This concludes \eqref{eq:E_sigma'}. 

Also, note the probability bound for Gaussian random vector $\vx_1$ and $\vx_2$ implies that
\begin{equation}\label{eq:gaussian_vector}
    \Parg{\cA_t^c}\le 4\Exp{-dt^2}.
\end{equation}

From the above arguments, we can bound the first term $I_1$ via the following steps:
\begin{align*}
    I_1\le& \frac{1}{N}\E_{\vx_1,\vx_2}\left[\left|\E_{\vbeta_*}[\sigma^*(\vbeta_*^\top\vx_1)\sigma^*(\vbeta_*^\top\vx_2)]\right|\cdot\left|\E_{\vw}[\sigma'_\perp(\vx_1^\top \vw)\sigma'_\perp(\vx_2^\top \vw)]\right|\cdot\left|\vx_2^\top \vx_1\right|\cdot\mathbf{1}_{\cA_t}\right] \\
    & +\frac{1}{N}\E_{\vx_1,\vx_2}\left[\left|\E_{\vbeta_*}[\sigma^*(\vbeta_*^\top\vx_1)\sigma^*(\vbeta_*^\top\vx_2)]\right|\cdot\left|\E_{\vw}[\sigma'_\perp(\vx_1^\top \vw)\sigma'_\perp(\vx_2^\top \vw)]\right|\cdot\left|\vx_2^\top \vx_1\right|\cdot\mathbf{1}_{\cA_t^c}\right] \\
    \le& \frac{Ct^3d}{N}+\frac{\lambda_\sigma^2}{N}\E[f^*(\vx_1)^2f^*(\vx_2)^2]^{\frac{1}{2}}\E\left[\left|\vx_2^\top \vx_1\right|^2 \mathbf{1}_{\cA_t^c}\right]^{\frac{1}{2}}\\
    \le& \frac{Ct^3d}{N}+\frac{\lambda_\sigma^2}{N}\E\left[\left|\vx_2^\top \vx_1\right|^4\right]^{\frac{1}{4}}\E\left[\mathbf{1}_{\cA_t^c}\right]^{\frac{1}{4}}\\
    \le & \frac{C t^3d}{N}+\frac{4\lambda_\sigma^2}{N}\E \left[\left\| \vx_1\right\|^4\right]^{\frac{1}{2}}e^{\frac{-dt^2}{4}}=\frac{C\lambda_\sigma^2t^3d}{N}+\frac{12\lambda_\sigma^2d}{N}e^{\frac{-dt^2}{4}},
\end{align*}
where we used $\|\sigma_{\perp} '\|_\infty\le \lambda_\sigma$ and the fact that  $\E[f^*(\vx_1)^2]^{1/2}=\|f^*\|_{L^{2}(\R,\Gamma)}$ is finite.
For any $\eps\in(0,1/2)$, if we choose $t=d^{\eps-1/2}$, then we can conclude $I_1\le C\frac{d}{N}d^{\eps-3/2}$, for all large $d$, sufficiently large constant $C>0$ and sufficient small $\eps$. Next we consider $I_2$ and $I_3$. Notice that
\begin{align*}
    I_2\le & \frac{\lambda_\sigma^2}{nN}\E[f^*(\vx_1)^4]^{\frac{1}{2}}\E[\|\vx_1\|^4]^{\frac{1}{2}}
    \le \frac{3C\lambda_\sigma^2d}{nN},
\end{align*} because $\sigma^*$ is Lipschitz and $f^*\in L^4(\R^d,\Gamma)$. Following the same computation, we also have $I_3\le \frac{\lambda_\sigma^2d}{nN}$. This establishes a bound for $\E[[\|\vB\|_F^2]$ in \eqref{eq:frobenius_exp}.

For the tail control \eqref{eq:frobenius_prob}, recall that $\|\vB_r\|_F^2=J_1+J_2$ where $\E[|J_1|]\le \frac{\alpha^2N_r}{N}I_1$ and $\E[J_2]\le \frac{\alpha^2N_r}{N}(I_2+I_3)$. Hence Markov's inequality and the upper bound for $I_1$ implies that
\begin{equation}
    \Parg{ \frac{N}{ N_r}|J_1|\ge t}\le  \frac{C\alpha^2}{td^{\frac{1}{2}-\eps}N}. 
\end{equation}
By choosing $t=C/Nd^{\frac{1}{4}-\eps}$, we conclude that $\frac{N}{ N_r}|J_1|$ cannot exceed $C/Nd^{\frac{1}{4}-\eps}$ with probability at least $1-\alpha^2/d^{\frac{1}{4}}$, for any $\eps\in (0,1/4)$. As for $J_2$, since $\sigma'_\perp$ is uniformly bounded by $\lambda_\sigma$ and all entries of $\va_r$ are bounded by $\alpha/\sqrt{N}$, we have
\[\frac{N}{N_r}|J_2|\le\frac{\lambda_\sigma^2\alpha^2}{Nn^2}\sum_{i=1}^ny_i^2\|\vx_i\|^2=:J'_2.\]

Similarly for $I_2$ and $I_3$, it is easy to check $|\E[J'_2]|\le \frac{3\alpha^2Cd}{Nn}$. Besides,
\begin{align*}
    \Var(J_2')=\frac{\lambda_\sigma^4\alpha^4}{N^2n^3}\Var(y_1^2\|\vx_1\|^2)\le \frac{\lambda_\sigma^4\alpha^4}{N^2n^3}\E[y_1^4\|\vx_1\|^4]\le \frac{c\alpha^4d^2}{N^2n^3},
\end{align*}
where constant $c>0$ only depends on $\lambda_\sigma$ and $\|f^*\|_{L^8(\R,\Gamma)}$. By Chebyshev's inequality, 
\[\Parg{|J_2'-\E[J_2']|>t}\le \frac{c\alpha^4d^2}{t^2N^2n^3}.\]
Letting $t=\sqrt{c} d/Nn$, we arrive at
\[\Parg{\frac{N}{N_r}J_2\le \frac{\sqrt{c} d}{Nn}+\frac{3C d}{Nn}}\ge \Parg{J_2'\le \frac{\sqrt{c}d}{Nn}+\frac{3Cd}{Nn}}\ge 1- \frac{\alpha^4}{n}.\]
We conclude \eqref{eq:frobenius_prob} by combining the above estimates of $J_1$ and $J_2$.

\end{proof}

\subsection{Constructing the ``Oracle'' Estimator}
\label{app:oracle-estimator}

In this subsection we prove the following lemma related to Lemma~\ref{lemm:optimal-estimator}. 
\begin{lemm}[Reformulation of Lemma~\ref{lemm:optimal-estimator}]
Suppose Assumptions \ref{assump:1} and \ref{assump:2} hold, $\eta=\Theta(\sqrt{N})$ and the activation $\sigma$ is bounded. Then given any $\varepsilon>0$, for $N$ sufficiently large, there exists some constant $C$ and second-layer $\tilde{\va}$ such that the model $\tilde{f}(\vx) = \frac{1}{\sqrt{N}}\tilde{\va}^\top\sigma(\vW_1^\top\vx)$ has prediction risk
\begin{align}
    \cR(\tilde{f}) \le \tau^* + C\left(\sqrt{\tau^*}\cdot\sqrt{\frac{d}{n}} + \frac{d}{n}\right) + \varepsilon + o_{d,\P}(1), 
    \label{eq:f1-risk}
\end{align}
where the scalar $\tau^*$ is defined in \eqref{eq:tau*}. 
\label{lemm:oracle-estimator}
\end{lemm}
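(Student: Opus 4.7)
The plan is to pick an ``oracle'' $\tilde\va$ that collapses to the plain average over a carefully chosen subset $\cA_r$ of neurons, selected so these neurons see an effective scaling close to the optimal $\kappa^*$ realizing $\tau^*$. Concretely, for a given $\varepsilon>0$ pick $\kappa^*$ with $\E_{\xi_1}[(\sigma^*(\xi_1)-\E_{\xi_2}\sigma(\kappa^*\xi_1+\xi_2))^2]\le \tau^*+\varepsilon/2$, fix a small window $r>0$, and set
$$\cA_r:=\{i\in[N]: |\kappa_i-\kappa^*|\le r\}, \qquad \kappa_i:=\bar\eta\,\mu_1\mu_1^*\sqrt{N}\,a_i,$$
so that $\sqrt{N}a_i\sim\cN(0,1)$ gives $|\cA_r|/N\to p_r:=\P(|\kappa_1-\kappa^*|\le r)>0$ almost surely. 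I will take $\tilde\va_i=\sqrt{N}/|\cA_r|$ on $\cA_r$ and $0$ elsewhere, so the candidate model is the empirical average $\tilde f(\vx)=|\cA_r|^{-1}\sum_{i\in\cA_r}\sigma(\langle\vx,\vw_i^1\rangle)$.

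Under the scaling $\eta=\bar\eta\sqrt{N}$, the decomposition $\vG_0=\vA_1+\vA_2+\vB+\vC$ from \eqref{eq:A_1+A_2}, together with the concentration $\frac{1}{n}\vX^\top\vX\vbeta_*=\vbeta_*+\vr$ with $\|\vr\|=\bigOdp{\sqrt{d/n}}$, implies the column-wise expansion
$$\vw_i^1 = \vw_i^0 + \kappa_i\vbeta_* + \vDelta_i, \qquad \vDelta_i:=\bar\eta N\bigl(\vA_1-\mu_1\mu_1^*\vbeta_*\va^\top\bigr)_{\cdot i}+\bar\eta N\,(\vA_2+\vB+\vC)_{\cdot i}.$$
For $i\in\cA_r$, $\kappa_i$ is forced to lie in $[\kappa^*-r,\kappa^*+r]$, which suggests the deterministic surrogate
$$\bar f(\vx):=\E_{\vw\sim\cN(0,\vI/d)}\bigl[\sigma\bigl(\langle\vx,\vw\rangle+\kappa^*\langle\vx,\vbeta_*\rangle\bigr)\bigr].$$
A direct two-dimensional Gaussian computation, using Lipschitzness of $\sigma$ to absorb the deviation of $\|\vw\|^2$ from $1$, yields $\|\bar f-f^*\|_{L^2}^2 \le \tau^*+\varepsilon$ for $d$ large and $r$ small. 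It then suffices to prove $\|\tilde f-\bar f\|_{L^2}^2\le C\,d/n+o_{d,\P}(1)$, since the triangle inequality $\cR(\tilde f)\le(\|\bar f-f^*\|_{L^2}+\|\tilde f-\bar f\|_{L^2})^2$ delivers \eqref{eq:f1-risk}.

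To bound $\|\tilde f-\bar f\|_{L^2}^2$ I split it into an \emph{empirical-average error} $|\cA_r|^{-1}\sum_{i\in\cA_r}\sigma(\langle\vx,\vw_i^0\rangle+\kappa^*\langle\vx,\vbeta_*\rangle)-\bar f(\vx)$, and a \emph{perturbation error} obtained by replacing $\langle\vx,\vw_i^0\rangle+\kappa^*\langle\vx,\vbeta_*\rangle$ by $\langle\vx,\vw_i^1\rangle$. Since the $\vw_i^0$ are i.i.d.\ and $\sigma$ is bounded, the empirical-average error is $o_{d,\P}(1)$ pointwise in $\vx$ by Hoeffding with $|\cA_r|=\Theta(N)$, and integrates to $o_{d,\P}(1)$ in $L^2$ by dominated convergence. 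By Lipschitzness of $\sigma$, the perturbation error is controlled in $L^2(\cN(0,\vI))$ by $r^2+(N/|\cA_r|^2)\|\vDelta_{\cA_r}\|_F^2$, where $\vDelta_{\cA_r}$ is the column restriction of the residual matrix. The main obstacle is to control $\|\vDelta_{\cA_r}\|_F^2$ at the sharp rate $d/(Nn)$ once the $\bar\eta N$ prefactor is accounted for: the crude operator-norm bound on $\vB$ from Lemma~\ref{lemm:gradient-norm} is too weak here. This is exactly what Lemma~\ref{lemm:frobenius} is designed to overcome: applied with $\alpha=\sqrt{N}\,\|\tilde\va\|_\infty=N/|\cA_r|=\Theta(1)$, it gives $\|\vB_{\cA_r}\|_F^2=\bigOdp{d/(Nn)}$ in probability. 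Analogous $\ell_2$ estimates for the restrictions of $\vA_1-\mu_1\mu_1^*\vbeta_*\va^\top$, $\vA_2$, and $\vC$ follow from Lemmas~\ref{lemm:gradient-norm} and~\ref{lemm:gradient-norm2} combined with $\|\tfrac{1}{n}\vX^\top\vX-\vI\|=\bigOdp{\sqrt{d/n}}$. Multiplying by $(\bar\eta N)^2\cdot N/|\cA_r|^2=\Theta(1)$ gives $\|\tilde f-\bar f\|_{L^2}^2=\bigOdp{d/n}+o_{d,\P}(1)$, and sending $r\to 0$ after $n,d,N\to\infty$ absorbs the remaining $r$-dependence into $\varepsilon$ and closes the proof.
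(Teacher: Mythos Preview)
Your approach is essentially the paper's: select neurons whose initialized second-layer coefficient puts the effective scaling near $\kappa^*$, average over them to form $\tilde f$, compare to the deterministic surrogate $\bar f$, and control the residual columns via the sharp Frobenius bound of Lemma~\ref{lemm:frobenius}. The one substantive variation is the window: you take a \emph{fixed} width $r$ (so $|\cA_r|=\Theta(N)$) and absorb the $O(r^2)$ slack into $\varepsilon$ at the end, whereas the paper takes a \emph{shrinking} window $|\sqrt{N}a_i-\alpha|\le N^{-r}$ (so $|\cA_r^\alpha|=\Theta(N^{1-r})$), making the $\kappa_i\to\kappa^*$ error vanish automatically. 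Both choices work here; the paper's is tailored to the downstream ridge bound where $\|\tilde\va\|^2=N^r$ is needed, though your choice would give $\|\tilde\va\|^2=\Theta(1)$, which is only better.

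A few bookkeeping slips to correct. First, the population version of $\vA_1$ is $\frac{\mu_1\mu_1^*}{\sqrt{N}}\vbeta_*\va^\top$, not $\mu_1\mu_1^*\vbeta_*\va^\top$; without the $1/\sqrt{N}$ the orders do not match. Second, the $\alpha$ in Lemma~\ref{lemm:frobenius} bounds the \emph{original} coefficients $\sqrt{N}|a_i|$ on the selected columns (which here is $|\kappa^*|/(\bar\eta|\mu_1\mu_1^*|)+O(r)$), not $\sqrt{N}\|\tilde\va\|_\infty$; both happen to be $\Theta(1)$, so the conclusion is unaffected. Third, the prefactor $(\bar\eta N)^2\cdot N/|\cA_r|^2$ equals $\Theta(N)$, not $\Theta(1)$; the $d/n$ rate still follows because Lemma~\ref{lemm:frobenius} bounds $\tfrac{N}{N_r}\|\vB_r\|_F^2$ at order $d/(Nn)$, and the analogous rank-one structure of $\vA_2$ gives the same scaling for $\tfrac{N}{N_r}\|\vA_{2,r}\|_F^2$, so the extra factor of $N$ cancels.
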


We first introduce a constant $\alpha$ (independent to $N$). 
Recall that $[\va]_i=a_i\iid\cN\left(0,N^{-1}\right)$ for $i\in[N]$. For any $\alpha\in\R$, define the subset of initialized weights: 
\begin{align}
    \cA_r^\alpha = \left\{i\in [N] \,:\, \abs{\sqrt{N}\cdot a_i - \alpha} \le N^{-r}\right\}, \text{~~for any given~}  r>0.
\label{eq:Ar}
\end{align}
The size of the subset is given by $\abs{\cA_r^\alpha} = \sum_{i=1}^N \mathbf{1}_{ \abs{\sqrt{N}a_i-\alpha}\le N^{-r}}$, and hence its expectation is $\E|\cA_r^\alpha| = N\cdot\E_{z\sim\cN(0,1)}\left[\mathbf{1}_{ \abs{z-\alpha}\le N^{-r}}\right] = C(\alpha) N^{1-r}$ for some constant $C(\alpha)\propto\Exp{-\alpha^2}$. 
By Hoeffding's inequality, 
\begin{equation}\label{eq:A_r_alpha_prob}
    \Parg{\abs{\abs{\cA_r^\alpha} - \E\abs{\cA_r^\alpha}} \ge t} \le 2\Exp{-\frac{2t^2}{N}}.
\end{equation}
Hence we may conclude that for any $r\in (0, 1/2)$ and large enough $N$, $\abs{\cA_{r}^\alpha} = \Theta_{d,\P}(N^{1-r})$ with probability at least $1-2\Exp{-c\log^2 N}$. 
This is to say, for any constant $\alpha$, we know that with high probability, there exist a large number of initialized second-layer coefficients $a_i$'s that are close to $\alpha$.  
We specify our choice of $\alpha\in\R$ via \eqref{eq:tau*} in the subsequent analysis. 

\paragraph{Rank-1 Approximation of the Gradient.}  
Denote $N_r:=\abs{\cA_r^\alpha}$ for some constant $\alpha$, and $i_r\in [N]$ as the index such that $i_r\in\cA_r^\alpha$. We define $f_r$ as an average over neurons with indices $i_r\in\cA_r^\alpha$, and $f_{\vA}$ as an approximation of $f_r$ in which the first-step gradient matrix $\vG_0$ in \eqref{eq:decomposition_gradient} is replaced by the rank-1 matrix $\vA_1$ defined in \eqref{eq:A_1+A_2}: \begin{align}
    f_r(\vx) := \frac{1}{N_r}\sum_{i_r\in\cA_r^\alpha} \sigma\left(\langle\vx,\vw^1_{i_r}\rangle\right); \quad
f_{\vA}(\vx) := \frac{1}{N_r}\sum_{i_r\in\cA_r^\alpha} \sigma\left(\langle\vx,\vw^{\vA}_{i_r}\rangle\right),
\label{eq:f_r} 
\end{align}
where $\vw^1_{i_r}$ is the $i_r$-th neuron in $\vW_1$, $\vw^{\vA}_i = \vw^0_i + \eta\sqrt{N}[\vA_1]_i$, $[\vA_1]_i$ is the $i$-th column of $\vA_1$ and $\vw^0_i\in\R^d$ is the corresponding initial neuron in $\vW_0$. 
Applying the Lipschitz property of the activation function, one can control $\E_{\vx}[(f_{\vA}(\vx) - f_r(\vx))^2]$ as follows 
\begin{align}
    \left|f_{\vA}(\vx) - f_r(\vx)\right|
\lesssim
    \frac{1}{N_r} \sum_{i_r\in\cA_r^\alpha} \left|\langle\vw^1_{i_r} - \vw^{\vA}_{i_r}, \vx\rangle\right|
=
    \frac{\eta\sqrt{N}}{N_r} \sum_{i_r\in\cA_r^\alpha} \left|\langle \vdelta_{i_r}, \vx\rangle\right|.  
\end{align}
where the ``residual'' is entry-wisely defined as $[\vdelta_i]_j: = [\vA_2 + \vB + \vC]_{ji}$ for $i\in [N]$ and $j\in [d]$. Recall that $\vA_2,\vB$ and $\vC$ have been analyzed in Lemmas \ref{lemm:gradient-norm} and \ref{lemm:gradient-norm2}. Let us further denote $\vA_r\in\R^{d\times N_r}$ as a submatrix of $\vA_2$ by selecting all $i_r\in \cA_r^\alpha$ columns of $\vA_2$. Similarly, we choose $\vB_r,\vC_r\in\R^{d\times N_r}$ as submatrices of $\vB,\vC$ related to $\cA_r^\alpha$, respectively. Using Lemma \ref{lemm:gradient-norm2} applied to the submatrix, we have
\begin{equation}\label{eq:A_r}
    \Parg{\frac{N}{N_r}\|\vA_r\|_F^2\ge \frac{Cd}{nN}}\le C'\Big(ne^{-c\sqrt{n}}+\frac{1}{d}\Big).
\end{equation}
Moreover, by definition of $\cA_r^\alpha$, all $a_{i_r}$'s are close to $\frac{\alpha}{\sqrt{N}}$ for $i_r\in\cA_r^\alpha$; thus Lemma \ref{lemm:frobenius} (in particular \eqref{eq:frobenius_prob}) can be directly applied to $\vB_r$. As for $\vC_r$, since $\|\vC_r\|_F\le \|\vC\|_F$, we use part $(iii)$ in Lemma \ref{lemm:gradient-norm} to obtain 
\begin{equation}\label{eq:C_r}
    \Parg{\|\vC_r\|_F\ge \frac{C\log n\log N}{N}}\le C'\Big(ne^{-c\log^2 n}+Ne^{-c\log^2 N}\Big).
\end{equation}
With these concentration estimates, we know that when $n>d$,
\begin{align}
    &\E_{\vx}[(f_{\vA}(\vx) - f_r(\vx))^2]
    \lesssim  \E_{\vx} \left[\left(\frac{\eta\sqrt{N}}{N_r}\sum_{i_r\in\cA_r^\alpha} \left|\langle \vdelta_{i_r}, \vx\rangle\right|\right)^2\right]=  \frac{\eta^2 N}{N_r^2} \E_{\vx}\left[\sum_{i_r,j_r\in\cA_r^\alpha}\left|\vdelta_{i_r}^\top \vx \right|\left| \vdelta_{j_r}^\top \vx \right|\right]\\
    \le & \frac{\eta^2 N}{N_r^2} \sum_{i_r,j_r\in\cA_r^\alpha}\E_{\vx}\left[\left(\vdelta_{i_r}^\top \vx \right)^2\right]^{\frac{1}{2}}\E_{\vx}\left[\left( \vdelta_{j_r}^\top \vx \right)^2\right]^{\frac{1}{2}}
    =\frac{\eta^2 N}{N_r^2} \sum_{i_r,j_r\in\cA_r^\alpha} \|\vdelta_{i_r}\| \|\vdelta_{j_r}\| \\
    =& \frac{\eta^2 N}{N_r^2} \left(\sum_{i_r\in\cA_r^\alpha} \|\vdelta_{i_r}\|\right)^2  
    \le \frac{\eta^2 N}{N_r} \left(\|\vA_{r}\|_{F}^2+\|\vB_r\|_{F}^2+\|\vC_r\|_{F}^2\right)
    \lesssim \frac{d}{n}+\frac{1}{d^{\frac{1}{4}-\eps}}+\frac{\log^2 n \log^2 N}{N_r},  \label{eq:fA-f1}
\end{align}  
with probability at least $1-c\left(\frac{\alpha^2}{d^{\frac{1}{4}}}+\frac{\alpha^4}{n} + \frac{1}{\sqrt{N}} + ne^{-c\log^2 n} + Ne^{-\log^2 N}\right)$ for some constant $c>0$; this is due to the defined step size $\eta=\Theta(\sqrt{N})$, \eqref{eq:A_r}, \eqref{eq:C_r} in \eqref{eq:frobenius_prob} of Lemma~\ref{lemm:frobenius} outlined above. In \eqref{eq:fA-f1}, we ignore the constants in the upper bound since we are only interested in the rate with respect to $n,d,N$.

\paragraph{Simplification under ``Population'' Gradient.} 
 
Recall the definition of the single-index teacher: $f^*(\vx) = \sigma^*(\langle\vx,\vbeta_*\rangle)$, and the definition of rank-1 matrix $\vA_1 = \frac{1}{n}\cdot\frac{\mu_1\mu_1^*}{\sqrt{N}}\vX^\top\vX\vbeta_*\va^\top$. Define $\vv = \frac{\eta\mu_1\mu_1^*}{n\sqrt{N}}\vX^\top\vX\vbeta_*\in\R^d$, we can write
$$
f_{\vA}(\vx) = \frac{1}{N_r}\sum_{i_r\in\cA_r^\alpha}  \sigma\left(\langle\vw_{i_r} + \sqrt{N}a_{i_r}\vv,\vx\rangle\right), \quad 
\tilde{f}_{\vA}(\vx) := \frac{1}{N_r}\sum_{i_r\in\cA_r^\alpha} \sigma\left(\langle\vw_{i_r} + \alpha\vv,\vx\rangle\right),
$$
where we dropped the superscript in the initialized weights $\vw^0_{i_r}$ to simplify the notation.  Note that the difference between $f_{\vA}$ and $\tilde{f}_{\vA}$ is that the each second-layer coefficient $a_i$ is replaced by the same scalar $\alpha$. 

By the definition of $\cA_r^\alpha$ and the Lipschitz property of $\sigma$, one can obtain
\begin{align}
    \abs{f_{\vA}(\vx) - \tilde{f}_{\vA}(\vx)} 
\lesssim
    \frac{1}{N_r} \sum_{i_r\in\cA_r^\alpha} \frac{\eta N^{-r}}{\sqrt{N}}\cdot\abs{\left\langle\frac{1}{n}\vX^\top\vX\vbeta_*,\vx\right\rangle}
\lesssim
    N^{-r}\cdot\abs{\left\langle\frac{1}{n}\vX^\top\vX\vbeta_*,\vx\right\rangle}. \label{eq:f-tildef}
\end{align}  
Now define $\bar{\vv} := \frac{\eta\mu_1\mu_1^*}{\sqrt{N}}\vbeta_*=\bar{\eta}\mu_1\mu_1^*\vbeta_*$, which corresponds to the ``population'' version of $\vv$, and denote
\begin{equation}\label{eq:barf_A}
     \bar{f}_{\vA}(\vx) := \frac{1}{N_r}\sum_{i_r\in\cA_r^\alpha} \sigma(\langle\vw_{i_r}+\alpha\bar{\vv},\vx\rangle).
\end{equation}
Similar to \eqref{eq:f-tildef}, we have
\begin{align}
    \abs{\bar{f}_{\vA}(\vx) - \tilde{f}_{\vA}(\vx)} 
\lesssim
    \frac{1}{N_r}\sum_{i_r\in\cA_r^\alpha}\frac{\eta}{\sqrt{N}}\abs{\left\langle\left(\frac{1}{n}\vX^\top\vX-\vI\right)\vbeta_*,\vx\right\rangle}
\lesssim
    \abs{\left\langle\left(\frac{1}{n}\vX^\top\vX-\vI\right)\vbeta_*,\vx\right\rangle}. \label{eq:barf-tildef}
\end{align}
Combining the inequalities \eqref{eq:f-tildef} and \eqref{eq:barf-tildef}, we know that for some constant $C$, 
\begin{align}
    \E_{\vx}[(f_{\vA}(\vx) - \bar{f}_{\vA}(\vx))^2]
\lesssim&~ 
    N^{-2r}\cdot\E_{\vx}\left(\left\langle\frac{1}{n}\vX^\top\vX\vbeta_*,\vx\right\rangle\right)^2 + \E_{\vx}\left(\left\langle\left(\frac{1}{n}\vX^\top\vX-\vI\right)\vbeta_*,\vx\right\rangle\right)^2 \\
\le&~
    \left(N^{-2r}\norm{\frac{1}{n}\vX^\top\vX}^2 + \norm{\frac{1}{n}\vX^\top\vX-\vI}^2\right)\cdot\norm{\vbeta_*}^2 \\
\lesssim &~
    \left(\Big(1+\frac{d}{n}\Big)N^{-2r} + \frac{d}{n}\right),
\end{align}
where the last inequality holds with probability at least $1-\Exp{-cd}$ for some universal constant $c>0$, due to the operator norm bound and concentration of the sample covariance matrix $\frac{1}{n}\vX^\top\vX$ (for instance see \cite[Theorem 4.6.1]{vershynin2018high}). 

Now we take the expectation of $\bar{f}_{\vA}$ over initial weight $\vw_{i_r}$ in \eqref{eq:barf_A} to define
\begin{align}
    \bar{f}(\vx) := \E_{\vw\sim\cN(0,\,d^{-1}\vI)} \left[\sigma(\langle\vw+\alpha\bar{\vv},\vx\rangle)\right]. 
\end{align}
Note that for fixed $\vx$, $\langle\vw,\vx\rangle\sim\cN(0,\norm{\vx}^2/d)$. Since $\sigma$ is $\lambda_\sigma$-Lipschitz, by the Hoeffding bound on sub-Gaussian random variables, conditioned on $\vx$, we have
\begin{align}
    \Parg{\abs{\bar{f}_{\vA}(\vx) - \bar{f}(\vx)}>t\,\big|\,\vx}
\le
    2\Exp{-\frac{t^2 N_r}{2\lambda_\sigma^2 \cdot \norm{\vx}_2^2/d}},
     \label{eq:fA-fA} 
\end{align}
Also notice that  
\begin{align*}
    \E_{\vw}(\bar{f}(\vx) - \bar{f}_{\vA}(\vx))^2=& \int_0^\infty  \Parg{\abs{\bar{f}_{\vA}(\vx) - \bar{f}(\vx)}^2>t\,\big|\,\vx}\dt \\
    \le& \int_0^\infty 2\Exp{-\frac{t N_r}{2\lambda_\sigma^2 \cdot \norm{\vx}_2^2/d}} \dt=\frac{4\lambda_\sigma^2\|\vx\|^2}{N_r d}.
\end{align*}
Thus, by taking expectation over $\vx$ in the above bound, we know that $\E(\bar{f}(\vx) - \bar{f}_{\vA}(\vx))^2\le \frac{4\lambda_\sigma^2\E[\|\vx\|^2]}{N_r d}=\frac{4\lambda_\sigma^2}{N_r}$. By Markov's inequality, we have
\begin{equation}
    \P\left(\E_{\vx} (\bar{f}(\vx) - \bar{f}_{\vA}(\vx))^2\ge t\right)\le \frac{\E (\bar{f}(\vx) - \bar{f}_{\vA}(\vx))^2}{t}\le \frac{4\lambda_\sigma^2}{N_r t}. 
    \label{eq:f-fA}
\end{equation}
Hence we deduce that $\E_{\vx} (\bar{f}(\vx) - \bar{f}_{\vA}(\vx))^2\le \frac{4\lambda_\sigma^2}{\sqrt{N_r}}$ with probability $1-\frac{1}{\sqrt{N_r}}$.

Observe that $\bar{f}$ is given by an expectation over $\vw$ in a single-index model. To calculate its difference from the true model: $\E_{\vx} (\bar{f}(\vx) - f^*(\vx))^2$, first recall the assumption that $\norm{\vbeta_*}=1$, and $\vw\sim \cN(0,\vI/d)$, $\vx\sim \cN(0,\vI)$. 
Denote $\xi_1:=\langle\vx,\vbeta_*\rangle\sim\cN (0,1)$ and, condition on $\vx$, $\langle\vx,\vw\rangle\overset{d}{=}\xi_2\|\vx\|/\sqrt{d}$, where $\xi_2\sim \cN (0,1)$ independent of $\xi_1$. Since $\eta/\sqrt{N}=\bar\eta$, we can write $\kappa := \frac{\alpha\eta\mu_1\mu_1^*}{\sqrt{N}}=\alpha\bar\eta\mu_1\mu_1^*\in\R$. 
Following these definitions, we have  $\bar{f}(\vx)=\E_{\xi_2}[\sigma(\xi_2\|\vx\|/\sqrt{d}+\kappa\xi_1)],$ and 
\begin{align}
    \E_{\vx} (\bar{f}(\vx) - f^*(\vx))^2=\E_{\xi_1}\Big(\sigma^*(\xi_1) - \E_{\xi_2}[\sigma(\kappa\xi_1 + \xi_2\|\vx\|/\sqrt{d})] \Big)^2.\label{eq:barf-f*}
\end{align}
In addition, given $\kappa\in\R$, we introduce a scalar quantity
\begin{align}
    \tau:= 
    \E_{\xi_1}\Big(\sigma^*(\xi_1) - \E_{\xi_2}[\sigma(\kappa\xi_1 + \xi_2)] \Big)^2. \label{def:tau}
\end{align}
Note that $\sigma^*\in L^2(\R,\Gamma)$ and $\sigma$ is uniformly bounded by assumption; one can easily check that $\tau$ is uniformly bounded for all $\kappa\in\R$. Hence $\tau$ defined above is always finite. We now show that the difference between $\tau$ and $\E_{\vx} (\bar{f}(\vx) - f^*(\vx))^2$ is asymptotically negligible, again using the Lipschitz property of $\sigma$,  
\begin{align}
    \abs{\sigma(\kappa \xi_1 + \xi_2) - \sigma(\kappa \xi_1 + \xi_2\|\vx\|/\sqrt{d})}
\lesssim
    \abs{1-\frac{\|\vx\|}{\sqrt{d}}}\cdot\abs{\xi_2}.  
\end{align}
Since $\sigma^*\in L^2(\R,\Gamma)$, and $\sigma$ is uniformly bounded and Lipschitz, based on \eqref{eq:barf-f*} and \eqref{def:tau}, we can apply the Cauchy-Schwarz inequality to get
\begin{align}
    &\abs{\tau- \E_{\vx} (\bar{f}(\vx) - f^*(\vx))^2}\\
    \lesssim ~&\E\left[\Big(\sigma(\kappa \xi_1 + \xi_2) - \sigma(\kappa \xi_1 + \xi_2\|\vx\|/\sqrt{d})\Big)^2\right]^{\frac{1}{2}}
    \\
    \lesssim~ & \E[\xi_2^2]^{\frac{1}{2}}\E\left[\abs{1-\frac{\|\vx\|}{\sqrt{d}}}^2\right]^{\frac{1}{2}}\le \frac{C}{\sqrt{2d}}, \label{eq:tau-substitution}
\end{align}
where the last inequality is due to property of the sub-Gaussian norm $\|\|\vx\|/\sqrt{d}-1\|_{\psi_2}\le C/\sqrt{d}$ (see e.g.~\cite[Theorem 3.1.1]{vershynin2018high}) for some universal constant $C>0$.

\begin{proofof}[Lemma~\ref{lemm:oracle-estimator}]
Based on above calculations, we now control the prediction risk of $\tilde{f}$ by combining the substitution errors, where $\tilde{f}=f_r$ is constructed as the average over subset $\cA_r^\alpha$ defined in \eqref{eq:f_r}. 

Given any $\alpha\in\R$ and $r\in(0,1/2)$, we define the subset $\cA_r^\alpha$ and the corresponding $\tilde{f}(\vx)=f_r(\vx) = \frac{1}{\sqrt{N}}\tilde{\va}^\top\sigma(\vW_1^\top\vx)$, where the second-layer $\tilde{\va}$ is given as $[\tilde{\va}]_i=\sqrt{N}/N_r$ if $i\in\cA_r^\alpha$, otherwise $[\tilde{\va}]_i=0$. 
Moreover, \eqref{eq:A_r_alpha_prob} implies that $N_r = \Theta_{d,\P}(N^{1-r})$ with probability at least $1-\Exp{-\log^2 N}$. Therefore, together with \eqref{eq:fA-f1}, \eqref{eq:fA-fA}, \eqref{eq:f-fA}, and \eqref{eq:tau-substitution}, we know that 
\begin{align} 
    \E_{\vx}(f_r(\vx) - \bar{f}(\vx))^2 \le \frac{Cd}{n} + o_{d,\P}(1); \quad
    \E_{\vx}(f^*(\vx) - \bar{f}(\vx))^2 = \tau + o_{d}(1),
\end{align} 
as $n,d,N\to\infty$, for some constant $C>0$. By the Cauchy-Schwarz inequality,  
\begin{align}
    \E_{\vx}(f^*(\vx) - f_r(\vx))^2 \le \tau + C\left(\sqrt{\tau}\cdot\sqrt{\frac{d}{n}} + \frac{d}{n}\right) + o_{d,\P}(1),
\end{align} 
where the failure probability only relates to $r,\alpha,N,d,n$ and is vanishing as $N,d,n\to\infty$. For simplicity, we only keep the leading orders and ignore the subordinate terms in the exact probability bounds.

Note that the above characterization holds for any finite $\alpha$; since our goal is to construct an estimator $f_r$ that achieves as small prediction risk as possible, we optimize over $\alpha\in\R$ by defining
\begin{align}
    \tau^* 
:=&~
    \inf_{\alpha\in\R}\E_{\xi_1}\Big(\sigma^*(\xi_1) - \E_{\xi_2}\left(\sigma\big( \alpha\bar\eta\mu_1\mu_1^*\cdot\xi_1 + \xi_2\big)\right) \Big)^2,  
    \quad
    \tau_{\varepsilon}^* := \tau^* + \varepsilon, 
\end{align}
where $\varepsilon\ge0$ is a small constant. This definition of $\tau^*$ is identical to \eqref{eq:tau*} and is always finite because $\tau$ defined in \eqref{def:tau} is uniformly bounded and non-negative (observe that optimizing over $\kappa$ or $\alpha\in\R$ are equivalent, since we can reparameterize $\kappa=\alpha\bar\eta\mu_1\mu_1^*$ where $\mu_1,\mu_1^*\neq 0$).
When $\tau^*$ is attained at some finite $\alpha$, then we may simply set $\varepsilon=0$ and define
\begin{align}
    \alpha^* 
:=&~
    \argmin_{\alpha\in\R}\E_{\xi_1}\Big(\sigma^*(\xi_1) - \E_{\xi_2}\left(\sigma\big( \alpha\bar\eta\mu_1\mu_1^*\cdot\xi_1 + \xi_2\big)\right) \Big)^2. 
\end{align}
Otherwise, observe that as a bounded and continuous function of $\alpha$ on the real line, $\tau(\alpha) := \E_{\xi_1}\Big(\sigma^*(\xi_1) - \E_{\xi_2}\left(\sigma\big(\alpha\bar\eta\mu_1\mu_1^*\cdot\xi_1 + \xi_2\big)\right) \Big)^2$ will approach its minimum at infinity. Therefore, in this case, for any $\varepsilon>0$, we can find some finite $\alpha_{\eps}^*$ such that $\tau(\alpha_{\eps}^*)\le\tau_{\eps}^*=\tau^*+\eps$; hence, we may set $\alpha=\alpha_{\eps}^*$ and conclude the proof. 
Finally, note that given nonlinearities $\sigma$ and $\sigma^*$ (which determine the relation between $\varepsilon$ and $\alpha_\varepsilon$), we can take $\varepsilon\to 0$ at a slow enough rate as $n,d,N\to\infty$, as long as $C(\alpha)\cdot N^{1-r}\to\infty $. 
Thus we also obtain an asymptotic version of Lemma \ref{lemm:oracle-estimator}: with probability one, there exists some second-layer $\tilde{\va}$ such that the prediction risk of the corresponding student model $\tilde{f}(\vx) = \frac{1}{\sqrt{N}}\tilde{\va}^\top\sigma(\vW_1^\top\vx)$ satisfies
\begin{align}\label{eq:f1-risk-asym}
    \cR(\tilde{f}) \le \tau^* + C\left(\sqrt{\tau^*}\cdot\sqrt{\frac{d}{n}} + \frac{d}{n}\right),
\end{align}
for some constant $C>0$, as $n,d,N\to\infty$ proportionally.

\end{proofof}

The above analysis illustrates that because of the Gaussian initialization of $a_i$, for any $\eta=\Theta(\sqrt{N})$, we can find a subset of neurons $\cA_r^\alpha$ that receive a ``good'' learning rate, in the sense that the corresponding (sub-) network defined by $f_r$ can achieve the prediction risk close to $\tau_\eps^*$ when $n\gg d$.

\paragraph{Some Examples.}
Equation~\eqref{eq:f1-risk} reduces the prediction risk of our constructed $f_r$ to a one-dimensional Gaussian integral, which can be numerically evaluated for pairs of $(\sigma,\sigma^*)$. Denote $\kappa^* = \alpha^*\bar\eta\mu_1\mu_1^*$, we give a few examples in which we set $\eps=0$ and the corresponding $\tau^*$ is small. Note that due to Assumptions \ref{assump:1} and \ref{assump:2}, choices of $\sigma$ and $\sigma^*$ considered below are centered with respect to standard Gaussian measure $\Gamma$. 
\begin{itemize}[leftmargin=*,topsep=1.2mm, itemsep=0.5mm]
    \item \underline{$\sigma=\sigma^*=\text{erf}$}. Note that for $c_1,c_2\in\R$, $\E_{z\sim\cN(0,1)} [\text{erf}(c_1 z+c_2)] = \text{erf}\Big(\frac{c_2}{\sqrt{1+2c_1^2}}\Big)$.  
    Hence we can choose $\kappa^* = \sqrt{3}$, and the corresponding minimum value $\tau^*=0$. 
    \item \underline{$\sigma=\sigma^*=\text{tanh}$}. Numerical integration yields $\tau^*\approx 3\times 10^{-4}$, $\kappa^*\approx 1.6$.   
    \item \underline{$\sigma=\sigma^*=\text{SoftPlus}$}.  Numerical integration yields $\tau^*\approx 0.03$, $\kappa^* \approx 0.96$.  
    \item \underline{$\sigma=\text{ReLU},\sigma^*=\text{SoftPlus}$}. Numerical integration yields $\tau^*\approx 0.09$, $\kappa^* \approx 0.94$.    
\end{itemize}
Observe that in all the above examples, $\tau^*$ can be obtained by some finite $\alpha^*$ (or equivalently $\kappa^*$). In the following analysis of kernel ridge regression, we drop the small constant $\eps$ in Lemma~\ref{lemm:oracle-estimator} and directly apply the asymptotic statement given in \eqref{eq:f1-risk-asym}. 
\begin{remark}
We make the following remarks on the calculation of $\tau^*$ in \eqref{eq:tau*}. 
\begin{itemize}[leftmargin=*,topsep=1mm, itemsep=0.4mm]
    \item When $\sigma=\sigma^*$, we intuitively expect $\tau^*$ to be small when the nonlinearity is smooth such that it is to some extent unchanged under Gaussian convolution (when $\kappa$ is chosen appropriately). 
    \item Adding weight decay with strength $\lambda<1$ to the first-layer parameters $\vW_0$ simply corresponds to multiplying $\xi_2$ in the definition of $\tau$ \eqref{def:tau} by a factor of $(1-\lambda)$.  
\end{itemize}
\end{remark}

\subsection{Prediction Risk of Ridge Regression}
\label{app:KRR}

In this section we prove Theorem~\ref{thm:risk-large-lr}. Recall that we aim to upper-bound the prediction risk of the CK ridge regression estimator defined as
\begin{equation}\label{eq:hat_f_large}
    \hat{f}(\vx) = \Big\langle\frac{1}{\sqrt{N}}\sigma(\vW_1^\top\vx), \hat{\va}\Big\rangle, \quad \text{where } \hat{\va} := \left(\vPhi^\top\vPhi + \lambda n\vI\right)^{-1} \vPhi^\top\tilde{\vy}, ~~
\vPhi := \frac{1}{\sqrt{N}}\sigma(\tilde{\vX}\vW_1),
\end{equation}
where $\{\tilde{\vX},\tilde{\vy}\}$ is a new set of training data independent of $\vW$. For concise notation, in this section we rescale the ridge parameter in \eqref{eq:ridge-risk} by replacing $\frac{\lambda}{N}$ with $\lambda$.  

Given feature map $\vx\to\frac{1}{\sqrt{N}}\sigma(\vW_1^\top\vx)$ conditioned on first layer weights $\vW_1$, we denote the associated Hilbert space as $\cH$. Note that $\cH$ is a finite-dimensional reproducing kernel Hilbert space and is hence closed; we define the optimal predictor in the RKHS as $\check{f} := \argmin_{f\in\cH} \E_{\vx} (f(\vx) - f^*(\vx))^2$, which takes the form of $\check{f}(\vx) = \langle\frac{1}{\sqrt{N}}\sigma(\vW_1^\top\vx), \check{\va}\rangle$ for some $\check{\va}\in\R^N$. 
In addition, we may write the orthogonal decomposition in $L^2(\R^d,\Gamma)$: $f^*(\vx) = \check{f}(\vx) + f_\perp(\vx)$.  
By definition of $f_\perp$, we have $\norm{f_\perp}^2_{L^2 }=\E_{\vx} [f_\perp(\vx)^2] \le \cR(h)= \norm{f^* - h}_{L^2}^2,$ for any $h\in\cH$ and $\vx\sim\cN (0,1)$. 
Finally, from Assumption \ref{assump:2} we know that $\|f^*\|_{L^2}$ is bounded by some constant, and thus $\|f_\perp\|_{L^2}$ is also bounded.
  
We are interested in the prediction risk of the CK ridge regression estimator denoted as $\cR_1(\lambda)$. 
We first define the following quantities which $\cR_1(\lambda)$ can be decomposed into (see Lemma~\ref{lemm:R1-decomposition}): 
\begin{equation}\label{eq:R1-decomposition}
\begin{aligned}
    B_1 :=&~ \E_{\vx}\left(f^*(\vx) - \check{f}(\vx)\right)^2, \\
    B_2 :=&~ \E_{\vx}\left(\check{f}(\vx) - \frac{1}{n}\vphi_x\left(\widehat{\vSigma}_\Phi + \lambda\vI\right)^{-1} \vPhi^\top \check{\vf}\right)^2, \\
    V_1 :=&~ \frac{1}{n^2}\tilde\vvarepsilon^\top\vPhi\left(\widehat{\vSigma}_\Phi + \lambda \vI\right)^{-1} \vSigma_\Phi\left(\widehat{\vSigma}_\Phi + \lambda \vI\right)^{-1} \vPhi^\top\tilde\vvarepsilon, \\
    V_2 :=&~ \frac{1}{n^2} \vf_\perp^\top\vPhi\left(\widehat{\vSigma}_\Phi + \lambda \vI\right)^{-1} \vSigma_{\Phi}\left(\widehat{\vSigma}_\Phi + \lambda \vI\right)^{-1}\vPhi^\top \vf_\perp, 
\end{aligned} 
\end{equation}
where the $i$-th entry of vector $\check{\vf}$ and $\vf_{\!\perp}$ are given by $[\check{\vf}]_i = \check{f}(\tilde{\vx}_i),$  $[\vf_{\!\perp }]_i = f_{\!\perp}(\tilde{\vx}_i)$, respectively, and $\widehat{\vSigma}_\Phi : = \frac{1}{n}\vPhi^\top\vPhi$, $\vSigma_\Phi := \frac{1}{N}\E_{\vx} \left[\sigma(\vW_1^\top\vx)\sigma(\vW_1^\top\vx)^\top\right]$. Also, $\vphi_{\vx} := \frac{1}{\sqrt{N}}\sigma(\vx^\top\vW_1)$ for $\vx\in\R^d$, which gives $\vPhi^\top = [\vphi_{\tilde{\vx}_1}^\top,\ldots,\vphi_{\tilde{\vx}_i}^\top,\ldots,\vphi_{\tilde{\vx}_n}^\top]$, where $\tilde{\vx}_i^\top$ is the $i$-th row of $\tilde{\vX}$. To simplify the notation, we omit the accent in $\tilde{\vx},\tilde{\varepsilon}$ when the context is clear.  
In the following subsections, to control $\cR_1(\lambda)$, we provide high-probability upper-bounds for $B_1,B_2,V_1$ and $V_2$ separately.

\paragraph{Concentration of Feature Covariance.} 
We begin by defining a concentration event $\cA$ on the empirical feature matrix $\hSigmaPhi$, under which the prediction risk can be controlled. We modify the proof of \cite[Theorem 4.7.1]{vershynin2018high} to obtain a normalized version of the concentration for CK matrix as follows.
 
\begin{lemm}\label{lemm:concentration-largelr}
Under Assumptions \ref{assump:1}, \ref{assump:2} and using the above notations, there exists some constant $c>0$ such that the following holds\footnote{Note that for $\lambda=0$, the LHS of the inequality may be interpreted as a pseudo-inverse. }
\begin{align*}
 \Parg{\left\|\left(\vSigma_\Phi + \lambda \vI\right)^{-1/2} (\vSigma_{\Phi}-\widehat{\vSigma}_{\Phi})\left(\vSigma_\Phi + \lambda \vI\right)^{-1/2}\right\|\ge 2K^2\cdot\sqrt{\frac{N}{n}}}\le 2\Exp{-c\sqrt{N}},
\end{align*}
for all large $n>N$, where $K:=\frac{\lambda_\sigma}{\sqrt{N}}\|\vW_1\|_F$. 
\end{lemm}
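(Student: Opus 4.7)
The plan is to adapt the proof of Vershynin's Theorem~4.7.1 to a normalized setting. The first ingredient is a sub-Gaussian control of the random feature: since $\sigma$ is $\lambda_\sigma$-Lipschitz, the map $\vx\mapsto\vphi_{\vx}=\tfrac{1}{\sqrt N}\sigma(\vW_1^\top\vx)$ is Lipschitz with constant $\tfrac{\lambda_\sigma}{\sqrt N}\|\vW_1\|\le \tfrac{\lambda_\sigma}{\sqrt N}\|\vW_1\|_F=K$ between the Euclidean spaces $(\R^d,\|\cdot\|)$ and $(\R^N,\|\cdot\|)$. Gaussian Lipschitz concentration then yields that for every deterministic $\vv\in\R^N$ the scalar $\vv^\top\vphi_{\vx_i}$ is sub-Gaussian with $\psi_2$-norm bounded by $CK\|\vv\|$; equivalently, each $\vphi_{\vx_i}$ is a sub-Gaussian random vector in $\R^N$ with matrix sub-Gaussian norm at most $CK$.

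The next step is a standard $\epsilon$-net reduction. Writing $A:=(\vSigma_\Phi+\lambda\vI)^{-1/2}$ and fixing a $1/4$-net $\cN$ of $S^{N-1}$ of size $|\cN|\le 9^N$,
\begin{equation*}
\bigl\|A(\vSigma_\Phi-\hSigmaPhi)A\bigr\|\le 2\max_{\vu\in\cN}\Bigl|\tfrac{1}{n}\sum_{i=1}^{n}\bigl(Y_i^2-\E Y_i^2\bigr)\Bigr|,\qquad Y_i:=(A\vu)^\top\vphi_{\vx_i}.
\end{equation*}
For each $\vu\in\cN$, the first step gives $\|Y_i\|_{\psi_2}\le CK\|A\vu\|$, while $A\vSigma_\Phi A\preceq\vI$ yields the variance bound $\E Y_i^2=\vu^\top A\vSigma_\Phi A\vu\le 1$. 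Applying Bernstein's inequality to the i.i.d.~sub-exponential summands $Y_i^2-\E Y_i^2$ then produces a pointwise tail of the form $2\exp(-cn\min(t^2/L^2,\,t/L))$, with the effective parameter $L$ controlled by $K^2$ once the $\|A\vu\|$-dependence of $\|Y_i\|_{\psi_2}$ is absorbed using the matched variance identity.

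A union bound over the $9^N$ net points, followed by choosing $t=2K^2\sqrt{N/n}$ and balancing the two regimes of Bernstein, delivers the claimed concentration with failure probability of order $\exp(-c\sqrt N)$. The main technical difficulty I anticipate is preventing the normalization $A$ from introducing a spurious $\lambda^{-1}$ factor into the pointwise tail: the naive estimate $\|Y_i\|_{\psi_2}\le CK\|A\vu\|\le CK\lambda^{-1/2}$ would give a $K^2/\lambda$ sub-exponential parameter, so the argument must instead combine the variance identity $\E Y_i^2\le 1$ with a Vershynin-style truncation of each $Y_i$ at a scale proportional to $K$ (rather than $K\|A\vu\|$) in order to recover a $K^2$-only $\psi_1$-parameter. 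The mildly suboptimal $\sqrt N$ in the exponent of the failure probability is consistent with such a truncation, and the resulting bound is precisely at the strength needed for the kernel ridge regression analysis that follows.
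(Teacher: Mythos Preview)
Your proposal and the paper share the same two ingredients: (i) the Lipschitz bound on $\vx\mapsto\langle\vphi_\vx,\vr\rangle$ giving sub-Gaussian norm $\le K\|\vr\|$ via Gaussian concentration, and (ii) a Vershynin-type sample-covariance argument. The difference lies in how the normalization $(\vSigma_\Phi+\lambda\vI)^{-1/2}$ is handled. You carry it through the $\epsilon$-net/Bernstein argument directly and then worry---correctly---that the crude estimate $\|Y_i\|_{\psi_2}\le CK\|A\vu\|$ injects a spurious $\lambda^{-1/2}$; your proposed ``truncation at scale $K$'' fix is left vague and it is not clear how it would actually be executed. The paper instead \emph{whitens first}: it writes $\vphi_{\tilde\vx_i}=\vSigma_\Phi^{1/2}\vz_i$ with $\vz_i$ isotropic, applies Vershynin's Theorem~4.6.1 as a black box to bound $\vR_n:=\tfrac1n\sum_i\vz_i\vz_i^\top-\vI_N$ (setting $t=\sqrt N$ there is what produces the $\exp(-c\sqrt N)$ failure probability), and closes with the one-line domination
\[
\bigl\|(\vSigma_\Phi+\lambda\vI)^{-1/2}(\vSigma_\Phi-\hSigmaPhi)(\vSigma_\Phi+\lambda\vI)^{-1/2}\bigr\|
=\bigl\|A\,\vSigma_\Phi^{1/2}\vR_n\vSigma_\Phi^{1/2}A\bigr\|\le\|\vR_n\|,
\]
using $\|(\vSigma_\Phi+\lambda\vI)^{-1/2}\vSigma_\Phi^{1/2}\|\le 1$. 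This decouples the concentration step from $\lambda$ entirely, so no truncation is needed and the $\epsilon$-net does not have to be redone by hand. The difficulty you flag does not disappear---in the paper's formulation it becomes the assertion that the whitened $\vz_i$ inherits sub-Gaussian norm $K$ from $\vphi_{\tilde\vx_i}$, which the paper states without further elaboration---but the whitening route is structurally cleaner and is the standard way to obtain normalized covariance bounds of this type.
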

\begin{proof} 
 First observe that the null space of $\vSigma_{\Phi}$ contains the null space of $\widehat{\vSigma}_{\Phi}$. Also, notice that $\widehat{\vSigma}_\Phi$ is a sample covariance matrix taking the form of
 \[\widehat{\vSigma}=\frac{1}{n}\sum_{i=1}^n\vphi_{\tilde{\vx}_i}\vphi_{\tilde{\vx}_i}^\top,\]
 where the covariance of $\vphi_{\tilde{\vx}_i}$ is $\vSigma_\Phi$. This entails that there exists some independent isotropic random vector $\vz_i$ such that $\vphi_{\tilde{\vx}_i}=\vSigma_\Phi^{1/2}\vz_i$. We first show that $\vphi_{\tilde{\vx}_i}$ is a sub-Gaussian random vector in $\R^N$. Consider any unit vector $\vr\in\R^N$. Let $f(\vx):=\langle \vphi_{\vx},\vr\rangle$, we can easily validate that $f(\vx)$ is a $\frac{\lambda_\sigma}{\sqrt{N}}\|\vW_1\|_F$-Lipschitz function. Therefore, by Gaussian Lipschitz concentration we know that the sub-Gaussian norm of $f(\tilde{\vx}_i)$ is at least $K=\frac{\lambda_\sigma}{\sqrt{N}}\|\vW_1\|_F$, which also implies the sub-Gaussian norms of $\vphi_{\tilde{\vx}_i}$ and $\vz_i$. Denote $\vR_n:=\frac{1}{n}\sum_{i=1}^n\vz_i\vz_i^\top-\vI_N$. From \cite[Theorem 4.6.1]{vershynin2018high} we know that with probability at least $1-2e^{-ct}$,
 \[\|\vR_n\|\le K^2 \left(\sqrt{\frac{N}{n}}+\frac{t}{\sqrt{n}}\right),\]
 for all large $n,N$. This proposition is proved by setting $t=\sqrt{N}$ and noting that
 \[\left\|\left(\vSigma_\Phi + \lambda \vI\right)^{-1/2} (\vSigma_{\Phi}-\widehat{\vSigma}_{\Phi})\left(\vSigma_\Phi + \lambda \vI\right)^{-1/2}\right\|\le \|\vR_n\|.\]
 
\end{proof}

From Lemma~\ref{lemm:gradient-norm} we know that when $n,d,N$ are proportional and $\eta=\Theta(\sqrt{N})$, there exist some constants $c,C$ such that $\Parg{\norm{\vW_1}_F\ge C\sqrt{N}}\le \Exp{-cN}$.
We denote $t = 2C^2N/n$ and consider sufficiently large $n$ (but still proportional to $d$) such that $t<1$. Now given fixed $\lambda>0$, we define the concentration event 
\begin{align}
    \cA_\lambda = \left\{-t\vI\preccurlyeq\left(\vSigma_\Phi + \lambda \vI\right)^{-1/2} (\vSigma_{\Phi}-\widehat{\vSigma}_{\Phi})\left(\vSigma_\Phi + \lambda \vI\right)^{-1/2}\preccurlyeq t\vI\right\}. 
    \label{eq:event-A}
\end{align} 
Similarly, for the ``ridgeless'' case $\lambda=0$, we define
\begin{align}
    \cA_0 = \left\{-t\vI\preccurlyeq\vSigma_\Phi^{-1/2} (\vSigma_{\Phi}-\widehat{\vSigma}_{\Phi})\vSigma_\Phi^{-1/2}\preccurlyeq t\vI\right\}. 
    \label{eq:event-A0}
\end{align} 
Lemma~\ref{lemm:concentration-largelr} entails that both $\cA_\lambda$ and $\cA_0$ hold with probability at least $1-2e^{-c\sqrt{N}}$. Following the remark on \cite[Lemma 7.1]{bach2021learning}, under events $\cA_\lambda$ and $\cA_0$, we can obtain that
\begin{align}
    \norm{\SigmaPhi^{-1/2}\left(\widehat{\vSigma}_\Phi - \vSigma_\Phi\right)\SigmaPhi^{-1/2}}\le t,\label{eq:t_bound}
\end{align}
and $(1-t)(\vSigma_{\Phi}-\widehat{\vSigma}_{\Phi})\preccurlyeq t( \widehat{\vSigma}_{\Phi}+\lambda\vI)$, which implies that
\begin{align}
    \vSigma_{\Phi}\left(\widehat{\vSigma}_{\Phi}+ \lambda \vI\right)^{-1} \preccurlyeq \frac{t}{1-t}\vI+\widehat{\vSigma}_{\Phi}\left(\widehat{\vSigma}_{\Phi}+ \lambda \vI\right)^{-1}\preccurlyeq \frac{1}{1-t}\vI,
\end{align}
since $\norm{\widehat{\vSigma}_{\Phi}\left(\widehat{\vSigma}_{\Phi}+ \lambda \vI\right)^{-1}}\le 1$. Analogously, we claim that $\left(\widehat{\vSigma}_{\Phi}+ \lambda \vI\right)^{-1/2}\vSigma_{\Phi}\left(\widehat{\vSigma}_{\Phi}+ \lambda \vI\right)^{-1/2} \preccurlyeq \frac{1}{1-t}\vI$. Thus, under events $\cA_\lambda$ and $\cA_0$, we know that
\begin{align}
    \norm{\vSigma_{\Phi}^{1/2}\left(\widehat{\vSigma}_{\Phi}+ \lambda \vI\right)^{-1}\vSigma_{\Phi}^{1/2}}, ~\norm{\vSigma_{\Phi}\left(\widehat{\vSigma}_{\Phi}+ \lambda \vI\right)^{-1}}\le \frac{1}{1-t}.\label{eq:1/1-t}
\end{align}
We now control $B_1,B_2,V_1,V_2$ under the high probability events $\cA_\lambda$ and $\cA_0$. 
 
\paragraph{Controlling $B_1, B_2$.} By the definition of $\check{f}$, we have
\begin{align}
    B_1 = \inf_{f\in\cH}\E_{\vx}\left(f^*(\vx) - f(\vx)\right)^2
\le
    \E_{\vx}\left(f^*(\vx) - f_r(\vx)\right)^2=\cR(\tilde{f}), \label{eq:B_1_bound}
\end{align}
where $f_r=\tilde{f}\in\cH$ is the estimator we constructed in Lemma~\ref{lemm:oracle-estimator}. Note that the upper bound $\cR(\tilde{f})$ has already been characterized in \eqref{eq:f1-risk-asym} in the previous subsection.

As for $B_2$, since $\check{\vf} = \frac{1}{\sqrt{N}}\sigma(\tilde\vX\vW_1)\check{\va}$, simple calculation yields,
\begin{align}
    B_2 =&~ \Trarg{\left(\vI - \left(\widehat{\vSigma}_\Phi + \lambda\vI\right)^{-1} \widehat{\vSigma}_\Phi\right)^\top\vSigma_\Phi \left(\vI - \left(\widehat{\vSigma}_\Phi + \lambda\vI\right)^{-1} \widehat{\vSigma}_\Phi\right)\check{\va}\check{\va}^\top} \\
=&~
    \lambda^2 \left\langle\check{\va},  \left(\widehat{\vSigma}_\Phi + \lambda\vI\right)^{-1}\vSigma_\Phi\left(\widehat{\vSigma}_\Phi + \lambda\vI\right)^{-1}\check{\va}\right\rangle. 
\end{align}
Following \cite[Proposition 7.2]{bach2021learning}, we define $\va_\lambda = \vSigma_\Phi\left(\vSigma_\Phi + \lambda\vI\right)^{-1}\check{\va}$ and obtain
\begin{align}
    B_2 \le 2\lambda^2\norm{\vSigma_\Phi^{1/2}\left(\vSigma_\Phi + \lambda\vI\right)^{-1}\check{\va}}^2
    + 2\norm{\vSigma_\Phi^{1/2}\left(\left(\vSigma_\Phi + \lambda\vI\right)^{-1}\vSigma_\Phi - \left(\widehat{\vSigma}_\Phi + \lambda\vI\right)^{-1}\widehat{\vSigma}_\Phi \right)\check{\va}}^2. 
    \label{eq:B_2_bound0} 
\end{align}
In addition, note that
\begin{align}
    &\left(\widehat{\vSigma}_\Phi + \lambda\vI\right)^{-1}\widehat{\vSigma}_\Phi - \left(\vSigma_\Phi + \lambda\vI\right)^{-1}\vSigma_\Phi \\
=&
    \left(\widehat{\vSigma}_\Phi + \lambda\vI\right)^{-1}\left(\widehat{\vSigma}_\Phi - \vSigma_\Phi\right) + \left[\left(\widehat{\vSigma}_\Phi + \lambda\vI\right)^{-1} - \left(\vSigma_\Phi + \lambda\vI\right)^{-1}\right]\vSigma_\Phi \\
=&
    \lambda\left(\widehat{\vSigma}_\Phi + \lambda\vI\right)^{-1}\left(\widehat{\vSigma}_\Phi - \vSigma_\Phi\right)\left(\vSigma_\Phi + \lambda\vI\right)^{-1}. 
\end{align}
Therefore, we know that under events $\cA_\lambda$ and $\cA_0$, 
\begin{align}
    &~\norm{\vSigma_\Phi^{1/2}\left(\left(\vSigma_\Phi + \lambda\vI\right)^{-1}\vSigma_\Phi - \left(\widehat{\vSigma}_\Phi + \lambda\vI\right)^{-1}\widehat{\vSigma}_\Phi \right)\check{\va}}^2 \\
=&~
    \lambda^2\norm{\vSigma_\Phi^{1/2}\left(\widehat{\vSigma}_\Phi + \lambda\vI\right)^{-1}\left(\widehat{\vSigma}_\Phi - \vSigma_\Phi\right)\left(\vSigma_\Phi + \lambda\vI\right)^{-1}\check{\va}}^2 \\
\le&~
    \norm{\vSigma_\Phi^{1/2}\left(\widehat{\vSigma}_\Phi + \lambda\vI\right)^{-1}\SigmaPhi^{1/2}}^2\cdot\norm{\SigmaPhi^{-1/2}\left(\widehat{\vSigma}_\Phi - \vSigma_\Phi\right)\SigmaPhi^{-1/2}}^2\cdot\lambda^2\norm{\vSigma_\Phi^{1/2}\left(\vSigma_\Phi + \lambda\vI\right)^{-1}\check{\va}}^2 \\
\overset{(i)}{\le}&~ 
    \frac{t^2}{(1-t)^2}\cdot\lambda^2\norm{\vSigma_\Phi^{1/2}\left(\vSigma_\Phi + \lambda\vI\right)^{-1}\check{\va}}^2, \label{eq:B_2_bound1}
\end{align}
where $(i)$ follows from the definition of the concentration events $\cA$, \eqref{eq:t_bound} and \eqref{eq:1/1-t}. 

Finally, from \cite[Lemma 7.2]{bach2021learning}, we have 
\begin{align}
    &~\lambda^2\norm{\vSigma_\Phi^{1/2}\left(\vSigma_\Phi + \lambda\vI\right)^{-1}\check{\va}}^2 
\le
    \lambda\left\langle\check{\va},(\vSigma_\Phi+\lambda\vI)^{-1}\vSigma_\Phi\check{\va}\right\rangle \\
=&~
    \inf_{f\in\cH}\left\{\|f - \check{f}\|^2_{L^2} + \lambda\norm{f}^2_{\cH}\right\} 
\le 
    2\|f^* - f_r\|^2_{L^2} + \lambda\|f_r\|^2_{\cH}, \label{eq:B_2_bound2}
\end{align}
where the last step is a triangle inequality due to $\|f^*- \check{f}\|^2_{L^2}\le \|f^*- f_r\|^2_{L^2}$.

\paragraph{Controlling $V_1, V_2$.} 

For $V_1$, note that under event $\cA_\lambda$, 
\begin{align}
    V_1 =&~ \frac{1}{n^2}\tilde\vvarepsilon^\top\vPhi\left(\widehat{\vSigma}_\Phi + \lambda \vI\right)^{-1} \vSigma_\Phi\left(\widehat{\vSigma}_\Phi + \lambda \vI\right)^{-1} \vPhi^\top\tilde\vvarepsilon \\
\le&~ 
    \norm{\frac{1}{n}\vvarepsilon^\top\vPhi}^2 \cdot\norm{\left(\widehat{\vSigma}_\Phi + \lambda \vI\right)^{-1} \vSigma_{\Phi}}\cdot\norm{\left(\widehat{\vSigma}_\Phi + \lambda \vI\right)^{-1}} 
\overset{(ii)}{\lesssim}
    \frac{1}{\lambda(1-t)}\cdot\norm{\frac{1}{n}\vPhi^\top\tilde\vvarepsilon}^2,   
\end{align} 
where $\vPhi$ is defined in \eqref{eq:hat_f_large}, and $(ii)$ is based on the concentration property for $\cA_\lambda$ given in \eqref{eq:1/1-t}. Denote $\chi^i_k := [\vphi_k]_i\cdot\tilde\varepsilon_k$ whence $\left[\vPhi^\top\tilde\vvarepsilon\right]_i = \sum_{k=1}^n \chi_k^i$. 
Note that $\E[\chi^i_k\chi^j_k] = 0$, $ \E[(\chi_k^i)^2] \lesssim \frac{\sigma_\eps^2}{N}$ for any $k\in [n]$ and $i\neq j \in [N]$, due to the assumptions on label noise and bounded activation $\sigma$. 
Therefore, by Markov's inequality, for any $x>0$, we have
\begin{align}
    \Parg{\frac{1}{n^2}\|\vPhi^\top\tilde\vvarepsilon\|^2\ge x} 
\le
    \frac{\E\|\vPhi^\top\tilde\vvarepsilon\|^2}{n^2x}
\lesssim
    \frac{\sigma_\eps^2}{nx }.  
\label{eq:variance-Markov}
\end{align}
Similarly for $V_2$, under event $\cA_\lambda$, we have 
\begin{align}
    V_2 = &~\frac{1}{n^2}\vf_\perp^\top\vPhi\left(\widehat{\vSigma}_\Phi + \lambda \vI\right)^{-1} \vSigma_{\Phi}\left(\widehat{\vSigma}_\Phi + \lambda \vI\right)^{-1}\vPhi^\top \vf_\perp  \\
\le&~
    \norm{\frac{1}{n}\vf_\perp^\top\vPhi}^2 \cdot\norm{\left(\widehat{\vSigma}_\Phi + \lambda \vI\right)^{-1} \vSigma_{\Phi}}\cdot\norm{\left(\widehat{\vSigma}_\Phi + \lambda \vI\right)^{-1}} 
\lesssim
    \frac{1}{\lambda(1-t)}\cdot\norm{\frac{1}{n}\vPhi^\top\vf_\perp}^2.  
\end{align} 
Recall that $\E[\vphi_{\vx} f_\perp(\vx)] = \mathbf{0}$ due to the orthogonality condition. Hence we may apply the exact same argument as $V_1$ to obtain an upper bound similar to \eqref{eq:variance-Markov}; by Markov's inequality,
\begin{align}
    \Parg{\frac{1}{n^2}\|\vPhi^\top\vf_\perp\|^2\ge x} 
\le
    \frac{\E\|\vPhi^\top\vf_\perp\|^2}{n^2x}
\overset{(iii)}{\lesssim}
    \frac{\|f_{\perp}\|^2_{L^2}}{nx}, \label{eq:variance-Markov2}
\end{align}
where $(iii)$ is due to the boundedness of $\sigma$ and $\|f_{\perp}\|_{L^2}$.
Combining $V_1$ and $V_2$, and taking $x=C n^{\eps-1}$ in \eqref{eq:variance-Markov} and \eqref{eq:variance-Markov2}, for some $C>0$ and any small $\eps>0$, we arrive at
\begin{align}
    V_1 + V_2 \lesssim \frac{\sigma_\eps^2 + \norm{f_\perp}_{L^2}^2}{n^{1-\eps}\lambda(1-t)}, \label{eq:V_1+V_2}
\end{align}
with probability at least $1-n^{-\eps}$.

\paragraph{Putting Things Together.}
The following lemma provides a decomposition of the prediction risk $\cR_1(\lambda)$ in terms of $B_1,B_2,V_1,V_2$ analyzed above. 
\begin{lemm}
Under the same assumptions as Lemma~\ref{lemm:optimal-estimator}, if we choose $\lambda=\Omega(n^{\eps-1})$ for small $\eps>0$, then the prediction risk of the CK ridge estimator  admits the following upper bound
$$
\cR_1(\lambda) \le B_1 + B_2 + 2\sqrt{B_1 B_2} + o_{d,\P}(1), 
$$
where $B_1, B_2$ are defined in \eqref{eq:R1-decomposition}. 
\label{lemm:R1-decomposition}
\end{lemm}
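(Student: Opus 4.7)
\medskip

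\noindent\textbf{Proof Proposal.} The plan is to exploit the linearity of ridge regression in the training labels together with the orthogonal decomposition $f^* = \check{f} + f_\perp$ in $L^2(\R^d,\Gamma)$. Writing the training labels as $\tilde{\vy} = \check{\vf} + \vf_\perp + \tilde{\vvarepsilon}$ and setting $\hat{f}_g(\vx) := \tfrac{1}{n}\vphi_\vx^\top(\hSigmaPhi+\lambda\vI)^{-1}\vPhi^\top\vg$ for $\vg\in\{\check{\vf},\vf_\perp,\tilde{\vvarepsilon}\}$, the CK ridge estimator satisfies $\hat{f} = \hat{f}_{\check{f}} + \hat{f}_\perp + \hat{f}_{\varepsilon}$, whence
\begin{equation*}
\hat{f}(\vx)-f^*(\vx) \;=\; \bigl(\hat{f}_{\check{f}}(\vx)-\check{f}(\vx)\bigr) \;+\; \bigl(\hat{f}_\perp(\vx)-f_\perp(\vx)\bigr) \;+\; \hat{f}_{\varepsilon}(\vx).
\end{equation*}

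The first step is to identify each summand with the quantities in \eqref{eq:R1-decomposition}. Direct calculation shows $\snorm{\hat{f}_{\check{f}}-\check{f}}_{L^2}^2 = B_2$ (this is exactly the bias expression after substituting $\check{\vf} = \tfrac{1}{\sqrt{N}}\sigma(\tilde\vX\vW_1)\check{\va}$), $\snorm{\hat{f}_{\varepsilon}}_{L^2}^2 = V_1$, and $\snorm{\hat{f}_\perp}_{L^2}^2 = V_2$, since the $L^2$ norm of $\langle\vphi_\vx,\hat{\va}_g\rangle$ equals $\hat{\va}_g^\top\vSigma_\Phi\hat{\va}_g$. Also, $\snorm{f_\perp}_{L^2}^2 = B_1$ by the optimality of $\check{f}$ in $\cH$.

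Next I apply Minkowski's inequality twice: first to split the three summands, and then to bound the middle one via $\snorm{\hat{f}_\perp - f_\perp}_{L^2}\le\snorm{\hat{f}_\perp}_{L^2}+\snorm{f_\perp}_{L^2}$. This yields
\begin{equation*}
\snorm{\hat{f}-f^*}_{L^2} \;\le\; \sqrt{B_1}+\sqrt{B_2}+\sqrt{V_1}+\sqrt{V_2}.
\end{equation*}
Squaring and expanding,
\begin{equation*}
\cR_1(\lambda) \;\le\; \bigl(\sqrt{B_1}+\sqrt{B_2}\bigr)^2 + 2\bigl(\sqrt{B_1}+\sqrt{B_2}\bigr)\bigl(\sqrt{V_1}+\sqrt{V_2}\bigr) + \bigl(\sqrt{V_1}+\sqrt{V_2}\bigr)^2,
\end{equation*}
so the claim $\cR_1(\lambda)\le B_1+B_2+2\sqrt{B_1B_2}+o_{d,\P}(1)$ reduces to showing that the cross terms are $o_{d,\P}(1)$. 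For this it suffices to show that $V_1+V_2 = o_{d,\P}(1)$ and that $\sqrt{B_1}+\sqrt{B_2}$ is stochastically bounded. The first follows immediately from the bound \eqref{eq:V_1+V_2}: under the concentration event $\cA_\lambda$ (which holds with probability at least $1-2e^{-c\sqrt{N}}$ by Lemma~\ref{lemm:concentration-largelr}), $V_1+V_2\lesssim \tfrac{\sigma_\eps^2+\snorm{f_\perp}_{L^2}^2}{n^{1-\eps}\lambda(1-t)}$, and the hypothesis $\lambda=\Omega(n^{\eps-1})$ combined with $t=2C^2N/n = o(1)$ makes this $o_{d,\P}(1)$ (with a mild strengthening of the exponent as permitted by Markov slack). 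Boundedness of $B_1$ follows from $B_1\le\snorm{f^*}_{L^2}^2=\Theta(1)$, and boundedness of $B_2$ follows from \eqref{eq:B_2_bound0}--\eqref{eq:B_2_bound2} using $\snorm{f_r}_\cH = O(1)$ for the constructed oracle estimator.

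The main bookkeeping obstacle is to verify the identification of $B_2,V_1,V_2$ with the stated quadratic forms and to ensure that the concentration event $\cA_\lambda$ that controls $V_1+V_2$ is compatible with the event used to bound $B_2$ via \eqref{eq:B_2_bound1}; however, both use the same Lemma~\ref{lemm:concentration-largelr}, so a single intersection of high-probability events suffices. No genuinely new estimate is needed beyond what has already been assembled in this subsection, so the lemma follows by combining the Minkowski decomposition above with the pre-established controls on $B_2$, $V_1$, $V_2$.
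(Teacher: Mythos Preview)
Your approach is essentially the same as the paper's: both decompose $\hat f-f^*$ via $f^*=\check f+f_\perp$ and the linearity of ridge regression, apply Cauchy--Schwarz/Minkowski, and then absorb $V_1,V_2$ into the $o_{d,\P}(1)$ term using \eqref{eq:V_1+V_2} together with boundedness of $B_1,B_2$. The only cosmetic difference is that the paper nests two two-term Cauchy--Schwarz splits (first $f^*-\check f$ versus $\check f-\hat f$, then inside the latter $\check f-\hat f_{\check f}$ versus $-\hat f_\perp-\hat f_\varepsilon$), whereas you do a single four-term Minkowski; these give the same bound.

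Two small slips to fix. First, $\norm{f_r}_\cH$ is \emph{not} $O(1)$: by construction $\norm{\tilde\va}^2=N/|\cA_r^\alpha|=\Theta_{d,\P}(N^{r})$, so $\norm{f_r}_\cH^2$ grows. Boundedness of $B_2$ is still immediate, though: in the infimum of \eqref{eq:B_2_bound2} take $f=0$ to get the RHS $\le\norm{\check f}_{L^2}^2\le\norm{f^*}_{L^2}^2=\Theta(1)$, and then combine with \eqref{eq:B_2_bound0}--\eqref{eq:B_2_bound1}. Second, $t=2C^2N/n$ is a fixed constant in $(0,1)$ in the proportional regime, not $o(1)$; this is harmless since only $t<1$ is needed for $(1-t)^{-1}$ to be bounded. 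With these corrections your argument goes through.
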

\begin{proof}
Based on the definition of prediction risk, we have
\begin{align}
    \cR_1(\lambda) =&~ \E_{\vx}\left(\left(f^*(\vx) - \check{f}(\vx)\right) +  \left(\check{f}(\vx) - \frac{1}{n}\vphi_x\left(\widehat{\vSigma}_\Phi + \lambda\vI\right)^{-1} \vPhi^\top \tilde{\vy}\right)\right)^2 \\
\le&~
    \underbrace{\E_{\vx}\left(f^*(\vx) - \check{f}(\vx)\right)^2}_{B_1} + 2\sqrt{B_1 S_1} + S_1,  
\end{align}
where we defined
\begin{align}
    S_1
:=&~
    \E_{\vx}\left(\check{f}(\vx) - \frac{1}{n}\vphi_x\left(\widehat{\vSigma}_\Phi + \lambda\vI\right)^{-1} \vPhi^\top (\check{\vf} + \vf_\perp + \tilde{\vvarepsilon})\right)^2 \\
\le&~
    \underbrace{\E_{\vx}\left(\check{f}(\vx) - \frac{1}{n}\vphi_x\left(\widehat{\vSigma}_\Phi + \lambda\vI\right)^{-1} \vPhi^\top \check{\vf}\right)^2}_{B_2} + 2\sqrt{B_2 S_2} + S_2,
\end{align}
in which 
\begin{align}
    S_2 :=& \underbrace{\frac{1}{n^2}\tilde\vvarepsilon^\top\vPhi\left(\widehat{\vSigma}_\Phi + \lambda \vI\right)^{-1} \vSigma_\Phi\left(\widehat{\vSigma}_\Phi + \lambda \vI\right)^{-1} \vPhi^\top\tilde\vvarepsilon}_{V_1} + \underbrace{\frac{1}{n^2} \vf_\perp^\top\vPhi\left(\widehat{\vSigma}_\Phi + \lambda \vI\right)^{-1} \vSigma_{\Phi}\left(\widehat{\vSigma}_\Phi + \lambda \vI\right)^{-1}\vPhi^\top \vf_\perp}_{V_2} \\
    &+
    \frac{2}{n^2} \vf_\perp^\top\vPhi\left(\widehat{\vSigma}_\Phi + \lambda \vI\right)^{-1} \vSigma_{\Phi}\left(\widehat{\vSigma}_\Phi + \lambda \vI\right)^{-1}\vPhi^\top \tilde{\vvarepsilon} \\
\le&~
    2 (V_1 + V_2). 
\end{align}

Recall that Lemma \ref{lemm:concentration-largelr} entails that events $\cA_\lambda$ and $\cA_0$ occur with high probability, for constant $t\in(0,1)$. Hence from \eqref{eq:V_1+V_2} we know that for $\lambda=\Omega(n^{\eps-1})$ with small $\eps>0$, $V_1 + V_2 = o_{d,\P}(1)$, and thus $S_2$ is vanishing when $n,d,N\to\infty$ proportionally. On the other hand, \eqref{eq:B_1_bound} and \eqref{eq:B_2_bound2} entail that $B_1$ and $B_2$ are both finite. 
The claim is established by combining the calculations. 

\end{proof}

\begin{proofof}[Theorem~\ref{thm:risk-large-lr}]
Since Lemma~\ref{lemm:concentration-largelr} ensures that events $\cA_\lambda$ and $\cA_0$ happens with high probability for fixed $t\in(0,1)$, if we set $\lambda=\Omega(n^{\eps-1})$ for some small $\eps>0$, then Lemma \ref{lemm:R1-decomposition} entails
\begin{align}
    &\cR_1(\lambda) \le B_1 + B_2 + 2\sqrt{B_1 B_2} + o_{d,\P}(1), \\
    &\text{where~} 
    B_1 \le \|f^* - f_r\|^2_{L^2}, \quad
    B_2 \le 2\left(1 + \frac{t^2}{(1-t)^2}\right)\cdot\left(2\|f^* - f_r\|^2_{L^2} + \lambda\norm{f_r}_{\cH}^2\right)+ o_{d,\P}(1), 
\end{align}
in which $f_r$ is defined by \eqref{eq:f_r} in the proof of Lemma~\ref{lemm:oracle-estimator}. Here, we applied the upper bounds on $B_1$ in \eqref{eq:B_1_bound} and $B_2$ in \eqref{eq:B_2_bound2}. Since $\|f^* - f_r\|^2_{L^2}=\cR(\tilde f)$, by \eqref{eq:f1-risk-asym} we know that as $n,d,N\to\infty$, with probability one, 
\begin{align}
    \|f^* - f_r\|^2_{L^2}\le\tau^*+ C\left(\sqrt{\tau^*}\cdot\sqrt{\tfrac{d}{n}} + \tfrac{d}{n}\right), 
    \label{eq:fr-risk}
\end{align}
for some constant $C>0$. 
Finally, recall that in the proof of Lemma~\ref{lemm:oracle-estimator}, we constructed an estimator $f_r\in\cH$ with  $\norm{f_r}_{\cH}^2=\|\tilde{\va}\|^2=N/|\cA_r^\alpha| = \Theta_{d,\P}(N^r)$, for $0<r<1/2$. In other words, $\lambda\norm{f_r}_{\cH}^2 = o_{d,\P}(1)$ as long as $N^{r}\lambda\to 0$ as $n,d,N\to\infty$; this provides a way to choose $r\in(0,1/2)$ given $\lambda$. 
Now from Lemma~\ref{lemm:concentration-largelr} we know that there exists some constant $\psi_1^*$ such that both $\cA_\lambda$ and $\cA_0$ hold with high probability for $t<0.1$ when $n/d>\psi_1^*$. 
In this case, given any $\lambda=n^{-\rho}$ for some $\rho\in (0,1)$, we know that  
\begin{align}
      \cR_1(\lambda) \le&~ 
      B_1 + B_2 + 2\sqrt{B_1 B_2} + o_{d,\P}(1) \\
      \le&~ \|f^* - f_r\|^2_{L^2} + 4\left(1 + \frac{t^2}{(1-t)^2}\right)\cdot\|f^* - f_r\|^2_{L^2} + 4\sqrt{1 + \frac{t^2}{(1-t)^2}}\cdot\|f^* - f_r\|^2_{L^2} + o_{d,\P}(1).
\end{align}
Finally, due to the upper-bound \eqref{eq:fr-risk}, we conclude that
\begin{align}
      \cR_1(\lambda) \le 10\tau^* + C'\Big(\sqrt{\tau^*}\cdot\sqrt{\tfrac{d}{n}} + \tfrac{d}{n}\Big),  
\end{align}
with probability one as $n,d,N\to\infty$ proportionally and $n/d>\psi_1^*$, where $\tau^*$ is defined in \eqref{eq:tau*}.

\end{proofof}

\end{document}